\documentclass[aos]{imsart-alt}

\RequirePackage{amsthm,amsmath,amsfonts,amssymb}
\RequirePackage[numbers,sort&compress]{natbib}
\RequirePackage[colorlinks,citecolor=blue,urlcolor=blue]{hyperref}
\RequirePackage{graphicx}
\RequirePackage{clipboard}

\RequirePackage{algorithm}
\RequirePackage[noend]{algpseudocode}
\RequirePackage{comment}
\RequirePackage{mathrsfs}

\startlocaldefs
\theoremstyle{plain}
\newtheorem{theorem}{Theorem}[section]
\newtheorem{lemma}[theorem]{Lemma}
\newtheorem{proposition}[theorem]{Proposition}
\newtheorem{corollary}[theorem]{Corollary}

\theoremstyle{definition}

\newclipboard{myclipboard}

\newcommand{\cut}[1]{}
\newtheorem{orule}{Orientation Rule}

\newcommand{\proofcase}[1]{\vspace{3pt} {\noindent \it Case #1:}\ }

\def\textversionnumber{0} 
\newcommand{\TextVersion}[2]{
\ifnum\textversionnumber=0%
#1%
\else 
#2%
\fi }

\newcommand{\reptheorem}[2]{
    \begin{theorem}
    \label{#1}
    \Copy{#1}{#2}
    \end{theorem}
}
\newcommand{\reftheorem}[1]{
    {\textsc{Theorem}~\ref{#1}} 
    \emph{\Paste{#1}}
}

\newcommand{\repnamedproposition}[3]{
    \begin{proposition}[#2]
    \label{#1}
    \Copy{#1}{#3}
    \end{proposition}
}
\newcommand{\repproposition}[2]{
    \begin{proposition}
    \label{#1}
    \Copy{#1}{#2}
    \end{proposition}
}
\newcommand{\refproposition}[1]{
    {\textsc{Proposition}~\ref{#1}} 
    \emph{\Paste{#1}}
}

\newcommand{\repcorollary}[2]{
    \begin{corollary}
    \label{#1}
    \Copy{#1}{#2}
    \end{corollary}
}
\newcommand{\refcorollary}[1]{
    {\textsc{Corollary}~\ref{#1}} 
    \emph{\Paste{#1}}
}

\DeclareMathSymbol{\mhyphen}{\mathord}{AMSa}{"39}

\newcommand{\Wlog}{Without loss of generality}

\newcommand\DefineConstant[1]{\text{\rmfamily\upshape #1}}

\newcommand{\set}[1]{\{\, #1 \,\}}
\newcommand{\pair}[2]{\seq{#1, #2}}
\newcommand{\triple}[3]{\seq{#1, #2, #3}}
\newcommand{\cardinality}[1]{\lvert #1 \rvert}

\newcommand{\powersetsymbol}{2}
\newcommand{\powerset}[1]{\powersetsymbol^{#1}}

\newcommand{\carteseanProduct}{\mathrel \times}

\newcommand{\carteseanBinary}[2]{{#1}\carteseanProduct{#2}}

\DeclareMathOperator{\true}{True}
\DeclareMathOperator{\false}{False}

\newcommand{\function}[2]{{#1} \mapsto {#2}}

\newcommand{\booleanfunction}[1]{\function{#1}{\set{\true,\false}}}

\newcommand{\characteristicfunctionS}[1]{\delta_{#1}}

\newcommand{\setcomprehensionLSE}[3]{\set{{#1}\in {#2}\ | \ {#3}}}

\DeclareMathOperator{\reverse}{rev}
\DeclareMathOperator{\last}{last}
\DeclareMathOperator{\first}{first}
\newcommand{\append}{\mathbin{+}}    

\newcommand{\seq}[1]{\left( #1 \right)} 

\newcommand{\seqLIJ}[3]{\seq{{#1}_{#2},\ldots,{#1}_{#3}}}
\newcommand{\seqvIJ}[2]{\seqLIJ{v}{#1}{#2}}

\DeclareMathOperator{\adj}{Adj}
\DeclareMathOperator{\ant}{Ant}
\DeclareMathOperator{\propant}{PropAnt}
\DeclareMathOperator{\post}{Post}

\newcommand{\edge}[2]{{e(#1, #2)}}
\newcommand{\edgeprime}[2]{{e'(#1, #2)}}

\newcommand{\shaft}{\text{---}}
\newcommand{\tailarrow}{\ \shaft\!\!\!\!\succ\! \ }
\newcommand{\arrowtail}{\ \!\prec\!\!\!\!\shaft \ }
\newcommand{\arrowarrow}{\ \!\prec\!\!\!\!\shaft\!\!\!\!\succ\! \ }
\newcommand{\tailtail}{\ \shaft \ }

\newcommand{\vertexset}[1]{\DefineConstant{Ver}({#1})}
\newcommand{\vertexseq}[1]{\DefineConstant{ver}({#1})}
\newcommand{\sectionseq}[1]{\DefineConstant{sec}({#1})}

\newcommand{\edgeset}{{\cal E}}
\newcommand{\walkset}{{\cal W}}
\newcommand{\walksetgraph}{W(G)}

\newcommand{\miw}{minimal inducing walk}
\newcommand{\Miw}{Minimal inducing walk}

\newcommand{\mdiw}{minimal discriminating inducing walk}

\newcommand{\iw}{inducing walk}

\newcommand{\diw}{discriminating inducing walk}
\newcommand{\siw}{self-inducing walk}

\newcommand{\defwalkWLIJ}[4]{\vertexseq{#1}=\seqLIJ{#2}{#3}{#4}}

\newcommand{\defwalkomega}{\defwalkWLIJ{\omega}{v}{1}{n}}
\newcommand{\defwalkomegaprime}{\defwalkWLIJ{\omega'}{v}{1}{n}}

\newcommand{\defsectionWLIJ}[4]{\sectionseq{#1}=\seq{{#2}_{#3}, \ldots, {#2}_{#4}}}

\newcommand{\defsectionomega}{\defsectionWLIJ{\omega}{\sigma}{1}{m}}

\newcommand{\defsectionomegaprime}{\defsectionWLIJ{\omega'}{\sigma'}{1}{m}}

\newcommand{\indepsym}{\perp\!\!\!\!\perp}
\newcommand{\depsym}{{\hspace{.3em} \indepsym \hspace{-1.11em} \raisebox{0.16em}{$\smallsetminus$} \hspace{0.18em}}}
\newcommand{\indep}[4]{{#1} \indepsym_{#4} {#2}\ |\ {#3}}
\newcommand{\dep}[4]{{#1} \depsym_{#4}\  {#2}\ |\ {#3}}

\newcommand{\equivclass}[1]{[#1]}
\newcommand{\equivclassF}[1]{[#1]_\GraphFamilyF}
\newcommand{\equivclassG}[1]{[#1]_\GraphFamily}

\newcommand{\imodel}[1]{{I}{(#1)}}

\newcommand{\pairwise}{\DefineConstant{Pairwise}}

\newcommand{\cgp}{canonical graph projection}

\newcommand{\cwp}{collider walk projection}

\newcommand{\adjrep}{\DefineConstant{ADJ}}
\newcommand{\miwrep}{\DefineConstant{MIW}}
\newcommand{\mdiwrep}{\DefineConstant{MDIW}}
\newcommand{\iarrowrep}{\DefineConstant{IARROW}}

\newcommand{\GraphFamily}{{\cal G}}
\newcommand{\GraphFamilyF}{{\cal F}}

\newcommand{\GraphFamilyH}{{\cal H}}

\newcommand{\AncestralFamily}{{\cal G}_{anc}}
\newcommand{\MaxAncestralFamily}{{\cal G}_{anc}^{max}}

\newcommand{\AnterialFamily}{{\cal G}_{ant}}
\newcommand{\MaxAnterialFamily}{{\cal G}_{ant}^{max}}
\newcommand{\CMGFamily}{{\cal G}_{cm}}
\newcommand{\MaxCMGFamily}{{\cal G}_{cm}^{max}}

\newcommand{\SepAnterialFamily}{{\cal G}_{ant}^{sep}}

\newcommand{\SimpleFamily}{\GraphFamily_{simp}}
\newcommand{\AcycFamily}{\GraphFamily_{acyc}}
\newcommand{\MaxFamily}{\GraphFamily^{max}}

\newcommand{\UndirectedFamily}{\GraphFamily_{und}}

\newcommand{\EssentialFamily}[1]{\mathcal{E}(#1)}
\newcommand{\EssentialFamilyF}{\EssentialFamily{\GraphFamilyF}}

\newcommand{\SepFamily}{{\cal G}_{sep}}
\newcommand{\WSepFamily}{\EssentialFamily{\SepFamily}}

\newcommand{\DistFamily}{{\cal P}}

\newcommand{\indeptest}{independence test}

\newcommand{\statmodel}[1]{M_{#1}}

\newcommand{\statmodelwithDistFamily}[1]{M_{#1}^\DistFamily}

\newcommand{\gim}{graphical model}
\newcommand{\gimf}{graphical model family}

\newcommand{\gimfs}{graphical model families}

\newcommand{\gmfF}{\ModelFamily_{\GraphFamilyF}}

\newcommand{\gmfH}{\ModelFamily_{\GraphFamilyH}}

\newcommand{\gmfWSep}{\ModelFamily_{\WSepFamily}}

\newcommand{\gmfAcyc}{\ModelFamily_{\AcycFamily}}
\newcommand{\gmfSepAnterial}{\ModelFamily_{\SepAnterialFamily}}

\newcommand{\genstruct}{G^*}
\newcommand{\gendist}{P^*}

\newcommand{\sia}{structure identification algorithm}
\newcommand{\Sia}{Structure identification algorithm}
\newcommand{\gsia}{graphical structure identification algorithm}

\newcommand{\ModelFamily}{{\cal M}}

\newcommand{\simplify}{\DefineConstant{Anterialize}}

\newcommand{\IV}{\operatorname{IV}}
\newcommand{\IS}{\operatorname{IS}}

\newcommand{\edgedb}{\edge{d}{b}}
\newcommand{\edgeprimedb}{\edgeprime{d}{b}}

\newcommand{\iarrowhead}{induced arrowhead}

\newcommand{\inducedarrowheads}{\DefineConstant{InducedArrowheads}}

\newcommand{\sgi}{\DefineConstant{SGI}}

\newcommand{\equivclassSRL}[3]{\left[{#3}\right]_{#1}^{#2}}
\newcommand{\equivclassSL}[2]{\left[{#2}\right]_{#1}}

\newcommand{\largest}{\DefineConstant{Largest}}
\newcommand{\graphtransform}{graph transform}

\newcommand{\antsetX}[1]{\ant(#1,G)}
\newcommand{\antsetij}{\antsetX{\set{i,j}}}
\newcommand{\antsetCij}{\antsetX{C\cup\set{i,j}}}
\newcommand{\Cij}{C\cup \set{i,j}}
\newcommand{\propantij}{\ant(\set{i,j},G)\setminus \set{i,j}}
\newcommand{\propantCij}{\propant(C\cup\set{i,j},G)}
\newcommand{\Cmij}{C\setminus \set{i,j}}

\newcommand{\defwalktau}{\defwalkWLIJ{\tau}{v}{1}{n}}

\newcommand{\walkedgeLI}[2]{\edge{#1_{#2}}{#1_{#2 + 1}}}
\newcommand{\walkedgev}[1]{\walkedgeLI{v}{#1}}
\newcommand{\walkedgeprimeLI}[2]{\edgeprime{#1_{#2}}{#1_{#2 + 1}}}
\newcommand{\walkedgeprimev}[1]{\walkedgeprimeLI{v}{#1}}

\newcommand{\wdf}{walk decomposition function}
\newcommand{\wcf}{walk composition function}

\newcommand{\decomp}{\DefineConstant{decomp}}
\newcommand{\inverse}[1]{#1^{-1}}

\newcommand{\wdecomp}{\ensuremath{\decomp_w}}
\newcommand{\walkdecompI}[1]{\seq{\tau_1, \gamma_1, \tau_2,\ldots, \gamma_{#1-1},\tau_#1}}
\newcommand{\defwalkdecompWI}[2]{\wdecomp(#1)=\walkdecompI{#2}}
\newcommand{\defwalkdecompomegaI}[1]{\defwalkdecompWI{\omega}{#1}}

\newcommand{\maxdecompose}{\ensuremath {\DefineConstant{\decomp}_{m}}}
\newcommand{\maxcompose}{\ensuremath \inverse{\maxdecompose}}

\newcommand{\iwdecompose}{\ensuremath {\DefineConstant{decomp}_{iw}}}
\newcommand{\iwcompose}{\ensuremath \inverse{\iwdecompose}}

\newcommand{\edgevIJ}[2]{\edge{v_{#1}}{v_{#2}}}
\newcommand{\edgevivj}{\edge{v_i}{v_j}}

\newcommand{\edgeab}{\edge{a}{b}}
\newcommand{\edgewalk}[2]{\seq{\edge{#1}{#2}}}

\newcommand{\walkedgeundirected}[1]{v_{#1} \tailtail v_{#1 + 1}}

\newcommand{\walkedgerightarrow}[1]{v_{#1} \tailarrow v_{#1 + 1}}

\newcommand{\antomegaendpoints}{\ant(\set{v_1,v_n},G)}
\newcommand{\propantomegaendpoints}{\propant(\set{v_1,v_n},G)}

\newcommand{\mcw}{maximal collider walk}
\newcommand{\mwd}{maximal walk decomposition}
\newcommand{\mt}{maximal trek}

\newcommand{\defgraphG}{G=\seq{V,E}}

\endlocaldefs

\begin{document}
\begin{frontmatter}
\title{Characterizing and identifying\\ separable graphical models}
\runtitle{Separable graphical models}

\begin{aug}
\author[A]{\fnms{Christopher}~\snm{Meek} \ead[label=e1]{cameek@uw.edu}} \and
\author[B]{\fnms{Kayvan}~\snm{Sadeghi}\ead[label=e2]{k.sadeghi@ucl.ac.uk}}
\address[A]{Department of Statistics,
University of Washington\printead[presep={ ,\ }]{e1}}

\address[B]{Department of Statistical Science,
University College London \printead[presep={,\ }]{e2}}
\end{aug}

\begin{abstract}

We study a broad class of graphical models whose independencies correspond to vertex separation in mixed graphs with directed, undirected, and bidirected edges, that are capable of encoding independence structures arising from feedback, latent and selection mechanisms. 
In particular, we introduce separable graphs, in which each missing edge implies the existence of a separating set for its endpoints, and essentially separable graphs, those graphs separation equivalent to a separable graph.
We show that these models include many existing graph families used to define graphical models an provide several characterizations of separable graphs and essentially separable graphs.
We also provide multiple characterizations of separation equivalence for separable graphs.
One is a graphical characterization—extending earlier results for specific subfamilies—in terms of ordinary graph properties.
Another is a separational characterization, depending only on graph separation properties.
Finally, we provide a canonical representation for the equivalence classes of essentially separable graphs and develop an algorithm that, under suitable assumptions, identifies the equivalence class of any essentially separable graph.

\TextVersion{}
{

We study a broad class of graphical models represented by mixed graphs with directed, undirected, and bidirected edges, capable of encoding independence structures arising from feedback, latent and selection mechanisms. 
We introduce separable graphical models, in which each missing edge corresponds to a conditional independence, and essentially separable graphical models, whose independence structure is representable by a separable graphical model.
We characterize separable and essentially separable graphs under generalized d-separation and $\sigma$-separation, and show that every separation equivalence class of separable graphs contains a representative with at most one edge between any pair of vertices. We also provide several equivalent characterizations of the separation equivalence of separable graphs and develop algorithms for testing the equivalence of two separable graphical models and for identifying the equivalence class of any essentially separable graphical model.}

\end{abstract}

\begin{keyword}[class=MSC]
\kwd[Primary ]{62H22}
\kwd[; secondary ]{62D20}
\end{keyword}

\begin{keyword}
\kwd{Anterial graphs}
\kwd{Graph separation}
\kwd{Separable graphs}
\kwd{Inducing vertices}
\kwd{Markov properties}
\kwd{Structure identification}
\end{keyword}

\end{frontmatter}


\section{Introduction}
Graphical models provide a simple intuitive means of describing sets of independence facts that arise in different statistical contexts. A graphical model consists of a graph and a statistical model --- a family of probability distributions --- such that each probability distribution in the family satisfies a set of statistical independence facts defined by vertex separation in the graph. When considering probability distributions over product measurable spaces with index set V,\footnote{A measurable space $\seq{\times_{v\in V}\mathscr{X}_v, \bigotimes_{v\in V}\mathscr{F}_v}$ where, for $v\in V$, $\mathscr{F}_v$ is a $\sigma$-field over the set $\mathscr{X}_v$.} one uses a graph $G=\seq{V,E}$ whose vertices correspond to the index set $V$ and a set of edges $E$ in conjunction with a separation criterion to describe the set of independence facts that are required to hold. Such a separation criterion defines the global Markov properties of a graphical model defined by the graph G.

Different classes of graphs arise naturally in different applications. For example, undirected graphs arise in physics \cite{Gibbs1902}, 
directed acyclic graphs arise in genetics \cite{Wright1925}, and chain graphs generalize these classes \cite{Lauritzen1996}. Cyclic graphs arise when modeling systems with feedback \cite{Richardson1996, Koster1996}. Graphs with bidirected edges and undirected edges arise in systems in which there are latent and selection processes \cite{sgs,RichardsonSpirtes2002}.
Different separation criteria also arise naturally in different contexts. The most common separation criterion is d-separation and its various generalizations (e.g., m-separation, c-separation). Additional examples of separation criteria are $\sigma$-separation used for non-linear feedback systems \cite{Spirtes1995cyclic,ForreMooij_UAI_19}, p-separation used for modeling independence among regression residuals \cite{AMP2001}, and $\delta$-separation used for dynamical systems \cite{Didelez2008}.
In this paper, we consider mixed graphs that allow for multiple edges between any pair of vertices where an edge can either be undirected, directed, or bidirected edges 
and a generalization of d-separation. The family of graphical models defined by these mixed graphs can model feedback, latent and selection processes and includes as subclasses many existing classes of graphical models.

We introduce two types of mixed graphs: separable graphs and essentially separable graphs. Separable graphs are those in which every missing edge between two vertices corresponds to the existence of a separating set for the two vertices. Essentially separable graphs are those graphs that are separation equivalent to some separable graph.
Separable and essentially separable graphs unify a large set of existing graphs used to define graphical models. Separable graphical models generalize the families of undirected, directed, chain graphical models, maximal ancestral graphs, maximal chain mixed graphs, and maximal anterial graphs.
Essentially separable graphical models additionally include ancestral \cite{RichardsonSpirtes2002}, anterial \cite{Sadeghi2016}, and chain mixed graphical models \cite{LauritzenSadeghi2018}. As both separable and essentially separable graphs include some cyclic graphs, graphical models defined in terms of these families include models that can be used to describe systems with feedback.

The defining property of separable graphs has appeared previously \cite{RichardsonSpirtes2002,LauritzenSadeghi2018} as an 
emergent property of a maximization process in which one adds edges to a graph to form an equivalent larger graph until one cannot further augment the graph. 
In particular,  \cite{RichardsonSpirtes2002} shows that maximal ancestral graphs and \cite{LauritzenSadeghi2018} shows that maximal chain mixed graphs have the emergent property that every missing edge corresponds to the existence of a separating set.
In contrast, we use the property to define separable graphs. This allows us to define a more general family of graphs including some cyclic graphs.
We show, however, that for the class of mixed graphs, a maximal graph need not be separable under d-separation.

We provide characterizations of separable graphs and essentially separable graphs.
In one graphical characterization we show that essentially separable graphs are essentially acyclic graphs, that is, they are separation equivalent to an acyclic graph. Furthermore, we show that the family of anterial graphs introduced by \cite{Sadeghi2016} is essentially equivalent to the family of essentially separable graphs, that is, 
the family of statistical models associated with the graphical models defined by anterial graphs is identical to the family of statistical models associated with essentially separable models.
This result has significant potential computational benefit due to the fact that anterial graphs are simple acyclic graphs.

We provide graphical characterizations of several graph families including the family of essentially undirected graphs; those graphs that are separation equivalent to some undirected graph. A graphical characterization of chain graphs that are equivalent to undirected graphs was provided by \cite{wermuth1990substantive}. Unlike this characterization undirected graphs, our characterization is purely based on the separation properties of the graph. In particular, we introduce the concept of an inducing vertex; a vertex that induces a dependence between separated sets of vertices when added to a separating set.
We show that a graph is essentially undirected if and only if it contains no inducing vertex. In addition to its utility in this characterization, we show that the existence of an inducing vertex in a graph provides information about anterior relationships between the inducing vertex and other vertices in the graph. We also provide a characterization of essential graphs in terms of their separation properties.

We also provide three characterizations of separable equivalence for separable graphs, describe a canonical representation for the equivalence class of essentially separable graphs, and provide a \sia\ for essentially separable graphical models.
The first characterization of separation equivalence is in terms of two graph having the same adjacencies and the same \miw s. This result is a direct generalization of the characterizations of separation equivalence for directed acyclic graphs \cite{VP1990}, chain graphs \cite{Frydenberg1990}, and ancestral graphs \cite{Zhao2005} to separable graphs. Our second characterization shows that we only need to consider adjacency and a subset of \miw s called \diw s.

Our final characterization is given in terms of two separational properties of pairs of vertices; vertex separability and the induced arrowhead property. 
Our final characterization is that two separable graphs are separation euqivalent if and only if the graphs have the same vertex separability and the same \iarrowhead s.
This characterization is particularly useful for developing \sia\ as it is a separational characterization which implies that, in principle, an \sia\ using independence tests can identify the equivalence class based only on the properties used in the characterization.

Our representation for the equivalence classes of separable graphs relies on this characterization and, particularly, the induced arrowhead property. We show that we can represent the equivalence class of an essentially separable graph by a graph with the same adjacencies as a separable graph in the equivalence class and whose only arrowheads on edges are induced arrowheads.

Finally, our \sia\ for essentially separable graphs, the \sgi\ algorithm, relies on identifying separating sets and inducing vertices. The \sgi\ algorithm generalizes the CI and FCI algorithms of \cite{sgs}.
In particular, the CI and FCI algorithms identify the structure of graphical models defined by ancestral graphs, whereas the \sgi\ identifies the structure of a graphical model defined by separable graphs. The algorithm is conceptually simple: the algorithm searches for separating sets for pairs of vertices. If a separating set is found, the edge between the separated vertices is remove edges. Then the separating set is used to identify inducing vertices. If inducing vertices are found, they are are used to orient edges. The algorithm then continues to search for separating sets.  We provide sufficient conditions for the algorithm to correctly identify the equivalence class of an essentially separable graph and a polynomial bound on the number of independence tests required.

\newcommand{\objectDomain}{{\cal O}}
\newcommand{\objectInDomain}{o}
\newcommand{\triples}{{\cal T}}
\newcommand{\testsForDomain}{\triples_{\objectDomain}}
\newcommand{\testsForV}{\triples_{V}}
\newcommand{\indepmodel}{independence model}
\newcommand{\Indepmodel}{Independence model}
\newcommand{\defIndepModelIForObjectDomain}{I=\seq{\objectDomain,T}}
\newcommand{\testFunctionFromIndepModel}[1]{\testFunctionName_{#1}}
\newcommand{\testcaseABC}{\seq{A,B,C}}

\newcommand{\indepABCI}{\indep{A}{B}{C}{I}}
\newcommand{\depABCI}{\dep{A}{B}{C}{I}}
\newcommand{\indepABC}{\indep{A}{B}{C}{}}
\newcommand{\indepabC}{\indep{a}{b}{C}{}}

\newcommand{\depABC}{\dep{A}{B}{C}{}}
\newcommand{\depabC}{\dep{a}{b}{C}{}}

\newcommand{\indepABCG}{\indep{A}{B}{C}{G}}
\newcommand{\depABCG}{\dep{A}{B}{C}{G}}
\newcommand{\indepabCG}{\indep{a}{b}{C}{G}}
\newcommand{\depabCG}{\dep{a}{b}{C}{G}}

\newcommand{\indepABCP}{\indep{A}{B}{C}{P}}
\newcommand{\depABCP}{\dep{A}{B}{C}{P}}
\newcommand{\indepabCP}{\indep{a}{b}{C}{P}}
\newcommand{\depabCP}{\dep{a}{b}{C}{P}}

\newcommand{\indepstatement}{independence statement}
\newcommand{\depstatement}{dependence statement}
\newcommand{\indepprop}{independence property}
\newcommand{\sepequivalent}{separation equivalent}

\newcommand{\AllIModels}{{\cal I}}

\newcommand{\markov}[1]{\DistFamily_{#1}}

\section{Preliminaries}

In this section we introduce the key objects of study in this paper including \indepmodel s, statistical \indepmodel s  induced by probability distributions, vertex \indepmodel s induced by mixed graphs, mixed graphical models, and the problem of structure identification of graphical models.

\subsection{Independence models}
\label{sec:indepmodels}
An \emph{independence model} is a representation of the 
independencies and dependencies that hold in a system of objects $\objectDomain$.
Such models are well-studied (e.g., \cite{studeny2005}) and used in a variety of contexts including two contexts relevant to this paper; vertex separation in graphs --- a system of vertices, and statistical independence in probability distributions --- a system of variables. 
An \emph{independence model} $\defIndepModelIForObjectDomain$ where $\objectDomain$ is a set of objects where $T\subseteq \testsForDomain$ is a set of triples 
and  $\testsForDomain$ is the set of all 
possible triples $\seq{A,B,C}$ of disjoint subsets of $\objectDomain$.  
For \indepmodel\ $\defIndepModelIForObjectDomain$ and triple $\testcaseABC$,
if $\testcaseABC\in T$, then we say that $A$ is independent of $B$ given $C$ and write $\indepABCI$. If $\testcaseABC \not \in T$ then we say $A$ and $B$ are dependent given $C$  and write $\depABCI$.  
For two independence models $I=\seq{\objectDomain, T}$ and $I'=\seq{\objectDomain',T'}$, we define $I\subseteq I'$ if $\objectDomain \subseteq \objectDomain$ and $T\subseteq T'$.
Finally, we denote the set of all independence models by $\AllIModels$.
\TextVersion{}{
An \emph{independence model} $\defIndepModelIForObjectDomain$ where $\objectDomain$ is a set of objects and $T\subseteq \testsForDomain$ is a set of triples where $\testsForDomain$ is the set of all possible triples $\seq{A,B,C}$ of disjoint subsets of $\objectDomain$.  
For \indepmodel\ $\defIndepModelIForObjectDomain$ and \testcase\ $\testcaseABC$,
if $\testcaseABC\in T$, then we say that $A$ is independent of $B$ given $C$ and write $\indepABCI$. If $\testcaseABC \not \in T$ then we say $A$ and $B$ are dependent given $C$  and write $\depABCI$.
The \emph{\itf} for independence model $\defIndepModelIForObjectDomain$ is the 
Boolean function $\testFunctionFromIndepModel{I} = \characteristicfunctionS{T}$, the characteristic function of the set $T$. 
}
\subsubsection{Independence properties}

An \emph{\indepprop} is a logical formula of independence and dependence statements where upper-case letters indicate sets of variables, lower-case letters indicate a singleton set of variables and distinct letters in a formula indicate that the sets are disjoint. Furthermore, two or more adjacent letters are used to indicate union of those sets. For instance, $BD$ in the weak union property in Table~\ref{tab:semi-graphoid} is used in place of $B\cup D.$ 
An independence property \emph{holds} in an \indepmodel\ for $\objectDomain$ if for every possible assignment of disjoint sets of $\objectDomain$ to letters in the logical formula, the logical formula is a true statement about the \indepmodel. 
For instance, the symmetry property in Table~\ref{tab:semi-graphoid} holds in the independence model $I$ for $V$ if for all $\seq{A,B,C}\in \testsForV$ it is the case that if $\indepABCI$ then $\indep{B}{A}{C}{I}$.
Table~\ref{tab:semi-graphoid} contains a set of independence properties that are called the \emph{semi-graphoid} properties.\cite{pearl1988}
An \indepmodel\ $I$ is a \emph{semi-graphoid} if all of the semi-graphoid properties hold in $I$.

\begin{table}[ht]
\centering
\begin{tabular}{l l l }
$\indep{A}{\emptyset}{C}{}$ & & trivial \\
$\indep{A}{B}{C}{}$ &$\implies \indep{B}{A}{C}{}$ & symmetry \\
$\indep{A}{BD}{C}{}$ &$\implies \indep{A}{B}{C}{} \wedge \indep{A}{D}{C}{}$ & decomposition \\
$\indep{A}{BD}{C}{}$ &$\implies \indep{A}{B}{CD}{}$ & weak union \\
$\indep{A}{B}{C}{} \wedge \indep{A}{D}{CB}{}$ & $\implies \indep{A}{BD}{C}{}$ & contraction \\
\end{tabular}
\caption{The semi-graphoid independence properties.}
\label{tab:semi-graphoid}
\end{table}

Table~\ref{tab:additional-independence-properties} contains 
several additional independence properties. An independence model $I$ is a \emph{compositional graphoid} if the semi-graphoid properties given in Table~\ref{tab:semi-graphoid} hold in $I$ and the intersection and composition properties of given in Table~\ref{tab:additional-independence-properties} hold in $I$.

\begin{table}[ht]
\centering
\begin{tabular}{l l l }
$\indep{A}{B}{CD}{I} \wedge \indep{A}{D}{CB}{I}$ & $\implies \indep{A}{BD}{C}{I}$ & intersection \\
$\indep{A}{B}{C}{I} \wedge \indep{A}{D}{C}{I}$ & $\implies \indep{A}{BD}{C}{I}$ & composition \\
$\indep{A}{B}{C}{I} \ \wedge \ \indep{A}{B}{Cd}{I}$ & $\implies \indep{A}{d}{C}{I} \ \vee \ \indep{d}{B}{C}{I}$ & weak transitivity\\
\end{tabular}
\caption{Additional independence properties}
\label{tab:additional-independence-properties}
\end{table}

\newcommand{\compindepmodel}{dyadic \indepmodel}
\newcommand{\Compindepmodel}{Dyadic \indepmodel}

\subsubsection{\Compindepmodel s}

\newcommand{\pairwiseclosure}{pairwise closure}
\newcommand{\pairwiseclosureFunc}{\DefineConstant{closure}}
\newcommand{\indepmodelprojection}{\indepmodel\ projection}
\newcommand{\depmodelequivalent}{equivalent}
\newcommand{\pairwisedepmodelequivalent}{pairwise \depmodelequivalent}

\TextVersion{
Next we provide a useful characterization of the equivalence of two \indepmodel s.
An \indepmodel\ $I$ is a \emph{\compindepmodel} if the trivial, symmetry, composition, and decomposition properties hold in $I$.
The function $\pairwise$ is the \indepmodelprojection\ that maps an \indepmodel\ to the \indepmodel\ consisting of all and only the pairwise triples that hold in the given \indepmodel. In particular,
$\pairwise(\seq{\objectDomain,T})=\seq{\objectDomain,\setcomprehensionLSE{\testcaseABC}{T}{|A|=1 \wedge |B|=1}}$.
Two \indepmodel s $I$ and $I'$ are \emph{\pairwisedepmodelequivalent} if $\pairwise(I)=\pairwise(I')$.

The following proposition shows that for \compindepmodel s, equivalence and pairwise equivalence  are themselves equivalent.\footnote{Note that if
a theorem, proposition, lemma, or corollary has no citation then its proof can be found in an Appendix.}

\repproposition{prop:sep-equiv::pairwise-sep-equiv}{
If $I$ and $I'$ are \compindepmodel s then $I$ and $I'$ are equivalent (i.e., $I = I'$) if and only $I$ and $I'$ are \pairwisedepmodelequivalent\ (i.e., $\pairwise(I)=\pairwise(I')$).}

The importance of Proposition~\ref{prop:sep-equiv::pairwise-sep-equiv} is that one need only verify pairwise independence facts to guarantee equivalence of any \compindepmodel.

}{

\newcommand{\cep}{conditional projection}

\newcommand{\indepmodelCondExpName}{\DefineConstant{E}}
\newcommand{\indepmodelConditionalExpectation}[2]{\indepmodelCondExpName^{#1}_{#2}}
\newcommand{\indepmodelLS}[1]{\indepmodelConditionalExpectation{L}{S}({#1})}
\newcommand{\indepmodelES}[1]{\indepmodelConditionalExpectation{\emptyset}{S}({#1})}
\newcommand{\indepmodelLE}[1]{\indepmodelConditionalExpectation{L}{\emptyset}({#1})}
\newcommand{\indepmodelEE}[1]{\indepmodelConditionalExpectation{\emptyset}{\emptyset}({#1})}

\subsubsection{Marginalization and conditioning}

Marginalization and conditioning are important operations for probability  distributions and graphical models. These operations are useful when one considers models with latent variables and selection bias \cite{spirtes1995causal, RichardsonSpirtes2002}. We define the 
\cep\ of an \indepmodel\ that captures both the marginalization and conditioning operations; this projection was defined by \cite{RichardsonSpirtes2002} but was not named.
The \emph{\cep} of \indepmodel\ $\indepmodelLS{I}$ of a \indepmodel\ $\defIndepModelIForObjectDomain$ is 
$$\indepmodelLS{\seq{\objectDomain,T}}=\seq{\objectDomain\setminus (L\cup S),T^L_S}$$
where
$$T^L_S = \setcomprehensionLSE{\testcaseABC}{\testsForDomain}{\seq{A,B,C\cup S}\in T \wedge (A\cup B \cup C)\cap (L \cup S) = \emptyset}.$$

Examples of the \cep\ applied to an \indepmodel\ $\defIndepModelIForObjectDomain$ where $\objectDomain=\set{a,b,c,d}$ and $T=\set{\seq{a,b,d},\seq{c,d,\set{a,b}}}$ are given in Table~\ref{tab:imodel-example}.

\begin{table}[ht]
\centering
\begin{tabular}{l l l }
\vspace{0.03in}
$\indepmodelEE{I}$ & $=$ & 
$\seq{\objectDomain,T}$\\
\vspace{0.03in}
$\indepmodelConditionalExpectation{d}{\emptyset}(I)$ & $=$ & 
$\seq{\set{a,b,c},\emptyset}$\\
\vspace{0.03in}
$\indepmodelConditionalExpectation{\emptyset}{d}(I)$ & $=$ & 
$\seq{\set{a,b,c},\set{\seq{a,b,\emptyset}}}$\\
\vspace{0.03in}
$\indepmodelConditionalExpectation{\emptyset}{c}(I)$ & $=$ & $\seq{\set{a,b,d},\emptyset}$\\
\vspace{0.03in}
$\indepmodelConditionalExpectation{c}{\emptyset}(I)$ & $=$ & $\seq{\set{a,b,d},\set{\seq{a,b,d}}}$\\
\vspace{0.03in}
$\indepmodelConditionalExpectation{d}{c}(I)$ & $=$ & $\seq{\set{a,b},\emptyset} 
= \indepmodelConditionalExpectation{cd}{\emptyset}(I) 
= \indepmodelConditionalExpectation{\emptyset}{cd}(I)$ 
\\
\vspace{0.03in}
$\indepmodelConditionalExpectation{c}{d}(I)$ & $=$ & $\seq{\set{a,b},\set{\seq{a,b,\emptyset}}} $
\\

\end{tabular}
\caption{Examples of the \cep\ applied to an \indepmodel\ $I$.}
\label{tab:imodel-example}
\end{table}

The \emph{marginalization} over $L$ of a \indepmodel\ $I$ is the \indepmodel\ $\indepmodelLE{I}$ and the \indepmodel\ obtained by \emph{conditioning} on $S$ is $\indepmodelES{I}$.
A notable property of the \cep\ is that it removes the sets of objects $L$ and $S$ from an \indepmodel\ implying that the \indepmodel\ resulting from an \cep\ can only be used to answer dependence and independence questions about the objects that remain. 

Some basic properties the \cep\ that follow from basic properties of sets are given in the next proposition.

\repproposition{prop:cep-properties}{
For disjoint subsets $L_1, L_2,S_1,S_2$ of $\objectDomain$ and \indepmodel\ $\defIndepModelIForObjectDomain$

\begin{center}
\begin{tabular}{l l l l}
\vspace{0.03in}
$\indepmodelEE{I}$ & $=$ & $I$ & (1)\\
\vspace{0.03in}
$\indepmodelConditionalExpectation{L_1\cup L_2}{S_1\cup S_2}$ & $=$
& $\indepmodelConditionalExpectation{L_1}{S_1}\circ \indepmodelConditionalExpectation{L_2}{S_2}$ & (2) \\
\vspace{0.03in}
$\indepmodelConditionalExpectation{L_1}{S_1}\circ \indepmodelConditionalExpectation{L_2}{S_2}$ & $=$ & $\indepmodelConditionalExpectation{L_2}{S_2}\circ \indepmodelConditionalExpectation{L_1}{S_1}$ & (3)
\end{tabular}
\end{center}
}

}

\subsection{Distributions and statistical \indepmodel s}

We use $P$ to denote a probability distribution over variables $V$ and assume that $P$ is from some \emph{defining family} of probability distributions $\DistFamily$.\footnote{More formally, the defining family has a common product measurable space $\seq{\times_{v\in V}\mathscr{X}_v,\bigotimes_{v\in V}\mathscr{F}_v}$ indexed by $V$. Typically, one also assumes that there is a common dominating measure but this is not required here.}
Note that we will use $V$ to refer to both a set of vertices in a graph and a set of variables. The statistical independence or statistical dependence of two sets of variables given a third set of variables is a property of a distribution. We denote the fact that $A$ and $B$ are independent given $C$ in $P$ by $\indepABCP$ and denote the fact that $A$ and $B$ are dependent given $C$ in $P$ by $\depABCP$. We use statistical independence to define the \emph{statistical \indepmodel} for distribution $P$ as  
$$\imodel{P}=\setcomprehensionLSE{\testcaseABC}{\testsForV}{\forall a\in A \ \forall b\in B \ \indepabCP}.$$

A statistical \indepmodel\ derived from a distribution is guaranteed to satisfy  the semi-graphoid independence properties as shown in the following proposition. A proof of this can be found in \cite{studeny2005}.

\repproposition{prop:compprob}{
    If $P$ is a probability distribution over $V$ then 
    $\imodel{P}$ is a semi-graphoid.}

\subsection{Mixed graphs}\label{sec:graphs}
The family of \emph{mixed graphs} $\GraphFamily$ is the family of loop-less mixed multigraphs with the symmetric edges $\set{\tailtail,\arrowarrow}$ and asymmetric edge $\set{\tailarrow}$ over a set of vertices $V$.
A graph $G=\seq{V,E} \in \GraphFamily$ has a finite set of vertices $V$ and a finite set of edges $E \subseteq \edgeset = V\times T \times V$ where edges have a type $t\in T=\set{\tailtail, \arrowarrow, \tailarrow}$, 
there are no \emph{loops} (i.e., $\seq{a,t,a}\not\in E$), and the set of edges is \emph{symmetric} (i.e., if $\seq{a,t,b}\in E$ and $t\in \set{\tailtail, \arrowarrow}$ then $\seq{b,t,a} \in E$). The set $\edgeset$ is the set of all possible edges with endpoints in $V$. 

\subsubsection{Edges and endmarks}
An edge $\seq{a,t,b}\in E$ \emph{connects} its \emph{endpoints} $a$ and $b$ and has \emph{edge type} $t\in \set{-,\arrowarrow, \tailarrow}$.
A pair of distinct vertices $a,b \in V$ in a graph $G\in \GraphFamily$ can have up to four distinct edges connecting them and each edge is either an \emph{undirected} ($a \tailtail b$),  a \emph{bidirected} ($a \arrowarrow b$), or one of two distinct \emph{directed} edges ($a\tailarrow b$ or $b\tailarrow a$).
An edge that connects $a$ and $b$ is often denoted by $\edgeab$ or $\edge{b}{a}$. This denotation indicates only the endpoints of the edge but not its type.
We also use the letter $e$, often with a subscript (e.g., $e_i$) to denote an edge without reference to either its type or its endpoints.
We often depict edges both in text and in figures. For instance, if $\edgeab = \seq{a,\tailarrow,b}\in E$ then we can depict the edge as either $a \tailarrow b$ or $b \arrowtail a$ and this is true regardless of whether the edge type is symmetric or asymmetric because the symbols used for symmetric edge types are laterally symmetric and the symbol for the asymmetric edge type is not laterally symmetric.
Two vertices $a$ and $b$ are \emph{adjacent} in a graph if there is an edge that connects them in the graph. We denote the set of vertices adjacent to a vertex $b$ in graph $G$ by $\adj(b,G)$.
Each endpoint of an edge in a graph $G\in \GraphFamily$ has an \emph{endmark} which is either an \emph{arrowhead} or a \emph{tail}. For instance, the edge $a \tailarrow b$ has a tail at $a$ and an arrowhead at $b$. 
An endmark of a vertex $a$ for an edge $\edgeab$ in a graph is \emph{exclusive} if all edges that connect $a$ and $b$ have the same endmark at $a$. If an exclusive endmark is an arrowhead it is called an \emph{exclusive arrowhead} and otherwise it is called an \emph{exclusive tail}.
If an edge $\edgeab$ has an arrowhead at $b$ then we say that the edge is \emph{into} $b$ and if it has a tail at $a$ we say that the edge is \emph{out of} $a$.

\subsubsection{Walks: edge sequences, vertex sequences, and section sequences}
\label{sec:walks}

In this section we define walks in graphs as edge sequences that satisfy particular constraints and define terminology and notation related to walks. We also define the vertex sequence of a walk and the section sequence of a walk.

A \emph{walk} in graph $G$ is an edge sequence in which (i) every terminal edge shares an endpoint with its adjacent edge, and  (ii) every internal edge shares one of its endpoints with one of its adjacent edges and its other endpoint with its other adjacent edge.
We use lower-case Greek letters $\omega$, $\gamma$, and $\tau$ both with and without subscripts to denote walks.
We will often depict a walk by an alternating sequence of vertices and edge types such as $a \tailarrow b \tailtail c$.

Associated with every walk $\omega$ is its unique vertex sequence $\vertexseq{\omega}$. For instance, the walk $\omega=a \tailarrow b \tailtail c$ has vertex sequence $\vertexseq{\omega}=\seq{a,b,c}$. We use $\vertexset{\omega}$ to denote the set of vertices that appear on a walk.
A \emph{chord} of a walk $\gamma$ with vertex sequence $\vertexseq{\gamma}=\seq{v_1,\ldots,v_n}$ is an edge $\edge{v_r}{v_s}$ between a pair of non-consecutive vertices $v_r$ an $v_s$. A walk is \emph{chordless} if it has no chords.
We often describe vertices relative to a walk using the following natural definitions.
A vertex $v$ is \emph{on} (or \emph{appears on}) a walk $\gamma$ if $v$ is on $\vertexseq{\gamma}$.
A vertex $v$ is called an \emph{internal vertex} on a walk $\gamma$ if $v$ is an internal vertex of $\vertexseq{\gamma}$.
A vertex $v$ is the \emph{first} vertex of a walk $\gamma$ if $v$ is the first vertex of $\vertexseq{\gamma}$.
A vertex $v$ is the \emph{last} vertex of a walk $\gamma$ if $v$ is the last vertex of $\vertexseq{\gamma}$. We use $\first(\gamma)$ and $\last(\gamma)$ to denote, respectively, the first and last vertex of walk $\gamma.$
A vertex $v$ is called an \emph{endpoint} of a walk $\gamma$ is either the first or last vertex of $\gamma$.
If the first vertex of a walk $\gamma$ is $v_1$ and the last is $v_n$ then the walk $\gamma$ is \emph{between} $v_1$ and $v_n$.

Also associated with a walk $\omega$ is a \emph{section sequence} $\defsectionomega$ such that (i) $\vertexseq{\gamma}=\sigma_1 \append \ldots \append \sigma_n$, (ii) if vertex sequence $\sigma_i$ has more than one vertex then $\omega(\sigma_i)$ is an undirected walk, (iii) no pair of adjacent sections can be combined to form a longer undirected subwalk of $\omega$; that is $\omega(\sigma_{i-1} \append \sigma_i)$ is not an undirected walk. 
A \emph{section} of a walk $\gamma$ is any vertex subsequence that appears on the section sequence of the walk $\sectionseq{\gamma}$. 
We use lower-case Greek letters $\sigma$ and $\sigma'$ both with and without subscripts to denote sections on a walk.
Note that we choose to use vertex sequences to represent sections as a section may consist of a single vertex. For instance, for the walk $\omega=a \tailarrow b \arrowarrow c$ every section is a vertex sequence of length one; $\sectionseq{\omega}=\seq{\seq{a},\seq{b},\seq{c}}$. 
A section $\sigma_i$ of a walk $\omega$ with $\defsectionomega$ is a \emph{collider section} if $1 < i < m$, the edge $\edge{\last(\sigma_{i-1})}{\first(\sigma_i)}$ on $\omega$ has an arrowhead at $\first(\sigma_i)$, and the edge $\edge{\last(\sigma_i)}{\first(\sigma_{i+1}}$ has an arrowhead at $\last(\sigma_i)$.
Any section on a walk that is not a collider section is a \emph{non-collider section}. Note that the endpoints of a walk are always on non-collider sections.

\subsubsection{Types of walks and anterior sets}\label{sec:types-of-walks}

In this section we define several types of walks, and define anterior and posterior vertex sets that are defined in term of walks in a graph $G\in \GraphFamily$.

First we define several type of walk.
A walk $\omega$ with $\defwalkomega$ is a \emph{circuit} if the first and last vertices of the walk are identical (i.e., $v_1=v_n$).\footnote{Unlike some definitions of circuit, we allow for repeated vertices and repeated edge.}
A walk is \emph{undirected} if all of the edges on the walk are undirected edges.
A walk $\gamma$ with vertex sequence $\vertexseq{\gamma}=\seq{v_1,\ldots,v_n}$
is an \emph{anterior} walk if it contains only undirected and  directed edges and if the edge $\walkedgev{i}$ on $\omega$ is directed then it is oriented $\walkedgerightarrow{i}$.
A walk $\omega$ with vertex sequence $\defwalkomega$
is a \emph{directed} walk if it is an anterior walk containing only directed edges.
A \emph{semi-directed} walk is an anterior walk with at least one directed edge.
If $\omega$ with $\defwalkomega$ is either a directed walk, anterior walk, or a semi-directed walk then the walk is \emph{from} $v_1$ \emph{to} $v_n$.

Next we define the concepts of anterior sets in a graph.
The vertices \emph{anterior} to a set of vertices $A$ in graph $G$ is the set $\ant(A,G)$ containing $A$ and all vertices $j\in V$ such that there is a anterior walk from $j$ to some vertex $a\in A$. A set of vertices $A$ is an \emph{anterior set} in a graph $G$ if $A=\ant(A,G).$
A set of vertices $A$ is \emph{anterior} to a set of vertices $B$ in graph $G$ if $A\subseteq \ant(B,G)$.

\newcommand{\vsc}{vertex separation criterion}

\subsubsection{Connecting walks and vertex separation}\label{sec:separation}

A \vsc\ captures the separation properties of a graph and one typically defines a \vsc\ in terms of connecting walks in the graph.
In this section, we extends previous definitions of d-connecting walks to mixed graphs in order to define a \vsc\ for mixed graphs.

A walk $\omega$ between distinct vertices $a$ and $b$ is a \emph{connecting walk} given $C\subseteq V$ in graph $G\in \GraphFamily$ if (1) every collider section of $\omega$ has a vertex in $C$, and (2) no non-collider section has a vertex in $C$. Note that $C$ cannot contain either endpoint of the walk if the walk is connecting given $C$.
If there is a connecting walk between $a$ and $b$ given $C$ in graph $G$ then $\indepabC$ holds in $G$.
If there is no connecting walk between $a$ and $b$ given $C$ in graph $G$ then $\depabC$ holds in $G$. If $\depabC$ holds in $G$ then we write $\depabCG$ and, otherwise $\indepabCG$.
Two vertices $a$ and $b$ in graph $G$ are \emph{separated} if there is a set $C$ such that $\indepabCG$.

\subsubsection{Independence models of mixed graphs} Vertex separation in a graph is then used to define an independence model over the vertices. In particular,
the \emph{graph \indepmodel\ for graph} $\defgraphG$ is 
$$\imodel{G}=\seq{V,\setcomprehensionLSE{\testcaseABC}{\testsForV}{\forall a\in A \ \forall b\in B \ \indepabCG}}.$$

The independence model of a mixed graph satisfies additional independence properties.  The following proposition was proved in \cite{SadeghiLauritzen2011}.

\repproposition{prop:compgraph}{
    If $G$ is a mixed graph then 
    the compositional graphoid properties hold in $\imodel{G}$.}

In addition, the weak transitivity independence property was shown to hold in mixed graph in \cite{Sadeghi2017}. 

\repproposition{prop:weaktrans}{
    If $G$ is a mixed graph then weak transitivity holds in  $\imodel{G}$.}

\subsubsection{Separation equivalence and separation equivalence classes} Separation equivalence and separation equivalence classes play an important role in characterizing graphical models and in identifying their structure.

The \emph{separation equivalence} of two graphs is defined in terms of their \indepmodel s as $G \equiv G'$ if $\imodel{G}=\imodel{G'}$.\footnote{Separation equivalence is also called Markov equivalence.}
Note that the independence model of a graph is decomposable, thus, Proposition~\ref{prop:sep-equiv::pairwise-sep-equiv} implies that one only needs to consider walk in order to establish the separation equivalence of two graphs. In particular, two graphs $G$ and $H$ are separation equivalent if and only if (1) if there exists a connecting walk given $C$ in $G$ then there is a connecting walk given $C$ in $H$, and (2) if there exists a connecting walk given $C$ in $H$ then there is a connecting walk given $C$ in $G$.

The separation equivalence class of a graph is the set of graphs that are separation equivalent to the graph.
The equivalence class of a graph $G$ with respect to $\equiv$ relative to a graph family $\GraphFamilyF$ is the set of graphs 
$\equivclassF{G}=\setcomprehensionLSE{G'}{\GraphFamilyF}{G\equiv G'}$.  Note that we do not explicitly mention the equivalence relation $\equiv$ when  denoting the separation equivalence class of a graph because the only equivalence classes of graphs that we consider are separation equivalence classes. When no family is mentioned the graph family is take to be the set of all mixed graphs $\GraphFamily$, that is, $\equivclass{G}=\equivclassG{G}$.

\subsection{Graph properties and graph separation properties}
\label{sec:separational}

A \emph{graph property} is a Boolean function of one or more graphs.

A graph property $f(G_1,\ldots,G_k)$ is a \emph{graph separation property} if it
can be expressed as a Boolean function of the independence models of the graphs;
that is, for some Boolean function $h$,

\[
f(G_1,\ldots,G_k)
  = h\bigl(I(G_1),\ldots,I(G_k)\bigr).
\]

Such a property depends on each graph only through its induced independence model.

Throughout this paper we distinguish between \emph{graphical}
characterizations, which are defined in terms of arbitrary graph properties, and \emph{separational} characterizations, which are defined in terms of graph separation properties.  A key advantage of separational characterizations is that they depend only on the independence models of the graphs and are therefore invariant under separation equivalence.

\subsection{Graphical models, separational and statistical equivalence}

A \emph{graphical model} is defined with respect to a \emph{defining graph structure} $G$ and a \emph{defining set of distributions} $\DistFamily$.
A \emph{graphical model} is a tuple $\statmodelwithDistFamily{G}=\seq{G,\markov{G}}$ such that $G\in \GraphFamily$ and $\markov{G}=\setcomprehensionLSE{P}{\DistFamily}{\imodel{G} \subseteq \imodel{P}}$.
If $\imodel{G} \subseteq \imodel{P}$ we say that $P$ is \emph{Markov} with respect to $G$.\footnote{The relationship $\imodel{G} \subset \imodel{P}$ --- all of the separation facts in the graph correspond to independence facts in the distribution --- is sometimes referred to as $P$ satisfying the global Markov condition with respect to graph $G$. Other concepts relating independence in a distribution and separation in a graph exist. See \cite{Lauritzen1996} for a discussion of other such relationships between graphs and probability distributions.} Thus $\markov{G}$ represents the set of distributions in $\DistFamily$ that are Markov with respect to $G$.
We use $\statmodel{G}$ to denote $\statmodelwithDistFamily{G}$ unless the choice of the $\DistFamily$ is relevant.

It is useful to compare graphical models in terms of the independence models of their defining graphs, and the set of distributions that are Markov with respect to their defining graph. 
Two graphical models $\statmodel{G}, \statmodel{H}$ are \emph{separation equivalent} if $G\equiv H$.
Two graphical models $\statmodel{G}, \statmodel{H}$ are \emph{statistically equivalent} if $\markov{G}=\markov{H}$.
The following proposition relates the separation equivalence and statistical equivalence of graphical models. It follows from the fact that the set of distributions that are Markov with respect to a graph is defined in terms of the independence model of the defining graph.

\repproposition{prop:sep-equiv:stat-equiv}{If two graph are separation equivalent (i.e. $F\equiv H$) then $\statmodelwithDistFamily{F}$ is statistically equivalent to $\statmodelwithDistFamily{H}$.}

Statistical equivalence can fail to imply separation equivalence if the family of defining distributions is not rich enough to distinguish between graphs that are not separation equivalent.

\subsection{Graph families and essential graph families}
\label{sec:graph-families}
A \emph{graph family} $\GraphFamilyF$ is any subset of the family of mixed graphs $\GraphFamily$ (i.e., $\GraphFamilyF\subseteq \GraphFamily$).
In this section we distinguish several types of mixed graphs and notation for the set of graphs of each type.
A graph is \emph{simple} if there is at most one edge between any pair of vertices. The family of simple graphs is $\SimpleFamily$.
A graph is \emph{acyclic} if it contains no semi-directed circuit and is \emph{cyclic} otherwise. The family of acyclic graphs is $\AcycFamily$.
A graph $G$ is \emph{maximal} if adding any edge between any two non-adjacent vertices in $G$ changes its independence
model. The family of maximal graphs is $\MaxFamily$.

We define essential graph families using the equivalence closure of a generating graph family with respect to separation equivalence. 
The equivalence closure of a set $\GraphFamilyF \subseteq \GraphFamily$ with respect to equivalence relation $\sim$ on $\GraphFamily$ is the set $\operatorname{cl}_\sim(\GraphFamilyF)=\setcomprehensionLSE{F}{\GraphFamilyF}{\exists G\in \GraphFamily, G \sim F}.$
The \emph{essential graph family} generated by $\GraphFamilyF$
is the set $\EssentialFamilyF=\operatorname{cl}_\equiv(\GraphFamilyF).$
Equivalently, this is the union of all separation--equivalence classes that intersect $\mathcal{F}$.

\subsection{Comparing the expressivity graph families}
We can compare graphical models in terms of their graphical expressivity and their separational expressivity.

We compare the \emph{graphical expressivity} of two graphical model families by comparing the graphs that they contain. For instance, if $\GraphFamilyF \subset \GraphFamilyH$ then $\GraphFamilyH$ is \emph{more graphically expressive} than $\gmfF$. Thus, the family of chain graphs is more graphically expressive than both the family of undirected and directed acyclic graphs.

We compare the \emph{separational expressivity} of two graph families using the sets of graphs contained in the essential families generated by the two families. 
If $\EssentialFamilyF \subset \EssentialFamily{\GraphFamilyH}$ then we say that $\GraphFamilyH$ is \emph{more separationally expressive} than $G$. We denote this relationship by $\GraphFamilyF \sqsubset \GraphFamilyH$. If $\EssentialFamilyF = \EssentialFamily{\GraphFamilyH}$ then the two families are \emph{essentially equivalent} and denote the relationship by $\GraphFamilyF \equiv \GraphFamilyH$.
Because essential families are defined as the separation-equivalent closure, such set-theoretic comparison compare the families of induced independence models associated with the graph family.
The implications of essential equivalence for graphical models is considered in the next section.

\subsection{Graphical model families}

A \emph{\gimf} is defined in terms of a \emph{defining family of graphs} $\GraphFamilyF \subseteq \GraphFamily$ and a defining family of distributions $\DistFamily$.
In particular, the \emph{graphical model family} defined by $\DistFamily$ and $\GraphFamilyF$ is the set of graphical models $\ModelFamily^\DistFamily_\GraphFamilyF = \bigcup_{G\in\GraphFamilyF}\statmodelwithDistFamily{G}$.
For instance the family of Gaussian anterial graphical models is the set of models $\ModelFamily_{\AnterialFamily}^\DistFamily$ if $\DistFamily$ is the set of all Gaussian probability distributions over the set of variables $V$. Other types of graphical models are defined by using other defining families of distributions and graphs. We will typically assume that different \gimfs\ share the same defining family of distributions $\DistFamily$ and use, for instance, $\gmfF$ to denote $\ModelFamily^\DistFamily_\GraphFamilyF$.

We compare the \emph{statistical expressivity} of two graphical model families using the set of models that they define. We use the relations $\equiv$ and $\sqsubseteq$ to compare statistical expressivity. These relations are defined as follows

\newcommand{\ModelsFromGraphicalModelFamily}[1]{\DefineConstant{Models}(#1)}

\begin{eqnarray*}
\gmfF \equiv \gmfH &\quad \mathrm{if}\quad& \ModelsFromGraphicalModelFamily{\gmfF}=\ModelsFromGraphicalModelFamily{\gmfH} \\
\gmfF \sqsubset \gmfH &\quad \mathrm{if}\quad& \ModelsFromGraphicalModelFamily{\gmfF} \subset \ModelsFromGraphicalModelFamily{\gmfH} \\
\end{eqnarray*}
where $\ModelsFromGraphicalModelFamily{\gmfH}=\set{\markov{H} \ | \ \seq{H, \markov{H}} \in \gmfH}$.
If $\gmfF \equiv \gmfH$ then the two graphical model families are \emph{modeling equivalent}. If $\gmfF \sqsubset \gmfH$ then $\gmfH$ is \emph{more statistically expressive} than $\gmfF$.

The following proposition captures the implication that the essential equivalence of two graph families on the statistical expressivity of the graphical model families that they define.
The proposition follows from the fact that essentially equivalent graph families have the same graphs modulo separation equivalence and Proposition~\ref{prop:sep-equiv:stat-equiv}.

\begin{proposition}
    If two graph families are essentially equivalent (i.e, $\GraphFamilyF \equiv \GraphFamilyH$) then for any defining family $\DistFamily$, the graphical model families that they define are modeling equivalent (i.e., $\ModelFamily^\DistFamily_\GraphFamilyF \equiv \ModelFamily^\DistFamily_\GraphFamilyH$).
\end{proposition}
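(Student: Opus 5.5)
The plan is to show the two inclusions $\ModelsFromGraphicalModelFamily{\ModelFamily^\DistFamily_\GraphFamilyF} \subseteq \ModelsFromGraphicalModelFamily{\ModelFamily^\DistFamily_\GraphFamilyH}$ and the reverse directly from the definitions. The statement is symmetric in $\GraphFamilyF$ and $\GraphFamilyH$, so only one inclusion needs an argument. Throughout, ``essentially equivalent'' is read as intended: every graph in $\GraphFamilyF$ is separation equivalent to some graph in $\GraphFamilyH$, and vice versa. Equivalently, $\EssentialFamilyF = \EssentialFamily{\GraphFamilyH}$, where each essential family is the union of the separation-equivalence classes meeting the generating family.

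Fix an element of $\ModelsFromGraphicalModelFamily{\ModelFamily^\DistFamily_\GraphFamilyF}$. It has the form $\markov{F}$ for some $F\in\GraphFamilyF$. Since $F\in \GraphFamilyF\subseteq\EssentialFamilyF=\EssentialFamily{\GraphFamilyH}$, $F$ lies in a separation-equivalence class that meets $\GraphFamilyH$. Hence there is $H\in\GraphFamilyH$ with $F\equiv H$, that is, $\imodel{F}=\imodel{H}$.

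By Proposition~\ref{prop:sep-equiv:stat-equiv}, separation equivalence gives statistical equivalence, so $\markov{F}=\markov{H}$. One can also see this directly: $\markov{F}=\setcomprehensionLSE{P}{\DistFamily}{\imodel{F}\subseteq\imodel{P}}$, and this set depends on $F$ only through $\imodel{F}$. Therefore $\markov{F}=\markov{H}\in\ModelsFromGraphicalModelFamily{\ModelFamily^\DistFamily_\GraphFamilyH}$. Exchanging the roles of $\GraphFamilyF$ and $\GraphFamilyH$ gives the reverse inclusion, so the two sets of models are equal. This is exactly $\ModelFamily^\DistFamily_\GraphFamilyF \equiv \ModelFamily^\DistFamily_\GraphFamilyH$, and it holds for every defining family $\DistFamily$ because $\DistFamily$ was never restricted.

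There is no substantive obstacle. The only delicate point is definitional: the displayed formula for $\operatorname{cl}_\equiv$ is written with $F\in\GraphFamilyF$ and the quantifier over $\GraphFamily$, which looks reversed relative to the intended closure. The plan uses the stated equivalent description (the union of all separation-equivalence classes intersecting the family), under which the step ``$F\in\EssentialFamily{\GraphFamilyH}$ implies there exists $H\in\GraphFamilyH$ with $F\equiv H$'' is immediate.
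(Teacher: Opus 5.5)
Your proof is correct and follows the paper's own argument. Like the paper, you match each graph in one family with a separation-equivalent graph in the other and invoke Proposition~\ref{prop:sep-equiv:stat-equiv}, i.e.\ the fact that $\markov{G}$ depends on $G$ only through $\imodel{G}$; this is the content of the paper's appendix lemma relating $\sqsubseteq$ on graph families to $\sqsubseteq$ on model families. You are also right that the displayed formula for $\operatorname{cl}_\equiv$ has its quantifiers reversed, and reading it as the union of equivalence classes meeting the family is the intended meaning.
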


\subsection{\Sia s and identification}\label{sec:structure-identification}

The goal of a \sia\ is to identify graphical properties of the structure of a \emph{generating graph} $\genstruct$ given an \emph{observed} distribution $\gendist$.\footnote{
It is also natural to consider a structure identification algorithm that takes data sampled from a generative distribution as input. Assuming that we observe the generative distribution allows us to focus on identification rather than other notions of correctness like asymptotic consistency or rates of identification.}
In this paper we focus on \indeptest\ based \sia s in which one uses only the results of statistical independence tests to identify the structure of $\genstruct$.
This type of \sia\ is often called a constraint-based learning algorithm.
Furthermore, we choose to represent the identified structure of $\genstruct$ with a graph.
Thus, our focus is on \gsia s, where a \emph{\gsia} is a function $l\in \function{\AllIModels}{\GraphFamily}$ that maps the \indepmodel\ $\imodel{\gendist}$  to a graph $G\in \GraphFamily$.

We begin by noting that two separation equivalent graphs cannot be distinguished by a \gsia. This is due to the fact that a \gsia\ is invariant under separation equivalence. In other words, the two separation equivalent graphs have equivalent independence models (i.e., $\imodel{G}=\imodel{H}$) which implies that any \gsia\ applied to these \indepmodel s will result in identical graphs (i.e., $l(\imodel{G})=l(\imodel{H})$). 
Thus separation equivalent graphs are indistinguishable based on statistical independence tests. Thus, a \gsia\ can only hope to distinguish separation equivalence classes of graphical models.

In order for a \gsia\ to correctly identify the separation equivalence class of a generating graph $\genstruct$ we need to make identifying assumptions connecting $\imodel{\gendist}$ and $\imodel{\genstruct}$.
A typical assumption between $\gendist$ and $\genstruct$ is the \emph{Markov assumption}. A distribution $\gendist$ is \emph{Markov} with respect to $\genstruct$ (i.e., $P \in \markov{G}$).
The Markov assumption, however, is not sufficient to guarantee the identification of the equivalence class of the generating graph.
The assumption only guarantees that $\imodel{\genstruct} \subseteq \imodel{\gendist}$.
If, for instance, there is 
a product distribution\footnote{A distribution is a product distribution if $P(V)=\prod_{v_i\in V}P(v_i)$ where $P(v_i)$ is the marginal distribution of $v_i$.}  $P\in \DistFamily$ then $P\in \markov{G}$ for all $G\in \GraphFamilyF$. Thus, if the generative distribution $\gendist$ is a product distribution, all graphical structures are indistinguishable if one only assume that $\gendist$ is Markov with respect to $\genstruct$. 

Thus, in order to identify the equivalence class of the generating graph $\genstruct$, one must make an additional compatibility assumption between $\gendist$ and $\genstruct$ generally, or between $\imodel{\gendist}$ and $\imodel{\genstruct}$ in the case of an \indeptest\ based \sia.
Such an assumption must ensure that at least some additional dependence facts hold in $\gendist$.
The typical assumption (e.g., \cite{sgs}) is that $\gendist$ is Markov and faithful with respect to $\genstruct$ or equivalently that $\gendist$ is perfect with respect to $\genstruct$. A probability distribution $\gendist$ is \emph{perfect} with respect to $\genstruct$ if $\imodel{\gendist}=\imodel{\genstruct}$.
A weaker assumption that is appropriate for a \gsia\ is \emph{perfect testing},  where one assumes that, for any $\testcaseABC$ considered during the execution of the algorithm, the result of statistical independence tests on $\gendist$ matches the result of a separation test on $\genstruct$ (i.e., $\indep{A}{B}{C}{\imodel{\gendist}}$ if and only if  $\indep{A}{B}{C}{\imodel{\gendist}}$). In this case we say $\gendist$ \emph{provides perfect testing} for $\genstruct$ with respect to a \gsia.
The sufficiency of the perfect testing assumption for identifying the equivalence class of $\genstruct$ for any graph family $\GraphFamilyF$ is due to the fact that $G\equiv H$ is defined as $\imodel{G}=\imodel{H}$.  Thus, if a \gsia\ uses enough statistical tests it can distinguish graphs that are not separation equivalent.

Finally, we define when a \gsia\ identifies the separation equivalence class of a graph family given a probability distribution under some compatibility condition.
A \emph{compatibility condition} is a Boolean function of a distribution and a graph; a function in $\booleanfunction{\carteseanBinary{\DistFamily}{\GraphFamily}}$.
A \gsia\ $l\in \function{\AllIModels}{\GraphFamily}$ \emph{identifies the separation equivalence class} of a graph family $\GraphFamilyF$ given $\gendist \in \DistFamily$ under compatibility condition $C$ if
for all $\genstruct \in \GraphFamilyF$ and for all $\gendist \in \DistFamily$ it is the case that if $C(\gendist, \genstruct)$ then  $l(\imodel{\gendist}) \equiv \genstruct$.

\section{Separable graphs}
\label{sec:separable-graphs}

A graph $G$ is \emph{separable} if every pair of non-adjacent vertices is separable.
The family of separable graphs is denoted by $\SepFamily.$ In this section, we provide a characterization of separable graphs and of separable vertices in mixed graphs.

The existence of separating sets is closely connected to the adjacency properties of a mixed graph. Recall that $C$ is a separating set for two vertices $i$ and $j$ in graph $G$ if $\indep{i}{j}{C}{G}$ and that that two vertices $i$ and $j$ are \emph{separable} in graph $G$ if there is a separating set $C$.
The following propositions capture the fact that the existence of a separating set for two vertices implies that the two vertices are not adjacent. 
This proposition follows from the fact that two vertices that are adjacent cannot be separated by any separating set.

\begin{proposition}\label{prop:separable:not-adjacent}
    If $i$ and $j$ are separable in $G$ then $i\not\in \adj(j,G)$.
\end{proposition}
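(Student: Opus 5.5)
We argue by contraposition: if $i\in\adj(j,G)$, then for every $C\subseteq V\setminus\set{i,j}$ there is a connecting walk between $i$ and $j$ given $C$. Then $\dep{i}{j}{C}{G}$ for every such $C$, so no separating set exists and $i$ and $j$ are not separable. Separating sets must be disjoint from $\set{i,j}$, since triples in $\testsForV$ consist of disjoint sets, so these are the only candidate sets $C$ to check.

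The witness is the single-edge walk. Let $\edge{i}{j}$ be any edge connecting $i$ and $j$, and let $\omega$ be the walk consisting of just this edge, so $\vertexseq{\omega}=\seq{i,j}$. This is a walk in the sense of Section~\ref{sec:walks}: a one-edge sequence has no internal edges, and its sole edge trivially meets the endpoint conditions. Its endpoints $i$ and $j$ are distinct because $G$ has no loops.

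Next we check the two connecting conditions from Section~\ref{sec:separation} by examining the section sequence of $\omega$. A collider section $\sigma_k$ requires $1<k<m$, so it must sit strictly between the first and last sections. If the edge is undirected, $\sectionseq{\omega}=\seq{\seq{i,j}}$ has a single section. Otherwise $\sectionseq{\omega}=\seq{\seq{i},\seq{j}}$ has two. In either case $m\le 2$, so $\omega$ has no collider sections and condition (1) holds vacuously. Every section is therefore a non-collider section, and the vertices on these sections are exactly $i$ and $j$. Since $C\cap\set{i,j}=\emptyset$, no non-collider section contains a vertex of $C$, so condition (2) holds. Thus $\omega$ is a connecting walk given $C$, and this holds for every admissible $C$.

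There is no real obstacle here. The only point needing care is the bookkeeping on section sequences: with at most two sections, the index constraint $1<k<m$ excludes collider sections regardless of the edge type or its endmarks, so the argument applies uniformly to undirected, directed and bidirected edges.
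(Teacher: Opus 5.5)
Your proof is correct and follows the same route as the paper, which proves the contrapositive (Lemma~\ref{lem:adjacent:not-vertex-separable}) by using the single-edge walk between $i$ and $j$ as a connecting walk given every $C\subseteq V\setminus\set{i,j}$. Your check that a one-edge walk has at most two sections, and therefore no collider section, supplies a detail the paper leaves implicit.
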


The lack of a separating set for a pair of two vertices, however, does not imply the existence of an edge between the two vertices. Consider the pair of vertices $\seq{b,c}$ in Graph $G_2$ in Figure~\ref{fig:separable-and-weakly-separable-graphs}. These vertices are not separable and thus Graph $G_2$ is not separable. 
One can verify that there is no separating set for this pair of vertices by exhaustively considering the possible separating sets. The following theorem characterizes when two vertices are separable in a graph and provides a more direct approach to verifying that two vertices are separable.

\reptheorem{thm:vertex-separable:pairwise-anterial-separable}{{\rm \!\! [Pairwise anterior separation] \  }
    Two vertices $i$ and $j$ are separable in graph $G$ if and only if $\indep{i}{j}{\propantij}{G}$.
    }

This characterization of vertex separability generalizes the characterization of \cite{LauritzenSadeghi2018} from chain mixed graphs to mixed graphs. Again considering vertices $\seq{b,c}$ in Graph $G_2$, we can see that $\dep{b}{c}{ade}{G_2}$ which implies, by Theorem~\ref{thm:vertex-separable:pairwise-anterial-separable}, that $b$ and $c$ are not separable.

In order to characterize separable graphs we consider two types of walks: collider walks and \iw s. 
A \emph{collider walk} is a walk in which all internal vertices on the walk are on collider sections of the walk. Note that this implies that the endpoints of a collider walk are the only vertices on non-collider sections of the walk. A collider walk is \emph{trivial} if it contains no collider sections and it is \emph{non-trivial} otherwise. Any walk consisting of a single edge is a trivial collider walk. 
A collider walk in a graph is an \emph{\iw} if there is no edge in the graph between the endpoints of the walk. Note that an inducing walk must have at least one internal vertex and thus a walk consisting of a single edge is not an inducing walk because it has no internal vertices.
Inducing walks are important with respect to separability due to the fact that they can induce dependence between vertices that are not adjacent. They also play an important roll in other results in the paper including one of our characterizations of separation equivalence.

The next theorem characterizes separable graphs in terms of whether the graph contains a \siw.
A \emph{\siw} is an inducing walk such that every collider section is anterior to one of the endpoints of the inducing walk.\footnote{A \siw\ is closely related to both the primitive inducing path of \cite{RichardsonSpirtes2002} and the primitive inducing walk of \cite{LauritzenSadeghi2018}. In fact, if either a primitive inducing path or primitive inducing walk has no edge between its endpoints then it is a \siw.}

\reptheorem{thm:separable-graph::no-self-inducing-walks}{A graph is separable if and only if it contains no self-inducing walks.}

This characterization generalizes the results of  \cite{LauritzenSadeghi2018} and \cite{RichardsonSpirtes2002} to the family of mixed graphs.

The graph $G_1$ in Figure~\ref{fig:separable-and-weakly-separable-graphs} is an example of a  separable graph. Separability can be verified by the separation facts $\indep{b}{e}{d}{G_1}$, $\indep{a}{b}{\emptyset}{G_1}$, $\indep{a}{d}{\emptyset}{G_1}$ and $\indep{a}{e}{\emptyset}{G_1}$.

\begin{figure}[htbp]
  \centering
  \includegraphics[scale=0.6]{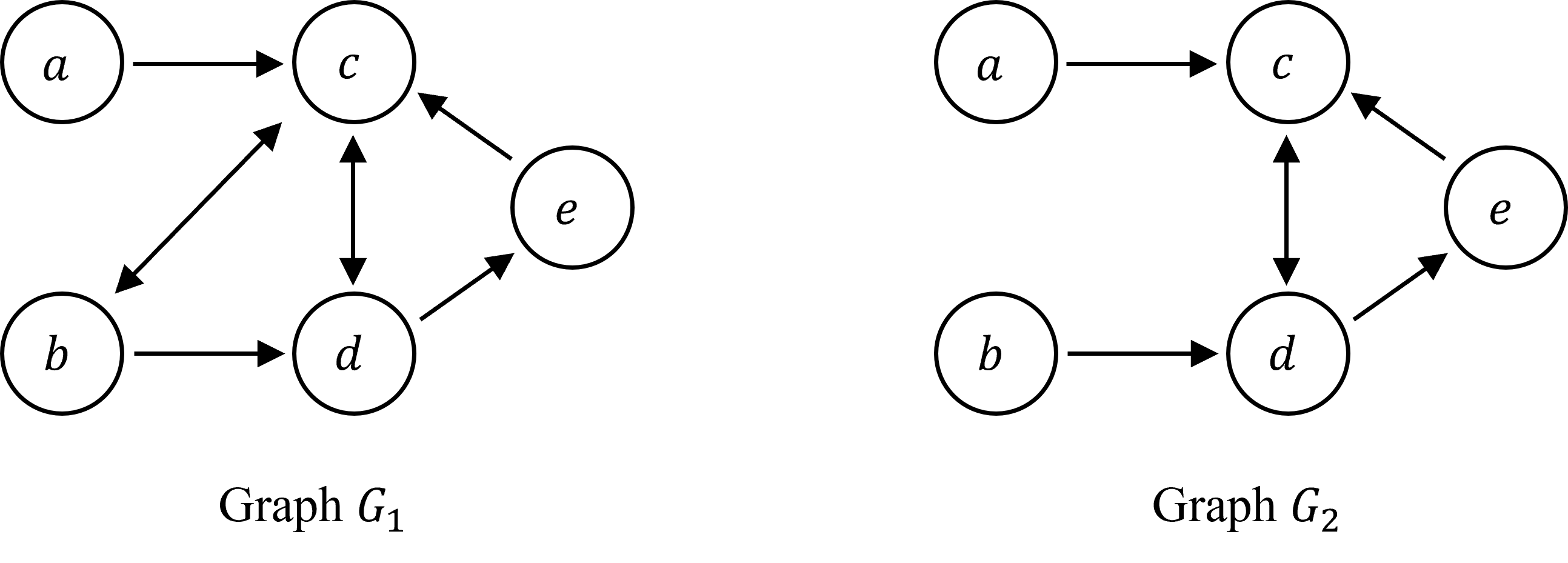}
  \caption{Separable graph $G_1$ and essentially separable graph $G_2$ that is not separable.}
  \label{fig:separable-and-weakly-separable-graphs}
\end{figure}

\section{Essentially separable graphs}
\label{sec:essentially-separable}
A graph $G$ is \emph{essentially separable} if there is a graph $G'\equiv G$ such that $G'$ is separable.
Thus, the family of essentially separable graphs is the essential graph family $\WSepFamily$ generated by separable graphs.
When considering graphical models defined by graphs, if a graphical model is defined in terms of a essentially separable graph then there is a statistically equivalent graphical model that is defined in terms of a separable graph.

Graph $G_2$ in Figure~\ref{fig:separable-and-weakly-separable-graphs} is not separable.
This follows from Theorem~\ref{thm:separable-graph::no-self-inducing-walks} and the fact that the walk $b \tailarrow d \arrowarrow c$ is a \siw.
The graph, however, is essentially separable. In particular, if edge $b \arrowarrow c$ is added to this graph then one obtains graph $G_1$, an equivalent separable graph.
The equivalence can be verified by enumerating the separation facts for the two graphs.

For acyclic graphs, a graph is separable if and only if it is maximal (see, Corollary~2 of \cite{LauritzenSadeghi2018}). This correspondence, however, fails for mixed graphs as illustrated by Graph $G_3$ of Figure~\ref{fig:non-weakly-separable-graph}.\TextVersion{}{\footnote{In Section~\ref{sec:sigma-separation}, 
we show that for mixed graphs under under $\sigma$-separation, $\sigma$-maximal graphs correspond to $\sigma$-separable graphs.}}
\TextVersion{
This graph is an example of a cyclic graph that is maximal but not essentially separable. 
Graph $G_3$ is not separable due to the fact that $b \tailarrow d \arrowtail c$ is a \siw\ and Theorem~\ref{thm:separable-graph::no-self-inducing-walks}. Note $a\tailarrow e \arrowtail d$ is also a \siw.
To see that the graph $G_3$ has no equivalent separable graph we start by noting that two separation facts are true in graph $G_3$: (1) $\indep{a}{b}{\emptyset}{G_3}$ and (2) $\indep{a}{b}{cd}{G_3}$.
Suppose there is a separable graph $G'_3$ that is equivalent to $G_3$. 
Graph $G'_3$ must have an edge $\edge{b}{c}$ and an edge $\edge{a}{c}$.
If $\edge{b}{c}$ has a tail at $c$ or $\edge{a}{c}$ has tail $c$ in $G'_3$ then fact (1) would not be true. However, if both edges have arrowheads at $c$ then fact (2) would not hold. Thus, no such separable graph $G'_3$ equivalent to $G_3$ exists.

\begin{figure}[htbp]
  \centering
  \includegraphics[scale=0.6]{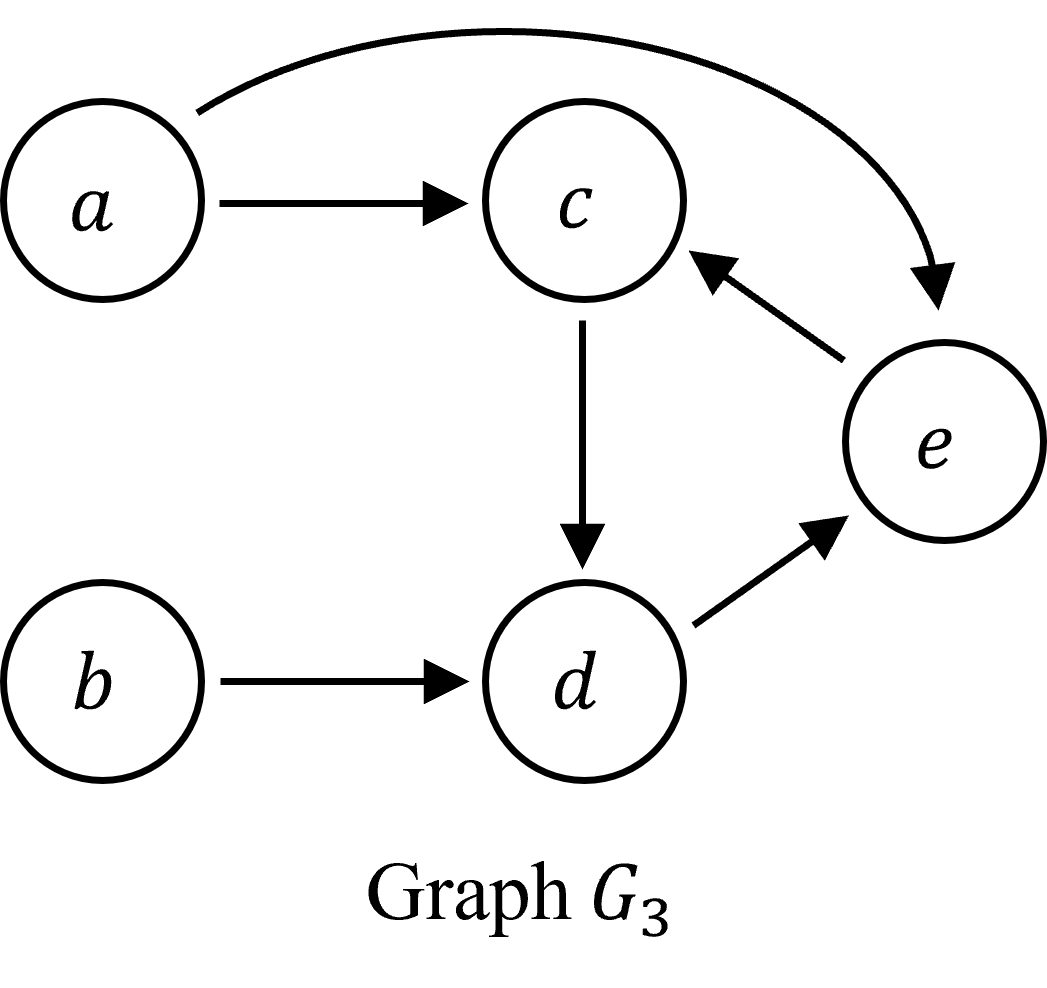}
  \caption{A graph that is not essentially separable and maximal.}
  \label{fig:non-weakly-separable-graph}
\end{figure}

Next we show that graph $G_3$ is maximal. No edge $\edge{a}{b}$ can be added due to the fact that $a$ and $b$ are separable due to (e.g.) separation fact (1).
We have shown above that no edge $\edge{b}{c}$ can be added so the only other edges that could be added is the edge $\edge{a}{d}$.
Consider adding an edge $\edge{a}{d}$ to the graph. 
If $\edge{a}{d}$ has a tail at $d$ or $\edge{b}{d}$ has a tail at $d$ in $G'_3$ then fact (1) would not be true. However, if both edges have arrowheads at $d$ then fact (2) would not hold. Thus, no such separable graph $G'_3$ equivalent to $G_3$ exists. Thus, no such edge can be added to the graph and $G_3$ is maximal.
}
{


\begin{figure}[htbp]
  \centering
  \includegraphics[scale=0.6]{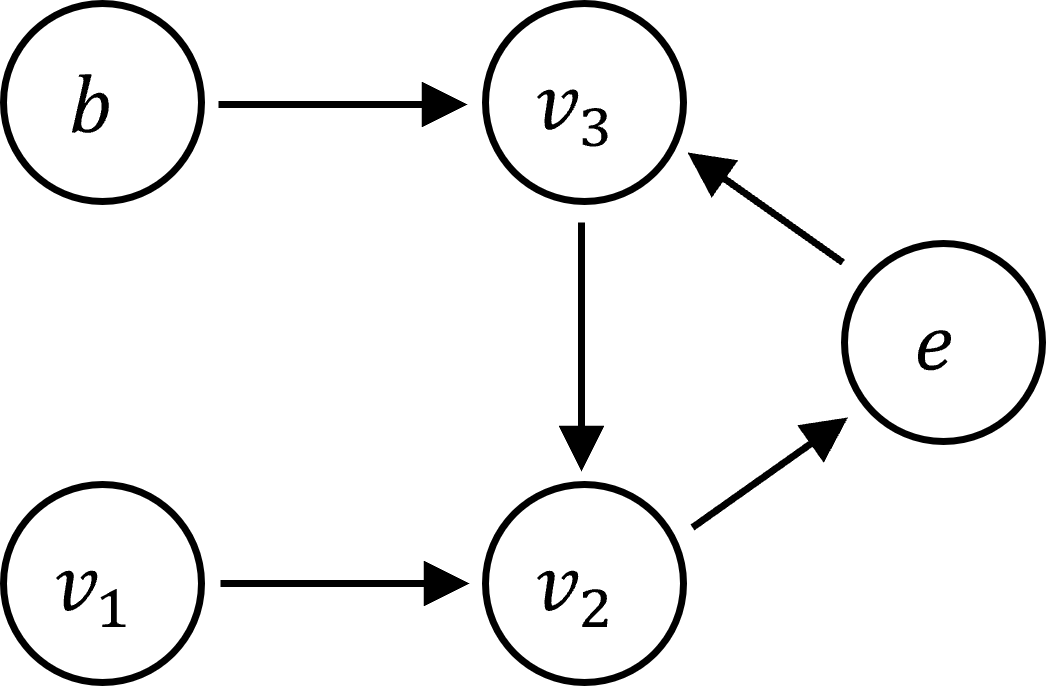}
  \caption{A graph that is not essentially separable and maximal.}
  \label{fig:alt-non-weakly-separable-graph}
\end{figure}

Next we show that the graph in Figure~\ref{fig:alt-non-weakly-separable-graph} is maximal. As shown above, the graph is not separable or even weakly-separable. 
The only edges that can possibly be added are $\edge{v_1}{v_3}$ and  $\edge{b}{e}$ as all other pairs of vertices are separable in the graph. We showed above that no edge $\edge{v_1}{v_3}$ can be added. Next we consider adding an edge $\edge{b}{e}$ to the graph. Two additional separation facts that hold in the graph are fact (3), $\indep{b}{v_2}{v_3}{G}$ and fact (4), $\indep{b}{v_2}{v_2e}{G}$. If $\edge{b}{e}$ with a tail at $e$ is added then fact (3) would not be true. If $\edge{b}{e}$ with an arrowhead at $e$ is added then fact (4) would not be true. Thus, no such edge can be added to the graph and $G$ is maximal.
}

In the remainder of the paper we provide results about separable and essentially separable families of graphs and the graphical models defined by these graph families.

\section{Graphical characterization of essentially separable graphs}
\label{sec:essentially-acyclic}

In this section we provide a graphical characterization of essentially separable graphs. Furthermore, we show that any graphical models defined in terms of an essentially separable model has an equivalent graphical model defined in terms of a simple acyclic graphs called an anterial graph.

A graph $G$ is \emph{essentially acyclic} if there is an acyclic graph $G'$ that is separation equivalent to $G$. 
Thus the family of essentially acyclic is the essential graph family $\EssentialFamily{\AcycFamily}$ generated by acyclic graphs.
A graph that is not essentially acyclic is \emph{essentially cyclic}.
Consider graphs $G_4$ and $G_5$ from Figure~\ref{fig:essentially-acylic-graphs}. The two graphs are separation equivalent as 
$a$ is separated from $d$ given $bc$ in both graphs, $b$ is separated from $c$ given $ad$ in both graphs, and these are the only separation facts that hold in these graphs. 

\begin{figure}[htbp]
  \centering
  \includegraphics[scale=0.6]{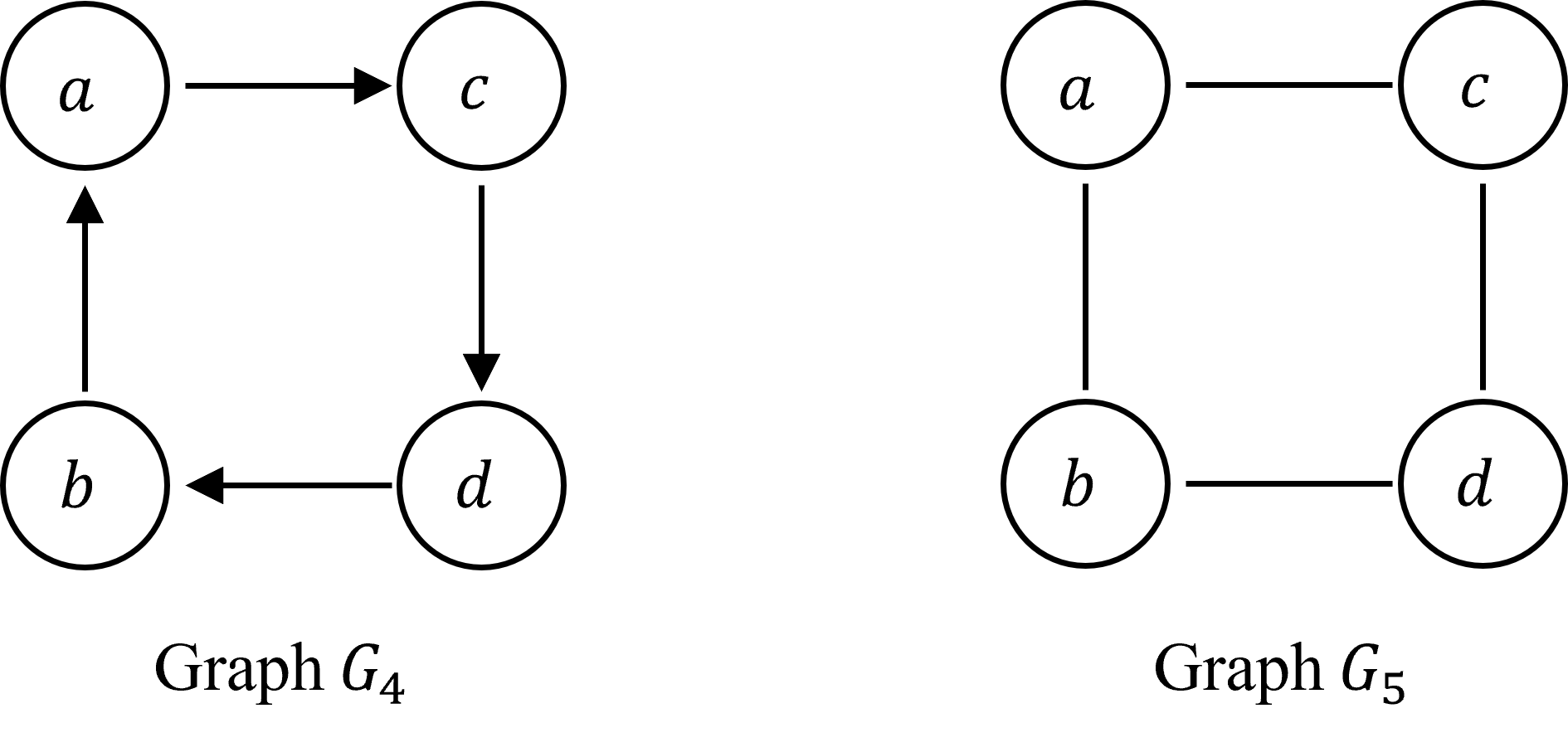}
  \caption{A cyclic graph $G_4$ and an equivalent acyclic graph $G_5$.}
  \label{fig:essentially-acylic-graphs}
\end{figure}

The next theorem provides a characterization of essentially separable graphs as those graphs that are essentially acyclic.

\reptheorem{thm:weakly-separable::essentially-acyclic}{
A graph is essentially separable if and only if the graph is essentially acyclic.}

The graph $G_3$ from Figure~\ref{fig:non-weakly-separable-graph} is essentially cyclic. This follows from Theorem~\ref{thm:weakly-separable::essentially-acyclic} and the fact that $G_3$ is not essentially separable.

The following corollary of Theorem~\ref{thm:weakly-separable::essentially-acyclic} shows that, separable graphical models $\gmfWSep$ and acyclic graphical models $\gmfAcyc$ are modeling equivalent.

\repcorollary{cor:acyclic-power}{$\gmfWSep \equiv \gmfAcyc$.}

Next we define $\simplify$ algorithm that converts a graph to an anterial graph.
An \emph{anterial graph} is a simple mixed graph $G$ such that (1) $G$ has no semi-directed circuits, and (2) if $i\arrowarrow j$ then $i \not \in \ant(j,G)$ and $j \not \in \ant(i,G)$. We denote the family of anterial graphs by $\AnterialFamily \subseteq \SimpleFamily \cap \AcycFamily$. 
The graph $G_1$ in Figure~\ref{fig:separable-and-weakly-separable-graphs} is an example of a  separable graph that is not an anterial graph but is simple and acyclic. The graph $G_1$ is not anterial due to the existence of the edge $c \arrowarrow d$ such that $d\in \ant(c,G)$.

The $\simplify$ algorithm is shown in Algorithm~\ref{alg:simplify}.
We denote the graph obtained by applying Algorithm~\ref{alg:simplify} to graph $G$ by $\simplify(G)$.
The graph $\simplify(G)$ is a simple graph with the same adjacencies as $G$ such that an edge $\edge{i}{j}$ in  $\simplify(G)$ has a tail at $i$ if and only if there is an anterior walk from $i$ to $j$ in $G$.
Consider applying the $\simplify$ algorithm to graph $G_4$ of Figure~\ref{fig:essentially-acylic-graphs}. Consider the directed edge $a \tailarrow c$. Due the fact that there is an anterior walk from $c$ to $a$ (i.e., $c\in \ant(a,G)$) and an anterior walk from $a$ to $c$ (the edge it self), the edge $\edge{a}{c}$ in $\simplify(G_4)$ must be $a \tailtail c$. By a similar argument, all of the edges must be undirected and the result $\simplify(G_4)$ is the graph $G_5$ also from Figure~\ref{fig:essentially-acylic-graphs}.

\begin{minipage}{.8\linewidth}
\begin{algorithm}[H]
\caption{The \simplify\ algorithm}\label{alg:simplify}
\begin{algorithmic}
\vspace{0.03in}
\State {\bf Input:} A graph $G=\seq{V,E}$.
\State {\bf Output:} An anterial graph.
\vspace{-0.05in}
\\ \!\!\!\!\!\!\!\! \hrulefill
\vspace{0.05in}
\State $E'=\emptyset$

\For {each edge $\edge{i}{j} \in E$}
    \If {$i \in \ant(j,G)$ and $j \in \ant(i,G)$}
        \State {$E'=E'\cup\set{i \tailtail j}$}
      \ElsIf {$i\in\ant(j,G)$}
            \State {$E'=E'\cup \set{i \tailarrow j}$}
      \ElsIf {$j\in\ant(i,G)$}
        \State {$E'=E'\cup \set{i \arrowtail j}$}
      \Else
          \State {$E'=E'\cup \set{i \arrowarrow j}$}
      \EndIf
\EndFor
\State \Return $\seq{V,E'}$

\end{algorithmic}
\end{algorithm}
\end{minipage}
\vspace{0.2in}

The following theorem proves that that every separable graph has an equivalent anterial graph and that such a graph can be obtained using the \simplify\ algorithm. 

\reptheorem{thm:simplify:separable-graph:equivalent-anterial-graph}{
For every graph $G$, the graph $\simplify(G)$ is an anterial graph; if $G$ is a separable graph then $\simplify(G)$ is a separable graph and $G \equiv \simplify(G)$.}

Theorem~\ref{thm:simplify:separable-graph:equivalent-anterial-graph} has significant implications about the statistical expressivity of graphical models defined using the family of separable anterial graphs $\SepAnterialFamily$ that is captured in the following corollary.

\repcorollary{cor:anterial-power}{$\gmfWSep \equiv \gmfSepAnterial$.}

Corollary~\ref{cor:anterial-power} shows that the family of essentially separable graphical models is modeling equivalent to the family of separable anterial graphical models.
This is notable as anterial graphs are simple acyclic graphs which, from a computational perspective, allows structure identification algorithm that seek to identify the equivalence class of an essentially separable graph to focus on simple graphs rather than multigraphs.

\section{Separational characterizations of essential graph families}
\label{sec:inducing-vertices}

In this section we give separational characterizations of three essential graph families (defined in Section~\ref{sec:graph-families}): essentially separable graphs (Section~\ref{sec:essentially-separable}), essentially acyclic graphs (Section~\ref{sec:essentially-acyclic}), and \emph{essentially undirected graphs}, the essential family $\EssentialFamily{\UndirectedFamily}$ generated by the family of undirected graphs.

We use separational characterizations because they are invariant under separation equivalence (see Section~\ref{sec:separational}).  This ensures that each characterization applies uniformly to all graphs in an essential family and clarifies the separation structure shared by the graphs in these families.
Many familiar graph properties are not invariant under separation equivalence.
For example, in Section~\ref{sec:essentially-separable} we observed that adjacency is not invariant, and in Section~\ref{sec:essentially-acyclic} we observed that acyclicity and undirectedness are also not invariant.  Indeed, each of the essential graph families considered here is generated from a graph family defined by properties that fail to be invariant under separation equivalence.

We begin by characterizing essentially undirected graphs.
A fundamental distinction between undirected graphs and, for instance, directed graphs, is that adding a vertex to a separating set for two vertices in a directed graph can induce a dependence between the two vertices.
The simplest example of this phenomena is the graph $i \tailarrow k \arrowtail j$ with three vertices and two edges in which $\indep{i}{j}{\emptyset}{G}$ and $\dep{i}{j}{k}{G}$. In this example, the vertex $k$ induces dependence between $i$ and $j$ given the empty set. We call vertex $k$ an \emph{inducing vertex}.

It is useful to generalize the concept of an inducing vertex by defining the concept of an inducing object in an independence model and by considering sets of objects that induce dependence.
An object $d\in \objectDomain$ is an \emph{inducing object for} $\testcaseABC\in \testsForDomain$ in independence model $I$ for $\objectDomain$ if both $\indep{A}{B}{C}{I}$ and $\dep{A}{B}{Cd}{I}$. The inducing objects for $\testcaseABC\in \testsForDomain$ in \indepmodel\ $I$ is the set $\IV(A,B,C,I)=\setcomprehensionLSE{d}{\objectDomain}{\indep{A}{B}{C}{I} \wedge \dep{A}{B}{Cd}{I}}$.
We say that $d\in \objectDomain$ is an \emph{inducing object} in independence model $I$ for $\objectDomain$ if there exists $\testcaseABC$ such that $d$ is an inducing object for $\testcaseABC$ in $I$.

Next we generalize to consider sets of objects that induce dependence.
A set of objects $D \subseteq \objectDomain$ is an \emph{inducing set for} $\testcaseABC\in \testsForDomain$ in independence model $I$ for $\objectDomain$ if both $\indep{A}{B}{C}{I}$ and $\dep{A}{B}{CD}{I}$. 
The inducing sets for $\testcaseABC\in \testsForDomain$ in \indepmodel\ $I$ is the set of sets $\IS(A,B,C,I)=\setcomprehensionLSE{D}{\powerset{\objectDomain}}{\indep{A}{B}{C}{I} \wedge \dep{A}{B}{CD}{I}}$.

The existence of an inducing set in an independence model allows one to infer the existence of additional dependence facts.
Consider the \emph{induced dependence property}: 
$$\indep{A}{B}{C}{I} \ \wedge \ \dep{A}{B}{CD}{I} \implies \dep{A}{D}{CB}{I} \ \wedge \ \dep{D}{B}{CA}{I}.$$

As described in the next proposition, the induced dependence property is guaranteed to hold in any semi-graphoid \indepmodel.

\repproposition{prop:induced-dependence}{
    The induced dependence property holds in any semi-graphoid independence model.}

Due to Propositions~\ref{prop:compprob} and \ref{prop:compgraph}, the induced dependence property holds for both statistical independence in probability distributions and vertex separation in mixed graphs.
Furthermore, if we consider, inducing sets of size one (i.e., $|D|=1$) then the induced dependence property is a statement about the properties of inducing objects (e.g., inducing vertices or inducing variables). For instance, in the context of mixed graphs, the induced dependence property can be used to guarantee the existence of connecting walks from the existence of an inducing vertex. Vertex $d$ being an inducing vertex for sets $A$ and $B$ given set $C$ implies the existence of two connecting walks, one starting at a vertex in $A$ and the other starting at a vertex $B$ with both terminating at $d$. 

The following theorem provides a characterization of essentially undirected graphs in terms of the  existence of inducing vertices in the induced independence model of the graph.

\reptheorem{thm:stat-undirected::no-inducing-vertex}{A graph $G$ is essentially undirected if and only if $\imodel{G}$ contains no inducing vertex.}

We illustrate this proposition with two examples.
Consider graphs $G_4$ and $G_5$ from Figure~\ref{fig:essentially-acylic-graphs}. As described in Section~\ref{sec:essentially-acyclic}, the two graphs are separation equivalent. This implies that $G_4$ is essentially undirected and, by Theorem~\ref{thm:stat-undirected::no-inducing-vertex} graph $G_4$ has no inducing vertices.
Next consider graph $G_6$ and $G_7$ from Figure~\ref{fig:essentially-undirected-graphs}. The two graphs are separation equivalent and thus, Theorem~\ref{thm:stat-undirected::no-inducing-vertex} implies that $G_6$ has no inducing vertices. 
This example illustrates that the graphical condition given by \cite{wermuth1990substantive} for a chain graph to have no equivalent undirected graph is not appropriate for mixed graphs.

\begin{figure}[htbp]
  \centering
  \includegraphics[scale=0.6]{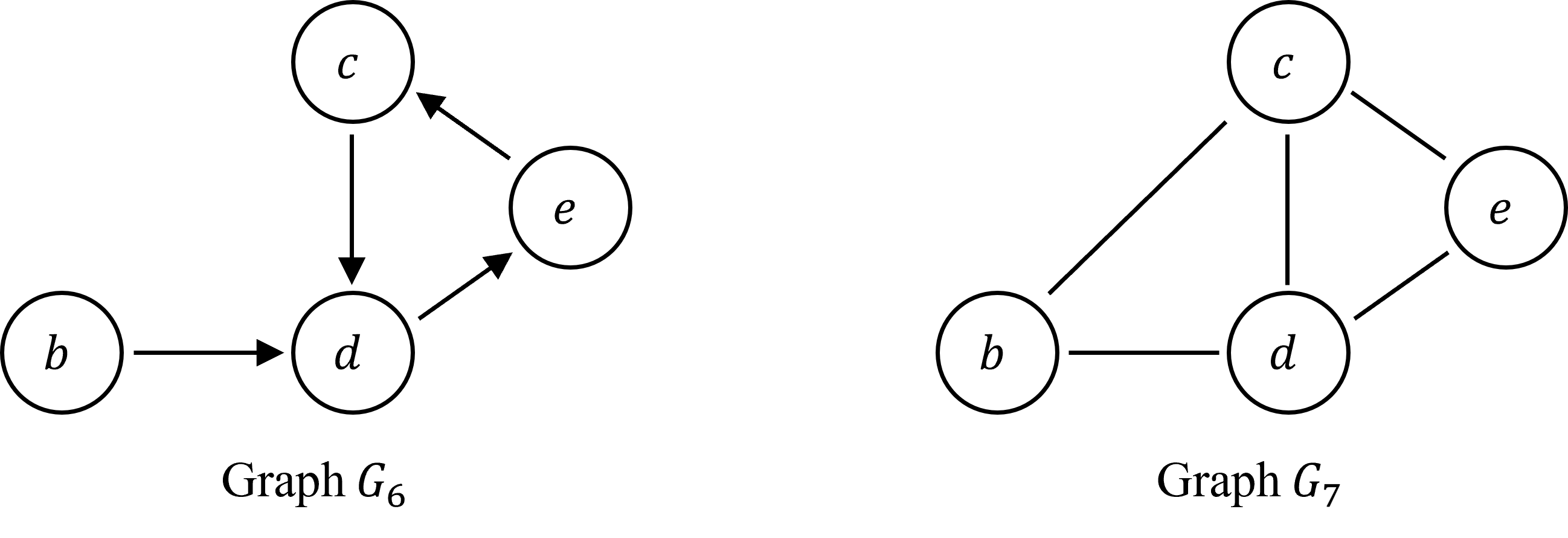}
  \caption{An essentially undirected graph $G_6$ and an equivalent undirected graph $G_7$.}
  \label{fig:essentially-undirected-graphs}
\end{figure}

\newcommand{\oleq}{\mathrel{\lessapprox}}
\newcommand{\ogeq}{\mathrel{\gtrapprox}}
\newcommand{\onotleq}{\mathrel{\not\lessapprox}}
\newcommand{\oeq}{\mathrel{\approx}}
\newcommand{\oneq}{\mathrel{\not \approx}}
\newcommand{\oless}{\mathrel{<}}
\newcommand{\ogtr}{\mathrel{>}}
\newcommand{\onless}{\mathrel{\not <}}
\newcommand{\ongtr}{\mathrel{\not >}}

\newcommand{\oincomp}{\mathrel{\parallel}}
\newcommand{\ocomp} {\mathrel{\curlyvee}}

\newcommand{\oselectleqDV}[2]{#1_{\oleq #2}}
\newcommand{\oselectlessDV}[2]{#1_{\oless #2}}
\newcommand{\oselectgeqDV}[2]{#1_{\ogeq #2}}
\newcommand{\oselectgtrDV}[2]{#1_{\ogtr #2}}

\newcommand{\oequivclass}[1]{[#1]_\approx}

\newcommand{\oleqG}{\mathrel{\lessapprox}_G}
\newcommand{\onotleqG}{\mathrel{\not\lessapprox_G}}
\newcommand{\oeqG}{\mathrel{\approx_G}}
\newcommand{\oneqG}{\mathrel{\not \approx_G}}
\newcommand{\ogeqG}{\mathrel{\gtrapprox_G}}
\newcommand{\olessG}{\mathrel{<_G}}
\newcommand{\ogtrG}{\mathrel{>_G}}
\newcommand{\onlessG}{\mathrel{\not <_G}}
\newcommand{\ongtrG}{\mathrel{\not >_G}}
\newcommand{\oincompG}{\mathrel{\oincomp_G}}
\newcommand{\ocompG}{\mathrel{\ocomp_G}}

\newcommand{\vant}[1] {\oselectgeqDV{V}{#1}} 
\newcommand{\vpost}[1] {\oselectleqDV{V}{#1}} 
\newcommand{\oequivclassG}[1]{[#1]_\approx^G}

\newcommand{\eqIG}{\mathrel{\sim_G}}
\newcommand{\neqIG}{\mathrel{\not\sim_G}}

\newcommand{\ucomponents}{\set{U_1,\ldots,U_c}}
\newcommand{\defUcomponents}{U(G)=\ucomponents}
\newcommand{\acomponents}{\set{A_1,\ldots,A_a}}
\newcommand{\defAcomponents}{A(G)=\acomponents}
\newcommand{\lesssimU}{\mathrel{\lesssim_{U(G)}}}

Next we consider ordering information in  graphs and how it relates to inducing vertices in graphs.
We use $\oleq$ to denote a preorder over a set; a preorder is a binary relation that is transitive and reflexive. We use the following notation for the derived relations (i) $a \oeq b$ if $a\oleq b$ and $b \oleq a$, (ii) $a < b$ if $a \oleq b$ and $b \onotleq a$, (iii) $a \oincomp b$ if $a \onotleq b$ and $b\onotleq a$, and (iv) $a \ocomp b$ if $a\oleq b$ or $b \oleq a$.
We use a preorder $\oleq$ over $V$ to define functions from subsets of $V$ to subsets of $V$.
In particular,
$\vpost{A}= \setcomprehensionLSE{b}{V}{\exists a \in A, b \oleq a}$, $\vant{A}= \setcomprehensionLSE{b}{V}{\exists a \in A, b \ogeq a}$, and 
$\oequivclass{A} = \setcomprehensionLSE{b}{V}{\exists b\in A, b \oeq k}$.

Preorders arise naturally when considering graphs. The \emph{induced preorder} of a graph $G$ is denote $\oleqG$ and defined as $a \oleqG b$ if $a = b$ or there is an anterior walk from $b$ to $a$ in $G$, that is $b\in\ant(a,G)$. The relation $\oleqG$ for graph $\defgraphG$ is a preorder on $V$ as reflexivity is guaranteed by the definition and anterior walks can be appended to show transitivity. Note that the relation $a \oeqG b$ implies that either $a=b$ or the existence of anterial walks from $a$ to $b$ and from $b$ to $a$ in graph $G$.

We begin by noting that the existence of an inducing set in a graph provides information about anterior relationship between the inducing set and other vertices in the graph.

\repproposition{prop:inducing-set:not-all-anterior-to-separated-vertices-or-separating-set}{If $D\in \IS(A,B,C,G)$ then $D\cap \antsetCij \neq \emptyset$ (or equivalently $D \not\subseteq \vant{Cij}$).}

It often simpler to consider inducing vertices. The following property is the specialization of the Proposition~\ref{prop:inducing-set:not-all-anterior-to-separated-vertices-or-separating-set} to inducing vertices.

\repproposition{prop:inducing-vertex:not-anterior-to-separated-vertices-or-separating-set}{If $b\in \IV(i,j,C,G)$ then $b\not \in \antsetCij$ (or equivalently $b\notin \vant{Cij}$).}

Next we consider the relationship $\oeqG$ induced by a graph $G$ and show that any pair of vertices $a \oeqG b$ is also equivalent with respect to membership in sets of inducing vertices in the induced independence model of $G$.
An \indepmodel\ $I$ satisfies the \emph{inducing vertex equivalence property} with respect to $\oleq$ if for all vertices $a,b\in V$, if $a\oeq b$ then for all $i,j\in V$ and $C\subseteq V$, it holds that $a\in \IV(i,j,C,I)$ if and only if $b\in \IV(i,j,C,I)$.

\repproposition{prop:inducing-equivalent:one-is-inducing-vertex:each-is-inducing-vertex}{For any graph $G$, the inducing vertex equivalence property holds in $\imodel{G}$ with respect to $\oleqG$.}

Consider the following property that generalizes the inducing vertex equivalence property to inducing sets.

An \indepmodel\ $I$ satisfies the \emph{inducing set equivalence property} with respect to $\oleq$ if for all vertices $d\in V$ and  all non-empty sets $D,E\subset V$, such that $E,D\subseteq \oequivclassG{d}$, and all $i,j\in V$ and $C\subseteq V$, it holds that $D\in \IS(i,j,C,I)$ if and only if $E\in \IS(i,j,C,I)$.

The inducing set equivalence property, unlike the inducing vertex equivalence property, is not guaranteed to hold in all graphs.
For instance, consider Graph $G_3$ of Figure~\ref{fig:non-weakly-separable-graph}. In this graph, $\oequivclass{c}=\set{c,d,e}=\IV(a,b,\emptyset,G_3)$ which illustrates Proposition~\ref{prop:inducing-equivalent:one-is-inducing-vertex:each-is-inducing-vertex}. The fact that $\indep{a}{b}{cde}{G_3}$ shows that the inducing set property does not hold.

\reptheorem{thm:essentially-separable::inducing-equivalence}{
    $G$ is essentially separable if and only if $\imodel{G}$ satisfies the inducing set equivalence property with respect to $\oleqG$.}

The following corollary of Theorem~\ref{thm:essentially-separable::inducing-equivalence} follows from Theorem~\ref{thm:weakly-separable::essentially-acyclic}.

\begin{corollary}
    $G$ is essentially acyclic if and only if $\imodel{G}$ satisfies the inducing set equivalence property with respect to $\oleqG$.
\end{corollary}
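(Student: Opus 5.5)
The corollary is a direct substitution: the property on the right-hand side is the same in Theorem~\ref{thm:essentially-separable::inducing-equivalence}, and only the graph class on the left changes. The plan is therefore to show that the two left-hand classes coincide. Concretely, I would chain two equivalences.

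\emph{First equivalence.} By Theorem~\ref{thm:weakly-separable::essentially-acyclic}, a graph $G$ is essentially acyclic if and only if it is essentially separable. In the notation of Section~\ref{sec:graph-families}, this says $\EssentialFamily{\AcycFamily}=\WSepFamily$ as sets of graphs.

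\emph{Second equivalence.} By Theorem~\ref{thm:essentially-separable::inducing-equivalence}, $G$ is essentially separable if and only if $\imodel{G}$ satisfies the inducing set equivalence property with respect to $\oleqG$.

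\emph{Composition.} Combining the two biconditionals gives the statement. For the forward direction, suppose $G$ is essentially acyclic. Then $G$ is essentially separable by Theorem~\ref{thm:weakly-separable::essentially-acyclic}. Hence $\imodel{G}$ has the inducing set equivalence property with respect to $\oleqG$ by Theorem~\ref{thm:essentially-separable::inducing-equivalence}. The converse runs the same two steps in reverse order.

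One point needs checking: the preorder is taken with respect to $G$ itself in both statements, not with respect to some equivalent acyclic or separable representative $G'$. This is what makes the substitution legitimate. Theorem~\ref{thm:essentially-separable::inducing-equivalence} is phrased in terms of $\imodel{G}$ and $\oleqG$ for the given graph $G$. The corollary uses the same $\imodel{G}$ and $\oleqG$. No change of graph occurs anywhere in the chain, so no invariance of $\oleqG$ under separation equivalence is needed.

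I expect no real obstacle. All the substance lies in the two cited theorems, and the only step requiring care is the consistency of the preorder described above.
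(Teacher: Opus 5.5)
Your proof is correct and matches the paper's: the paper derives this corollary directly by chaining Theorem~\ref{thm:weakly-separable::essentially-acyclic} (essentially separable $\iff$ essentially acyclic) with Theorem~\ref{thm:essentially-separable::inducing-equivalence}. Your check that both statements use $\imodel{G}$ and $\oleqG$ for the same graph $G$ is the only bookkeeping needed, so no comparison of preorders across equivalent graphs enters at this step.
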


As shown in subsequent sections, inducing vertices and derived concepts like induced arrowheads are also useful for developing separational characterizations of the separation equivalence of separable graphs (Section~\ref{sec:iarrowhead-characterization-of-equivalence}), representing the equivalence classes of separable graphs (Section~\ref{sec:canonical-graph-projections}), and identifying essentially separable graphs (Section~\ref{sec:identifying}).

\section{Graphical characterizations of separation equivalence}
\label{sec:miw-characterization-of-equivalence}

In this section we provide two graphical characterizations of the separation equivalence of two graphs. First
we show that two separable graph are separation equivalent if and only if they have the same adjacencies and same \miw s.
This characterization is a direct generalization of the characterizations of separation equivalence for directed acyclic graphs \cite{VP1990}, chain graphs \cite{Frydenberg1990}, and ancestral graphs \cite{Zhao2005} to separable graphs.

We begin by defining the property of two graphs having the same adjacencies.
Two graphs have the \emph{same adjacencies} if for every edge in one graph there is a corresponding edge with the same endpoints in the other graph and vice versa. 
Note that the definition of same adjacencies does not require the corresponding edges to be of the same type. In other words, only the endpoints of the edges matter and not their endmarks.

Next we define the property of two graphs having the same \miw s.
We define the property based on an equivalence relation over collider walks rather than walk equality.\footnote{We extend this equivalence relation to all walks in Appendix~\ref{sec:walk-decomposition-and-equivalence}.}
To motivate the need to introduce an equivalence relation, consider the graph $a\arrowarrow b \arrowarrow c$ and the graph $a \tailarrow b \arrowtail c$. These two graphs are separation equivalent but their sets of minimal inducing walks are disjoint.
Thus, in order to characterize separation equivalence we need to introduce an equivalence relation. In particular, two collider walks $\gamma$ and $\omega$ are equivalent $\gamma\equiv \omega$ if $\vertexseq{\gamma} = \vertexseq{\omega}$ and $\sectionseq{\gamma}=\sectionseq{\omega}$ or $\vertexseq{\reverse(\gamma)} = \vertexseq{\omega}$ and $\sectionseq{\reverse(\gamma)}=\sectionseq{\omega}$ where $\reverse(\gamma)$ is the walk with the same edges as $\gamma$ but the opposite traversal order. Two graphs have the \emph{same \miw s} if for every \miw\ $\omega$ in one graph there is a corresponding \miw\ $\omega'$ such that $\omega \equiv \omega'$ and vice versa.

\reptheorem{thm:separable-graphs:equivalent::same-adj-and-same-miw}{
    Two separable graphs are equivalent if and only if they have the same adjacencies and same \miw s.}

We illustrate this theorem with two examples. Let $\miwrep(G)$ be the set of \miw s in a graph $G$.
Consider graphs $G_4$ and $G_5$, from Figure~\ref{fig:essentially-acylic-graphs}. The two graphs have the same adjacencies and $\miwrep(G_4)=\miwrep(G_5)=\emptyset$. Thus, by Theorem~\ref{thm:separable-graphs:equivalent::same-adj-and-same-miw}, $G_4 \equiv G_5$.
Next consider the graph $G_1$ from Figure~\ref{fig:separable-and-weakly-separable-graphs} and graph $G_8$ from Figure~\ref{fig:induced-arrowheads}. These two graphs have the same adjacencies and $\miwrep(G_1)=\{a \tailarrow c \arrowarrow b, a\tailarrow c \arrowarrow d, a\tailarrow c \arrowtail e\}\equiv \{a \tailarrow c \arrowtail b, a \tailarrow c \arrowtail d, a \tailarrow c \arrowtail e\} = \miwrep(G_8)$. Thus, by Theorem~\ref{thm:separable-graphs:equivalent::same-adj-and-same-miw}, $G_1 \equiv G_8$.

\begin{figure}[htbp]
  \centering
  \includegraphics[scale=0.6]{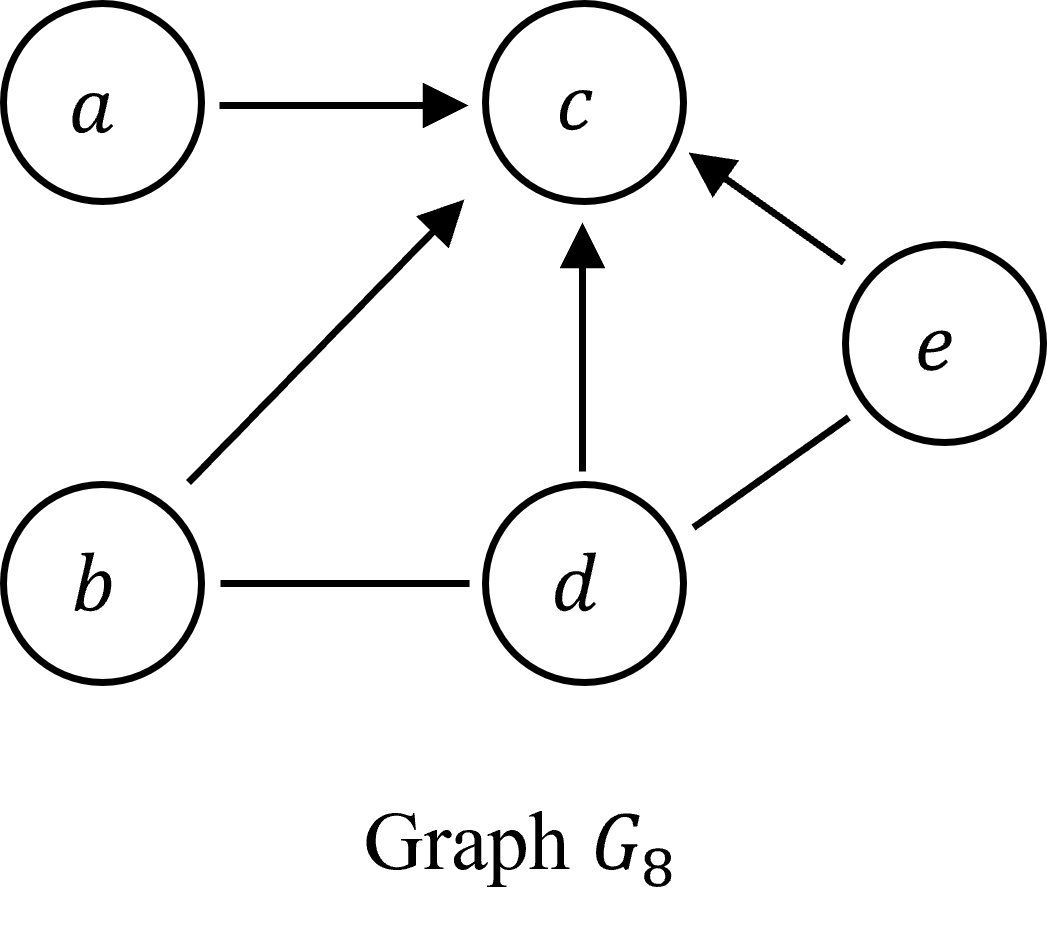}
  \caption{Separable graph $G_8$ is equivalent to graph $G_1$ of Figure~\ref{fig:separable-and-weakly-separable-graphs}.}
  \label{fig:induced-arrowheads}
\end{figure}

Next we present a useful characterization of separation equivalence and describe a useful property related to this characterization. This characterization is refinement of the first characterization where we compare \mdiw s rather than \miw s.
In order to define \mdiw s, we need a few additional definitions.
A section on an inducing walk is \emph{discriminated} if the section is not anterior to either endpoint of the inducing walk. 
An inducing walk \emph{discriminates} a collider section if the section is discriminated by the walk.
An inducing walk is a \emph{(simple) \diw} if it contains exactly one discriminated section and a \emph{compound \diw} if it contains more than one discriminated section.
Note that an inducing walk containing no discriminated sections is a \emph{\siw}.

Two graphs have the \emph{same \mdiw s} if for every \mdiw\ $\omega$ in one graph there is a corresponding \mdiw\ $\omega'$ such that $\omega \equiv \omega'$ and vice versa.

\reptheorem{thm:separable-graphs:equivalent::same-adj-same-mdiw}{
    Two separable graphs are equivalent if and only if they have the same adjacencies and the same \mdiw s.}

For the graphs $G_8$ from Figure~\ref{fig:induced-arrowheads} and $G_1$ from Figure~\ref{fig:separable-and-weakly-separable-graphs} the set of \miw s and the set of \mdiw s are identical, that is, every \miw\ in these graphs is also a \mdiw.

The next proposition shows that discriminated sections of \mdiw s  contain inducing vertices and that all other vertices are not inducing vertices. 

\repproposition{prop:diw-for-section:section-ivs}{If walk $\omega$ between $a$ and $b$ is a \diw\ that discriminates collider section $\sigma$ in graph $G$ and $C$ separates $a$ and $b$ in $G$ then every vertex on $\sigma$ is in $\IV(a,b,C,G)$ and every other vertex on $\omega$ is not in $\IV(a,b,C,G)$.}

This proposition along with Proposition~\ref{prop:inducing-vertex:not-anterior-to-separated-vertices-or-separating-set} play an essential role in proving that our \sia\ identifies all required arrowheads. 

\TextVersion{}{
Our next characterization is refinement of the first characterization where we compare \mdiw s rather than \miw s.
In order to define \mdiw s, we need a few additional definitions.
A section on an inducing walk is \emph{discriminated} if the section is not anterior to either endpoint of the inducing walk. 
An inducing walk \emph{discriminates} a collider section if the section is discriminated by the walk.
An inducing walk is a \emph{(simple) \diw} if it contains exactly one discriminated section and a \emph{compound \diw} if it contains more than one discriminated section.
Note that an inducing walk containing no discriminated sections is a \emph{\siw}.
Two graphs $G$ and $G'$ have the \emph{same \mdiw s} if and only if $G \equiv_\mdiwrep G'$ where $\mdiwrep$ is the \cwp\ of a graph the the set of \mdiw s in the graph.

\reptheorem{thm:separable-graphs:equivalent::same-adj-same-mdiw}{
    Two separable graphs are equivalent if and only if they have the same adjacencies and the same \mdiw s.}

This characterization and is useful for testing the equivalence of two separable graphs
as compared to using of Theorem~\ref{thm:separable-graphs:equivalent::same-adj-and-same-miw}. In order to use Theorem~\ref{thm:separable-graphs:equivalent::same-adj-and-same-miw} to test equivalence, one need to enumerate \miw s. Using Theorem~\ref{thm:separable-graphs:equivalent::same-adj-same-mdiw}, however, one is only required to enumerate \miw s. Because \mdiw s are a subset of \miw s, the use of this characterization for testing equivalence can reduce the computational cost of testing.

Our final characterization compares \iarrowhead s rather than \miw s or \mdiw s.}

\section{Separational characterization of separation equivalence}
\label{sec:iarrowhead-characterization-of-equivalence}

In this section we show that two separable graphs are separation equivalent if and only if they have the same vertex separability and same \iarrowhead s.

We begin by defining the property of two graphs having the same vertex separability.
Two graphs have the same vertex separability if, for every pair of vertices, the pair is separable in one graph if and only if it is separable in the other.
This property is useful because it is a graph separation property that, when the graphs are assumed separable, is equivalent to the graphical condition that the graphs have the same adjacencies.
Consequently, the separational property of same vertex separability can replace the graphical property of same adjacency in Theorems~\ref{thm:separable-graphs:equivalent::same-adj-and-same-miw} and \ref{thm:separable-graphs:equivalent::same-adj-same-mdiw}.

Next we define the property of two graphs having the same \iarrowhead s.
The \iarrowhead s of a graph are defined in terms of the vertex property of being an inducing vertex defined in Section~\ref{sec:inducing-vertices}, and two properties of pairs of vertices: vertex separability and the arrowhead inducing property.
A pair of vertices $\seq{b,d}$ is \emph{arrowhead inducing} at $b$ if there exists $i,j,C$ such that $b\in \IV(i,j,C,G)$ and $d\not\in \IV(i,j,C,G)$.
A pair of vertices $\seq{a,b}$ has an \emph{induced arrowhead} at $b$ if the pair is arrowhead inducing at $b$ and the pair is not vertex separable.
The \iarrowhead\ property of a pair of vertices is a graph separation property as the property of begin an inducing vertex and vertex separability are both graph separation properties.

Note that the definition of an \iarrowhead\, due to the fact it is a graph separation property, makes no mention that an edge has an arrowhead endmark. The following proposition, shows that, despite this property, the existence of an induced arrowhead on an edge in a graph implies the existence of an exclusive arrowhead on that edge.

%
\repproposition{prop:induced-arrowhead:exclusive-arrowhead}{
If an edge $\edgedb$ in a graph $G$ has an induced arrowhead at $b$ then the edge has an exclusive arrowhead at $b$ in $G$.}

Two graphs $G$ and $G'$ have the \emph{same \iarrowhead s} if  for every pair of vertices $\seq{a,b}$ in $G$, the pair has an induced arrowhead at $b$ in $\imodel{G}$ if and only if the pair has an induced arrowhead at $b$ in $\imodel{G'}$. Note that the property is separational, that is, it is defined in terms of graph separation properties of the two graphs.

\reptheorem{thm:separable-graphs:equivalent::same-adj-same-inducing-arrowheads}{Two separable graphs are equivalent if and only if they have the same vertex separability and same \iarrowhead s.}

We illustrate this theorem with two examples.
Consider graphs $G_4$ and $G_5$, from Figure~\ref{fig:essentially-acylic-graphs}. The two graphs have the same adjacencies and neither graph has any induced arrowheads. Thus, by Theorem~\ref{thm:separable-graphs:equivalent::same-adj-same-inducing-arrowheads}, $G_4 \equiv G_5$.
Next consider graph $G_1$ from Figure~\ref{fig:separable-and-weakly-separable-graphs} and graph $G_8$ from Figure~\ref{fig:induced-arrowheads}. These two graphs have the same adjacencies and
both graphs have the same four induced arrowheads: an induced arrowhead at $c$ on edges $\edge{a}{c}, \edge{b}{c}, \edge{d}{c},$ and $\edge{d}{c}$.
Thus, by Theorem~\ref{thm:separable-graphs:equivalent::same-adj-same-inducing-arrowheads}, $G_1 \equiv G_8$.

Finally, we note that, unlike the characterizations in Section~\ref{sec:miw-characterization-of-equivalence}, this characterization is not graphical, but rather separational, in the sense that it can be defined purely in terms of the induced independence models of the two graphs. The fact that this characterization, and the definitions of same vertex separability and same \iarrowhead s are separational is useful for developing \sia s that use statistical tests of independence. In particular, this allows for the identification of adjacencies and arrowheads sufficient to identify the equivalence class of a graph from statistical tests under suitable assumptions.

\section{Representing equivalence classes of separable graphs}
\label{sec:canonical-graph-projections}

In Algorithm~\ref{alg:InducedArrowheadsCanonical} we define the \inducedarrowheads\ algorithm that takes a mixed graph and returns a mixed graph with the same adjacencies while preserving arrowheads on edges only if they have \iarrowhead s.

\begin{minipage}{.8\linewidth}
\begin{algorithm}[H]
\caption{\inducedarrowheads}\label{alg:InducedArrowheadsCanonical}
\begin{algorithmic}
\vspace{0.03in}
\State {\bf Input:} A graph $G=\seq{V,E}$.
\State {\bf Output:} A simple graph.
\vspace{-0.05in}
\\ \!\!\!\!\!\!\!\! \hrulefill
\vspace{0.05in}
\State $E'=\emptyset$
\For {each $\edge{i}{j} \in E$}
    \If {edge $\edge{i}{j}$ has induced arrowheads at $i$ and $j$}
        \State $E'=E' \cup \set{i \arrowarrow j}$
    \ElsIf { edge $\edge{i}{j}$ has an induced arrowhead at $i$}
        \State $E'=E' \cup \set{i \arrowtail j}$
    \ElsIf { edge $\edge{i}{j}$ has an induced arrowhead at $j$}
        \State $E'=E' \cup \set{i \tailarrow j}$
    \Else
        \State $E'=E' \cup \set{i \tailtail j}$
    \EndIf
\EndFor
\State \Return $\seq{V,E'}$

\end{algorithmic}
\end{algorithm}
\end{minipage}
\vspace{0.2in}

We illustrate the \inducedarrowheads\ algorithm by applying it to Graph $G_1$ of Figure~\ref{fig:separable-and-weakly-separable-graphs}. The result is Graph $G_8$ shown in Figure~\ref{fig:induced-arrowheads}. The graph has the same adjacencies at $G_1$. Each of the arrowhead endmarks on an edge is an induced arrowhead on the edge. For instance, the edge $a \tailarrow c$ has an induced arrowhead at $c$ due to the facts that $c\in \IV(a,b,\emptyset,G)$ and $a\not\in \IV(a,b,\emptyset,G)$.


Next we consider properties of the \inducedarrowheads\ algorithm.
First we establish that the \inducedarrowheads\ algorithm preserves separation equivalence if applied to a separable graph.

\reptheorem{thm:separable:inducedarrowheads-equivalent}{If $G$ is separable then $G$ and $\inducedarrowheads(G)$ are equivalent.}


Next, we consider different types of representations of equivalence classes of graph.
The equivalence class of $a\in A$ given equivalence relations $\equiv$ is denoted $\equivclassSRL{A}{\equiv}{a}$ but we use $\equivclassSL{A}{a}$ when the equivalence relation is clear from context.
A function $f\in \function{A}{B}$ is a \emph{canonical representation function for equivalence classes of $A$ with respect to $\equiv$} if it is the case that $a\equiv a'$ if and only if $f(a) = f(a')$ for all $a,a'\in A$. A function $f\in \function{A}{A}$ is a \emph{projection} if $f(f(a))=f(a)$ for $a\in A$. A function $f\in \function{A}{A}$ is a \emph{canonical projection} for $A$ with respect to equivalence relation $\equiv$ if $f$ is a projection and, for $a\in A$, it holds that $f(a)\equiv a$. Thus, a canonical projection is a type of canonical representation. Furthermore, showing that a function is a canonical representation function does not require showing $f(a) \equiv a$.
The following theorem shows that the \inducedarrowheads\ function is a canonical projection.

\reptheorem{thm:induced-arrowheads:sound:identifying}{The 
\inducedarrowheads\ algorithm is a canonical projection for separable graphs with respect to separation equivalence.
}


We relate the \inducedarrowheads\ representation of an equivalence class of separable graph to a generalization of the concept of the largest graph to mixed graphs. The concept of the largest chain graph was introduced in \cite{Frydenberg1990}.
A graph $G'$ is a \emph{largest equivalent simple graph} of $G$ if $G'\equiv G$ and $G'$ is simple and $G'$ has the largest number of tail endmarks of any simple separable graph equivalent to $G$. Let $\largest(G)$ be the set of largest equivalent graphs to $G$.
Frydenberg showed that for chain graphs, every equivalence class contains a unique largest graph that is equivalent to the members of the equivalence class.  The next corollary follows directly from Theorem~\ref{thm:induced-arrowheads:sound:identifying} --- the fact that \inducedarrowheads\ is a canonical projection --- and generalizes Frydenberg's uniqueness result to separable graphs.

\repcorollary{cor:largest}{
    If $G$ is a separable graph then $\largest(G)$ contains a unique graph which is the graph $\inducedarrowheads(G).$}

Finally, we related the \inducedarrowheads\ representation of an equivalence class to anterial graphs. 

\repcorollary{cor:separable:induced-arrowhead-anterial}{
    If $G$ is a separable graph then $\inducedarrowheads(G)$ is an anterial graph.}
\section{Identifying essentially separable graphs}
\label{sec:identifying}

\newcommand{\iadj}{\DefineConstant{iAdj}}
\newcommand{\aiadj}{\DefineConstant{aiAdj}}
\newcommand{\ideg}{\DefineConstant{iDeg}}
\newcommand{\sepset}{\DefineConstant{SepSet}}
\newcommand{\sepsetbound}{\DefineConstant{SepSetBound}}

In this section, we develop the \sgi\ algorithm, a constraint-based \gsia\ that uses only \indeptest s.
We show that \sgi\ identifies the separation‑equivalence class of essentially separable graphs under perfect testing and provide a bound on the number of \indeptest s required.

The \sgi\ algorithm is shown in Algorithm~\ref{alg:SGI}. 
The algorithm outputs a simple graph $G$ as a representation for an equivalence class of essentially separable graphs. The use of a simple graph is justified by Corollary~\ref{cor:anterial-power}.

\begin{minipage}{.8\linewidth}
\begin{algorithm}[H]
\caption{The \sgi\ Algorithm}\label{alg:SGI}
\begin{algorithmic}
\vspace{0.03in}
\State {\bf Input:} Independence model $I$ for variables $V$.
\State {\bf Output:} A simple mixed graph over $V$.
\vspace{-0.05in}
\\ \!\!\!\!\!\!\!\! \hrulefill
\vspace{0.05in}
\State{$E \gets \set{v_i \tailtail v_j \ | \ \set{v_i,v_j}\subseteq V \wedge v_i \neq v_j}$}
\State $G \gets \seq{V,E}$ \Comment{Initialize to be a simple undirected complete graph.}
\State socs $\gets 0$ \Comment{socs is current size of conditioning sets.}
\While {socs $ < \sepsetbound(G)$}
  \For {pair $\pair{i}{j} \in V \times V$}
   \If {$i\in \adj(j,G)$}
    \State candidates $\gets \ant(i,G) \cup \ant(j,G)$.
    \For {$C\subseteq$ candidates such that $|C|= socs$}
       \If {$\indep{i}{j}{C}{I}$} \Comment{Check if $C$ is a separating set}
         \State {Remove edge between $i$ and $j$ in $G$.}
         \State {IV $= \emptyset$}
         \For {$k \in V \setminus (C \cup \set{i,j})$} \Comment{Find inducing vertices for $i,j,C$}
            \If {$\dep{i}{j}{Ck}{I}$} \Comment{Check if $k$ is an inducing vertex}
                \State {IV $=$ IV $\cup \set {k}$}
            \EndIf
         \EndFor
         \For {$k\in $ IV} \Comment{Use Orientation Rule 1}
        \For {$l\in \adj(k,G)$}
            \If {$l \not\in $IV}   
                \State {Change endmark at $k$ on $\edge{l}{k}$ to an arrowhead.} 
            \EndIf
        \EndFor
    \EndFor
       \EndIf
    \EndFor
   \EndIf
  \EndFor
  socs $\gets \text{socs} + 1$ \Comment{Try larger conditioning sets.}
\EndWhile
\State \Return $G$

\end{algorithmic}
\end{algorithm}
\end{minipage}
\vspace{0.2in}

We begin with an overview of the algorithm.
The \sgi\ algorithm starts with a simple complete undirected graph $G$, that is, a graph in which all edges are undirected and there is an undirected edge between every pair of vertices.
The algorithm systematically searches for separating sets $C$ of increasing size for pairs of adjacent variables $i$ and $j$ and uses \indepmodel\  $I=\imodel{\gendist}$ to test whether $\indep{i}{j}{C}{I}$. If the test determines that $i$ and $j$ are separated by $C$ then the edge between $i$ and $j$ is removed.
The algorithm then identifies the inducing vertices $\IV(i,j,C,I)$ and uses them to add arrowhead endmarks to existing edges.
The algorithm continues to try to remove edges until it has considered separator sets of sufficient size. 

Unlike many \indeptest\ based \gsia s, the \sgi\ algorithm adds orientation information and determines the adjacency structure concurrently and uses this orientation information to restrict the search for separating sets.
In particular, the \sgi\ algorithm implicitly uses the following induced arrowhead identification rule to add arrowheads to the current representation $G$.

\begin{orule}[Induced arrowhead identification rule]
If there is an edge $\edge{k}{l}$ in simple graph $G$ such that there exists vertices $i$ and $j$ and vertex set $C$ such that $k\in \IV(i,j,C,I)$ and $l\not\in \IV(i,j,C,I)$, change the endmark at $k$ on $\edge{k}{l}$ to be an arrowhead.
\end{orule}

The induced arrowhead identification rule is a sound orientation rule under the assumption that $I=\imodel{\gendist}$ provides perfect testing for an essentially separable graph $\genstruct$. This is due to the fact that induced arrowheads are exclusive arrowheads (Proposition~\ref{prop:induced-arrowhead:exclusive-arrowhead}).
Note that the algorithm uses previously identified orientation information to restrict attention to sets of vertices that are potentially anterior separating sets.
This restriction is justified by the following theorem and the soundness of the added arrowheads under the assumption of perfect testing.


\repproposition{prop:minimal:anterior}{
{\rm (Minimal separator implies anterior separator){\bf .}}
If $C$ is a minimal separating set for a pair of vertices then $C$ is an anterior separating set for the pair of vertices.}

In order to effectively terminate the algorithm, we need to an upper bound the size the minimal separating sets for a pair of vertices that are adjacent in the current graph $G$.
Proposition~\ref{prop:minimal:anterior} and the correctness of Orientation Rule~1 and edge removal provides a simple tighter bound on the size of the minimal separating set in a graph. In particular, if $C$ is a minimal separator for $a$ and $b$ in $\genstruct$, then $|C| \leq |\ant(ab,\genstruct)| \leq | \ant(ab,G)|$. This, however is not a tight bound and leads to excess computation. For instance, if $\genstruct$ is undirected and connected the bound is vacuous. In graphs without bidirected edges, the degree of the vertices --- the cardinality of the set of adjacent vertices --- can be used to bound the size of the minimal separating set. This, however, is not possible for mixed graphs. For instance, consider graph $G_{11}$ shown in Figure~\ref{fig:non-local-separator}. In order to separate $a$ and $e$ in $G_{11}$ the minimal separating set is the set $\set{b,c,d}$ that includes vertex $c$ that is adjacent to neither $a$ nor $b$. 

This motivates the definition of the induced adjacencies of a vertex. The \emph{induced adjacencies} of a vertex $a$ given vertex $b$ in graph $G$ is the set $\iadj(a,b,G)$ that contains the vertices connected to $a$ by collider walk in which every internal vertex is in $\ant(ab,G)$. In graph $G_{11}$, $\iadj(a,b,G)=\iadj(b,a,G)=\set{b,c,d}$.\footnote{The set of induced adjacencies is a generalization of the set $D$-$SEP$ defined by \cite{sgs} and Proposition~\ref{prop:induced-adj-separation} is related to Theorem~6.2 and Lemma~6.2.4 proved in \cite{sgs}.} The \emph{anterior induced adjacencies} of a vertex $a$ given vertex $b$ in graph $G$ is the set $\aiadj(a,b,G)=\iadj(a,b,G) \cap \ant(ab,G).$ The next proposition shows that, if $a$ and $b$ are separable then the anterior induced adjacencies of $a$ given $b$ are sufficient to separate $a$ and $b$.

\repproposition{prop:induced-adj-separation}{If $a$ and $b$ are separable in graph $G$ then $\indep{a}{b}{C}{G}$ where $C = \aiadj(a,b,G)$.}

\begin{figure}[htbp]
  \centering
  \includegraphics[scale=0.6]{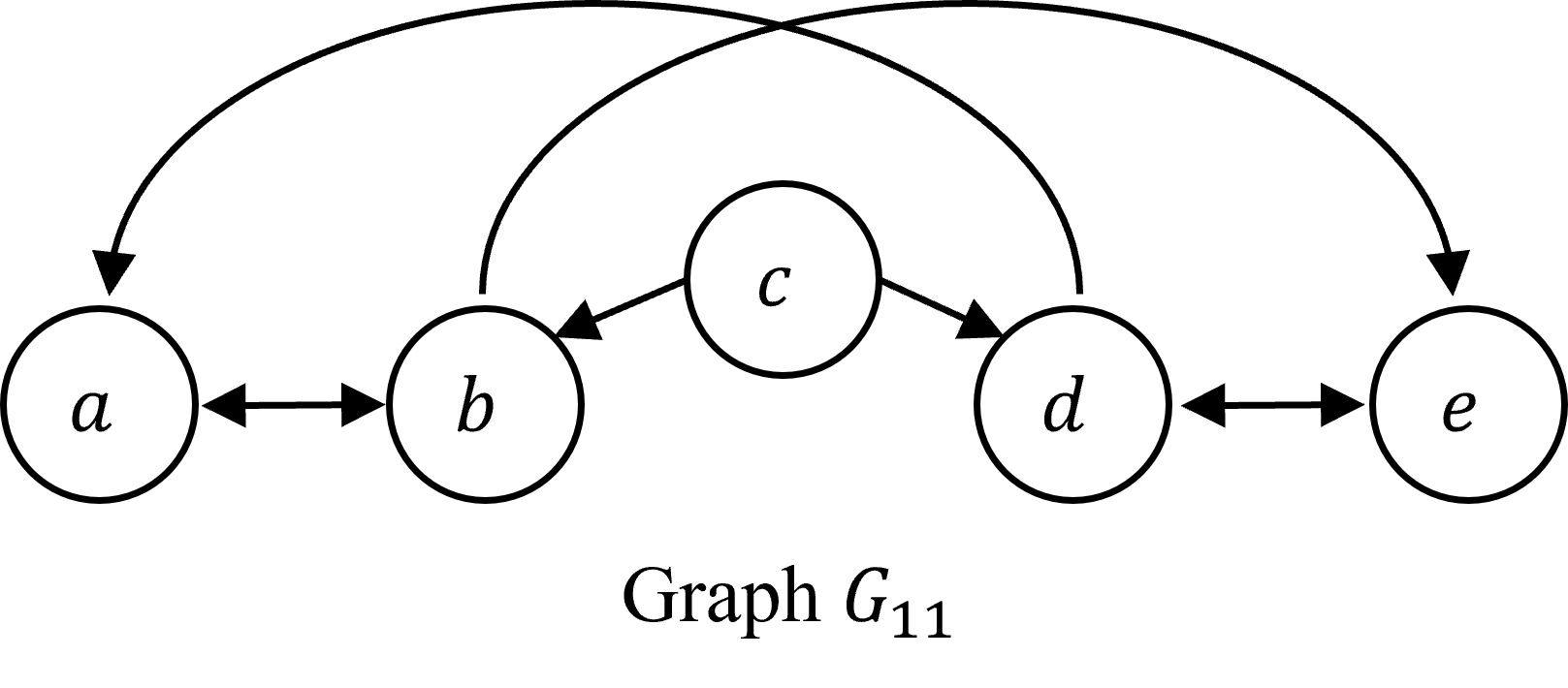}
  \caption{Graph $G_{11}$ illustrates that vertex separation is non-local.}
  \label{fig:non-local-separator}
\end{figure}

The quantity $\sepsetbound(a,b,G)$ is the \emph{separation set bound} for a pair of vertices $a$ and $b$ in graph $G$ and defined to be 0 if $a\not\in \adj(b,G)$ and to be $\sepsetbound(a,b,G)=\max(|\aiadj(a,b,G)|,|\aiadj(b,a,G)|)$ otherwise. 
The following proposition captures the fact that the separation set bound for $G$ bound the size of the minimal separation set in $\genstruct$.

\repproposition{prop:sep-set-bound}{If $a\not\in \adj(b,\genstruct)$ and $|C|$ is a minimal separating set for $a$ and $b$ in $\genstruct$ then under perfect testing, $|C| \leq \sepsetbound(a,b,G)$.}

We then use this bound to define an overall bound for the graph $G$. The \emph{separation set bound} of graph $G$ is defined to be $\sepsetbound(G)=\max_{a,b \in V} \sepsetbound(a,b,G)$ and use it to determine when to terminate the \sgi\ algorithm.

The following theorem describes the correctness of the \sgi\ algorithm.
The correctness result assumes that the $\gendist$ provides \emph{perfect testing} with respect to $\genstruct$ for the \sgi\ algorithm.

\reptheorem{thm:SGI:identifies-graph}{If $\genstruct$ is essentially separable then
$\sgi(\imodel{\genstruct})\equiv \genstruct$ and 
if $G \equiv \genstruct$ is separable then $\sgi(\imodel{\genstruct})= \inducedarrowheads(G)$.}

From Corollaries~\ref{cor:largest} and \ref{cor:separable:induced-arrowhead-anterial}, the graph $\sgi(\imodel{\genstruct})$ is both the largest graph equivalent to $\genstruct$ and an anterial graph equivalent to $\genstruct$.

\repcorollary{cor:SGI:identifies}{If $\genstruct$ is essentially separable and $\gendist$ provides perfect testing for $\genstruct$ with respect to \sgi\ then the \sgi\ algorithm identifies the equivalence class of $\genstruct$.}

We illustrate the \sgi\ algorithm applied to $\imodel{G_2}$ from Figure~\ref{fig:separable-and-weakly-separable-graphs}. The initial graph is the simple complete undirected graph (not shown). The first separation fact that is found is $\indep{a}{b}{\emptyset}{G}$. The algorithm removes the edge between $a$ and $b$ and then computes the set $\IV(a,b,\emptyset,G)=\set{c}.$ It then uses Orientation Rule 1 to add arrowhead endmarks at $c$ on the edges $e(a,c), e(b,c), e(d,c), e(e,c).$ The results at this point is shown as Graph $G_9$ in Figure~\ref{fig:sgi-intermediate-steps}. The algorithm then continues to search for separation facts. It next finds that $\indep{a}{d}{\emptyset}{G}$, removes the edge $e(a,d)$, computes the set $\IV(a,d,\emptyset,G)=\set{c}$. In this case the application of Orientation Rule 1 adds no new arrowhead endmarks. It next finds $\indep{a}{e}{\emptyset}{G}$, removing the edge, and again no new arrowhead endmarks are added.  The result at this stage is shown at Graph $G_{10}$ in Figure~\ref{fig:sgi-intermediate-steps}. The algorithm then finds $\indep{b}{e}{\emptyset}{G}$, and removes the edge $e(b,e)$ and adds no new arrowhead endmarks. No additional independencies are found and the result is the graph $G_8$ in Figure~\ref{fig:induced-arrowheads}. Note that the adjacencies of $G_6$ and $G_2$ are not the same as $G_2$ is essentially separable. Nonetheless, it does have the same adjacencies as the graph $G_1$ and the same induced arrowheads.

\begin{figure}[htbp]
  \centering
  \includegraphics[scale=0.6]{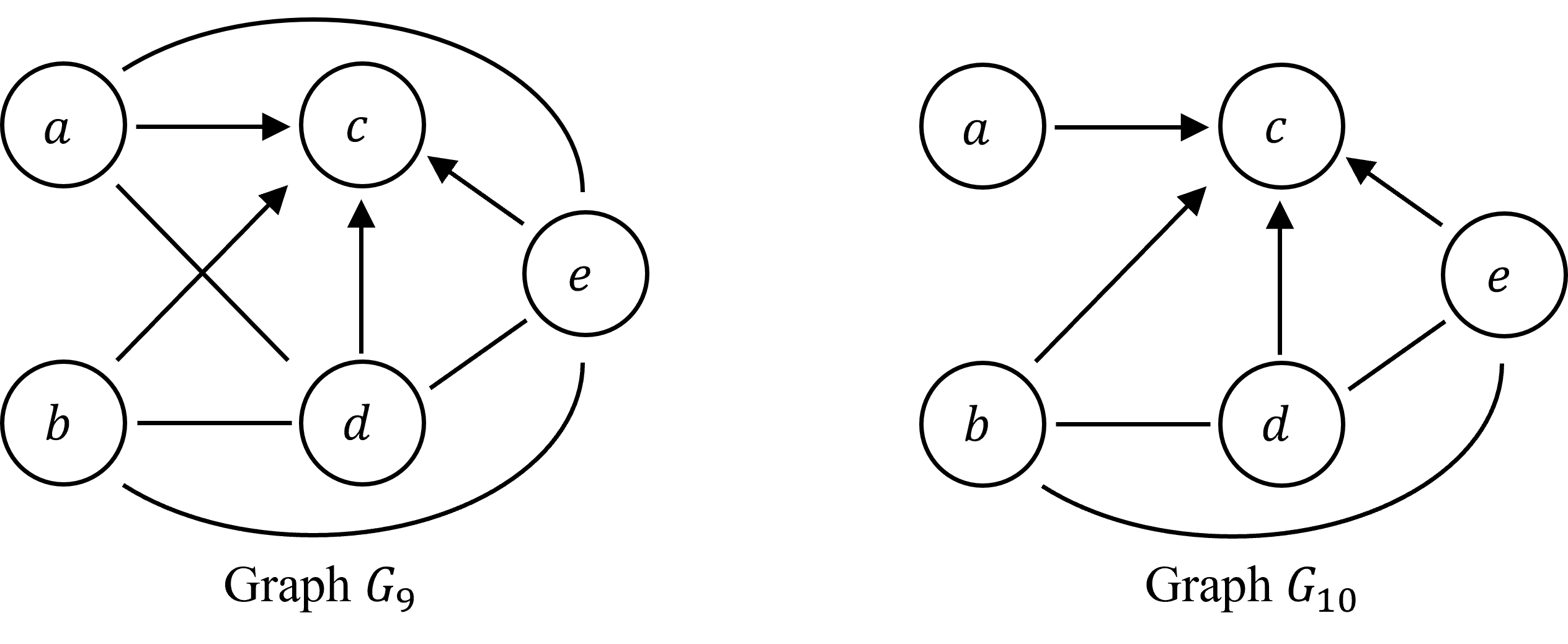}
  \caption{Graph $G_9$ and $G_{10}$ are intermediate graphs obtained while applying the \sgi\ algorithm.}
  \label{fig:sgi-intermediate-steps}
\end{figure}

The next theorem provides a simple bound on number of independence tests required by the \sgi\ algorithm in terms the properties of a graph equivalent to $\genstruct$.

\reptheorem{thm:polynomial-bound}{If $\genstruct$ is essentially separable and $\gendist$ provides perfect testing for $\genstruct$  then the \sgi\ algorithm requires $O(|V|^{\sepsetbound(G)+1})$ independence tests where $G=\largest(\genstruct)$ is the representation of the equivalence class of $\genstruct$.}

\section{Discussion}
\label{sec:discussion}

This paper contributes to a long series of work investigating the properties of families of graphical models and algorithms identifying equivalence classes of graphical models using independence tests. In this section, we discuss aspects of our work as it relates to previous work that has not been discussed in the previous sections.

\subsection{Essentially cyclic graphical models}
\label{sec:beyond-weakly-separable}

Cyclic directed graphs have been used to represent the equilibrium of systems with feedback. For instance, \cite{Richardson1996} considers simple graphs with directed edges and \cite{Koster1996} considers reciprocal graphs; graphs with directed and undirected edges with additional restrictions. Both of these families are subfamilies of the mixed graphs considered in this paper.

A characterization of separation equivalence for directed graphs was given by \cite{Richardson1997}. 
A \sia\ that identifies the equivalence class of a directed graphical models under perfect testing was given in \cite{Richardson1996}. More recently an alternative characterization of separation equivalence for directed graphs was given by \cite{ClaassenMooij_UAI_23}. 
The family of directed graphs includes some essentially cyclic graphs (e.g., graph $G_3$ from Figure~\ref{fig:non-weakly-separable-graph}).

The characterizations of the separation equivalence of separable graphs given in Sections~\ref{sec:miw-characterization-of-equivalence} and \ref{sec:iarrowhead-characterization-of-equivalence} include some cyclic graphs but does not include essentially cyclic graphs. A natural generalization of our work and previous work on directed graphs is to provide a characterization of separation equivalence for all mixed graphs, to develop a canonical representation function for the equivalence class of mixed graphs, and to develop \sia s that identify the equivalence class of mixed graphs.

\subsection{Sound and complete representation of equivalence classes}
\label{sec:sound-and-complete}

In this paper, the goal of our \sgi\ \sia\ is to obtain a representation of the equivalence class of the generative graphical model under perfect testing. In fact, the output of SGI provides additional information about the generative structure of the graphical model as all identified arrowheads are exclusive arrowheads and shared by all equivalent separable graphs. A natural goal for a \sia\ it to provide correct information about endmarks (soundness) and as much information about endmarks as possible (completeness). This is typically accomplished through the application of orientation rules. A set of sound and complete orientation rules for directed acyclic graphs was presented by \cite{Meek95causal} and for ancestral graphs by \cite{zhang2007}. 

From Theorem~\ref{thm:induced-arrowheads:sound:identifying}, we know that if $G$ is a separable graph then $\inducedarrowheads(G)\equiv G$. Corollary~\ref{cor:largest}, implies that it is not possible to add any additional arrowheads. This implies that the \sgi\ \sia\ is arrowhead complete in the sense that no additional arrowheads can be added.
Developing a set of sound and complete orientation rules for essentially separable graphs (e.g., applied to the output of the \sgi\ algorithm) is a natural next step. In order to accomplish this one needs to extend the class of graphs used to represent the output as in \cite{zhang2007}. This is due to the fact that for some separation equivalence classes some but not all of the tail endmarks are exclusive.

The arrowhead completeness of the SGI algorithm enables one to explore the impact of the ontological assumptions about the generative process on the identifiability of features of the equivalence class of the generative model. For instance, if we assume that $G$ is a directed acyclic graph, then one can use the PC algorithm and the orientation rules R1-R4 of \cite{Meek95causal} to obtain a sound and complete set of endmarks under perfect testing. However, an equivalent graph, under the weaker assumption that the generative model is essentially separable might not have the same arrowheads. For instance, consider the graph with edges $a \tailarrow c$, $b \tailarrow c$, $c \tailarrow d$. The result of the PC algorithm and the orientation rules under perfect testing adds of an arrowhead at $d$ on the edge  $\edge{c}{d}$ which is not added by the SGI algorithm. This is justified by the existence of the equivalent graph in which $\edge{c}{d}$ is undirected.
Such additional arrowheads are sound when making the ontological assumption that the family generating processes are directed acyclic graphs rather than essentially separable graphs. This situation is similar when assuming $G$ is an ancestral graph and applying the orientation rules R1-R4 of \cite{zhang2007}. Again, one obtains an arrowhead at $d$ on $\edge{c}{d}$ as the orientation rules are sound for ancestral graphs but not for essentially separable graphs.

\subsection{Assumption for identification and their plausibility}
\label{sec:identification-assumptions}

The correctness results for the \sgi\ algorithm (Corollary~\ref{cor:SGI:identifies}) assume perfect testing.
In this section we consider the tenability and plausibility of this identification assumption.

A distribution $P$ is \emph{perfect} with respect to $G$ if $\indep{A}{B}{C}{G} \iff \indep{A}{B}{C}{P}$.\footnote{This assumption is sometimes referred to as the distribution and the graph being faithful; see, for example, \cite{Sadeghi2017}. Faithfulness, however, often refers only one direction. In particular, the direction opposite of $\gendist$ being Markov with respect to $\genstruct$; see \cite{sgs}.} Clearly, if a distribution $P$ is perfect with respect to $G$ then $\imodel{P}$ provides perfect testing for $G$. We establish the tenability of the perfect testing assumption by establishing the existence of perfect distributions for all essentially separable graphs. This follow from the paper \cite{Sadeghi2017} that shows the existence of a perfect distribution for every chain mixed graphs and 
Theorem~\ref{thm:simplify:separable-graph:equivalent-anterial-graph}, every separable graph has an equivalent anterial graph.
Other related results for more restricted graph families and distribution families are those of \cite{sgs, Meek1995faithfulness, Pena2009}.

Next we consider the plausibility of the identification assumption. 
Previous work has justified the assumption of the generative distribution being a perfect distribution for particular defining distribution families and graph families by showing that, in a specific measure theoretic sense, most distributions that can be represented by a graph are perfect. \cite{sgs, Meek1995faithfulness, Pena2009} One can hope to extend these results to richer families of graphs and families of distributions to show the plausibility of the assumption of perfect testing.

\subsection{Elucidating the assumptions for identification}

The perfect testing assumption used in Corollary~\ref{cor:SGI:identifies} does not provide insight into the nature of the assumptions required the \sgi\ algorithm correctness. The conjunction of the following alternative compatibility conditions are sufficient for \sgi\ to identify structure and provide some insight into the requirements for correctness.

1) \emph{Bounded vertex separability perfectness}: for all $i$ and $j$, the vertices $i$ and $j$ are not vertex separable in $\genstruct$ if and only if for all  $C\subseteq V \setminus \set{i,j}$ such that $|C|< \sepsetbound(i,j,\genstruct)$, $\dep{i}{j}{C}{\gendist}$.

2) \emph{Bounded induced arrowhead perfectness}: if vertices $k$ and $l$ are not vertex separable in $\genstruct$, then for all $i$, $j$, and $C\subseteq V\setminus \set{i,j}$ such that $|C|<\sepsetbound(i,j,\genstruct)$, 
\begin{align*}
k &\in \IV(i,j,C,\imodel{\genstruct}) \wedge l\notin \IV(i,j,C,\imodel{\genstruct}) \\
&\iff\;
k \in \IV(i,j,C,\imodel{\gendist}) \wedge l\notin \IV(i,j,C,\imodel{\gendist})
\end{align*}

Informally, the sufficiency of these conditions follows from Theorem~\ref{thm:separable-graphs:equivalent::same-adj-same-inducing-arrowheads} and the following informal arguments.

The bounded vertex separability perfectness condition guarantees adjacency correctness in the result of the \sgi\ algorithm. The forward direction of this condition guarantees that no required edge is removed and the reverse direction of the condition guarantees that missing edges are removed. The forward direction of bounded vertex separability is essentially a bounded version of adjacency faithfulness of \cite{ZhangSpirtes08} with the other direction being a bounded local Markov condition (see also \cite{TehSadeghiSoo2025}).

The bounded induced arrowhead perfectness condition guarantees that any edge in the result of the \sgi\ algorithm will be oriented with an arrowhead if and only if it has an induced arrowhead in $\genstruct$. The induced arrowhead is stronger than perfect testing as it requires (e.g.) correctness for all $i,j$ and all separating sets $C$ with $|C| < \sepsetbound(i,j,\genstruct)$ but the algorithm uses (e.g.) only minimal separating sets.

Finally, we note that the assumption of perfect testing is also sufficient but not minimal due to the fact that one can recover from (1) failing to remove an edge due to an incorrect test if there is another test later that removes the edge, and 
(2) failing to orient an induced arrowhead on an edge in $\genstruct$ if another edge removed later leads to the orientation of the required induced arrowhead.

\subsection{More efficient structure identification algorithms}
\label{sec:more-efficient}


Most \indeptest\ based \sia s, such as the the PC algorithm \cite{sgs}, the FCI algorithm \cite{sgs} and the FCI+ algorithm \cite{ClaassenMooijHeskes2013}, use some sort of locality in their search for separatings sets for pairs of variables. For instance, the PC algorithm, when trying to separate vertices $a$ and $b$ only consider vertices that are adjacent to $a$ and $b$. As discussed in Section~\ref{sec:identifying}, this is sufficient in if there are no bidirected edges. If, however, the generative structure includes bidirected edges one needs to augments the candidate set for potential separating sets to include additional vertices. For instance, the FCI \cite{sgs}, RFCI \cite{ColomboEtAl2012}, and FCI+ \cite{ClaassenMooijHeskes2013} each consider vertices beyond those adjacent to the endpoints of an edge they are trying to remove. In this paper, we also consider extended sets of vertices that we call the set of induced adjacencies. Unlike these other approaches, we do not restrict separating sets to to induced adjacencies but rather allow for a non-local search among vertices that are potentially anterior to the endpoints of the edge that we are trying to remove.

One can, however, consider a more localized version of the \sgi\ algorithm in which we choose potential separating sets for $i$ and $j$ using the current graph $G$.  In particular, Proposition~\ref{prop:induced-adj-separation}  shows that the anterior inducing adjacencies provide a more localized separation criterion as compared to search among vertices that are anterior to either $i$ or $j$. This, one can choose $C$ to be a subset of $\aiadj(i,j,G)$ or to be a subset of $\aiadj(j,i,G)$. One complication in doing so, is that, unlike the sets $\ant(i,G)$ and $\ant(j,G)$, there is no guarantee that $\aiadj(i,j,G) \subseteq \aiadj(i,j,\genstruct)$. Thus if searching from among possible separating sets (e.g.) $C\subseteq \aiadj(i,j,G)$ we need to check all subsets of size $\leq socs$. It would be interesting to compare alternative \sia s that use both locality and orientation information to improve computational efficiency. 

In addition to computational efficiency, it is important to consider the statistical reliability of \sia s. There are a number of different approaches to improving reliability including using redundant tests and removing unnecessary tests. The \sgi\ algorithm, after identifying a separating set, determines the complete set of inducing variables for the separated vertices. It is not necessary to identify this entire set as many of the induced arrowheads can be identified on the basis of other separating sets. It would be interesting to explore the reliability of the \sgi\ algorithm as compared with other \indeptest\ based \sia s and to explore if reliability or efficiency can be improved by using locality and by being lazy about computing complete sets of inducing variable for separated vertices especially when future tests will establish the existence of an arrowhead.

Finally, we note that Classen et al. \cite{ClaassenMooijHeskes2013} also provide a polynomial bound for identifying ancestral. The polynomial bound we provide in Theorem~\ref{thm:polynomial-bound} is not, however, directly comparable.

\subsection{Maximal graphs}

In Section~\ref{sec:essentially-separable}, we showed that there are maximal graphs that are not separable. Previous work such as \cite{LauritzenSadeghi2018} and \cite{RichardsonSpirtes2002} have provided maximization algorithm for subfamilies of mixed graphs. It would be interesting to develop algorithms that transform a graph into a maximal equivalent graph and study their properties. The properties of maximal graphs are likely to prove useful in extending the results on characterizing separation equivalence to all mixed graphs and in developing identification algorithms for the separation equivalence classes of mixed graphical models.

\subsection{Other separation criteria}
\newcommand{\sseparation}{$\sigma$-separation}
\newcommand{\sseparable}{$\sigma$-separable}
\newcommand{\seseparable}{$\sigma$-essentially separable}
\newcommand{\seacyclic}{$\sigma$-essentially acyclic}
The focus of this paper has been on the properties of mixed graphs with respect to d-separation. It would be interesting to consider separable, essentially separable and essentially acyclic graphs defined with respect to other separation criteria. In particular, the \sseparation\ criteria has been defined for mixed graphs with directed and bidirected edge. It would be interesting to extend \sseparation\ to mixed graphs and explore the properties of \sseparable, \seseparable, and \seacyclic\ graphs.
In particular, one might expect that all mixed graphs in such an extension, are \seacyclic, given the work by \cite{Spirtes1995cyclic} and \cite{ForreMooij_UAI_19} related to acyclification. Furthermore, one would then expect that all mixed graphs are \seseparable.



\bibliographystyle{imsart-number} 
\bibliography{main}       

\newpage

\appendix
\section{Proofs of properties of independence models}
\label{appendix:useful-properties-of-independence-models}

\TextVersion{
In this section we prove Proposition~\ref{prop:sep-equiv::pairwise-sep-equiv} from Section~\ref{sec:indepmodels}.
\vspace{1em}

\refproposition{prop:sep-equiv::pairwise-sep-equiv}

\begin{proof}
    The forward direction follows from the fact that the \pairwise\ projection of an \indepmodel\ contains all and only the pairwise \indepstatement s that hold in a graph. For the reverse direction
    we assume that $\pairwise(I)=\pairwise(I')$. Assume that the claim does not hold. In that case there must be an $\testcaseABC$ such that either (i) $\indep{A}{B}{C}{I}$ and $\dep{A}{B}{C}{I'}$ or (ii)  $\dep{A}{B}{C}{I}$ and $\indep{A}{B}{C}{I'}$. \Wlog, assume (i).
    From decomposition we have $\indep{a}{b}{C}{I}$ for $a\in A$ and $b\in B$. From pairwise equivalence, we have $\indep{a}{b}{C}{I'}$ for $a\in A$ and $b\in B$. Finally, from composition we have $\indep{A}{B}{C}{I'}$. Thus we have a contradiction and the claim holds.
\end{proof}
}{
In this section we prove Propositions~\ref{prop:sep-equiv::pairwise-sep-equiv} and \ref{prop:cep-properties} from Section~\ref{sec:indepmodels}.
\vspace{1em}

\refproposition{prop:sep-equiv::pairwise-sep-equiv}

\begin{proof}
    The forward direction follows from the fact that the \pairwise\ projection of an \indepmodel\ contains all and only the pairwise \indepstatement s that hold in a graph. The reverse direction follows from the fact that every non-pairwise dependence statement $\indepABC$ that holds in a \indepmodel\ $I$ is a deterministic function of $\pairwise(I)$ and the trivial independence function and the contrapositive versions of the symmetry and decomposition properties applied to the set of pairwise dependent statements that hold in $I$.
\end{proof}

\refproposition{prop:cep-properties}

\begin{proof}
Let $\defIndepModelIForObjectDomain$ and  $L_1, L_2,S_1,S_2$ be disjoint sets of $\objectDomain$.

\newcommand{\Cond}{\DefineConstant{C}}
\newcommand{\Marg}{\DefineConstant{M}}
\newcommand{\Remove}{\DefineConstant{R}}

We define useful projections for sets of tests $T \subseteq \testsForDomain$ for object domain $\objectDomain$.

$$\Cond_S(T)= \setcomprehensionLSE{\testcaseABC}{\tests_{\objectDomain\setminus S}}
{\seq{A,B,C\cup S} \in T \wedge (A\cup B \cup C)\cap S = \emptyset}$$
$$\Marg_L(T)= \setcomprehensionLSE{\seq{A,B,D}}{\tests_{\objectDomain\setminus L}}{\seq{A,B,D} \in T \wedge (A\cup B \cup D) \cap L = \emptyset}$$


These projections allow us to provide an alternative definition of the \cep\ projection:
$$\indepmodelConditionalExpectation{L}{S}(I)=
\seq{\objectDomain \setminus (L \cup S), \Cond_S(T) \cap \Marg_L(T)}.$$

The first property in the proposition then follows from the fact that $\Cond_\emptyset(T)=\Marg_\emptyset(T)=T$ and basic set manipulation.

Next we define $\cap$ for \indepmodel s by lifting $\cap$ on sets to tuples. In particular,
$$\seq{\objectDomain,T}\cap \seq{\objectDomain',T'}=\seq{\objectDomain \cap \objectDomain', T \cap T'}.$$
This allows us to relate the composition of \cep s to intersection for disjoint subsets $L_1,L_2,S_1,S_2$ of $\objectDomain$ as follows:
$$\indepmodelConditionalExpectation{L_1}{S_1}\circ \indepmodelConditionalExpectation{L_2}{S_2}=\indepmodelConditionalExpectation{L_1}{S_1}\cap \indepmodelConditionalExpectation{L_2}{S_2}.$$

Properties (2) and (3) from the proposition then follows from basic properties of sets.

\end{proof}

Next we provide an alternative proof that the independence property weak transitivity holds in mixed graphs from Section~\ref{sec:separation}.
\vspace{0.1in}

\refproposition{prop:weaktrans}

\begin{proof}
    Follows from Lemma~\ref{lem:open-walks-to-common-vertex:exist-open-walk-between-endpoints}, and decomposition.
\end{proof}
}

\section{Proofs of useful properties of walks and vertex sets}
\label{appendix:useful-properties-proofs}

\subsection{Manipulating walks}

We can use the subsequence operator to define a \emph{subwalk}. For instance, if $\omega=\seq{e_1,\ldots, e_n}$ then $\gamma=\omega(e_i,e_j)=\seq{e_i,\ldots,e_j}$ is the subwalk containing all of the edges from $e_i$ to $e_j$.
We extend the definition of the subwalk operator to allow for the specification of a subwalk via a pair of vertices on the walk or a subsequence of the vertex sequence of the walk.
Consider the walk $\omega=\seq{e_1,\ldots,e_{n-1}}$ with vertex sequence $\defwalkomega$.
We use $\omega(v_i,v_j)$ to refer to the subwalk of $\omega$ containing the edges corresponding to adjacent pairs of vertices on the vertex subsequence $\vertexseq{\omega}(v_i,v_j)$. 
We also allow the subwalk operator to be applied to a vertex subsequence of the vertex sequence of the walk. For instance, $\omega(\seq{v_i,\ldots,v_j})=\omega(v_i,v_j)$.
Note that the subwalk operator can be used to reverse the sequences of edges in a walk. For instance, $\omega(v_n,v_1)=\seq{e_{n-1},\ldots,e_1}$ is the walk whose first vertex is the last vertex of the walk $\omega$.  Note that $\omega(v_i,v_i)=\omega(\seq{})=\omega(\seq{v_i})=\seq{}$ is an empty walk. We use the binary operator $\append$ to append two sequences (e.g., $\omega(e_1,e_n)= \omega(e_1,e_i) \append \omega(e_i,e_n)$)

\subsection{Termination properties of walks}

A walk $\omega$ with $\defwalkomega$ is \emph{into} $v_1$ if the edge $\edge{v_1}{v_2}$ on $\omega$ has an arrowhead at $v_1$, otherwise it is \emph{out of} $v_1$.
For instance, the walk $a \tailarrow b \arrowarrow c$ is out of $a$ and into $c$.
A walk $\omega$ with $\defwalkomega$ and $\defsectionomega$ is \emph{weakly into} $v_1$ if either (1) $\omega$ is into $v_1$ or (2) the edge $\edge{\last(\sigma_1)}{\first(\sigma_2)}$ has an arrowhead at $\last(\sigma_1)$, otherwise it is \emph{strongly out of} $v_1$. For instance, the walk $a \tailtail b \tailarrow c \tailarrow d \tailtail e$ is strongly out of $a$ and weakly into $e$.
Note that a walk that is into an endpoint is necessarily weakly into that endpoints and that a walk that is strongly out of an endpoint is necessarily out of that endpoint.

\subsection{Open walks and connecting walks}
\label{sec:open-walks}

Next we define a generalization of connecting walks that yields an equivalent definition of vertex separation that is useful for describing and proving results about mixed graphs.

A walk $\omega$ is \emph{open} given $C$ if every collider  section of $\omega$ is anterior to $\Cij$ and every internal vertex on a non-collider section is not in $C$. If a walk is not open given $C$ then it is \emph{closed} given $C$. 
Note that the walk $i \arrowarrow a \arrowarrow i \tailtail b \arrowtail j$ is open given $C=\set{i}.$
A collider section of a walk is \emph{open} given $C$ if every vertex is anterior to $C$ and is \emph{closed} otherwise.
A non-collider section of a walk is \emph{open} given $C$ if every vertex on the section that is not an endpoint of the walk is not in $C.$
Thus, a walk is open given $C$ if and only if every section of the walk is open given $C.$

The concept of shortest open walk often plays an important role in proving separation properties of graphs (including the following lemma). A walk $\omega$ is a \emph{shortest open walk} given $C$ if $\omega$ is open given $C$ and there is no shorter walk that is open given $C$.

The following lemma captures the property that if is there is an open walk between $i$ and $j$ given $C$ there must be another open walk given the $\Cmij.$

\begin{lemma}\label{lem:open-walk-given-C-and-possible-endpoints::open-walk-given-C-without-endpoints}
    If walk $\omega$ between $i$ and $j$ is open given $C$ in graph $G$ then there is an open walk $\omega'$ that is open given $\Cmij$ in $G$.
\end{lemma}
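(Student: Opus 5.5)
The plan is to take $\omega'=\omega$ and check directly that each defining condition for being open given $C$ implies the corresponding condition for being open given $\Cmij$. Both conditions concern the same walk between the same endpoints $i$ and $j$, so the section sequence of $\omega$ and the classification of its sections into collider and non-collider sections stay fixed. Only the conditioning set changes, and the two clauses of the definition respond to that change in compatible ways.

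\emph{Collider sections.} For $\omega$ to be open given $C$, every collider section must be anterior to $\Cij$. With the new conditioning set, the relevant target set is
\[
(\Cmij)\cup\set{i,j}=\Cij ,
\]
so this clause is unchanged. Every collider section of $\omega$ is therefore still open.

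\emph{Non-collider sections.} Here openness requires that no vertex on the section other than an endpoint of the walk lies in the conditioning set. Since $\Cmij\subseteq C$, any vertex not in $C$ is also not in $\Cmij$. Thus every non-collider section that is open given $C$ remains open given $\Cmij$.

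Because a walk is open exactly when every one of its sections is open, $\omega$ is open given $\Cmij$, and $\omega'=\omega$ proves the lemma.

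The only point needing care is the equality $(\Cmij)\cup\set{i,j}=\Cij$. It holds even when $i$ or $j$ lies in $C$, which is precisely the situation the lemma is meant to handle (compare the example $i\arrowarrow a\arrowarrow i\tailtail b\arrowtail j$, which is open given $\set{i}$). Since the argument is a direct check of the definitions, I expect no genuine obstacle.
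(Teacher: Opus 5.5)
Your proof is correct, and it is more direct than the paper's. The paper argues by minimal counterexample. It takes a shortest walk open given $C$ that has no subwalk open given $C\setminus\{i,j\}$, and dismisses the case $C\cap\{i,j\}=\emptyset$. Otherwise it locates a collider section containing an internal occurrence $v_l$ of $i$ (or $j$) and no vertex of $C\setminus\{i,j\}$, and cuts there to obtain the shorter walk $\omega(v_l,v_n)$ open given $C$.

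You instead check the walk-level definition clause by clause. The collider clause refers to $C\cup\{i,j\}=(C\setminus\{i,j\})\cup\{i,j\}$, so it is unchanged. The non-collider clause is monotone under shrinking the conditioning set. This gives a stronger conclusion, since the same walk $\omega$ works, with no case analysis, and it shows the lemma is essentially definitional.

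The paper's cut-and-shorten step is only needed under a stricter reading, where a collider section must contain, or be anterior to, a vertex of the conditioning set itself. That is the rule for connecting walks, and also the paper's section-level sentence that a collider section is open if ``every vertex is anterior to $C$''. Under that reading, dropping $i$ from $C$ could genuinely close a collider section.

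That section-level sentence conflicts with the walk-level definition. So your closing appeal to ``a walk is open exactly when every one of its sections is open'' is only valid with the $C\cup\{i,j\}$ version of the collider clause. You have already verified the walk-level clauses directly, so that sentence is unnecessary and you can drop it.
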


\begin{proof}
    Let $\omega$ with $\defwalkomega$ and $\defsectionomega$ be the shortest walk between $i$ and $j$ that is open given $C$ such that there is no subwalk including $\omega$ itself between $i$ and $j$ that is open given $\Cmij.$ We consider two cases.
    
    \proofcase{1} $C\cap \set{i,j} = \emptyset.$ In this case, the walk $\omega$ is open given $\Cmij$. This contradiction the assumption that there is no subwalk open given $\Cmij$ and we have a contradiction.

    \proofcase{2} $C \cap \set{i,j} \neq \emptyset.$ 
    Note that the only occurrence of $i$ on $\sigma_1$ can be as vertex $v_1$ and the only occurrence of $j$ on $\sigma_m$ can be as vertex $v_n.$ 
    If the only section containing $i$ is on $\sigma_1$ and the only occurrence of $j$ is on $\sigma_m$ then $\omega$ open given $\Cmij,$ a contradiction.
    If $i\in C\cap \set{i,j}$ then every occurrence of $i$ must be in a non-collider section otherwise the walk is not open given $C$. Similarly for $j.$
    If every occurrence of $i$ or $j$ is on collider section that contains a member of $\Cmij$ then  $\omega$ is open given $\Cmij,$ a contradiction.
    Thus there must be a collider section containing a vertex in $\set{i,j}$ and no other vertex in $\Cmij$. \Wlog, assume $\sigma_k$ is a collider section containing $v_l=i$ and no vertex in $\Cmij.$ In this case, the walk $\omega(v_l,v_n)$ is a shorter walk open given $C,$ a contradiction.
    
    In all cases we have a contradiction so the lemma must hold.    
\end{proof}

The following lemma proves that one can transform an open walk given $C$ into another open walk given $C$ in which all collider sections are anterial to a vertex in $C$.

\begin{lemma}\label{lem:open-give-c:all-colliders-anterior-to-C}
    If there is a walk $\omega$ between $i$ and $j$ that is open given $C$ then there is a walk $\omega'$ between $i$ and $j$ open given $\Cmij$ in which all collider sections on $\omega'$ are anterior to $\Cmij$.
\end{lemma}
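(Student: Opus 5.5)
First I would apply Lemma~\ref{lem:open-walk-given-C-and-possible-endpoints::open-walk-given-C-without-endpoints}. This lets me assume that $\omega$ is open given $C' = \Cmij$. Note that $\ant(C'\cup\set{i,j},G)=\antsetCij$, so openness given $C'$ still only requires collider sections to be anterior to $C'\cup\set{i,j}$. Among all walks between $i$ and $j$ that are open given $C'$, I would choose one that minimizes the number of collider sections \emph{not} anterior to $C'$, breaking ties by length. The claim is that this number is zero. Suppose instead that some collider section $\sigma_k$ is not anterior to $C'$. Since it is open, every vertex on $\sigma_k$ is anterior to $C'\cup\set{i,j}$. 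Because $\sigma_k$ is an undirected walk, all its vertices share anterior sets. So there is an anterior walk $\pi$ from $\last(\sigma_k)$ (say) to $i$ or to $j$ that avoids $C'$; truncate it at its first vertex in $C'\cup\set{i,j}$, which then must be $i$ or $j$. Without loss of generality $\pi$ ends at $j$ and has no vertex of $C'$.

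Next I would splice. Form $\omega'=\omega(i,\last(\sigma_k))\append\pi$, a walk from $i$ to $j$. I then check each section of $\omega'$.
\begin{itemize}
\item The sections of $\omega(i,\last(\sigma_k))$ other than the last one are unchanged and remain open.
\item The section containing $\sigma_k$ now ends in an edge of $\pi$ that has a tail at $\last(\sigma_k)$, either undirected or directed $\tailarrow$. So this section becomes a non-collider section, possibly merged with an undirected initial segment of $\pi$.
\item Every internal vertex of $\pi$ lies on a non-collider section. This is because $\pi$ is anterior: every directed edge points toward $j$, so there are no arrowheads colliding along $\pi$.
\item Those internal vertices avoid $C'$ by construction, and so does $\sigma_k$ itself, because $\sigma_k$ is not anterior to $C'$ and in particular meets no vertex of $C'$.
\end{itemize}
Hence $\omega'$ is open given $C'$. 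Its collider sections form a subset of those of $\omega$ before $\sigma_k$, and $\sigma_k$ is no longer one of them. This strictly decreases the count and gives a contradiction.

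The main obstacle is bookkeeping at the junction. Merging $\sigma_k$ with undirected edges of $\pi$, or with the preceding section, must not create a non-collider section that contains a vertex of $C'$. Such a vertex can only come from earlier parts of $\omega$ if sections merge backward, but the edge entering $\sigma_k$ carries an arrowhead at $\first(\sigma_k)$, so no backward merge happens. A second point to check is the degenerate case where $\pi$ revisits $i$, or where $\omega'$ becomes a walk from $i$ to $i$. To handle it, I would truncate $\pi$ at its first hit of $\set{i,j}$: if that hit is $i$, I instead splice $\pi$ reversed onto the tail $\omega(\first(\sigma_k),j)$. Finally, I reapply Lemma~\ref{lem:open-walk-given-C-and-possible-endpoints::open-walk-given-C-without-endpoints} if necessary; it preserves openness given $\Cmij$.
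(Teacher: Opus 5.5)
Your proof is correct, but it follows a different route from the paper's.

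\textbf{The paper's route.} After the same reduction to $C\cap\set{i,j}=\emptyset$, the paper builds $\omega'$ in one step. It takes $\sigma_a$ to be the \emph{last} collider section that is anterior to $i$ but not to $C$. It takes $\sigma_b$ to be the \emph{first} collider section at or after $\sigma_a$ that is anterior to $j$ but not to $C$. It then replaces both ends of $\omega$ by anterior walks, $\omega'=\gamma_a\append\omega(\first(\sigma_a),\last(\sigma_b))\append\gamma_b$. The extremal choices together with openness force every collider section strictly between $\sigma_a$ and $\sigma_b$ to be anterior to $C$; the paper leaves this step implicit.

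\textbf{Your route.} You minimize the number of collider sections not anterior to $\Cmij$. You then eliminate any one such section by truncating the walk there and splicing a single anterior walk to one endpoint.

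\textbf{What each buys.} Your version only has to check one junction. It needs no two-sided extremal choice and no argument that the surviving middle segment is clean, since truncation can only discard collider sections, never create them. It also needs only an anterior walk cut at its first hit of $\Cmij\cup\set{i,j}$, not a shortest one. The paper's version buys an explicit, non-iterative description of $\omega'$ as a contiguous piece of $\omega$ flanked by two anterior walks.

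\textbf{One small slip.} When $\pi$ first hits $i$, you splice $\pi$ reversed onto $\omega(\first(\sigma_k),j)$, but $\pi$ starts at $\last(\sigma_k)$. Splice onto $\omega(\last(\sigma_k),j)$ instead. The edge from $\last(\sigma_k)$ to $\first(\sigma_{k+1})$ has an arrowhead at $\last(\sigma_k)$. The adjoining edge of the reversed $\pi$ has a tail there or is undirected. So the merged section is a non-collider section avoiding $\Cmij$, and the count still drops. The final reapplication of the earlier lemma is unnecessary.
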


\begin{proof}
    Let $\omega$ be an open walk given $C$ with $\defwalkomega$ and $\defsectionomega$. By Lemma~\ref{lem:open-walk-given-C-and-possible-endpoints::open-walk-given-C-without-endpoints}, we can assume that $C\cap \set{i,j}=\emptyset.$    
    We construct the required walk $\omega'$ as follows:

    If there is a collider section on $\omega$ that is anterior to $v_1$ but not anterior to $C$ then choose $i$ to be as large as possible such that $\sigma_i$ is such a collider section and choose $i=1$ otherwise.
    If $i > 1$ then let $\gamma_i$ be the shortest anterior walk from $\first(\sigma_i)$ to $v_1$ and let $\gamma_1=\seq{}$ otherwise. Due to the choice of $\sigma_i$, the walk $\gamma_i$ contains no vertices in $C$.

    If there is a collider section on $\omega$ that is anterior to $v_n$ but not anterior to $C$ that is not before $\sigma_i$ then choose $j$ to be as small as possible but with $j\geq i$ such that $\sigma_j$ is such a section and let $j=m$ otherwise.    
    If $j\neq m$ then let 
    $\gamma_j$ be the shortest walk from a vertex $v_b$ in $\sigma_j$ to $v_n$ and let $\gamma_j=\seq{}$ otherwise.
    Again, due to the choice of $\sigma_j$, the walk $\gamma_j$ contains no vertices in $C$.

    Finally, let $\omega'=\gamma_i \append \omega(\first(\sigma_i),\last(\sigma_j)) \append \gamma_j$. The walk $\omega'$ is an open walk given $C$ such that all collider sections are anterior to $C$.
\end{proof}

The following proposition demonstrates that open walks and connecting walks yield the equivalent definitions of vertex separation in a mixed graph.

\begin{lemma}\label{lem:open-walk::connecting-walk}
    There is a connecting walk between $i$ and $j$ given $C\setminus \set{i,j}$ if and only if there is an open walk between $i$ and $j$ given $C$.
\end{lemma}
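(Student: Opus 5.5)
We prove the two directions separately. Throughout, write $C'=C\setminus\set{i,j}$; note that a connecting walk given $C'$ never has $i$ or $j$ in $C'$, and that an open walk given $C$ is also open given $C'$ after applying Lemma~\ref{lem:open-walk-given-C-and-possible-endpoints::open-walk-given-C-without-endpoints}.

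\emph{Connecting implies open.} Let $\omega$ be a connecting walk between $i$ and $j$ given $C'$. Every collider section contains a vertex of $C'$. Each vertex of a section is joined to every other vertex of that section by an undirected subwalk, which is an anterior walk. Hence every vertex of the collider section is anterior to $C'\subseteq \Cij$. No non-collider section contains a vertex of $C'$, so internal vertices on non-collider sections avoid $C'$. Thus $\omega$ is open given $C'$. We then check that openness given $C'$ implies openness given $C$. The collider condition is monotone in $C$. The non-collider condition can fail only if an internal vertex of a non-collider section is $i$ or $j$ and lies in $C$. In that case we shorten $\omega$ to the subwalk starting at the last occurrence of $i$ (respectively ending at the first occurrence of $j$). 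That subwalk is still connecting given $C'$ and now has no internal occurrence of its endpoints on a non-collider section. In fact it suffices to take a shortest connecting walk, because any internal occurrence of an endpoint lets us truncate.

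\emph{Open implies connecting.} Start from an open walk given $C$. By Lemma~\ref{lem:open-give-c:all-colliders-anterior-to-C} there is a walk $\omega$ between $i$ and $j$ that is open given $C'$ and whose collider sections are all anterior to $C'$. Among such walks we pick a shortest one, and we replace each collider section that contains no vertex of $C'$. For such a section $\sigma_k$ we take a shortest anterior walk $\gamma$ from a vertex of $\sigma_k$ to some $c\in C'$, chosen so that its only vertex in $C'$ is $c$. We splice in $\gamma$ followed by its reverse $\reverse(\gamma)$, so that $\omega(\cdot,\sigma_k)\append\gamma\append\reverse(\gamma)\append\omega(\sigma_k,\cdot)$.

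The key local verification, which is the main obstacle, is checking the sections of the spliced walk.
\begin{itemize}
\item The vertex $c$ becomes part of a collider section, since $\gamma$ arrives at $c$ with a tail-to-arrowhead or undirected edge and $\reverse(\gamma)$ leaves symmetrically. If $\gamma$ ends with an undirected run, that run merges into a collider section containing $c$.
\item The internal vertices of $\gamma$ lie on non-collider sections, because at each directed edge of an anterior walk one side has a tail. They are not in $C'$ by the choice of $\gamma$.
\item The old section $\sigma_k$ becomes part of a non-collider section, or merges into the section ending at $c$ when $\gamma$ is undirected. In either case no internal non-collider vertex lies in $C'$. If the whole of $\gamma$ is undirected, then $\sigma_k$ and $c$ lie in one collider section containing $c$, which is fine.
\end{itemize}
Care is needed when the endmarks at the junction with $\sigma_k$ change the section structure. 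This is handled by the case split on whether the first edge of $\gamma$ is undirected or directed. We must also ensure that $i$ and $j$ do not appear internally on non-collider sections of the new walk; since $i,j\notin C'$, such occurrences are harmless for the connecting condition. Repeating this for each offending collider section yields a walk in which every collider section contains a vertex of $C'$ and no non-collider section meets $C'$, i.e. a connecting walk given $C'$.
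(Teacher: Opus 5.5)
Your proposal is correct and follows the paper's proof. The backward direction is exactly the paper's construction: invoke Lemma~\ref{lem:open-give-c:all-colliders-anterior-to-C}, then for each collider section with no vertex of $C\setminus\{i,j\}$ splice in a shortest anterior walk to some $c\in C\setminus\{i,j\}$ followed by its reverse. Your forward direction is more careful than the paper's, which silently ignores the case $C\cap\{i,j\}\neq\emptyset$; there, truncate at an internal occurrence of $i$ (or $j$) that lies on a non-collider section rather than at the last occurrence overall, since cutting inside a collider section can leave a terminal, hence non-collider, section that contains a vertex of $C\setminus\{i,j\}$.
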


\begin{proof}
    For the forward direction, assume there is a walk $\omega$ that is a connecting walk given $C$. The walk $\omega$ is also an open walk given $C$ as every non-collider section contains no vertex in $C$ and   
    every collider section contains vertex in $C$ and thus is anterior to $\antsetij$.

    For the backward direction, assume there is a walk between $i$ and $j$ that is an open walk given $C$. From Lemma~\ref{lem:open-walk-given-C-and-possible-endpoints::open-walk-given-C-without-endpoints}, we can assume that $C$ does not contain either $i$ or $j$.
    From Lemma~\ref{lem:open-give-c:all-colliders-anterior-to-C}, we assume that the walk $\omega$ between $i$ and $j$ is open given $C$ and that every collider section on $\omega$ is anterior to $C$.
    We transform the walk $\omega$ into walk $\omega'$ by replacing every collider section $\tau$ on $\omega$
    that does not contains a vertex in $C$ 
    with a walk $\tau'$ between $\first(\tau)$ and $\last(\tau)$ that is connecting given $C$.
    The walk $\tau'$ is defined as follows:
    Let $\defwalktau$ and let $\gamma_c$ be a shortest anterior walk from a vertex on $\tau$ to a vertex $c\in C$. Let the first vertex of $\gamma'$ be $v_k$. We define $\tau'=\tau(v_1,v_k) \append \gamma_c \append \gamma_c(c,v_k) \append \tau(v_k,v_n)$.

    Next we show that $\omega'$ constructed in this way is a connecting walk given $C$.
    Consider any collider section $\tau$ on $\omega$ that does not contain a vertex in $C$.
    Let $\tau'=\tau(v_1,v_k) \append \gamma_c \append \gamma_c(c,v_k) \\ \append \tau(v_k,v_n)$ with $\gamma_c$ a shortest anterior walk from $\tau$ to a vertex $c\in C$.
    By choosing the shortest anterior walk the only vertex on $\gamma'$ in $C$ is its endpoint $c$. Thus the only vertex in $C$ on $\tau'$ is $c$ and it appears only once on the walk.
    If $\gamma_c$ is a semi-directed walk then $c$ is on a collider section of $\gamma_c\append \gamma_c(c,v_k)$ and all other sections of $\tau'$ are non-collider section on $\omega'$ due to the fact that the walk $\tau'$ is strongly out of $v_1$ and $v_n$. Furthermore, none of these sections contain a vertex in $C$. If $\gamma_c$ is an undirected walk then $\tau'$ is an undirected walk and every vertex on $\tau'$ is on a collider section that contains a vertex $c\in C$.
    
    Thus walk $\omega'$ constructed in this way is a connecting walk as every collider section contains a vertex in $C$ and no non-collider section contains a vertex in $C$.
\end{proof}

\subsection{Invertible walk decomposition functions}

The decomposition of walks into subwalks will play an important role in proving various results. In this section, we define an invertible walk decomposition. In later section, we define specific walk decompositions.

Let $\walksetgraph$ be the set of all possible walks in graph $G$.
A \emph{walk decomposition} of a walk $\omega \in \walksetgraph$ with $\defwalkomega$ is a set of walks $\decomp(\omega) = \set{\gamma_1, \ldots, \gamma_n}$ such that 
(1) $\gamma_i$ is a subwalk of $\omega$ for $1 \leq i \leq n$, and
(2) every edge $\walkedgev{i}$ on $\omega$ for $1 \leq i < n$ is on some subwalk $\gamma_j$.

A \emph{\wdf} $f$ for a set of walks $\walkset\subseteq \walksetgraph$ is a function $f(\omega)$ that maps a walk $\omega \in \walkset$ to a walk decomposition of $\omega$. If $f$ is a walk decomposition function for a set of walks $\walkset$ we denote the set of possible walk decomposition by $f(\walkset) = \bigcup_{\omega \in \walkset}{f(\omega)}$.

A walk decomposition function $f$ for $\walkset$ is \emph{invertible} if there is a function $\inverse{f}$ from $f(\walkset)$ to $\walkset$ such that for all $\omega\in \walkset$ it is the case that $\inverse{f}(f(\omega))=\omega$.
The function $\inverse{f}$ is a \emph{\wcf} for a walk function $f$.

\subsection{The maximal walk decomposition and walk equivalence}
\label{sec:walk-decomposition-and-equivalence}

The \emph{\mwd} of a walk is a walk decomposition into \mcw s and \mt s. A \emph{trek} is a walk in which all vertices are on non-collider sections.
A collider walk $\gamma$ is a \emph{\mcw} on $\omega$ if $\gamma$ is a subwalk of $\omega$ and there is no longer subwalk of $\omega$ that is a collider walk and contains $\gamma$.
A trek $\tau$ is a \emph{\mt} on $\omega$ if $\tau$ is a subwalk of $\omega$ and there is no other trek that is a subwalk of $\omega$ and contains $\tau$.
The \emph{\mwd} of $\omega$
is the walk decomposition $\maxdecompose(\omega) = \set{\tau_1, \gamma_1, \tau_2,\ldots, \gamma_{n-1},\tau_n}$ such that (i) $\gamma_i$ for $1 \leq i < n$ is a non-trivial maximal collider walk of $\omega$, and
(ii) $\tau_i$ for $1 \leq i \leq n$ is a maximal trek of $\omega$.
The \wdf\ $\maxdecompose$ is invertible due to the fact that a non-collider section of a walk must be a section of a \mt\ and a collider section of a walk must be an internal section of a \mcw. In other words, there is a function $\maxcompose$ such that $\maxcompose(\maxdecompose(\omega))=\omega$.
If a pair of walks in a \mwd\ of a walk overlaps then one of the walks is a \mt\ and the other is a \mcw\ and they share exactly one edge. 
While the first and last edges of a walk must be part of a \mt\ in a \mwd\ of a walk, they can be part of the same maximal trek if there are no collider sections in the walk.
The first and last treks of a maximal walk decomposition are called \emph{terminal} treks while  treks between two collider walks are called \emph{internal} treks.
An internal trek must be into both of its endpoints, a terminal maximal trek with an adjacent maximal collider walk is required to be into one of its endpoints, and the maximal trek of a walk consisting of a single maximal trek is not required to be into either endpoint. For example if walk $\omega$ is the walk $a \tailtail b \tailarrow c \tailtail d \arrowarrow e \arrowtail f$ then $\maxdecompose(\omega)=\set{a\tailtail b \tailarrow c, b \tailarrow c \tailtail d \arrowarrow e \arrowtail f, e \arrowtail f}$.

Next we define the equivalence maximal walk decompositions. 
Two maximal walk decompositions $\decomp(\gamma) \equiv \seq{\gamma_1,\ldots,\gamma_n}$ and $\wdecomp(\omega)=\seq{\omega_1,\ldots,\omega_m}$ are \emph{equal} 
if (1) each walk decomposition contains the same number of subwalks (i.e., $n=m$) and,
(2) for $1\leq i \leq n$, if $i$ is odd the vertex sequence of $\gamma_i$ and $\omega_i$ are the same (i.e., $\vertexseq{\gamma_i}=\vertexseq{\omega_i}$)
and if $i$ is even then $\gamma_i\equiv \omega_i$.
Two walks $\omega$ and $\gamma$ are equivalent (denoted $\omega\equiv \gamma$) if $\wdecomp(\omega)\equiv \wdecomp(\gamma)$ or $\wdecomp(\reverse(\omega))\equiv \wdecomp(\gamma)$ where $\reverse(\omega)$ is the walk with the same edges as $\omega$ but the opposite traversal order.
While two equivalent walks are guaranteed to have the same vertex sequence they need not consist of edges with the same endmarks or have the same section sequence.
Consider walks $a\tailarrow b \tailtail c$ and $a \arrowarrow b \tailarrow c$. These two walks are equivalent due to the fact that each walk decomposition consists of a single maximal trek and these maximal treks have the same vertex sequence.
The two walks, however, have different section sequences. Now consider the walks $a \tailarrow b \arrowarrow c$ and $a \arrowarrow b \arrowtail c$. These two walks are also equivalent. Their walk decompositions are of length three, their maximal treks consist of single edges with the same vertex sequence and their maximal collider walks have the same section sequence.

\subsection{Trisections}

The concept of trisections is used in many of the proofs that follow.
A walk $\omega$ is \emph{trisection} if its section sequence $\sectionseq{\omega}=\seq{\sigma_1,\sigma_2,\sigma_3}$ has three section and the first and last are of length one.
We often denote a trisection by $\seq{i,\sigma,j}$.
Trisection subwalks of walks are useful.
If $\sigma$ is an internal section on a walk $\omega$ then there is a unique trisection on $\omega$ that contains $\sigma$. This allows us to select a trisection on a walk using an internal section on that walk. A trisection $\seq{i, \sigma, j}$ is \emph{unshielded} in graph $G$ if there is no edge $\edge{i}{j}$ in $G$ and is \emph{shielded} otherwise. An important fact that is often used in proofs is that a chordless unshielded trisection $\seq{i,\sigma,j}$ whose section $\sigma$ is a collider section is a minimal inducing walk.

\subsection{Decomposing and Composing open walks}

In this section, we describe walk decompositions of open walks into open walks and the openness of the composition of open walks.
For some decompositions of an open walk into open subwalks we can guarantee termination properties of the subwalks.\footnote{The definition of the termination properties of walks is given in Section~\ref{sec:walks}.} 
The openness of a walk composed of two open walks also depends on the termination properties of the subwalks.

\begin{lemma}\label{lem:open-walk:strongly-out-of-non-collider-vertices}
    If $\omega$ is an open walk given $C$ and $b$ is an internal vertex on a non-collider section of $\omega$ then the walks $\omega(v_1,b)$ and $\omega(v_n,b)$ are open given $C$. Furthermore, at least one of the walk is strongly out of $b$.
\end{lemma}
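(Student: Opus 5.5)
Let $\omega$ have $\defwalkomega$ and $\defsectionomega$, and suppose $b=v_k$ lies on a non-collider section $\sigma_r$ with $1<k<n$. The plan is to check openness of each subwalk section by section, and then settle the termination claim by looking at the endmarks of the section $\sigma_r$.

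\emph{Sections of the subwalk.} Consider $\omega(v_1,b)$. Its section sequence is $\sigma_1,\ldots,\sigma_{r-1},\sigma'_r$, where $\sigma'_r$ is the prefix of $\sigma_r$ ending at $b$. The sections $\sigma_2,\ldots,\sigma_{r-1}$ have the same neighbouring edges as in $\omega$, so each keeps its status: collider sections stay collider sections and non-collider sections stay non-collider sections. The section $\sigma_1$ is terminal in both walks. The section $\sigma'_r$ is now terminal, so it is a non-collider section.

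\emph{Openness.} The definition of open refers to $\Cij$, and here the endpoints change to $v_1$ and $b$. For collider sections this is harmless. Because $\sigma_r$ is a non-collider section of $\omega$, $b$ is internal and not in $C$, so $b$ is not part of the set those collider sections must be anterior to. I expect that the intended reading of open (collider sections anterior to $C\cup\{\text{endpoints}\}$) might require some care: a collider section that was anterior only to $v_n$ need not be anterior to $v_1$, $b$, or $C$. This is the main obstacle. I would handle it as in Lemma~\ref{lem:open-give-c:all-colliders-anterior-to-C}. First replace $\omega$ by a walk in which every collider section is anterior to $C$; or else argue via the anterior walk from that section to $v_n$, which passes through $b$ only if it is anterior to $b$. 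Failing that, I would read the statement under the standing assumption that collider sections are anterior to $C$, which is how it is used later.

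For non-collider sections, the internal vertices of $\sigma_1,\ldots,\sigma_{r-1}$ were already internal non-collider vertices of $\omega$, so none is in $C$. The same holds for the vertices of $\sigma'_r$ other than $b$. The vertex $b$ is now an endpoint, so it imposes no condition. Hence $\omega(v_1,b)$ is open given $C$, and the symmetric argument handles $\omega(v_n,b)$.

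\emph{Termination.} A walk ending at $b$ is weakly into $b$ exactly when its final section has an arrowhead at its boundary with the neighbouring section. For $\omega(v_1,b)$ this means the edge from $\last(\sigma_{r-1})$ into $\first(\sigma_r)$ has an arrowhead at $\first(\sigma_r)$, or the terminal edge has an arrowhead at $b$. For $\omega(b,v_n)$ the condition is the analogous one at $\last(\sigma_r)$. Inside $\sigma_r$ all edges are undirected, so an arrowhead can occur only on the two boundary edges of $\sigma_r$. Since $\sigma_r$ is a non-collider section, these two boundary edges do not both carry arrowheads into $\sigma_r$. Therefore at least one of the two subwalks is strongly out of $b$. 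The degenerate cases $r=1$ or $r=m$ are impossible, because $b$ is internal to $\omega$ while $\sigma_1$ and $\sigma_m$ contain the endpoints; if $b$ shares a section with an endpoint, the corresponding side simply has no arrowhead boundary and is strongly out of $b$.
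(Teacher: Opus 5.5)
With your fallback reading, where the collider sections of an open walk are anterior to $C$ itself, your argument is complete and is the paper's proof written out in more detail. That reading matches the section-wise definition in the paper (``a collider section is open given $C$ if every vertex is anterior to $C$''). The paper likewise asserts that every collider section of the subwalks ``must contain a vertex in $\ant(C,G)$''. It also gets the termination claim from $b$ not lying on a collider section, which is your observation that the two boundary edges of $\sigma_r$ are not both into $\sigma_r$.

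The gap is that you leave your other two repairs on the table, and neither can work. Under the endpoint-inclusive reading, the obstacle you flag is an actual counterexample, not a technicality. Take the graph with edges $a\tailarrow c$, $b\tailarrow c$, $b\tailarrow d$, $c\tailarrow d$, let $C=\emptyset$, and let $\omega=a\tailarrow c\arrowtail b\tailarrow d$.
\begin{itemize}
\item Its only collider section $(c)$ is anterior to the endpoint $d$, and $b\notin C$ lies on a non-collider section. So $\omega$ is open in that sense.
\item Yet $\omega(a,b)=a\tailarrow c\arrowtail b$ is not open, since $c\notin\ant(\set{a,b},G)=\set{a,b}$.
\item In fact no walk between $a$ and $b$ can be open given $\emptyset$, because $\indep{a}{b}{\emptyset}{G}$ (Lemma~\ref{lem:open-walk::connecting-walk}).
\end{itemize}
This rules out both repairs. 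Replacing $\omega$ via Lemma~\ref{lem:open-give-c:all-colliders-anterior-to-C} says nothing about the subwalks of the \emph{given} $\omega$. The anterior walk $c\tailarrow d$ from the collider to $v_n$ does not pass through $b$.

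So commit to the reading in which collider sections are anterior to $C$. The endpoint-inclusive version is only guaranteed when the subwalk keeps the endpoint set of $\omega$, e.g.\ when $b$ is a repeated occurrence of $v_1$. That is the case needed in Lemma~\ref{lem:minimal-walk:endpoints-not-internal-vertex-on-non-collider-section}.

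One small slip: $r=1$ or $r=m$ is not impossible (e.g.\ $v_1\tailtail b\tailarrow v_n$). Your following clause does handle that case correctly, so just delete the claim of impossibility.
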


\begin{proof}
    Let $\omega$ is a open walk given $C$ in graph $G$ and $b$ is an internal vertex on a collider section of $\omega$. It must be the case that $b\not \in C$ otherwise $\omega$ would not be an open walk given $C$. The walks $\omega(v_1,b)$ and $\omega(v_n,b)$ are both open given $C$ as every collider section on these walks is a collider section on $\omega$ and must contain a vertex in $\ant(C,G)$. Furthermore non-collider sections on either of these walk cannot contain a vertex in $C$ otherwise the walk $\omega$ would not be open given $C$. Finally, one of the walks must be strongly out of $b$ otherwise $b$ would be on a collider section of $\omega$.
\end{proof}

Next we define minimal open walks. 
The concept of a minimal open walk generalizes the concept of a minimal inducing walk and is essential to describe the compositional properties of open walks.
A walk $\gamma$ is a \emph{minimal open walk} between $i$ and $j$ given $C$ if $\gamma$ is an open walk given $C$ and there is no shorter open walk between $i$ and $j$ given $C$ that uses only vertices on $\gamma$.
One of the important properties of a minimal open walk is that the endpoints of the walk do not appear on internal non-collider sections. This property is proved in the following lemma.

\begin{lemma}\label{lem:minimal-walk:endpoints-not-internal-vertex-on-non-collider-section}
    If $\omega$ is a minimal open walk between $i$ and $j$ given $C$ then neither endpoint of $\omega$ appears as an internal vertex on a non-collider section of $\omega$.
\end{lemma}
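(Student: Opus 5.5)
We argue by contradiction, using the minimality of $\omega$ to produce a shorter open walk between $i$ and $j$ given $C$ whose vertices all lie on $\omega$. Suppose, say, that $i$ appears as an internal vertex $v_k$ (with $1<k<n$) on a non-collider section of $\omega$, where $\defwalkomega$ and $v_1=i$. The symmetric case, with $j$ in place of $i$, is handled by reversing the walk. The natural candidate for the shorter walk is the tail subwalk $\omega(v_k,v_n)$. It runs between $i$ and $j$, uses only vertices of $\omega$, and is strictly shorter because $k>1$.

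By Lemma~\ref{lem:open-walk:strongly-out-of-non-collider-vertices}, since $v_k$ is an internal vertex on a non-collider section of the open walk $\omega$, the subwalk $\omega(v_k,v_n)$ is open given $C$. The argument of that lemma carries over because openness depends on the endpoints only through the set $\Cij$ in the anterior condition on collider sections, and the endpoints of $\omega(v_k,v_n)$ are again $i$ and $j$. Every collider section of $\omega(v_k,v_n)$ is a collider section of $\omega$, so it is anterior to $\Cij$. Every internal vertex of a non-collider section of $\omega(v_k,v_n)$ is either internal to a non-collider section of $\omega$, hence not in $C$, or it is the new first vertex $v_k=i$, which is now an endpoint and so imposes no constraint. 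This gives a shorter open walk between $i$ and $j$ given $C$ on the vertices of $\omega$, contradicting minimality.

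The main obstacle is the bookkeeping at the cut point. The section of $\omega(v_k,v_n)$ containing $v_k$ may differ from the corresponding section of $\omega$: an undirected section of $\omega$ through $v_k$ gets truncated and becomes the first section. We must check two things about this.
\begin{enumerate}
\item No section that was a non-collider on $\omega$ becomes a collider on the subwalk.
\item The first section, now terminal, is automatically a non-collider section.
\end{enumerate}
Since internal sections of the subwalk not containing $v_k$ coincide with sections of $\omega$ and keep their collider status, only the truncated first section needs attention. It is terminal, so it is a non-collider section by definition, and its non-endpoint vertices were non-collider internal vertices of $\omega$, hence not in $C$. I would spell out this section comparison explicitly, since it is also the point where Lemma~\ref{lem:open-walk:strongly-out-of-non-collider-vertices} is invoked.
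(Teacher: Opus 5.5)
Your proof is correct and follows the paper's argument: take the tail subwalk $\omega(v_k,v_n)$ starting at the internal occurrence of the endpoint, get its openness from Lemma~\ref{lem:open-walk:strongly-out-of-non-collider-vertices}, and contradict minimality because it is strictly shorter and uses only vertices of $\omega$. You also observe that the endpoint set $\set{i,j}$ is unchanged, so the anterior condition on collider sections carries over as is; the paper's one-line appeal to that lemma leaves this detail implicit.
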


\begin{proof}
    Let $\omega$ be a minimal open walk given $C$ with $\defwalkomega$. Suppose the lemma is not true. In this case at least one of the endpoints appears on a non-collider section of $\omega$. Without loss of generality, assume that $v_1=v_j$ for $1 < j < n$. Consider the walk $\omega'=\omega(v_j,v_n)$. The walk $\omega'$ is an open walk between $v_1$ and $v_n$ by Lemma~\ref{lem:open-walk:strongly-out-of-non-collider-vertices}. Furthermore, it is strictly shorter than $\omega$ and only uses vertices on $\omega$ as it is a subwalk on $\omega$. Thus $\omega$ is not a minimal open walk given $C$ and we have a contradiction. Thus, neither endpoint can appear as an internal vertex on a non-collider section of a minimal open walk.
\end{proof}

\begin{lemma}\label{lem:open-walk:weakly-into-collider-section}
    If $\omega$ is a minimal open walk given $C$ with $\defwalkomega$ and $\defsectionomega$ and vertex $b$ is on a collider section $\sigma_i$ then 
    
    (i) the walk $\omega(v_1,\first(\sigma_i))$ is into $\first(\sigma_i)$ and is a minimal open walk given $Cv_n\setminus \first(\sigma_i)$, and 
    
    (ii) the walk $\omega(v_n,\last(\sigma_i))$ is into $\last(\sigma_i)$ and is a minimal open walk given $Cv_1\setminus last(\sigma_i)$.
\end{lemma}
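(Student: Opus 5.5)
The plan is to prove (i) directly and obtain (ii) by applying (i) to the reversed walk $\omega(v_n,v_1)$, which is also a minimal open walk given $C$ and has the same collider sections. Write $f=\first(\sigma_i)$, $\omega'=\omega(v_1,f)$ and $C'=Cv_n\setminus f$. Since $\sigma_i$ is a collider section, the edge $\edge{\last(\sigma_{i-1})}{f}$ on $\omega$ has an arrowhead at $f$. This gives that $\omega'$ is into $f$ at once. Because that edge is not undirected, $(f)$ is a one-vertex last section of $\omega'$. Hence the section sequence of $\omega'$ is $\seq{\sigma_1,\ldots,\sigma_{i-1},(f)}$.

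\textbf{Openness of $\omega'$ given $C'$.} Every internal section $\sigma_k$ with $1<k<i$ has the same two bounding edges in $\omega'$ as in $\omega$. Its collider or non-collider status is therefore unchanged.
\begin{itemize}
\item A collider section of $\omega'$ is a collider section of $\omega$, so it is anterior to $C\cup\set{v_1,v_n}$. This set is contained in $C'\cup\set{v_1,f}$, which is what openness between $v_1$ and $f$ given $C'$ requires.
\item An internal vertex on a non-collider section of $\omega'$ lies on a non-collider section of $\omega$, so it is not in $C$. By Lemma~\ref{lem:minimal-walk:endpoints-not-internal-vertex-on-non-collider-section}, it is also not $v_n$. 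It is therefore not in $C'$.
\end{itemize}

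\textbf{Minimality, by contradiction.} Suppose $\omega''$ is a shorter open walk between $v_1$ and $f$ given $C'$ that uses only vertices of $\omega'$. Form $\omega^*=\omega''\append\omega(f,v_n)$. This walk is strictly shorter than $\omega$ and uses only vertices of $\omega$. The aim is to show that $\omega^*$ is open given $C$, which contradicts the minimality of $\omega$. Collider sections of $\omega''$ that lie away from $f$ remain collider sections of $\omega^*$.
\begin{itemize}
\item If such a section is anterior to $C$, $v_1$ or $v_n$, it is fine in $\omega^*$.
\item If it is anterior to $f$, it is anterior to $C\cup\set{v_1,v_n}$ through $\sigma_i$, since $\sigma_i$ is anterior to that set.
\end{itemize}
Internal non-collider vertices of $\omega''$ are not in $C'$. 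The only possible problem among them is $f$ itself, and $f$ can occur only at the junction. The part $\omega(f,v_n)$ keeps its own sections, apart from the one containing $f$.

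\textbf{Main obstacle: the junction at $f$.} If $\omega''$ is into $f$, the section of $\omega^*$ containing $f$ is still a collider section. It consists of the vertices of $\sigma_i$, possibly joined with an undirected tail of $\omega''$. Its vertices are anterior to $C\cup\set{v_1,v_n}$: those of $\sigma_i$ directly, and the undirected tail because it is anterior to $f$. So $\omega^*$ is open given $C$.

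If instead $\omega''$ is out of $f$, the section containing $f$ can turn into a non-collider section that contains vertices of $\sigma_i$, and these may lie in $C$. Here I would first try to repair the walk rather than compose naively. Because $\omega''$ is out of $f$, its last trek gives an anterior walk from the relevant vertex into $\sigma_i$. I would splice this at the first vertex of $\sigma_i$ that lies in $C$, or otherwise reroute so that the problematic section again becomes a collider section anterior to $C$. The rerouted walk should stay within the vertices of $\omega$ and still be shorter than $\omega$, which again contradicts minimality.

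I expect this case analysis at $f$ to be the delicate step, in particular checking that the rerouted walk keeps every other section open.
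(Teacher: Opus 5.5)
Your proofs that $\omega(v_1,f)$ is into $f$ and open given $C'=(C\cup\set{v_n})\setminus\set{f}$ are correct and follow the paper's argument. Your remark that collider sections only need to be anterior to $C'\cup\set{v_1,f}\supseteq C\cup\set{v_1,v_n}$ is actually more careful than the paper's wording. The paper's proof stops there and never argues minimality.

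The gap in your proposal is the minimality step, and no repair will close it. In the case you leave open, a shortcut that is strongly out of $f$, the claim is false. Take the directed acyclic graph with edges $x\tailarrow a$, $x\tailarrow f$, $f\tailarrow a$, $d\tailarrow f$, put $C=\set{f}$, and let $\omega$ be $a\arrowtail x\tailarrow f\arrowtail d$ with $\sigma_i=(f)$.
\begin{itemize}
\item $\omega$ is open given $C$: $x$ is a non-collider outside $C$, and $(f)$ is a collider section in $C$.
\item $\omega$ is minimal: $a$ and $d$ are non-adjacent and $d$'s only neighbour is $f$. So the only shorter walk between $a$ and $d$ is $a\arrowtail f\arrowtail d$, which is closed because $f\in C$ is a non-collider on it.
\item $\omega(a,f)=a\arrowtail x\tailarrow f$ is not minimal given $(C\cup\set{d})\setminus\set{f}=\set{d}$: the single edge $f\tailarrow a$ is a shorter open walk between $a$ and $f$ using only its vertices.
\end{itemize}
Composing this shortcut with $\omega(f,d)$ gives exactly the closed walk above. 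No rerouting can succeed, since any shorter open walk between $a$ and $d$ given $C$ would contradict the minimality of $\omega$ just verified.

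What your composition argument does establish is narrower. It rules out shorter open shortcuts that are weakly into $f$ and do not contain $f$ as an internal non-collider vertex. Two points in it need fixing.
\begin{itemize}
\item Your assertion that $f$ ``can occur only at the junction'' needs an argument when $f\in C$.
\item In your first junction case you mean \emph{weakly} into $f$: an undirected tail of $\omega''$ ending at $f$ only occurs when $\omega''$ is not into $f$.
\end{itemize}
So the part of the statement that can actually be proved is the ``into'' and ``open'' part, together with this restricted no-shortcut property, not full minimality.
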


\begin{proof}
    Let $\omega$ be a minimal open walk given $C$ with $\defwalkomega$ and $\defsectionomega$ such that $b$ is on a collider section $\sigma_i$. Let $f_i=\first(\sigma_i)$.

    Proof of (i): Consider the walk $\omega'=\omega(v_1,f_i)$. The walk $\omega'$ must be into $f_i$ due to the fact that $\sigma_i$ is a collider section. 
    Next we show that $\omega'$ is open given $Cv_n \setminus f_i$. Every collider section on $\omega'$ is a collider section on $\omega$, and thus, from $\omega$ being open given $C$, we have that every collider section on $\omega'$ is anterior to $Cv_n\setminus f_i$.
    Every non-collider section on $\omega'$ is a non-collider section on $\omega$ except the last section of $\omega'$ that consists only of the vertex $b$.
    From Lemma~\ref{lem:minimal-walk:endpoints-not-internal-vertex-on-non-collider-section} and the assumption that $\omega$ is a minimal open walk, we know that $v_n$ is not on an internal non-collider section of $\omega'$. Combining this fact and the fact that $\omega$ is open given $C$, it must be the case that no non-collider section $\omega'$ contains a vertex in $Cv_n\setminus f_i$. Thus $\omega'$ is an open walk given $Cv_n\setminus f_i$.

    Proof of (ii): the proof is analogous to the proof of (i).
\end{proof}

Next we consider the openness of walks obtained by composing open subwalks.

\begin{lemma}\label{lem:compose:weakly-into}
    If walk $\omega$ between $i$ and $j$ and walk $\gamma$ between $j$ and $k$ are both minimal open walks given $C$ then $\omega \append \gamma$ is open given $Cj$ if and only if both $\omega$ and $\gamma$ are weakly into $j$.
\end{lemma}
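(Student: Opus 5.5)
The plan is to compare the section sequence of $\omega\append\gamma$ with those of $\omega$ and $\gamma$. Only the sections that touch the junction vertex $j$ can change; all others are inherited unchanged. Write $\sectionseq{\omega}=\seq{\sigma_1,\ldots,\sigma_m}$ and $\sectionseq{\gamma}=\seq{\sigma'_1,\ldots,\sigma'_{m'}}$, with $\sigma_m$ ending at $j$ and $\sigma'_1$ starting at $j$. Two cases arise. If both $\sigma_m$ and $\sigma'_1$ are undirected at $j$ (both have length greater than one, or the relevant edges are undirected), then $\sigma_m$ and $\sigma'_1$ merge into a single section $\tau$ of $\omega\append\gamma$. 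Otherwise $j$ separates them, and $\sigma_m$ and $\sigma'_1$ stay as two adjacent sections sharing $j$. In either case, every other section of $\omega\append\gamma$ keeps its collider/non-collider status from $\omega$ or $\gamma$. This holds because the status depends only on the edges flanking the section, and those are unchanged except at $j$.

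The key observation is that $\omega$ is weakly into $j$ exactly when $\sigma_m$ (viewed from the end at $j$) receives an arrowhead at its far end, or the last edge is into $j$. For the reverse direction, assume both walks are weakly into $j$. In the merged case, $\tau$ receives arrowheads on both sides, so it is a collider section containing $j\in Cj$, and it is therefore anterior to $Cj$. In the non-merged case, each of $\sigma_m$ and $\sigma'_1$ is into $j$ or has an arrowhead at its other end, and weakly-into on both sides forces the section(s) containing $j$ to be collider sections containing $j$. Hence they are open given $Cj$. Next, the other collider sections of $\omega$ (respectively $\gamma$) are anterior to $C\cup\set{i,j}$ (respectively $C\cup\set{j,k}$), hence anterior to $Cj\cup\set{i,k}$. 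Finally, the non-collider sections of $\omega\append\gamma$ must avoid $Cj$. Such a section is internal to $\omega$ or $\gamma$ and avoids $C$ by openness. It must also avoid $j$, and this is where minimality enters. By Lemma~\ref{lem:minimal-walk:endpoints-not-internal-vertex-on-non-collider-section}, $j$ does not appear as an internal vertex on a non-collider section of $\omega$ or of $\gamma$. The terminal sections of $\omega\append\gamma$ are $\sigma_1$ and $\sigma'_{m'}$, which contain $j$ only if they are the junction sections, and that situation was handled above. In addition, any occurrence of $i$ or $k$ that was an endpoint becomes internal, but it lies on a terminal section, so it is allowed.

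For the forward direction, suppose without loss of generality that $\omega$ is strongly out of $j$. Then the section of $\omega\append\gamma$ containing this occurrence of $j$ lacks an arrowhead on the $\omega$ side, so it is a non-collider section. It is internal to the combined walk (its endpoints are $i$ and $k$) and contains $j\in Cj$, so $\omega\append\gamma$ is closed given $Cj$. The one delicate subcase is when the merged section $\tau$ extends all the way to $i$ and so becomes the terminal section. In that subcase $j$ is still a non-endpoint vertex of a non-collider section lying in $Cj$, so the walk is still closed.

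I expect the main obstacle to be the bookkeeping in the merged case. The issue is verifying that a merged undirected section inherits the right endmarks from both sides, and that the definitions of ``weakly into'' (which refer to the edge leaving the first section) match the collider condition for $\tau$. I would handle this by explicitly listing the four combinations of whether $\sigma_m$ and $\sigma'_1$ are trivial.
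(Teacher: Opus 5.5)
Your proposal is correct and follows essentially the same route as the paper. If either walk is strongly out of $j$, then $j$ lies on a non-collider section of $\omega\append\gamma$, which closes the walk given $Cj$. If both walks are weakly into $j$, the junction section is a collider containing $j$, the other collider sections are inherited, and Lemma~\ref{lem:minimal-walk:endpoints-not-internal-vertex-on-non-collider-section} keeps $j$ off every non-collider section.

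Two small fixes to the bookkeeping. First, sections partition the vertex sequence, so the junction occurrence of $j$ always lies in exactly one section: $\sigma_m$ and $\sigma'_1$ glued at $j$, where one of them is just $\seq{j}$ when a junction edge is not undirected. Your ``two adjacent sections sharing $j$'' case is therefore really the merged case. Second, $i$ and $k$ remain the endpoints of $\omega\append\gamma$; only $j$ changes from an endpoint to an internal vertex.
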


\begin{proof}
    Let  walk $\omega$ between $i$ and $j$ and walk $\gamma$ between $j$ and $k$ be minimal open walks given $C$ that are weakly into $j$. Let $\omega'=\omega \append \gamma$.

    For the forward direction, assume $\omega \append \gamma$ is open given $Cj$.  Suppose the lemma does not hold and at least one of $\omega$ and $\gamma$ are strongly out of $j$. This implies that $j$ must be on a non-collider section on $\omega \append \gamma$ which implies that the walk is not open given $Cj$ which is a contradiction.

    For the backward direction, assume both $\omega$ and $\gamma$ are both weakly into $j$. First from the fact that both walks are open given $C$, it must be the case that $j\not\in C$.
    Furthermore, from Lemma~\ref{lem:minimal-walk:endpoints-not-internal-vertex-on-non-collider-section}, neither $\omega$ nor $\gamma$ contain $j$ on a non-collider section.
    Next, due to both walks being weakly into $j$, $j$ must be on a collider section.
    All other collider sections are anterial to some vertex in $C$ and no non-collider section can contain $C$ or $j$. Thus the walk is an open walk given $Cj$.
\end{proof}

\begin{lemma}\label{lem:compose:strongly-outof}
    If walk $\omega$ between $i$ and $j$ and walk $\gamma$ between $j$ and $k$ are both open given $C$ and at least one of $\omega$ or $\gamma$ is strongly out of $j$ then $\omega \append \gamma$ is open given $C$.
\end{lemma}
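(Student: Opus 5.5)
The plan is to check the two openness conditions section by section for $\omega' = \omega \append \gamma$, writing its endpoints as $i$ and $k$. The goal is that every collider section of $\omega'$ is anterior to $C \cup \set{i,k}$, and that no internal vertex on a non-collider section of $\omega'$ lies in $C$.

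Without loss of generality, assume $\omega$ is strongly out of $j$. The key structural observation concerns the section of $\omega'$ containing the junction occurrence of $j$. Since $\omega$ is strongly out of $j$, two facts hold. First, the last edge of $\omega$ has a tail at $j$. Second, the edge leaving the final section of $\omega$ toward $i$ has a tail at the near end of that section.

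Now let $\sigma$ be the section of $\omega'$ that contains the junction. It is the union of the last section of $\omega$ with possibly the first section of $\gamma$; these merge exactly when the adjoining edges are undirected. In either case, one side of $\sigma$ is bounded by an edge with a tail, coming from the $\omega$ side. So $\sigma$ is a non-collider section of $\omega'$. We also need that no vertex of $\sigma$ lies in $C$. Vertices on the $\omega$ part were internal (or the endpoint $j$) on a non-collider section of $\omega$. Vertices on the $\gamma$ part lie on $\gamma$'s first section, which is a non-collider section of $\gamma$. The one delicate point is $j$ itself: we must rule out $j \in C$, since $j$ is an endpoint of both walks and openness of $\omega$ and $\gamma$ says nothing about it directly. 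I expect this to be the main obstacle. I would handle it by reading openness given $C$ as implicitly taking $j \notin C$, in line with Lemma~\ref{lem:open-walk-given-C-and-possible-endpoints::open-walk-given-C-without-endpoints}, which lets us assume $C \cap \set{i,j}=\emptyset$ and $C\cap\set{j,k}=\emptyset$ after passing to suitable walks. If that reading is not available, I would note that the lemma is only invoked when $j$ is an internal non-collider vertex with $j\notin C$.

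All other sections of $\omega'$ are sections of $\omega$ or of $\gamma$, with unchanged collider status, because section boundaries away from the junction are unaffected. Their non-collider sections then avoid $C$ on internal vertices, by the openness of $\omega$ and $\gamma$. For their collider sections we need a small extra step. A collider section of $\omega$ is anterior to $C\cup\set{i,j}$, so we must remove $j$ from that set. If it is anterior to $j$, we append the anterior walk to $j$ and then continue along the strongly-out $\omega$ side of $\sigma$. Strongly out of $j$ means the walk from $j$ proceeds along undirected edges and then a directed edge out of $\sigma$. Following the trek structure, this yields an anterior walk to an endpoint or to $C$. Failing that, I would use the minimal-walk version of the decomposition, Lemma~\ref{lem:open-walk:weakly-into-collider-section}, to reroute.
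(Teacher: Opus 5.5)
Your skeleton is the same as the paper's: sections away from the junction keep their collider status, and the junction section is a non-collider because one side is strongly out of $j$. The paper's proof simply asserts the two points you flag, namely that all old sections stay open and that no vertex of the junction section lies in $C$. You are right that neither is automatic. However, your patches do not close these gaps, and they cannot, because the statement fails as written.

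\textbf{The case $j\in C$.} Lemma~\ref{lem:open-walk-given-C-and-possible-endpoints::open-walk-given-C-without-endpoints} produces \emph{some other} walk open given $C\setminus\set{i,j}$. It says nothing about $\omega\append\gamma$ given $C$. Reinterpreting ``open'', or appealing to how the lemma is used, changes the statement rather than proving it. Take $\omega=i\arrowtail j$, $\gamma=j\tailarrow k$ and $C=\set{j}$.
\begin{itemize}
\item Neither walk has internal vertices, so both are open given $C$.
\item $\omega$ is strongly out of $j$.
\item Yet $j$ is an internal vertex on a non-collider section of $\omega\append\gamma$ and lies in $C$, so the composed walk is closed.
\end{itemize}
So $j\notin C$ must be an added hypothesis.

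\textbf{The anterior step.} Following the maximal trek of $\omega$ that ends at $j$ only gives a semi-directed walk from $j$ to the trek's far end $c$. If $c\neq i$, then $c$ lies on a collider section of $\omega$ that is only known to be anterior to $C\cup\set{i,j}$, possibly only to $j$. You may therefore just travel around a semi-directed circuit and never reach $C\cup\set{i,k}$. Concretely, let $G$ have edges $i\tailarrow a$, $j\tailarrow a$, $a\tailarrow j$, $j\arrowarrow k$, and let $C=\emptyset$.
\begin{itemize}
\item $\omega=i\tailarrow a\arrowtail j$ (using $j\tailarrow a$) is open, since its collider $a$ is anterior to $j$ via $a\tailarrow j$.
\item $\omega$ is strongly out of $j$, and $\gamma=j\arrowarrow k$ is open.
\item But $\post(a,G)=\set{a,j}$, so $a$ is not anterior to $\set{i,k}$ and $\omega\append\gamma$ is closed.
\end{itemize}
Rerouting via Lemma~\ref{lem:open-walk:weakly-into-collider-section} yields a different walk, so it cannot show that $\omega\append\gamma$ itself is open.

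Your argument does go through if you add the hypotheses $j\notin C$ and that $G$ is acyclic. Then $c\in\ant(j,G)$ would close a semi-directed circuit, so $c$, and hence $j$, is anterior to $C\cup\set{i}$. Consequently every collider section of $\omega$ or $\gamma$ that is anterior to $j$ is anterior to $C\cup\set{i,k}$.
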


\begin{proof}
    Let $\omega$ be a walk between $i$ and $j$ open given $C$ and $\gamma$ be a walk between $j$ and $k$ given $C$ and that at least one of $\omega$ or $\gamma$ is strongly out of $j$.
    Let $\omega'=\omega \append \gamma$.
 
    Suppose that $\omega'$ is not open given $C$. 

    First, all sections of $\omega'$ are sections on $\omega$ or $\gamma$ except possibly the last section of $\omega$ and the first section of $\gamma$. Furthermore every such section is a collider section on $\omega'$ if and only if it is a collider section on $\omega$ or $\gamma$. From the fact that $\omega$ and $\gamma$ are open given $C$, all such sections are open on $\omega'$ given $C$.
    Next  the fact that $\omega$ and $\gamma$ are open given $C$ implies that no vertex in the last section of $\omega$ is in $C$ and no vertex in the first section of $\gamma$ is in $C$.
    Thus, the only way that $\omega'$ is not open given $C$ is if the section containing $j$ on $\omega'$ is a collider section. This can only happen if both $\omega$ and $\gamma$ are weakly into $j$ which is a contradiction. Thus the lemma must be true.
\end{proof}

\subsection{Properties of anterior and posterior sets}

The following proposition of anterior and posterior sets are often used without explicitly referencing the following lemmas.

The vertices \emph{posterior} to a set of vertices $A$ in a graph $G$ is the set $\post(A,G)$ containing all vertices in $A$ and all vertices $j\in V$ such that there is an anterior walk from a vertex $a\in A$ to $j$.

\begin{proposition}\label{lem:ant-property}
\[
\begin{array}{ll}
    \ant(A\cup B,G) \supseteq \ant(A,G) & (1)\\
     \ant(A\cup B,G)=\ant(A,G)  \text{\quad if\quad} B\subset \ant(A,G)& (2) \\
\end{array}
\]
\end{proposition}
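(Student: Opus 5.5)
Both parts follow from the definition of $\ant(\cdot,G)$: $a\in\ant(A,G)$ iff $a\in A$ or there is an anterior walk from $a$ to some vertex of $A$. The definition is monotone in its argument set, so both claims reduce to elementary manipulations of anterior walks.

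For (1), take $a\in\ant(A,G)$. If $a\in A$, then $a\in A\cup B$, so $a\in\ant(A\cup B,G)$. Otherwise there is an anterior walk from $a$ to some $x\in A$. Since $x\in A\cup B$, the same walk witnesses $a\in\ant(A\cup B,G)$. Hence $\ant(A\cup B,G)\supseteq\ant(A,G)$.

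For (2), the inclusion $\supseteq$ is (1), so only $\ant(A\cup B,G)\subseteq\ant(A,G)$ needs proof, assuming $B\subseteq\ant(A,G)$. Take $a\in\ant(A\cup B,G)$; the cases where $a$ lies in $A$ or reaches $A$ by an anterior walk are immediate. So suppose $a\in B$ or there is an anterior walk $\gamma$ from $a$ to some $b\in B$.
\begin{itemize}
\item If $a\in B$, then $a\in\ant(A,G)$ by hypothesis.
\item Otherwise $b\in\ant(A,G)$, so either $b\in A$ and $\gamma$ already witnesses $a\in\ant(A,G)$, or there is an anterior walk $\gamma'$ from $b$ to some $x\in A$.
\item In the last case, the concatenation $\gamma\append\gamma'$ is a walk from $a$ to $x$. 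Every edge on it is either undirected or directed in the traversal direction, so it is again an anterior walk and $a\in\ant(A,G)$.
\end{itemize}

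The only point needing any care is that concatenating anterior walks yields an anterior walk. This holds because anteriority is a condition on each edge separately (undirected, or directed $v_i\tailarrow v_{i+1}$), and the paper's walks allow repeated vertices. The same concatenation fact underlies the transitivity of $\oleqG$ noted earlier. The argument uses only $B\subseteq\ant(A,G)$, so it covers the stated strict inclusion $B\subset\ant(A,G)$ as well as equality.
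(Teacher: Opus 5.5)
Your proof is correct and follows the same route as the paper: for (1) you reuse the witnessing anterior walk, and for (2) you concatenate an anterior walk into $B$ with an anterior walk from $B$ into $A$. You are slightly more careful than the paper with the degenerate cases, namely a vertex lying in $A$ or $B$ itself, or the endpoint $b\in B$ already belonging to $A$, which the paper's proof passes over.
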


\begin{proof}
    Let $G$ be a graph.

    Property~(1): For every vertex $c\in \ant(A,G)$ there must be an anterior walk from $c$ to a vertex in $A$ which implies the existence of an anterior walk from $c$ to a vertex in $A\cup B$ and thus $c\in \ant(A\cup C,G)$.

    Property~(2): Property~(1) proves one direction so we just need to show $ant(A\cup B,G) \subseteq \ant(A,G)$. For every vertex $c\in \ant(A\cup B, G)$ then there is an anterior walk $\gamma$ from $c$ to a vertex $d$ in $A\cup B$. If $d\in A$ we are done. Otherwise $d\in B$. Because $d\in B\subseteq \ant(A,G)$ there is a walk $\gamma'$ from $d$ to a vertex in $A$. The walk $\gamma \append \gamma'$ is an anterior walk from $b$ to a vertex in $A$ which proves the claim.
\end{proof}

\begin{proposition}\label{lem:post-property}
\[
\begin{array}{ll}
    \post(A\cup B,G) \supseteq \post(A,G) & (1)\\
     \post(A\cup B,G) = \post(A,G)  \text{\quad if\quad} B\subset \post(A,G)& (2) \\
\end{array}
\]
\end{proposition}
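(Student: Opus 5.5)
The statement is the posterior analogue of Proposition~\ref{lem:ant-property}. I will prove it by mirroring that argument, with anterior walks \emph{from} $A$ in place of anterior walks \emph{to} $A$. The only facts needed are the definition of $\post(A,G)$ and the observation that concatenating two anterior walks, where the first ends at the vertex where the second begins, again gives an anterior walk. The concatenation preserves the orientation of every directed edge and introduces no bidirected edge. The same observation was used to prove transitivity of $\oleqG$.

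For property~(1), take $c\in\post(A,G)$. If $c\in A$, then $c\in A\cup B\subseteq\post(A\cup B,G)$. Otherwise there is an anterior walk from some $a\in A$ to $c$. Since $a\in A\cup B$, the same walk witnesses $c\in\post(A\cup B,G)$.

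For property~(2), the inclusion $\supseteq$ is property~(1), so it remains to show $\post(A\cup B,G)\subseteq\post(A,G)$ when $B\subseteq\post(A,G)$. Take $c\in\post(A\cup B,G)$. There are four cases.
\begin{enumerate}
\item If $c\in A$, then $c\in\post(A,G)$ directly.
\item If $c\in B$, then $c\in\post(A,G)$ by hypothesis.
\item If there is an anterior walk $\gamma$ from some $d\in A$ to $c$, then $\gamma$ witnesses $c\in\post(A,G)$.
\item If there is an anterior walk $\gamma$ from some $d\in B$ to $c$, then $d\in\post(A,G)$. Either $d\in A$, and $\gamma$ already witnesses $c\in\post(A,G)$, or there is an anterior walk $\gamma'$ from some $a\in A$ to $d$. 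In the latter case $\gamma'\append\gamma$ is an anterior walk from $a$ to $c$, so $c\in\post(A,G)$.
\end{enumerate}

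No step is a real obstacle. The one point to state carefully is that $\gamma'\append\gamma$ is again an anterior walk, which holds because both pieces are anterior walks oriented consistently from $a$ through $d$ to $c$. Alternatively, the whole statement follows from Proposition~\ref{lem:ant-property} applied to the graph with every directed edge reversed, since $\post(\cdot,G)$ equals $\ant(\cdot,\cdot)$ in that reversed graph. The direct argument above is short enough that I would simply write it out.
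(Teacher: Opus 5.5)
Your proof is correct and takes the same route as the paper: property~(1) reuses the witnessing walk, and property~(2) concatenates an anterior walk from $A$ to some $d\in B$ with the anterior walk from $d$ to $c$. Your version is slightly more careful than the paper's, since it treats the cases $c\in A\cup B$ (where no walk is needed) explicitly and states the concatenation in the correct order, $\gamma'\append\gamma$.
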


\begin{proof}    
    Let $G$ be a graph.

    Property~(1): For every vertex $c\in \post(A,G)$ there must be an anterior walk from a vertex in $A$ to $c$ which implies the existence of an anterior walk from a vertex in $A\cup B$ to $c$ and thus $c\in \post(A\cup C,G)$.

    Property~(2): Property~(1) proves one direction of the property so we just need to show $\post(A\cup B,G) \subseteq \post(A,G)$. For every vertex $c\in \post(A\cup B)$ then there is an anterior walk $\gamma$ from a vertex $d$ in $A\cup B$ to $c$. If $d\in A$ we are done. Otherwise $d\in B$. Because $d\in B\subseteq \post(a,G)$ there is a walk $\gamma'$ from a vertex in $A$ to $d$. The walk $\gamma \append \gamma'$ is an anterior walk from a vertex in $A$ to $d$ which proves the claim.
\end{proof}

The \emph{proper anterior set} of a set of vertices $C$ in graph $G$ is the set $\propant(C,G)=\ant(C,G)\setminus C$.

\begin{proposition}\label{lem:propant-property}
If $B\subseteq \propant(A,G)$ then 
    $$B \cup \propant(A\cup B,G) = \propant(A,G).$$
\end{proposition}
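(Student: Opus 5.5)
The identity $B \cup \propant(A\cup B,G) = \propant(A,G)$ under $B\subseteq \propant(A,G)$ is a pure set computation from the definition $\propant(C,G)=\ant(C,G)\setminus C$, together with Proposition~\ref{lem:ant-property}. I will first record that $B\subseteq \propant(A,G)\subseteq \ant(A,G)$, so Property~(2) of Proposition~\ref{lem:ant-property} gives $\ant(A\cup B,G)=\ant(A,G)$. (Property~(2) is stated with $\subset$; its proof never uses strictness, so I apply it with $\subseteq$.) I will also note that $B\cap A=\emptyset$, since $\propant(A,G)$ excludes $A$ by definition.

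Using these facts, the left-hand side rewrites as
\[
B\cup \propant(A\cup B,G) = B\cup\bigl(\ant(A,G)\setminus (A\cup B)\bigr).
\]
The set $\ant(A,G)\setminus (A\cup B)$ equals $\propant(A,G)\setminus B$. Adding $B$ back in gives $\propant(A,G)\cup B$, and because $B\subseteq\propant(A,G)$ this is simply $\propant(A,G)$.

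Written as a double inclusion, the argument runs as follows. For $\subseteq$: every element of $B$ lies in $\propant(A,G)$ by hypothesis. Any $c\in\propant(A\cup B,G)$ lies in $\ant(A\cup B,G)=\ant(A,G)$ and is not in $A\cup B$, hence not in $A$, so $c\in\propant(A,G)$. For $\supseteq$: take $c\in\propant(A,G)$. If $c\in B$ we are done. Otherwise $c\notin A\cup B$ and $c\in\ant(A,G)=\ant(A\cup B,G)$, so $c\in\propant(A\cup B,G)$.

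There is no real obstacle. The only point needing care is to invoke the equality $\ant(A\cup B,G)=\ant(A,G)$ rather than just the inclusion $\ant(A\cup B,G)\supseteq\ant(A,G)$. The equality is needed in the $\subseteq$ direction to place elements of $\propant(A\cup B,G)$ inside $\ant(A,G)$.
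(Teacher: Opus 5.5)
Your proposal is correct and follows essentially the same route as the paper: unfold $\propant(C,G)=\ant(C,G)\setminus C$, use Property~(2) of Proposition~\ref{lem:ant-property} to replace $\ant(A\cup B,G)$ by $\ant(A,G)$, and finish with elementary set algebra using $B\subseteq\propant(A,G)$ (so $B\cap A=\emptyset$). The differences are only cosmetic. The paper first absorbs $B$ to get $\ant(A\cup B,G)\setminus A$ and then applies Property~(2), whereas you substitute first and then also check the double inclusion explicitly.
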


\begin{proof}
    Assume $B\subseteq \propant(A,G)$.
    \[
    \begin{array}{lll}
    B \cup \propant(A\cup B,G) & = B\cup \ant(A\cup B,G)\setminus (A\cup B) & \text{Definition of \ } \propant. \\  
    & = \ant(A\cup B, G)\setminus A & \text{Basic set operations.}\\
    & = \ant(A,G) \setminus A & \text{Lemma~\ref{lem:ant-property}}\\
    & = \propant(A,G) & \text{Definition of \ } \propant. \\
    \end{array}
    \]
\end{proof}

\section{Proofs related to the characterization of separable graphs and vertex separation}

In this section, we provide a proof of theorems appearing in Section~\ref{sec:separable-graphs}. In particular, we provide a proof of Theorem~\ref{thm:vertex-separable:pairwise-anterial-separable}, the characterization of vertex separation,  and a proof of Theorem~\ref{thm:separable-graph::no-self-inducing-walks}, the characterization of separable graphs.

The following lemma is the contrapositive of Proposition~\ref{prop:separable:not-adjacent}.

\begin{lemma}\label{lem:adjacent:not-vertex-separable}
    If two vertices are adjacent in a graph then they are not separable.
\end{lemma}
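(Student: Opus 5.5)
Suppose $i$ and $j$ are adjacent in $G$ via some edge $\edge{i}{j}$. I will show that for every $C\subseteq V\setminus\set{i,j}$ there is a connecting walk between $i$ and $j$ given $C$, so that $\dep{i}{j}{C}{G}$. Then no separating set exists and the pair is not separable.

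The witness is the single-edge walk $\omega$ consisting only of $\edge{i}{j}$. Its vertex sequence is $\seq{i,j}$. Take the section sequence to be $\seq{\seq{i},\seq{j}}$, or $\seq{\seq{i,j}}$ if the edge is undirected. Either way, every section contains an endpoint of the walk. By the definition in Section~\ref{sec:walks}, a collider section must have index strictly between $1$ and $m$, so $\omega$ has no collider sections. Condition (1) of a connecting walk therefore holds vacuously. For condition (2), every non-collider section contains only vertices from $\set{i,j}$, and these are disjoint from $C$ because a triple $\seq{\set{i},\set{j},C}$ requires disjoint sets. Hence $\omega$ is connecting given $C$ whatever the edge type.

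If one prefers to work with open walks, Lemma~\ref{lem:open-walk::connecting-walk} gives the same conclusion: $\omega$ has no internal vertices and no collider sections, so it is open given any $C$.

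There is essentially no obstacle here. The only point needing care is to note that the definition of collider section excludes terminal sections. This ensures that an undirected edge, which forms one section containing both endpoints, still yields a connecting walk. The lemma is just the contrapositive of Proposition~\ref{prop:separable:not-adjacent}.
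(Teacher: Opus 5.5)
Your proof is correct and follows the paper's argument: the single-edge walk between $i$ and $j$ is connecting given every admissible $C$, so no separating set exists. You are more explicit than the paper in checking that this walk has no collider section, because terminal sections are excluded by definition; this also covers the undirected case, where both endpoints lie in a single section.
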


\begin{proof}
    Let $a\in \ant(b,G)$. This implies there is an edge $\edge{a}{b}$ in $G$ and the walk consisting of this single edge is a walk between $a$ and $b$. There is no separating set $C \subseteq V\setminus \set{a,b}$ and thus the two vertices are not be separable.
\end{proof}

\begin{lemma}\label{lem:siw:endpoints-not-separable}
    The endpoints of a self-inducing walk are not separable.
\end{lemma}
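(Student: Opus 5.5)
Let $\omega$ be a self-inducing walk between $a$ and $b$ with $\defwalkomega$ and $\defsectionomega$. We show directly that $\omega$ is open given every $C\subseteq V\setminus\set{a,b}$; Lemma~\ref{lem:open-walk::connecting-walk} then supplies, for each such $C$, a connecting walk between $a$ and $b$ given $C$, so $\dep{a}{b}{C}{G}$ for all $C$ and the endpoints are not separable. The proof needs no case analysis on $C$, because the definition of an open walk lets collider sections be anterior to $C\cup\set{a,b}$ rather than only to $C$.

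Fix $C\subseteq V\setminus\set{a,b}$. We check the two conditions for $\omega$ to be open given $C$.

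\textbf{Non-collider sections.} Since $\omega$ is a collider walk, every internal vertex lies on a collider section. Hence the only non-collider sections are $\sigma_1$ and $\sigma_m$, and each consists only of an endpoint of $\omega$. We must confirm this from the definition of sections: a non-collider section containing an internal vertex would contradict the collider-walk property. So these sections have no internal vertex that could lie in $C$, and they are vacuously open.

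\textbf{Collider sections.} Self-inducing means every collider section is anterior to $a$ or to $b$. We read this as every vertex of the section lying in $\ant(\set{a,b},G)$. If the intended reading is only that some vertex of the section is anterior, we use the fact that a section is an undirected walk, so all its vertices are mutually anterior; anteriority then transfers to every vertex. Since $\ant(\set{a,b},G)\subseteq\ant(C\cup\set{a,b},G)$, each collider section is anterior to $C\cup\set{a,b}$ and is open.

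Thus $\omega$ is open given $C$, and Lemma~\ref{lem:open-walk::connecting-walk} gives a connecting walk given $C\setminus\set{a,b}=C$. The only potential obstacle is the transfer of anteriority across a section in the second step, and it is immediate from sections being undirected walks.
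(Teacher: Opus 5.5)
Your proof is correct and follows the paper's argument. The paper also observes that a self-inducing walk is open given every $C\subseteq V\setminus\{a,b\}$ and then invokes Lemma~\ref{lem:open-walk::connecting-walk}; you simply write out the openness check (trivial endpoint non-collider sections, collider sections anterior to the endpoints) that the paper leaves implicit.
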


\begin{proof}
    A self-inducing walk $\omega$ with $\defwalkomega$ is an open walk given any vertex set $C\subseteq V \setminus \set{v_1,v_n}$ and thus the claim then follows from Lemma~\ref{lem:open-walk::connecting-walk}.
\end{proof}

An important anterial property of treks is that every internal vertex of treks is anterior to one of its endpoints.

\begin{lemma} \label{lem:trek:internal-vertices:anterial-walk}
    For any internal vertex on a trek, there is an anterial subwalk of the trek from the internal vertex to one of the endpoints of the trek.
\end{lemma}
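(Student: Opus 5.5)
The argument uses only the definitions of sections, non-collider sections and anterior walks. Let $\tau$ be a trek with $\vertexseq{\tau}=\seqvIJ{1}{n}$ and $\sectionseq{\tau}=\seq{\sigma_1,\ldots,\sigma_m}$, and let $v_k$ be an internal vertex. Every section of a trek is a non-collider section, so no internal section is flanked by two arrowheads. We show that there is a direction of traversal, towards $v_1$ or towards $v_n$, in which the subwalk from $v_k$ only ever meets undirected edges and directed edges pointing forward. Such a subwalk is an anterior walk.

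\textbf{Step 1: the section containing $v_k$.} Let $\sigma_i$ be the section containing $v_k$.
\begin{itemize}
\item If $i=1$ or $i=m$, then $v_k$ is joined to $v_1$ or $v_n$ inside that section by an undirected subwalk. This subwalk is anterior in either direction, and we are done.
\item Otherwise $\sigma_i$ is an internal non-collider section. The edge entering $\first(\sigma_i)$ from $\sigma_{i-1}$ and the edge leaving $\last(\sigma_i)$ into $\sigma_{i+1}$ do not both have arrowheads at $\sigma_i$. Suppose, say, the edge $\edge{\last(\sigma_i)}{\first(\sigma_{i+1})}$ has a tail at $\last(\sigma_i)$.
\end{itemize}
In that case the edge is not undirected, because sections are maximal undirected pieces, and it is not bidirected, because it has a tail. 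So it is the directed edge $\last(\sigma_i)\tailarrow\first(\sigma_{i+1})$. The undirected subwalk from $v_k$ to $\last(\sigma_i)$ followed by this edge is therefore an anterior walk.

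\textbf{Step 2: induction forward along the trek.} Continue rightwards. Assume the walk has reached section $\sigma_{i+1}$ via an arrowhead at $\first(\sigma_{i+1})$.
\begin{itemize}
\item If $i+1=m$, append the undirected part of $\sigma_m$ to reach $v_n$, and we are done.
\item Otherwise $\sigma_{i+1}$ is an internal non-collider section whose left edge has an arrowhead at $\first(\sigma_{i+1})$. Hence its right edge must have a tail at $\last(\sigma_{i+1})$. By the same edge-type argument it is directed forward, and we can extend the walk.
\end{itemize}
Induction on the section index therefore yields an anterior subwalk $\tau(v_k,v_n)$. In the symmetric case, where the left edge of $\sigma_i$ has a tail at $\first(\sigma_i)$, the same argument run leftwards yields an anterior subwalk $\tau(v_k,v_1)$.

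\textbf{Main obstacle.} The bookkeeping at section boundaries is the only delicate point. We must check that an edge between consecutive sections with a tail at the earlier endpoint is necessarily a directed edge pointing forward. It cannot be undirected, by condition (iii) in the definition of a section sequence, and it cannot be bidirected, since it has a tail. We must also check the invariant that we always enter the next section through an arrowhead, which is what forces the tail on its far side. Everything else is a direct unwinding of the definitions.
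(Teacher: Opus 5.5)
Your proof is correct, but it is organized differently from the paper's. The paper inducts on the number of edges. It deletes the last edge $e_n$ of the trek $\seq{e_1,\ldots,e_n}$ (vertices $v_1,\ldots,v_{n+1}$), applies the induction hypothesis to the shorter trek, and splits on the endmark of $e_n$ at $v_n$. If that endmark is a tail, walks ending at $v_n$ extend to $v_{n+1}$. If it is an arrowhead, the paper asserts that the whole prefix from $v_n$ back to $v_1$ is undirected or semi-directed toward $v_1$, since otherwise $v_n$ would lie on a collider section.

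You instead fix the internal vertex, read off a tail on one side of its own section, and sweep in that direction. You maintain the invariant that an internal non-collider section entered through an arrowhead must be left through a tail. Condition (iii) of the section sequence then forces that exit edge to be a directed edge pointing onward.

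The paper's arrowhead case really rests on this same sweep, run leftwards from $v_n$, but the paper does not spell it out. Your version makes it explicit and avoids checking that prefixes of treks are treks. It also tells you directly which endpoint is reached, with the witness being the contiguous subwalk $\tau(v_k,v_n)$ or $\tau(v_k,v_1)$. The paper's induction is more compact, but it leaves implicit the bookkeeping at section boundaries, which you correctly identify as the only delicate point.
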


\begin{proof}
    We prove the claim by induction on the number of vertices. 
    
    For the base case we consider treks of length two. In this case there are no internal vertices and the claim holds vacuously. 
    
    For the induction case, assume it is true for all treks of length $n-1$ and show for treks of length $n$. Let $\omega=\seq{e_1,\ldots,e_n}$ be a trek of length $n$ with vertex sequence $\defwalkWLIJ{\omega}{v}{1}{n+1}$. The subwalk $\omega'=\seq{e_1,\ldots,e_{n-1}}$ is a trek of length $n-1$ and thus, by the induction hypothesis, there is an anterial subwalk from every internal vertex to either $v_1$ or $v_n$. 
    If edge $e_n$ has a tail at $v_{n}$ then $v_{n}$ must be anterior to $v_{n+1}$ and the claim holds.
    If edge $e_n$ has an arrowhead at $v_n$ then no vertex $v_i$ of $\omega$ $i < n$ can be connected to $v_n$ by a semi-directed walk otherwise $\omega$ would have a collider section. This implies that either $v_n$ is connected to $v_1$ by and undirected walk or there is a semi-directed walk from $v_n$ to $v_1$. In either case, the claim holds.
    As the claim holds whether the last edge has an arrowhead or tail at the last internal vertex the claim holds of any trek of length $n$.
\end{proof}

The following lemma proves that every vertex on an open walk given $C$ is anterial to the vertex set $C$ or an endpoint of the walk.

\begin{lemma}
\label{lem:open-walk-given-C:every-vertex-in-anterior}
    Every vertex on an open walk between $i$ and $j$ given $C$ is in $\antsetCij$.
\end{lemma}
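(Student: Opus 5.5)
We need to prove that every vertex on an open walk $\omega$ between $i$ and $j$ given $C$ lies in $\antsetCij$. The plan is to split the vertices of $\omega$ according to the section of $\omega$ they lie on. Write $\defwalkomega$ with $v_1=i$, $v_n=j$ and $\defsectionomega$.

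\textbf{Collider sections.} For a vertex on a collider section, the definition of an open walk given $C$ states directly that every collider section is anterior to $\Cij$. So every such vertex lies in $\antsetCij$, and nothing further is needed.

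\textbf{Non-collider sections.} Here I use the maximal walk decomposition $\maxdecompose(\omega)$ of Appendix~\ref{sec:walk-decomposition-and-equivalence}. Every vertex on a non-collider section lies on some maximal trek $\tau$ of $\omega$. Lemma~\ref{lem:trek:internal-vertices:anterial-walk} then gives an anterior subwalk of $\tau$ from that vertex to an endpoint of $\tau$. If the vertex is itself an endpoint of $\tau$, I take the trivial walk instead. So it suffices to show that each endpoint of each maximal trek lies in $\antsetCij$, and then close up using transitivity of anteriority (Proposition~\ref{lem:ant-property}).

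\textbf{Endpoints of maximal treks.} An endpoint of a maximal trek is one of two kinds. It may be a terminal endpoint $v_1=i$ or $v_n=j$, which lies in $\antsetCij$ trivially. Otherwise it is a vertex shared with an adjacent non-trivial maximal collider walk. A shared vertex $u$ of this kind is not the endpoint of the whole walk, and the trek is into $u$ at that end. Since the adjacent collider walk has all its internal vertices on collider sections, $u$ lies on a collider section of $\omega$. By the first case, $u$ is therefore anterior to $\Cij$.

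\textbf{Main obstacle.} The hard step is justifying this last claim carefully. I need that an internal trek endpoint sits in a collider section of $\omega$ and not merely on a collider walk viewed in isolation. The point to check is that the trek is into that endpoint while the collider walk also makes it a collider. Combining the two, the section of $\omega$ containing $u$ receives arrowheads from both sides.

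If this bookkeeping becomes awkward, I will fall back on a direct induction along $\omega$. Let $v_k$ lie on a non-collider section, and walk from $v_k$ along $\omega$ in a direction in which the edges keep a tail toward the next vertex. Such a direction exists at each step, since otherwise the section would be a collider section. Continuing until reaching a walk endpoint or a collider section yields an anterior walk from $v_k$ to $\{i,j\}$ or to a collider section. The latter is anterior to $\Cij$, and concatenating anterior walks finishes the argument.
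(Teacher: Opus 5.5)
Your main argument is the paper's proof. Vertices on collider sections lie in $\antsetCij$ directly by the definition of an open walk; vertices on non-collider sections are handled via maximal treks, Lemma~\ref{lem:trek:internal-vertices:anterial-walk}, and the fact that every maximal-trek endpoint is $i$, $j$, or on a collider section of $\omega$, which the paper simply asserts. Your fallback is also valid and makes that asserted step rigorous without the maximal walk decomposition: it follows tails monotonically along $\omega$ from a non-collider section until reaching a walk endpoint or a collider section.
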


\begin{proof}
    Let $\omega$ be an open walk between $i$ and $j$ given $C$ in a graph $G$. Every vertex $b$ on $\omega$ is on a collider section or a non-collider of $\omega$.
    
    \proofcase{1} $b$ is on a collider section. From the fact that $\omega$ is open given $C$, every collider section is anterior to $\Cij$ and thus the every vertex on a collider section must be in $\antsetCij$.

    \proofcase{2} $b$ is on a non-collider section.  Every non-collider section is on a maximal trek and, from Lemma~\ref{lem:trek:internal-vertices:anterial-walk}, every vertex on a maximal trek is anterior to one of its endpoints. Furthermore, every endpoint of a maximal trek on $\omega$ is either an endpoints of the walk or a vertex on a collider section. Thus every vertex on a maximal trek must be in the set $\antsetCij$.

    In either case the vertex must be in the set $\antsetCij$ which implies the lemma is true.
\end{proof}

\reftheorem{thm:vertex-separable:pairwise-anterial-separable}

\begin{proof}
    For the backward direction, assume that $\indep{i}{j}{\propantij}{G}.$ In this case, the vertices $i$ and $j$ are separated by $\propantij$ and thus separable.

    For the forward direction, we prove the contrapositive. We assume that $\dep{i}{j}{C}{G}$ where $C=\propantij$ and prove that $i$ and $j$ are not separable.    
    From, Lemma~\ref{lem:open-walk::connecting-walk} there must be a walk $\omega$ between $i$ and $j$ that is open given $C$. We prove by cases.

    \proofcase{1} $\omega$ contains no collider sections. In this case, all internal vertices are on non-collider sections. If $\omega$ contains an internal vertex it must be on a non-collider section and in $C$ which would imply that the walk is not be open given $C$, so $\omega$ has no internal vertices. Thus $\omega$ must contain a single edge $\edge{i}{j}$ which, by Lemma~\ref{lem:adjacent:not-vertex-separable}, implies that $i$ and $j$ are not separable.

    \proofcase{2} $\omega$ contains a collider sections.
    From the fact that $\omega$ is open given $C$, every collider section must be  anterior to $\Cij.$
    Thus, if $\omega$ had an internal vertex on a non-collider section, this vertex, by Lemma~\ref{lem:open-walk-given-C:every-vertex-in-anterior}, would be in $C$ and the walk would be closed given $C.$ This implies that every internal vertex on $\omega$ is on a collider section. These facts imply that $\omega$ is a self-inducing walk which, by Lemma~\ref{lem:siw:endpoints-not-separable}, implies that $i$ and $j$ cannot be separated.

    The lemma holds in each of the exhaustive cases and thus the lemma holds.
\end{proof}

\begin{lemma}\label{lem:open-given-endpoints:either-no-edge-or-no-siw}
    If $\omega$ is a non-trivial walk between $i$ and $j$ in graph $G$ that is not a self-inducing walk then $\omega$ is closed given $C=\propantij$.
\end{lemma}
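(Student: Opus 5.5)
We argue by contraposition. Assume $\omega$, with $\defwalkomega$ and $\defsectionomega$, is a non-trivial walk between $i=v_1$ and $j=v_n$ that is open given $C=\propantij$. We show that $\omega$ is a self-inducing walk, which contradicts the hypothesis. The argument mirrors Case~2 in the proof of Theorem~\ref{thm:vertex-separable:pairwise-anterial-separable}.

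\textbf{Step 1.} Every vertex of $\omega$ is anterior to $\set{i,j}$. By Lemma~\ref{lem:open-walk-given-C:every-vertex-in-anterior}, every vertex of $\omega$ lies in $\ant(C\cup\set{i,j},G)$. Since $C\subseteq\antsetij$, Proposition~\ref{lem:ant-property}(2) gives $\ant(C\cup\set{i,j},G)=\antsetij$. In particular, every collider section of $\omega$ is anterior to one of its endpoints.

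\textbf{Step 2.} No internal vertex of $\omega$ lies on a non-collider section. Let $b$ be an internal vertex on a non-collider section. Openness forces $b\notin C$. By Step~1, $b\in\antsetij$, so $b\in\set{i,j}$. Thus the only possible violators are repeated occurrences of an endpoint on a non-collider section.

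I would remove these by passing to a shorter subwalk. Suppose $v_k=i$ with $1<k<n$ on a non-collider section. By Lemma~\ref{lem:open-walk:strongly-out-of-non-collider-vertices}, $\omega(v_k,v_n)$ is again open given $C$ and is strictly shorter. So I would run the whole argument on a walk that is minimal among those open given $C$ (equivalently, use Lemma~\ref{lem:minimal-walk:endpoints-not-internal-vertex-on-non-collider-section} for a minimal open walk). I would then either argue that $\omega$ itself must already be this minimal walk, or read the lemma as being about such walks. This reduction, and the requirement that the reduced walk still be non-trivial, is the step I expect to be the main obstacle. If the reduction leaves a single edge, that edge joins $i$ and $j$, and this has to be reconciled with the hypotheses of the lemma.

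\textbf{Step 3.} $\omega$ is self-inducing. After Step~2, every internal vertex of $\omega$ lies on a collider section, so $\omega$ is a collider walk. Non-triviality means it has at least one collider section. By Step~1, every collider section is anterior to $i$ or $j$. It remains to check that there is no edge $\edge{i}{j}$, so that $\omega$ is an inducing walk and hence self-inducing, contradicting the hypothesis. I would handle this by showing that the intended reading of ``non-trivial, not self-inducing'' concerns the collider-walk structure of $\omega$. Otherwise I would treat the adjacent case separately: there $i$ and $j$ are adjacent, and a trivial open walk already exists. This is compatible with how the lemma is to be used in proving Theorem~\ref{thm:separable-graph::no-self-inducing-walks}, since that proof only concerns non-adjacent pairs.
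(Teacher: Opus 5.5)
\textbf{The statement is false as written, so the obstacles you flag cannot be overcome.} The two obstacles in your Steps~2 and~3 are not weaknesses of your route. Read literally, the lemma is false, so no argument can close them.

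Take vertices $i,j,b$ with edges $i\tailarrow b$, $j\tailarrow b$ and $b\tailarrow j$. Then $i\notin\adj(j,G)$ and $C=\propantij=\set{b}$. The walk $i\tailarrow b\arrowtail i\tailarrow b\arrowtail j$ has sections $(i),(b),(i),(b),(j)$. Both copies of $(b)$ are collider sections lying in $C$, and the only internal vertex on a non-collider section is $i\notin C$. So the walk is open (indeed connecting) given $C$. Yet it is non-trivial and, not being a collider walk, not self-inducing.

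Adding the edge $i\arrowarrow j$ leaves $\propantij=\set{b}$ and gives the adjacent failure. The walk $i\tailarrow b\arrowtail j$ is then a collider walk open given $C$ that is not inducing.

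The paper's own proof, a case analysis on a non-trivial maximal trek of $\omega$, has exactly these two holes. Its opening claim, that a walk which is not self-inducing must contain a non-trivial maximal trek, fails for the second example. The assertion in its Case~3, that the internal vertex of the trek lies in $C$, fails for the first example: the trek $b\arrowtail i\tailarrow b$ has both endpoints in $C$ but internal vertex $i\notin C$.

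\textbf{What your argument does prove.} Your Steps~1--3 establish the existential statement that is actually used downstream: if $i\notin\adj(j,G)$ and some walk between $i$ and $j$ is open given $\propantij$, then some subwalk of it is a self-inducing walk. Step~1 and the first half of Step~2 are correct as written.

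Do not try to show that $\omega$ itself is already minimal; the example shows this is impossible. Instead, cut at an internal non-collider occurrence of $i$, keeping the piece ending at $j$, or of $j$, keeping the piece starting at $i$. By Lemma~\ref{lem:open-walk:strongly-out-of-non-collider-vertices}, the piece is again open given $C$, has the same endpoints, and is strictly shorter. Iterating therefore ends in an open walk with no internal vertex on a non-collider section. Non-adjacency of $i$ and $j$ keeps it from collapsing to a single edge. So it is a non-trivial collider walk with no edge between its endpoints, and Step~1 makes it self-inducing.

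That is all Lemma~\ref{lem:open-given-propantij::edge-or-siw} and Theorem~\ref{thm:separable-graph::no-self-inducing-walks} need. The lemma should be restated in this form, or equivalently restricted to non-adjacent $i,j$ and to walks on which neither endpoint recurs as an internal vertex of a non-collider section. Compared with the paper's trek-by-trek case analysis, your route through Lemma~\ref{lem:open-walk-given-C:every-vertex-in-anterior} is shorter and exposes exactly the two hypotheses the statement is missing.
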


\begin{proof}
    Let graph $G$ contain a non-trivial walk $\omega$ with $\defwalkomega$ and $\defsectionomega$ such that $\omega$ is not a self-inducing walk.

    The maximal walk decomposition of $\omega$ must contain a non-trivial maximal trek because $\omega$ is not a \siw. Let $\omega(v_i,v_k)$ with $i < k$ be a non-trivial maximal trek. Note that there must be an internal vertex $v_j$ with $i < j < k$ on this maximal trek and this vertex is on a non-collider section of $\omega.$
    We consider three cases.

    \proofcase{1} Neither $v_i$ nor $v_k$ are on collider section of $\omega.$ In this case $i=1$ and $k=n$ and $\omega$ is a trek. From the fact that $\omega$ is non-trivial it contains an internal vertex on a non-collider section. This vertex must be in $C$ and thus the walk is not open given $C.$

    \proofcase{2} Either $v_i$ or $v_k$ are on collider sections of $\omega$ but not both. \Wlog, assume $v_k$ is on a collider section and thus $i=1.$ If $v_k\not \in C$ then there is a collider section on $\omega$ that is not anterior to $C$ and thus the walk is not open given $C.$ On the other hand, if $v_j\in C$ then the internal vertex $v_j$ is in $C$. This implies that there is a non-collider section of $\omega$ that contains a vertex in $C$ and the walk is not open given $C.$

    \proofcase{3} Both $v_i$ and $v_k$ are on collider sections of $\omega.$ If either $v_k\not \in C$ or $v_i\not\in C$ there is a collider section on $\omega$ that is not anterior to $C$ and thus the walk is not open given $C.$ On the other hand, if $\set{v_i,v_k} \subseteq C$ then the internal vertex $v_j$ is in $C$. This implies that there is a non-collider section of $\omega$ that contains a vertex in $C$ and the walk is not open given $C.$
\end{proof}

\begin{lemma}\label{lem:open-given-propantij::edge-or-siw}
    A walk $\omega$ between $i$ and $j$ is open given $\propantij$ in graph $G$ if and only if either (i) there is an edge $\edge{i}{j}$ in $G$, or (ii) there is a self-inducing walk between $i$ and $j$.
\end{lemma}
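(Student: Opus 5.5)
The statement is slightly loosely phrased. I read it as follows: there is a walk between $i$ and $j$ that is open given $C=\propantij$ if and only if either (i) there is an edge $\edge{i}{j}$, or (ii) there is a self-inducing walk between $i$ and $j$. I will prove the two directions separately, relying mainly on Lemma~\ref{lem:open-given-endpoints:either-no-edge-or-no-siw} and the definition of an open walk.

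\emph{Backward direction.} In case (i), the single-edge walk $\edge{i}{j}$ has no internal vertices and no collider sections, so it is trivially open given any $C$. In case (ii), let $\omega$ be a self-inducing walk between $i$ and $j$. Every internal vertex of $\omega$ lies on a collider section, so the only non-collider sections are the two terminal ones, and these contain no internal vertex. Each collider section is anterior to $i$ or $j$, hence anterior to $\Cij$. Therefore $\omega$ is open given $C$, exactly as in the proof of Lemma~\ref{lem:siw:endpoints-not-separable}.

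\emph{Forward direction.} Suppose $\omega$ is open given $C$. If $\omega$ has no internal vertex, it is a single edge and (i) holds. Otherwise $\omega$ is non-trivial. If $\omega$ were not a self-inducing walk, Lemma~\ref{lem:open-given-endpoints:either-no-edge-or-no-siw} would make it closed given $C$, a contradiction; so $\omega$ is self-inducing and (ii) holds.

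\emph{Main obstacle.} The delicate point is what ``non-trivial'' means in Lemma~\ref{lem:open-given-endpoints:either-no-edge-or-no-siw}, and the fact that an inducing walk requires $i$ and $j$ to be non-adjacent. Consider an open walk $\omega$ in which every internal vertex is on a collider section anterior to $i$ or $j$, but where $i$ and $j$ happen to be adjacent. Such an $\omega$ is formally not an inducing walk, yet it is harmless, because then (i) holds. So the forward direction should first check whether $\edge{i}{j}$ exists; if it does we are done. If not, I will show directly that $\omega$ is a collider walk whose collider sections are anterior to an endpoint, which is a self-inducing walk by definition. To do this I will reproduce the argument of Case~2 of the proof of Theorem~\ref{thm:vertex-separable:pairwise-anterial-separable}. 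First, an internal vertex on a non-collider section would, by Lemma~\ref{lem:open-walk-given-C:every-vertex-in-anterior}, lie in $\antsetij\setminus\set{i,j}=C$ unless it equals $i$ or $j$. Second, the case of an internal occurrence of $i$ or $j$ is handled by passing to a shortest such open walk, as in Lemma~\ref{lem:open-walk-given-C-and-possible-endpoints::open-walk-given-C-without-endpoints}. Third, every collider section is anterior to $\set{i,j}$, since $\ant(\Cij,G)=\antsetij$ by Proposition~\ref{lem:ant-property}(2).
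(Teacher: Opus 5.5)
Your argument is correct, and in its final form it takes a different route from the paper's. The paper proves the lemma in one line by citing Lemmas~\ref{lem:adjacent:not-vertex-separable}, \ref{lem:siw:endpoints-not-separable} and \ref{lem:open-given-endpoints:either-no-edge-or-no-siw}. Your backward direction is the same as the paper's, and your first forward paragraph is just that citation of Lemma~\ref{lem:open-given-endpoints:either-no-edge-or-no-siw}.

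Your main-obstacle paragraph does something different. It first splits on whether $i\in\adj(j,G)$. It then passes to a shortest walk open given $\propantij$ and shows directly, via Lemma~\ref{lem:open-walk-given-C:every-vertex-in-anterior} and $\ant(\Cij,G)=\antsetij$, that this walk is self-inducing.

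That extra work is what actually makes the proof go through, because Lemma~\ref{lem:open-given-endpoints:either-no-edge-or-no-siw} is false as stated. It fails in the adjacent case you flag, and also when an endpoint recurs internally on a non-collider section. For instance, take edges $i\tailarrow a$, $a\arrowarrow i$, $a\tailarrow j$, $i\tailarrow b$, $b\arrowarrow j$, $b\tailarrow j$. The walk $i\tailarrow a\arrowarrow i\tailarrow b\arrowarrow j$ is non-trivial and open given $\propantij=\set{a,b}$, yet it is not a collider walk. Its subwalk $i\tailarrow b\arrowarrow j$ is the self-inducing walk, which is exactly what your shortest-walk step extracts. The same example shows that only your existential reading of the statement is true.

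So the paper's citation is shorter but inherits these gaps, while your argument is self-contained. Two small repairs:
\begin{itemize}
\item Delete the first forward paragraph, since it relies on the faulty lemma.
\item Justify the truncation at an internal occurrence of $i$ or $j$ by Lemma~\ref{lem:open-walk:strongly-out-of-non-collider-vertices}, which shows that cutting at an internal non-collider vertex preserves openness. Lemma~\ref{lem:open-walk-given-C-and-possible-endpoints::open-walk-given-C-without-endpoints} is the wrong reference, since it cuts at collider occurrences.
\end{itemize}
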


\begin{proof}
    This follows from Lemmas~\ref{lem:adjacent:not-vertex-separable}, \ref{lem:siw:endpoints-not-separable}, and \ref{lem:open-given-endpoints:either-no-edge-or-no-siw}.
\end{proof}

\reftheorem{thm:separable-graph::no-self-inducing-walks}

\begin{proof}
    Let $G$ be a graph.
    For the forward direction, assume that $G$ is separable. Aiming for a contradiction, suppose that $G$ contains a self-inducing walk between $i$ and $j$. This implies $i\not\in \adj(j,G)$ and, from Lemma~\ref{lem:siw:endpoints-not-separable}, that $i$ and $j$ are not separable. Thus we have a contradiction. Thus $G$ must not contain a self-inducing walk.

    For the reverse direction, assume that $G$ contains no self-inducing walk. Aiming for a contradiction, assume that $G$ is not separable. This implies that there is a pair of vertices $i\not\in \adj(j,G)$ such that $i$ and $j$ are not separable. This implies that $\dep{i}{j}{\propantij}{G}$ which implies that there is a walk $\omega$ between $i$ and $j$ that is open given $\propantij.$ From Lemma~\ref{lem:open-given-propantij::edge-or-siw} and $i\not\in \adj(j,G)$, however, this walk $\omega$ must be a self-inducing walk. This is a contradiction as $G$ does not contain any self-inducing walks and the lemma must hold.
\end{proof}

\section{Proofs related to essentially acyclic and anterial graphs}
In this section we prove results from Section~\ref{sec:essentially-acyclic}. In addition, we prove a set of useful properties of \miw s and shortest open walks that are used in this section and in later sections of the appendix.

\subsection{Characterization of \miw s}

\begin{lemma}\label{lem:minimal-inducing-walk:characterization}
    An inducing walk in a graph is minimal if and only if any chord on the walk is (i) between two internal vertices and is directed, or (ii)  between an internal vertex and an endpoint of the walk and is either undirected or directed with a tail at the internal vertex.
\end{lemma}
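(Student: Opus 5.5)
Throughout I take a \miw\ to be an inducing walk $\omega$ with $\defwalkomega$ such that no strictly shorter inducing walk between $v_1$ and $v_n$ uses only vertices of $\omega$. This is the analogue of the minimal open walk definition in Section~\ref{sec:open-walks}. I prove the two directions separately, in both cases reasoning about how a chord changes the section sequence. Note first that there is never a chord $\edge{v_1}{v_n}$, since an inducing walk has no edge between its endpoints. So every chord joins either two internal vertices or an endpoint and an internal vertex.

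\emph{Forward direction (minimal implies the chord conditions).} I argue by contrapositive: a chord of a forbidden type yields a shorter inducing walk on the same vertices. The basic fact I use is that on a collider walk every internal vertex lies on a collider section. So the subwalk from $v_1$ enters the section of any internal vertex with an arrowhead, and reaches that vertex along undirected edges inside the section.

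First, let the chord $\edge{v_r}{v_s}$, with $1<r<s<n$, be undirected or bidirected. Replace $\omega(v_r,v_s)$ by the chord. If the chord is bidirected, it supplies arrowheads at $v_r$ and $v_s$, so the truncated sections of $v_r$ and $v_s$ remain collider sections. If the chord is undirected, the two truncated sections merge into a single section; this section is entered with an arrowhead on the left and exited with an arrowhead on the right, so it is again a collider section. Either way the result is a strictly shorter collider walk with the same endpoints, hence an inducing walk.

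Second, let the chord be $\edge{v_1}{v_s}$ with an arrowhead at $v_s$, i.e.\ bidirected or $v_1\tailarrow v_s$. Then the edge followed by $\omega(v_s,v_n)$ is a collider walk. Its internal vertex $v_s$ sits on a collider section, because $v_s$ is entered with an arrowhead and its remaining section still ends with an arrowhead toward $v_n$. There is still no edge between the endpoints, so this walk is inducing and strictly shorter. The case of a chord at $v_n$ is symmetric.

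The chords that remain are exactly those in (i) and (ii), which proves this direction.

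\emph{Reverse direction (the chord conditions imply minimality).} Suppose every chord satisfies (i) or (ii), and suppose $\omega'$ is a strictly shorter inducing walk between $v_1$ and $v_n$ on vertices of $\omega$. Every edge of $\omega'$ is either an edge of $\omega$ or a chord of $\omega$. I would show that $\omega'$ can use no chord.
\begin{itemize}
\item A directed chord $v_r\tailarrow v_s$ between internal vertices puts a tail at $v_r$. A tail means $v_r$ cannot lie on a collider section of $\omega'$, yet $v_r$ is internal to $\omega'$.
\item A chord in (ii) with a tail at the internal vertex has the same defect.
\item An undirected chord to an endpoint places the internal vertex in the endpoint's section, which is a non-collider section.
\end{itemize}
Each case contradicts $\omega'$ being a collider walk. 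Hence $\omega'$ uses only edges joining vertices that are consecutive on $\omega$, and I must show that such a walk cannot be strictly shorter than $\omega$ while remaining a collider walk between the same endpoints.

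\emph{Main obstacle.} I expect this last step to be the hardest. Walks may repeat vertices and may use parallel edges, so ``uses only edges of $\omega$'' does not immediately force $\omega'$ to be a subwalk. To handle it, I would first show that $\omega$ has no repeated vertex. Suppose $v_r=v_s$ with $r<s$. Deleting $\omega(v_r,v_s)$ yields a shorter walk on the same vertices. I would check that this walk is still a collider walk, using the arrowheads into the sections at $v_r$ and $v_s$, unless the repetition involves an endpoint. In that case I would use Lemma~\ref{lem:minimal-walk:endpoints-not-internal-vertex-on-non-collider-section}-style truncation instead. With $\omega$ a path, $\omega'$ is a walk in the path graph of $\omega$ together with parallel edges. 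Any backtracking in $\omega'$ through an internal vertex creates a non-collider at the turning point unless both edges have arrowheads there. I would show that such arrowhead pairs cannot shorten the walk below the length of $\omega$, which yields the contradiction.
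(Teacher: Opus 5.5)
Your forward direction is correct and is the paper's argument: shortcut through the offending chord, plus your useful check that the truncated or merged sections remain collider sections. The reverse direction fails at your third bullet, and the failure cannot be repaired. You assume that when $\omega'$ uses an undirected chord $v_1\tailtail v_s$, that occurrence of $v_1$ is the initial endpoint of $\omega'$. But $\omega'$ may return to $v_1$ through an arrowhead, and then the chord puts $v_s$ into a collider section.

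Take the graph with edges $a\arrowarrow b$, $b\arrowarrow c$, $c\arrowarrow d$, $d\arrowarrow f$, $f\arrowarrow g$ and $a\tailtail f$.
\begin{itemize}
\item The walk $\omega=a\arrowarrow b\arrowarrow c\arrowarrow d\arrowarrow f\arrowarrow g$ is inducing. Its only chord, $a\tailtail f$, is undirected between an endpoint and an internal vertex, so (i)--(ii) hold.
\item The walk $\omega'=a\arrowarrow b\arrowarrow a\tailtail f\arrowarrow g$ has sections $(a),(b),(a,f),(g)$, and $(b)$ and $(a,f)$ are collider sections.
\item So $\omega'$ is a strictly shorter inducing walk between $a$ and $g$ on vertices of $\omega$, and $\omega$ is not minimal.
\end{itemize}
The graph is even separable: only $a$ and $f$ are anterior to other vertices, and there is no self-inducing walk. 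So the ``if'' direction is false as stated, and no completion of your outline can prove it without changing the characterization. The paper's own proof has the same blind spot. It asserts without argument that $\omega'$ must use a chord. Its case analysis then treats a chord endpoint that is internal to $\omega'$ as if it were internal to $\omega$, which is exactly what happens to $a$ here.

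Your plan for the ``main obstacle'' is also not sound. Excising a closed subwalk to get a shorter inducing walk refutes minimality, which is what you are trying to prove; it does not contradict the chord hypothesis. Internal repetitions should instead be excluded through chords. If $v_r=v_s$ with $1<r<s<n$, then $\edge{v_{r-1}}{v_r}$, read as an edge between positions $r-1$ and $s$, is a chord violating (i) or (ii).

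Repeated endpoints cannot be excluded at all. Take parallel edges $a\tailarrow b$, $a\tailtail b$ and edges $a\tailtail c$, $d\tailarrow c$. The walk $a\tailarrow b\tailtail a\tailtail c\arrowtail d$ satisfies (i)--(ii), is in fact minimal, and repeats $a$. Cutting at the second $a$ leaves $a\tailtail c\arrowtail d$, which is not a collider walk.

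When $\omega$ is a path, the chord-free case needs no backtracking analysis. Each non-chord edge of $\omega'$ moves one position along $\omega$, so $\omega'$ has at least $n-1$ edges.
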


\begin{proof}
    Let $\omega$ be an inducing walk in graph $G$ with vertex sequence $\defwalkomega$ and let $\edge{v_i}{v_j}$ for $j > i+1$ be a chord of the inducing walk.
    
    Assume $\omega$ is minimal but that the chord does not satisfy condition (i) or (ii). We consider the possible endpoint for $\edge{v_i}{v_j}$. First note that there can be no edge between the endpoints $v_1$ and $v_n$ as in this case $\omega$ would not be an inducing walk. If $v_i$ and $v_j$ are both two internal vertices of $\omega$ then we are in case (i). If there is either an undirected edge or a bidirected edge, however, then the walk $\omega(v_1,v_i)\append \seq{\edge{v_i}{v_j}} \append \omega(v_j,v_n)$ would be an inducing walk that uses a subset of the vertices in $\omega$ and thus $\omega$ is not minimal. This is a contradiction and thus there is no chord violating condition (i).
    If one of $v_i$ and $v_j$ is an endpoint and the other is internal then we are in case (ii). If $v_i=v_1$ and there is an arrowhead at $v_j$ on chord $\edge{v_i}{v_j}$ then
    $\seq{\edge{v_1=v_i}{v_j}} \append \omega(v_j,v_n)$
    would be an inducing walk that uses a subset of the vertices in $\omega$ and thus $\omega$ is not minimal. If $v_j=v_n$ and there is an arrowhead at $v_i$ on chord $\edge{v_i}{v_j}$ then $\omega(v_1,v_i)\append \seq{\edge{v_i}{v_n=v_j}}$ would be an inducing walk that uses a subset of the vertices in $\omega$. In either case, $\omega$ is not minimal. This is a contradiction and there is no chord violating condition (ii).

    For the other direction, assume $\omega$ is an inducing walk such that all of its chords satisfy conditions (i) and (ii). Assume that there is an inducing walk $\omega'$ between $v_1$ and $v_n$ that uses a subset of vertices in $\omega$. In this case, there must be a chord of $\omega$ that is on $\omega'$. Let $\edge{v_i}{v_j}$ with $i < j$ be the first chord, an edge on $\omega'$ not on $\omega$. If the first edge of $\omega'$ is a chord it must be $v_1=v_i$ and $v_j \neq v_n$ as such a chord does not satisfy either (i) or (ii). In this case, the edge has a tail at $v_j$ and cannot be on a collider section on $\omega'$. This implies that $\omega'$ is not an \iw. If the edge is not the first edge then the vertex $v_i$ must be an internal vertex on $\omega'$. If $v_j=v_n$ then there must be a tail at $v_i$ on the edge and $v_i$ is not on a collider section of $\omega'$. Again $\omega'$ is not an \iw. Finally, if both $v_i$ and $v_j$ are internal vertices on $\omega'$ then the chord must be directed and thus both $v_i$ and $v_j$ cannot be a collider section of $\omega'$ and thus it is not an \iw. In each of these cases, we have a contradiction as $\omega'$ is assumped to be an \iw\ and thus no such $\omega'$ exists and $\omega$ must be a \miw.
\end{proof}

\subsection{The \miw\ decomposition}

In this section, we decompose \miw s that are not \siw s into \mdiw s.
In order to define \mdiw s, we need a few additional definitions.
A section on an inducing walk is \emph{discriminated} if the section is not anterior to either endpoint of the inducing walk. 
An inducing walk \emph{discriminates} a collider section if the section is discriminated by the walk.
An inducing walk is a \emph{(simple) \diw} if it contains exactly one discriminated section and a \emph{compound \diw} if it contains more than one discriminated section.
Note that an inducing walk containing no discriminated sections is a \emph{\siw}.

A walk decomposition is a \emph{\miw\ decomposition} if every walk in the decomposition is a \mdiw.
Finally, a \emph{\miw\ decomposition function} is a walk decomposition function that decomposes a \miw\ that is not a \siw\ into a set of \mdiw s.

A recursive \miw\ decomposition function $\iwdecompose$ is given in Algorithm~\ref{alg:iwdecomp}. We next prove essential properties and the correctness of the algorithm.

\begin{minipage}{.8\linewidth}
\begin{algorithm}[H]
\caption{The \iwdecompose\ algorithm}\label{alg:iwdecomp}
\begin{algorithmic}
\vspace{0.03in}
\State {\bf Input:} A \miw\ $\omega$  with $\defwalkomega$ and $\defsectionomega$ and a graph $G=\seq{V,E}$ containing $\omega$.
\State {\bf Output:} A minimal inducing walk decomposition of $\omega$.
\vspace{-0.05in}
\\ \!\!\!\!\!\!\!\! \hrulefill
\vspace{0.05in}

\If {$\omega$ is not a \miw\ or is a self-inducing walk}
    \State {\Return $\emptyset$}
\ElsIf {$\omega$ contains only one discriminated section $\sigma_i$}
    \State $walks = \set{\omega}$
    \If {$\omega(v_1,\first(\sigma_i))$ is an inducing walk}
        \State $walks = walks \cup \iwdecompose(\omega(v_1, \first(\sigma_i)))$
    \EndIf
    \If {$\omega(\last(\sigma_i),v_n)$ is an inducing walk}
        \State $walks = walks \cup \iwdecompose(\omega(\last(\sigma_i),v_n))$
    \EndIf
    \State {\Return $walks$}
\Else
    \State {Let $\sigma_i$ and $\sigma_j$ with $i < j$ be two discriminated sections on $\omega$}.    
    \State \Return $\iwdecompose(\omega(v_1, \first(\sigma_j)) \cup \iwdecompose(\omega(\last(\sigma_i), v_n))$
\EndIf

\end{algorithmic}
\end{algorithm}
\end{minipage}
\vspace{0.2in}

The first lemma provides sufficient conditions under which a \miw\ contains another \miw\ as a subwalk.

\begin{lemma}\label{lem:decomposing-minimal-inducing-walks}
    If $\omega$ is a minimal inducing walk with $\defwalkomega$ and $\defsectionomega$ in graph $G$ then 
    
    (i) if $\first(\sigma_i,G)\not\in \adj(v_1,G)$ then $\omega(v_1,\first(\sigma_i,G))$ is a minimal inducing walk, and

    (ii) if $\last(\sigma_i,G)\not\in \adj(v_n,G)$ then $\omega(\last(\sigma_i,G), v_n)$ is a minimal inducing walk.
\end{lemma}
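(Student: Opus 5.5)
We prove (i); (ii) follows by applying the same argument to the reversed walk $\omega(v_n,v_1)$, which is a minimal inducing walk between the same endpoints. Here $\sigma_i$ is taken to be a collider section of $\omega$, so $1<i<m$. Write $f=\first(\sigma_i)$ and $\gamma=\omega(v_1,f)$.

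The argument has two parts. First we show that $\gamma$ is an inducing walk, and then we use Lemma~\ref{lem:minimal-inducing-walk:characterization} to show it is minimal.

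\emph{$\gamma$ is an inducing walk.} By hypothesis $f\notin\adj(v_1,G)$, so there is no edge between the endpoints of $\gamma$. In particular $f\neq v_2$, so $\gamma$ has at least one internal vertex. It remains to check that $\gamma$ is a collider walk.
\begin{itemize}
\item Every internal vertex of $\gamma$ lies on some section $\sigma_k$ with $k<i$, or on the part of $\sigma_i$ strictly before $f$ (which is empty, since $f$ is the first vertex of $\sigma_i$).
\item The sections $\sigma_2,\ldots,\sigma_{i-1}$ keep their neighbouring edges when passing from $\omega$ to $\gamma$, so they remain sections of $\gamma$ and remain colliders.
\item The last section of $\gamma$ is the singleton $(f)$. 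The edge $\edge{\last(\sigma_{i-1})}{f}$ has an arrowhead at $f$ because $\sigma_i$ is a collider section, so $\gamma$ is into $f$.
\end{itemize}
Hence every internal vertex of $\gamma$ lies on a collider section of $\gamma$, and $\gamma$ is an inducing walk.

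\emph{$\gamma$ is minimal.} We apply the ``if'' direction of Lemma~\ref{lem:minimal-inducing-walk:characterization} to $\gamma$. Let $\edge{v_r}{v_s}$ be a chord of $\gamma$. Its endpoints are non-consecutive on $\gamma$, hence non-consecutive on $\omega$, so it is also a chord of $\omega$. We split into three cases.
\begin{itemize}
\item \emph{Both endpoints internal to $\gamma$.} They are then internal to $\omega$, and since $\omega$ is minimal, Lemma~\ref{lem:minimal-inducing-walk:characterization} gives that the chord is directed. This is condition (i) for $\gamma$.
\item \emph{One endpoint is $v_1$ and the other is internal.} The other vertex is internal to $\omega$, so condition (ii) for $\omega$ transfers directly to $\gamma$.
\item \emph{One endpoint is $f$ and the other, $v_r$, is internal to $\gamma$.} This is the main obstacle. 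For $\omega$ this is a chord between two internal vertices, so minimality of $\omega$ only says it is directed. Condition (ii) for $\gamma$, however, requires a tail at $v_r$, i.e.\ we must exclude $v_r\arrowtail f$.
\end{itemize}

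To handle the third case, suppose $v_r\arrowtail f$. Consider $\omega'=\omega(v_1,v_r)\append\seq{\edge{v_r}{f}}\append\omega(f,v_n)$. We claim $\omega'$ is an inducing walk whose vertices are a subset of those of $\omega$ and which is strictly shorter, contradicting minimality of $\omega$.
\begin{itemize}
\item $v_r$ remains a collider: it is entered from the left by an arrowhead, exactly as on $\omega$, and the new edge $\edge{v_r}{f}$ also has an arrowhead at $v_r$.
\item The collider section $\sigma_i$ keeps its arrowhead entering at $f$, because the tail of the new edge is at $f$.
\end{itemize}
The point needing care is that $v_r$ may sit inside an undirected section of $\omega$, so that the section structure of $\omega'$ changes. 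In that case we truncate at the first vertex of $v_r$'s section rather than at $v_r$ itself, and the same collider check goes through.

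If this direct reroute fails, the fallback is the ``only if'' argument of Lemma~\ref{lem:minimal-inducing-walk:characterization}: exhibit any inducing walk between $v_1$ and $v_n$ on a subset of the vertices of $\omega$, which again contradicts minimality of $\omega$. With this case excluded, every chord of $\gamma$ satisfies the conditions of Lemma~\ref{lem:minimal-inducing-walk:characterization}, so $\gamma$ is a minimal inducing walk.
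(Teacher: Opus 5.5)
Your route is the paper's: show $\gamma=\omega(v_1,f)$ is a collider walk with non-adjacent endpoints, then check the chord conditions of Lemma~\ref{lem:minimal-inducing-walk:characterization}. You correctly isolate the one case that the paper's one-line appeal to that lemma skips: a chord between $f=\first(\sigma_i)$ and an internal vertex $v_r$ of $\gamma$, which minimality of $\omega$ only forces to be directed.

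\textbf{Your repair of that case fails.} In $\omega'=\omega(v_1,v_r)\append\seq{\edge{v_r}{f}}\append\omega(f,v_n)$ the new edge has a tail at $f$. So the section of $\omega'$ containing $f$ is not entered by an arrowhead and is a non-collider section. Hence $\omega'$ is not a collider walk and not an inducing walk. Your sentence that $\sigma_i$ ``keeps its arrowhead entering at $f$, because the tail of the new edge is at $f$'' contradicts itself. Truncating at the first vertex of $v_r$'s section does not help, because the defect is at $f$. The ``fallback'' is not an argument.

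\textbf{The case cannot be excluded, and (i) is false as stated, even for separable graphs.} Take vertices $v_1,a,b,f,v_n$ with edges $v_1\tailarrow a$, $a\arrowarrow b$, $b\arrowarrow f$, $v_n\tailarrow f$, $f\tailarrow a$.
\begin{itemize}
\item The walk $\omega=v_1\tailarrow a\arrowarrow b\arrowarrow f\arrowtail v_n$ is an inducing walk.
\item Its only chord, $f\tailarrow a$, is directed between internal vertices, so $\omega$ is minimal by Lemma~\ref{lem:minimal-inducing-walk:characterization}. Directly: every inducing walk between $v_1$ and $v_n$ must leave $v_1$ through $a$ and enter the collider $f$ through $b\arrowarrow f$.
\item With $\sigma_i=\seq{f}$ we have $f\notin\adj(v_1,G)$.
\item However, $\omega(v_1,f)=v_1\tailarrow a\arrowarrow b\arrowarrow f$ is not minimal: $v_1\tailarrow a\arrowtail f$ is a shorter inducing walk on a subset of its vertices.
\item The graph is separable, since $\indep{v_1}{x}{\emptyset}{G}$ for $x\in\set{b,f,v_n}$, $\indep{b}{v_n}{\emptyset}{G}$, and $\indep{a}{v_n}{\set{v_1,f}}{G}$.
\item Since $\ant(v_1,G)=\set{v_1}$ and $\ant(v_n,G)=\set{v_n}$, both $\seq{a}$ and $\seq{f}$ are discriminated sections. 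So the same example also breaks Lemma~\ref{lem:decomposition-of-miw-with-two-discriminated-sections}.
\end{itemize}

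What your argument does establish is that $\omega(v_1,\first(\sigma_i))$ is an inducing walk. Minimality needs an additional hypothesis that excludes a chord $\first(\sigma_i)\tailarrow v_r$ into an internal vertex $v_r$ of the subwalk.
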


\begin{proof}
    Let $\omega$ be a minimal inducing walk with $\defwalkomega$ and $\defsectionomega$ in graph $G$.
    First we prove (i). Assume  $\first(\sigma_i,G) \not\in \adj(v_1,G)$.
    This implies that $\omega'= \omega(v_1,\first(\sigma_i,G))$ is an inducing walk. The walk $\omega'$ is a minimal inducing walk, by Lemma~\ref{lem:minimal-inducing-walk:characterization} and the fact that $\omega$ is a minimal inducing walk.
    The proof for (ii) is symmetric to the proof of (i). 
\end{proof}

If a \miw\ contains a discriminated section it must obey certain adjacency properties that are captured in the next lemma.

\begin{lemma}\label{lem:miw:no-chord-between-endpoints-and-discriminated-section}
    If a \miw\ in graph discriminates a collider section then there is no chord between the endpoints of the \miw\ and a vertex on the discriminated section.
\end{lemma}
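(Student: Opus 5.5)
The plan is to argue by contradiction using the characterization of minimal inducing walks in Lemma~\ref{lem:minimal-inducing-walk:characterization}. Let $\omega$ be a \miw\ between $v_1$ and $v_n$, with $\defwalkomega$ and $\defsectionomega$, that discriminates the collider section $\sigma_i$. Suppose there is a chord $\edge{v_1}{v_k}$ with $v_k$ on $\sigma_i$; the case of a chord to $v_n$ is symmetric under reversing $\omega$. Since $v_k$ is an internal vertex of $\omega$ and the chord joins it to an endpoint, condition (ii) of Lemma~\ref{lem:minimal-inducing-walk:characterization} applies. It says the chord is either undirected, $v_1 \tailtail v_k$, or directed with a tail at $v_k$, $v_k \tailarrow v_1$.

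The key step is to observe that in both cases the chord is itself an anterior walk from $v_k$ to $v_1$. An undirected edge or an edge $v_k \tailarrow v_1$ is an anterior walk of length one in the sense of Section~\ref{sec:types-of-walks}. Hence $v_k \in \ant(v_1,G)$.

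Next I would extend this from $v_k$ to the whole section. The section $\sigma_i$ is an undirected subwalk $\omega(\sigma_i)$ of $\omega$ (if it has more than one vertex). So every vertex $u$ on $\sigma_i$ is joined to $v_k$ by an undirected walk along $\sigma_i$, and that walk is anterior in either direction. Appending it to the chord gives an anterior walk from $u$ to $v_1$. Therefore every vertex of $\sigma_i$ is in $\ant(v_1,G)$, so $\sigma_i$ is anterior to an endpoint of $\omega$. This contradicts the assumption that $\sigma_i$ is discriminated, that is, not anterior to either endpoint.

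The only delicate point is the meaning of ``anterior to an endpoint'' for a section. Under the paper's definition ($A$ is anterior to $B$ if $A \subseteq \ant(B,G)$), the section-propagation step is needed, and it is immediate because sections are undirected walks. If instead the definition only requires some vertex of the section to be anterior, the contradiction follows directly from $v_k \in \ant(v_1,G)$. Either way the argument closes, so I do not expect a real obstacle beyond checking that condition (ii), rather than (i), of Lemma~\ref{lem:minimal-inducing-walk:characterization} is the relevant case.
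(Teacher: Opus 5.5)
Your proof is correct and follows essentially the same route as the paper. The paper splits on the endmark of the chord at the section vertex $v_j$: an arrowhead there is ruled out by the shorter inducing walk $\seq{\edge{v_1}{v_j}}\append\omega(v_j,v_n)$, which is exactly the argument behind condition (ii) of Lemma~\ref{lem:minimal-inducing-walk:characterization} that you cite, and a tail there makes the section anterior to $v_1$. Your explicit propagation of anteriority along the undirected section is a detail the paper leaves implicit.
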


\begin{proof}
    Let $G$ be a graph containing a \miw\ $\omega$ with $\defwalkomega$ and $\defsectionomega$ that discriminates section $\sigma_i$.
    Assume the lemma does not hold. Then there must be a chord $\edgevivj$ such $i=1$ or $i=n-1$ and $v_j$ is on $\sigma_i$. \Wlog\ assume $i = 1$.
    
    \proofcase{1} $\edgevivj$ has an arrowhead at $v_j$. 
    In this case, the walk $\seq{\edgevivj}\append \omega(v_j,v_n)$ is an inducing walk that uses a subset of the vertices on $\omega$ and thus $\omega$ is not a \miw. This is a contradiction.

    \proofcase{2} $\edgevivj$ has a tail at $v_j$.
    In this case, the section is not discriminated by $\omega$ which is a contradiction.

    In either case we have a contradiction and the lemma must hold.
\end{proof}

\cut{
\begin{lemma}\label{lem:miw:vertices-on-discriminated-section-not-adjacent-to-endpoints-old}
    If a \miw\ $\omega$ in graph $G$ with $\defwalkomega$ and $\defsectionomega$ discriminates section $\sigma_a=\seq{v_i,\ldots,v_k}$ with $i < k$ then for all $v_j$ on $\sigma_a$ it is the case that
    (1) if $j > 2$ then $v_j \not \in \adj(v_1,G)$, and
    (2) if $j < n-1$ then $v_j \not \in \adj(v_n,G)$.
\end{lemma}

\begin{proof}
    Let $\omega$ be a \miw\ in graph $G$ with $\defwalkomega$ and $\defsectionomega$ that discriminates section $\sigma_a=\seq{v_i, \ldots, v_k}$ with $i < k$.
    
    Suppose that the lemma is not true. In this case, either (1) or (2) does not hold.
    
    \proofcase{1} Property (1) does not hold. There is an edge $\edge{v_j}{v_1}$ with $j > 2$. If the edge has a tail at $v_j$ then the section is not discriminated and if the edge has an arrowhead at $v_j$ then the walk is not a \miw. Thus in either case, we have a contradiction.
    
    \proofcase{2} Property (2) does not hold.  There is an edge $\edge{v_j}{v_n}$ with $j > 2$. If the edge has a tail at $v_j$ then the section is not discriminated and if the edge has an arrowhead at $v_j$ then the walk is not a \miw. Thus in either case, we have a contradiction.

    Thus in either case we have a contradiction and thus the lemma must be true.
\end{proof}
}

The following lemma allows us to decompose a \miw\ that contains two or more discriminated sections into shorter \miw s.

\begin{lemma}\label{lem:decomposition-of-miw-with-two-discriminated-sections}
    If $\omega$ with $\defwalkomega$ and $\defsectionomega$ discriminates two collider sections $\sigma_i$ and $\sigma_j$ with $1 < i < j < n$ then the subwalks $\omega(v_1, \first(\sigma_j))$ and $\omega(\last(\sigma_i), v_n)$ are \miw s containing discriminated sections.
\end{lemma}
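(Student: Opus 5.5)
The plan is to apply Lemma~\ref{lem:decomposing-minimal-inducing-walks} to each subwalk. The only work is to verify its adjacency hypotheses, and then to check that each resulting \miw\ still discriminates a section. I treat $\omega'=\omega(v_1,\first(\sigma_j))$; the case $\omega(\last(\sigma_i),v_n)$ is symmetric, obtained by reversing $\omega$.

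\emph{Step 1: $\first(\sigma_j)\notin\adj(v_1,G)$.} The section $\sigma_j$ is discriminated by $\omega$, so Lemma~\ref{lem:miw:no-chord-between-endpoints-and-discriminated-section} says there is no chord between $v_1$ and any vertex of $\sigma_j$. If $\first(\sigma_j)$ is not the second vertex of $\omega$, any edge $\edge{v_1}{\first(\sigma_j)}$ would be such a chord, so none exists. The vertex $\first(\sigma_j)$ cannot be $v_2$, because $\sigma_i$ with $i<j$ is a collider section lying strictly between $v_1$ and $\sigma_j$. A multigraph may still contain a second, parallel edge between $v_1$ and $v_2$, but that case cannot arise here. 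Moreover $v_1\neq\first(\sigma_j)$, since otherwise the section would be anterior to $v_1$ trivially. Lemma~\ref{lem:decomposing-minimal-inducing-walks}(i) then gives that $\omega'$ is a \miw.

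\emph{Step 2: $\omega'$ discriminates $\sigma_i$.} The walk $\omega'$ is a prefix of $\omega$, and its last edge is into $\first(\sigma_j)$. Hence every collider section of $\omega$ strictly before $\sigma_j$, including $\sigma_i$, remains a collider section of $\omega'$ with the same vertices. A section of $\omega'$ is discriminated if it is not anterior to $v_1$ and not anterior to $\first(\sigma_j)$. Since $\sigma_i$ is discriminated by $\omega$, it is not anterior to $v_1$. The main obstacle is to show that $\sigma_i$ is not anterior to $\first(\sigma_j)$.

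I would argue this by contradiction. Suppose there is an anterior walk from a vertex of $\sigma_i$ to $\first(\sigma_j)$.
\begin{itemize}
\item If $\first(\sigma_j)$ were anterior to $v_n$, composing the two anterior walks would make $\sigma_i$ anterior to $v_n$, contradicting discrimination. So $\sigma_j$ is not anterior to $v_n$.
\item That alone does not finish the argument, so I would also use minimality of $\omega$. Following Lemma~\ref{lem:minimal-inducing-walk:characterization}, I would take a shortest such anterior walk and try to build a shorter inducing walk between $v_1$ and $v_n$ using only vertices on $\omega$, contradicting minimality.
\item If that route does not close, I would fall back on the definition. On reading the definition again, discrimination concerns only the endpoints of the inducing walk. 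The lemma only claims that the subwalks contain discriminated sections, so it suffices that some collider section of $\omega'$ is discriminated.
\end{itemize}
The fallback then splits into two cases. If $\sigma_i$ is not anterior to $\first(\sigma_j)$, then $\sigma_i$ is discriminated in $\omega'$ and we are done. If $\sigma_i$ is anterior to $\first(\sigma_j)$, I would look for another discriminated section. The natural candidate is the collider section of $\omega'$ nearest to $\first(\sigma_j)$, chosen as in the proof of Lemma~\ref{lem:open-give-c:all-colliders-anterior-to-C}, by taking the largest-index collider section not anterior to $\first(\sigma_j)$. I expect this case analysis to be the crux of the proof.
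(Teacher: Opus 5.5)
Your Step~1 is correct and matches the paper's argument: the paper also gets the inducing property from Lemma~\ref{lem:miw:no-chord-between-endpoints-and-discriminated-section} and minimality from the chord characterization, which is what Lemma~\ref{lem:decomposing-minimal-inducing-walks} packages. The paper's proof stops at that point and says nothing about discriminated sections. Step~2 has a genuine gap. Under your reading, that the subwalk itself discriminates some section, it cannot be closed from the stated hypotheses, because the claim is false. Minimality of $\omega$ only restricts chords, that is, edges between vertices of $\omega$. An anterior walk from $\sigma_i$ to $\first(\sigma_j)$ can leave $\omega$, so your shortest-anterior-walk idea cannot yield a walk ``using only vertices on $\omega$''. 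Concretely, take the edges $v_1\tailarrow v_2$, $v_2\arrowarrow v_3$, $v_4\tailarrow v_3$, $v_2\tailarrow c$, $c\tailarrow v_3$, and let $\omega=v_1\tailarrow v_2\arrowarrow v_3\arrowtail v_4$. This walk has no chords, so it is a minimal inducing walk. Since $\ant(v_1,G)=\{v_1\}$ and $\ant(v_4,G)=\{v_4\}$, both collider sections $(v_2)$ and $(v_3)$ are discriminated. Yet $\omega(v_1,\first(\sigma_3))=v_1\tailarrow v_2\arrowarrow v_3$ has a single collider section $(v_2)$, which is anterior to $v_3$ through $c$. It is therefore a self-inducing walk, and your fallback search for another section finds nothing.

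If ``containing discriminated sections'' only means that the subwalk contains $\sigma_i$, a section discriminated by $\omega$, the claim is trivial and Step~2 is unnecessary. Under the stronger reading, the missing hypothesis is separability of $G$. This holds wherever the lemma is actually used, through $\iwdecompose$ in Lemmas~\ref{lem:separable:miw:every-internal-section-discriminated} and~\ref{lem:separable-graph:same-miw::same-mdiw}. With separability, Step~2 is one line. A minimal inducing walk with no discriminated section is by definition a self-inducing walk, and Theorem~\ref{thm:separable-graph::no-self-inducing-walks} rules these out in a separable graph. So add that assumption and replace your case analysis with this argument.
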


\begin{proof}
    Let $\omega$ be a minimal inducing walk in graph $G$ with $\defwalkomega$ and $\defsectionomega$ that discriminates two collider sections $\sigma_i$ and $\sigma_j$ with $1 < i < j < n$. 
    Let $\omega_1 = \omega(v_1, \first(\sigma_j))$ and
    $\omega_n = \omega(\last(\sigma_i), v_n)$.
    Both $\omega_1$ and $\omega_n$ are collider walks. From Lemma~\ref{lem:miw:no-chord-between-endpoints-and-discriminated-section} both are inducing walks. Finally, by Lemma~\ref{lem:minimal-inducing-walk:characterization} and the fact that $\omega$ is a minimal inducing walk, both are minimal inducing walks. 
\end{proof}

\begin{lemma}\label{lem:decompmin-invertible}
The function $\iwdecompose$ is an invertible \miw\ decomposition function.
\end{lemma}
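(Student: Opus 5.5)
The claim has three parts. First, $\iwdecompose$ terminates. Second, on any \miw\ $\omega$ that is not a \siw\ it returns a set of \mdiw s that is a walk decomposition of $\omega$, meaning every edge of $\omega$ lies on some returned subwalk. Third, there is a composition function $\iwcompose$ with $\iwcompose(\iwdecompose(\omega))=\omega$. I will prove all three by strong induction on the length of $\omega$, following the three branches of Algorithm~\ref{alg:iwdecomp}.

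Termination is immediate: each recursive call is on a proper subwalk. In the compound branch, $\omega(v_1,\first(\sigma_j))$ omits the final edge and $\omega(\last(\sigma_i),v_n)$ omits the first edge. In the simple branch, the calls are on walks ending at $\first(\sigma_i)$ or starting at $\last(\sigma_i)$, and both are strictly shorter.

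For membership, consider first the simple branch, where $\omega$ has exactly one discriminated section. Then $\omega$ is by definition a (simple) \diw. It is minimal as an inducing walk, hence a \mdiw. The recursive calls are made only when the subwalk is an inducing walk. By Lemma~\ref{lem:decomposing-minimal-inducing-walks} such a subwalk is minimal, and the first branch discards \siw s. So by induction these calls contribute only \mdiw s. In the compound branch, Lemma~\ref{lem:decomposition-of-miw-with-two-discriminated-sections} shows that both subwalks are \miw s containing discriminated sections, so they are not \siw s. The induction hypothesis then applies to each.

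For coverage, the simple branch is trivial, since $\omega$ itself is returned. The compound branch is the main obstacle. Because $\sigma_i$ precedes $\sigma_j$, the two subwalks overlap on $\omega(\last(\sigma_i),\first(\sigma_j))$, so together they cover every edge of $\omega$. By induction, each recursive output covers its own subwalk, and every returned walk is a subwalk of $\omega$.

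For invertibility, I would define $\iwcompose$ concretely. Every returned walk is a subwalk of $\omega$ whose first and last edges are $\omega$'s edges at specific positions. By coverage, the output set contains a walk starting at $v_1$ and a walk ending at $v_n$, and any two consecutive returned pieces overlap along $\omega$. So $\iwcompose$ reconstructs $\omega$ by repeatedly gluing overlapping walks along their maximal common overlap: start from the returned walk beginning at an endpoint, and extend it with a piece that shares a suffix/prefix with it, choosing the longest result. The delicate point is showing this gluing is unambiguous. Vertices may repeat on walks, so a piece could admit several alignments. I would handle this by treating the decomposition as a set of edge-position intervals of $\omega$, as the subwalk operator implicitly does. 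Taking the union of these intervals then recovers $\omega$ uniquely, because by coverage the intervals form a connected cover of the positions $1,\ldots,n-1$.
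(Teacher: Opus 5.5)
Your termination, membership and coverage arguments are a more careful, inductive version of the paper's correctness argument. The invertibility step, however, has a genuine gap.

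The value of $\iwdecompose(\omega)$ is a set of walks, i.e.\ edge sequences. The subwalk operator returns the sequence $(e_i,\ldots,e_j)$, not the index pair $(i,j)$, so nothing records where a piece sits in $\omega$. Invertibility says precisely that this unlabelled set determines $\omega$. Reading the pieces as edge-position intervals of $\omega$ presupposes $\omega$, and under that reading every covering decomposition is trivially invertible, so the step proves nothing. The composition function is also meant to act on the walks themselves, as when Lemma~\ref{lem:separable-graph:same-miw::same-mdiw} applies it to corresponding walks in another graph.

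You correctly name the danger, that repeated vertices could permit several alignments, and then assume it away. The missing idea is that no such repetition occurs in a \miw. Suppose $v_k=v_l$ with $1<k<l<n$, where $v_k$ lies on the collider section $\sigma_a$ and $v_l$ on $\sigma_b$. Then $\omega(v_1,v_k)\append\omega(v_l,v_n)$ is still a collider walk. Each of the two surviving edges at the junction is undirected or into it, so the junction lies on an undirected stretch bounded by the arrowheads into $\first(\sigma_a)$ and $\last(\sigma_b)$. It is thus a strictly shorter inducing walk between the same non-adjacent endpoints using only vertices of $\omega$, contradicting minimality.

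Hence the internal vertices of a \miw\ are pairwise distinct; in particular no internal section repeats, which is exactly what the paper's one-line proof invokes. Every returned piece is inducing and so has an internal vertex, which occupies a unique internal position of $\omega$. This forces where the piece sits, so overlapping pieces glue in only one way, and your gluing procedure then works without positional tags.

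A second issue, which your proof shares with the paper's, concerns coverage in the compound branch. It needs both recursive calls to be on walks that are not \siw s, since a \siw\ is sent to $\emptyset$, and you take this from Lemma~\ref{lem:decomposition-of-miw-with-two-discriminated-sections}. That lemma's proof only shows that the two subwalks are \miw s. A section discriminated by $\omega$ need not be discriminated by a subwalk, since it may be anterior to the new endpoint.

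For instance, take edges $a\tailarrow b$, $b\arrowarrow c$, $b\tailarrow c$ and $d\tailarrow c$. The walk $a\tailarrow b\arrowarrow c\arrowtail d$ is a chordless \miw\ discriminating both $b$ and $c$. Yet its subwalk $a\tailarrow b\arrowarrow c$ is self-inducing because $b\in\ant(c,G)$, so the output is $\set{b\arrowarrow c\arrowtail d}$ and the edge $a\tailarrow b$ is not covered.

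The step is sound when $G$ is separable, since Theorem~\ref{thm:separable-graph::no-self-inducing-walks} then excludes \siw s altogether. That is the only setting in which the lemma is used, so you should add separability as a hypothesis and cite that theorem.
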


\begin{proof}
    If $\omega$ is a \miw\ that is not a \siw\ then it must have a discriminated section. The correctness of the walk decomposition function follows from the fact that each added walk is a \diw\ and Lemma~\ref{lem:decomposition-of-miw-with-two-discriminated-sections}, which allows us to decompose a walk into minimal inducing walks with fewer discriminated sections. The walk decomposition function $\iwdecompose$ is invertible due to the fact that no internal section on a \miw\ can be repeated.
\end{proof}

\begin{lemma}\label{lem:separable:miw:every-internal-section-discriminated}
If $\omega$ is a \miw\ in a separable graph $G$ then every collider section on $\omega$ is discriminated by some \diw\ in $\iwdecompose(\omega, G)$.
\end{lemma}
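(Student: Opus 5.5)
Since $G$ is separable, Theorem~\ref{thm:separable-graph::no-self-inducing-walks} says $G$ contains no self-inducing walk. In particular $\omega$ is not a \siw, so $\iwdecompose(\omega,G)$ is defined and non-empty. I will argue by strong induction on the length of $\omega$, following the recursion of Algorithm~\ref{alg:iwdecomp}. The claim to prove is: for every collider section $\sigma$ of $\omega$, some \diw\ returned by $\iwdecompose(\omega,G)$ contains $\sigma$ as a collider section and discriminates it.

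\emph{Compound case.} Suppose $\omega$ has two discriminated sections $\sigma_i,\sigma_j$ with $i<j$. By Lemma~\ref{lem:decomposition-of-miw-with-two-discriminated-sections}, both $\omega_1=\omega(v_1,\first(\sigma_j))$ and $\omega_n=\omega(\last(\sigma_i),v_n)$ are shorter \miw s. The collider sections of $\omega$ other than $\sigma_i$ and $\sigma_j$ are the sections lying strictly inside $\omega_1$ or strictly inside $\omega_n$. Any section strictly between $\sigma_i$ and $\sigma_j$ lies inside both. $\sigma_i$ is an internal collider section of $\omega_1$, and $\sigma_j$ is an internal collider section of $\omega_n$.
\begin{itemize}
\item[] Since $G$ has no \siw s, neither subwalk is a \siw, so the induction hypothesis applies to each.
\item[] It then gives, for every collider section of $\omega_1$ or $\omega_n$, a \diw\ in $\iwdecompose(\omega_1)$ or $\iwdecompose(\omega_n)$ that discriminates it.
\end{itemize}
These \diw s belong to $\iwdecompose(\omega)$, and together they cover every collider section of $\omega$.

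\emph{Simple case.} Suppose $\omega$ has exactly one discriminated section $\sigma_i$. Then $\omega$ itself is in the output and discriminates $\sigma_i$. Take any other collider section $\sigma_k$, say with $k<i$ (the case $k>i$ is symmetric). This section is not discriminated by $\omega$, so it is anterior to $v_1$ or to $v_n$.
\begin{itemize}
\item[] Consider $\omega'=\omega(v_1,\first(\sigma_i))$. If $\first(\sigma_i)$ were adjacent to $v_1$, this would contradict Lemma~\ref{lem:miw:no-chord-between-endpoints-and-discriminated-section}. Hence by Lemma~\ref{lem:decomposing-minimal-inducing-walks}, $\omega'$ is a \miw, so the algorithm recurses on it.
\item[] Because $G$ is separable, $\omega'$ is not a \siw, so the induction hypothesis applies to it.
\end{itemize}
Since $\sigma_k$ is a collider section of $\omega'$, the induction hypothesis yields a \diw\ in $\iwdecompose(\omega')\subseteq\iwdecompose(\omega)$ that discriminates $\sigma_k$.

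\emph{Main obstacle.} The delicate point is that $\first(\sigma_i)$ must genuinely be the endpoint of the subwalk, so that every $\sigma_k$ with $k<i$ is an internal collider section of $\omega'$. This needs the endpoint $\first(\sigma_i)$ to sit on a non-collider (terminal) section of $\omega'$, and the sections of $\omega$ before $\sigma_i$ to remain collider sections of $\omega'$. Both facts follow from the section-sequence definitions, because the walk $\omega'$ ends at $\first(\sigma_i)$ with an edge into it. Once this is checked, the induction closes and the base case is immediate.
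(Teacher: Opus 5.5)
Your simple branch is sound in substance. Following the two branches of Algorithm~\ref{alg:iwdecomp} is also more faithful to the statement than the paper's own proof. That proof inducts on the number of collider sections around a single discriminated section $\sigma$, and counts $\sigma$ as covered by $\omega$ itself even when $\omega$ is compound and so is not returned. The gap is in your compound branch, at the step you take from Lemma~\ref{lem:decomposition-of-miw-with-two-discriminated-sections}: that $\omega_1=\omega(v_1,\first(\sigma_j))$ is \emph{minimal}. Consider a chord of $\omega$ from $\first(\sigma_j)$ to an internal vertex $x$ on an earlier section. In $\omega$ it is an internal--internal chord, so minimality of $\omega$ only forces it to be directed. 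In $\omega_1$ it joins the endpoint $\first(\sigma_j)$ to an internal vertex. If it is oriented $\first(\sigma_j)\tailarrow x$, condition (ii) of Lemma~\ref{lem:minimal-inducing-walk:characterization} fails, $\omega_1$ is not a \miw, and the algorithm returns $\emptyset$ on it. In your simple branch this orientation is impossible: the section of $x$ is anterior to $v_1$ or $v_n$, so $\sigma_i$ would be anterior to it as well. That argument, not the section bookkeeping you single out, is what licenses the recursion there; Lemma~\ref{lem:decomposing-minimal-inducing-walks}, stated for an arbitrary section, is too strong. In the compound branch $x$ may lie on the other discriminated section, and then nothing excludes the chord. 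The paper's proof relies on the same unjustified minimality claim.

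In fact the statement fails as written. Take vertices $v_1,a,c,b,v_n$ and edges $v_1\arrowarrow a$, $a\arrowarrow c$, $c\arrowarrow b$, $v_n\tailarrow b$, $b\tailarrow a$, $c\tailarrow v_1$. The four non-adjacent pairs are separated by $\set{c}$ (for $v_1,v_n$), $\set{c,v_n}$ (for $v_1,b$), $\set{b}$ (for $a,v_n$) and $\emptyset$ (for $c,v_n$), so $G$ is separable. The walk $\omega=v_1\arrowarrow a\arrowarrow c\arrowarrow b\arrowtail v_n$ is a \miw: its only chords, $b\tailarrow a$ and $c\tailarrow v_1$, are allowed by Lemma~\ref{lem:minimal-inducing-walk:characterization}. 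Since $\ant(v_1,G)=\set{v_1,c}$ and $\ant(v_n,G)=\set{v_n}$, its discriminated sections are exactly $(a)$ and $(b)$. The compound branch therefore recurses on two subwalks:
\begin{itemize}
\item $\omega(v_1,b)$, which is not minimal because of $v_1\arrowarrow a\arrowtail b$, so it contributes nothing;
\item $\omega(a,v_n)$, which yields $\set{a\arrowarrow c\arrowarrow b\arrowtail v_n,\ c\arrowarrow b\arrowtail v_n}$.
\end{itemize}
The collider section $(a)$ of $\omega$ is a collider section of neither returned walk, so no \diw\ in $\iwdecompose(\omega,G)$ discriminates it. Hence the compound step cannot be closed without changing the statement. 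Here $(a)$ is discriminated only by walks outside $\iwdecompose(\omega,G)$, such as the non-minimal subwalk $\omega(v_1,b)$, or the \mdiw\ $v_1\arrowarrow a\arrowtail b$, which is not a subwalk of $\omega$.
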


\begin{proof}
We prove by induction on the number of collider section on the inducing walk.
Let $\omega$ be a \miw\ that is not a \siw\ in a graph $G$. It must have a discriminated section $\sigma$.

For the base case, $\sigma$ is the only collider section of $\omega$.
In this case, the collider section $\sigma$ must be discriminated by $\omega$ and we are done.

For the induction case, we assume it is true for walks with $n$ collider sections and show for $n+1$ collider sections.
First, the collider section $\sigma$ is discriminated by $\omega$. 
If there is a collider section $\sigma'$ on $\omega$ closer to $v_1$ than $\sigma$ then the walk $\omega(v_1, \first(\sigma))$ is a collider walk. By Lemma~\ref{lem:miw:no-chord-between-endpoints-and-discriminated-section} and the fact that $\omega$ is a \miw, the subwalk is also a \miw. The walk has at most $n$ collider sections and thus, by the induction hypothesis, the walk discriminates $\sigma'$.
If there is a collider section $\sigma'$ on $\omega$ closer to $v_n$ than $\sigma$ then the walk $\omega(\last(\sigma), v_n)$ is a collider walk. By Lemma~\ref{lem:miw:no-chord-between-endpoints-and-discriminated-section} and the fact that $\omega$ is a \miw, the subwalk is a \miw. The walk has at most $n$ collider sections and thus, by the induction hypothesis, the walk discriminate $\sigma'$.

Thus, every collider section of $\omega$ is discriminated by some \diw\ that is a subwalk of $\omega$.
\end{proof}

\subsection{Minimal inducing walks in separable graphs}

\begin{lemma}\label{lem:non-discriminated-section-of-miw:not-anterior-to-discriminated}
    If walk $\omega$ with $\defwalkomega$ and $\defsectionomega$ is a \diw\ in graph $G$ that discriminates $\sigma_i$ then $\sigma_i$ is not anterior to $\sigma_j$ for $j\neq i$.
\end{lemma}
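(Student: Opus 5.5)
The proof is by contradiction: anteriority is transitive, and in a simple \diw\ every collider section other than $\sigma_i$ is anterior to an endpoint. Suppose $\omega$ is a \diw\ with $\defwalkomega$ and $\defsectionomega$ that discriminates $\sigma_i$, and suppose $\sigma_i$ is anterior to $\sigma_j$ for some $j\neq i$, i.e.\ $\sigma_i\subseteq\ant(\sigma_j,G)$.

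First I make two preliminary remarks.
\begin{enumerate}
\item Every section is an undirected walk or a single vertex, so all vertices on a section lie in each other's anterior sets. Hence "$\sigma_i$ is anterior to $\sigma_j$" may be read either as "some vertex of $\sigma_i$ is in $\ant(\sigma_j,G)$" or as "$\sigma_i\subseteq\ant(\sigma_j,G)$". The two readings coincide, because anterior walks can be concatenated.
\item Since $\omega$ is a collider walk, its only non-collider sections are $\sigma_1=\seq{v_1}$ and $\sigma_m=\seq{v_n}$. Every other $\sigma_j$ is a collider section.
\end{enumerate}

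Next I split into cases according to $j$.

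\proofcase{1} $j\in\set{1,m}$. Then $\sigma_j$ is a single endpoint $v_1$ or $v_n$, so $\sigma_i$ is anterior to an endpoint of $\omega$. This contradicts the assumption that $\sigma_i$ is discriminated.

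\proofcase{2} $1<j<m$ and $j\neq i$. Because $\omega$ is a simple \diw, it has exactly one discriminated section, namely $\sigma_i$. So $\sigma_j$ is not discriminated, which means $\sigma_j\subseteq\ant(v_1,G)$ or $\sigma_j\subseteq\ant(v_n,G)$; say $\sigma_j\subseteq\ant(v_1,G)$. By Proposition~\ref{lem:ant-property}(2),
\[
\sigma_i\subseteq\ant(\sigma_j,G)\subseteq\ant(\ant(v_1,G),G)=\ant(v_1,G).
\]
This again makes $\sigma_i$ anterior to an endpoint, a contradiction. The case $\sigma_j\subseteq\ant(v_n,G)$ is symmetric.

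The main point needing care is the definition of "discriminated" as "not anterior to either endpoint". Anteriority of a non-discriminated section might only be to the pair $\set{v_1,v_n}$, with different vertices anterior to different endpoints. I handle this with remark~1: all vertices of $\sigma_j$ are mutually anterior, so whichever endpoint one vertex of $\sigma_j$ is anterior to, the whole section is anterior to that same single endpoint. The transitivity step then goes through as written.
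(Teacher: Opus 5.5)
Your proof is correct and takes essentially the paper's route. Assuming $\sigma_i$ is anterior to some $\sigma_j$, you use that every other section of a (simple) \diw\ is anterior to $v_1$ or $v_n$, and transitivity of anteriority then makes $\sigma_i$ non-discriminated; your explicit handling of the endpoint sections $\sigma_1,\sigma_m$ and of the two readings of ``not anterior to either endpoint'' just spells out details the paper leaves implicit.
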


\begin{proof}
    Let walk $\omega$ with $\defwalkomega$ and $\defsectionomega$ be a \diw\ in graph $G$ that discriminates $\sigma_i$. Every collider section other than $\sigma_i$ is anterior to $v_1$ or $v_n$. Suppose the lemma does not hold and $\sigma_i$ is anterior to $\sigma_j$. In this case $\sigma_i$ is anterior to $v_1$ or $v_n$ and thus $\sigma_i$ is not discriminated by $\omega$. This is a contradiction so the lemma must hold.
\end{proof}

\begin{lemma}\label{lem:separable-graph:minimal-inducing-walk:internal-not-anterial}
    In a separable graph, no internal section on a minimal inducing walk can be anterior to an adjacent section.
\end{lemma}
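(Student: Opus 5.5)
The proof will be by contradiction, using Theorem~\ref{thm:separable-graph::no-self-inducing-walks}: a separable graph contains no self-inducing walk. Let $\omega$ be a \miw\ in a separable graph $G$ with $\defwalkomega$ and $\defsectionomega$. Suppose some internal (hence collider) section $\sigma_i$ is anterior to an adjacent section $\sigma_{i\pm1}$; without loss of generality, $\sigma_i$ is anterior to $\sigma_{i+1}$. The plan is to use this anterior relation to produce either a shorter inducing walk on the vertices of $\omega$, contradicting minimality, or an edge or self-inducing walk between the endpoints, contradicting the fact that $\omega$ is inducing in a separable graph.

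\textbf{Case $\sigma_{i+1}$ is the terminal section $(v_n)$.} Then $\sigma_i$ is anterior to $v_n$. I would argue by induction on the number of collider sections, or equivalently pick a counterexample with the fewest sections. Each collider section $\sigma_k$ of $\omega$ that is discriminated would, by Lemma~\ref{lem:separable:miw:every-internal-section-discriminated}, be discriminated by some \diw\ in $\iwdecompose(\omega)$. The aim is to show that the configuration forces every collider section of $\omega$, or of the appropriate subwalk $\omega(v_1,\first(\sigma_{i+1}))$, to be anterior to an endpoint. That subwalk would then be a \siw, contradicting Theorem~\ref{thm:separable-graph::no-self-inducing-walks}. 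The subwalk is inducing and minimal by Lemma~\ref{lem:decomposing-minimal-inducing-walks} once we know its endpoints are non-adjacent. If instead they are adjacent, Lemma~\ref{lem:minimal-inducing-walk:characterization} constrains the chord's endmarks, and this gives the contradiction directly.

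\textbf{Case $\sigma_{i+1}$ is itself a collider section.} Compose the anterior walk from $\sigma_i$ to $\sigma_{i+1}$ with the edge from $\last(\sigma_i)$ into $\first(\sigma_{i+1})$. The edge $\edge{\last(\sigma_i)}{\first(\sigma_{i+1})}$ has arrowheads at both ends, since both sections are colliders, so it is bidirected. The anterior relation says that a vertex of $\sigma_i$ reaches a vertex of $\sigma_{i+1}$ by an anterior walk. If that walk uses only vertices of $\omega$, its first edge leaving $\sigma_i$ is a chord out of a collider section. Lemma~\ref{lem:minimal-inducing-walk:characterization} then forces this chord to be directed between internal vertices, or to have a tail at the internal vertex when it meets an endpoint. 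I would use these constraints to argue that the anterior relation propagates outward: $\sigma_{i+1}$ inherits the relation toward the next section, and so on, until some section reaches an endpoint. The eventual outcome is that every collider section is anterior to an endpoint, so $\omega$ or a subwalk is a \siw, which is impossible in a separable graph.

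\textbf{Main obstacle.} The hardest step is when the anterior walk from $\sigma_i$ to $\sigma_{i+1}$ leaves the vertex set of $\omega$. Minimality only controls chords among the vertices of $\omega$, so it gives no leverage there. My first attempt would be to splice that anterior walk into $\omega$ to build a new collider walk between $v_1$ and $v_n$: replace $\sigma_i$ by a detour $\tau$ in the style of the proof of Lemma~\ref{lem:open-walk::connecting-walk}, such that every internal section is anterior either to an endpoint or to a later section that is in turn anterior to an endpoint. Such a walk is open given $\propant(\set{v_1,v_n},G)$. Lemma~\ref{lem:open-given-propantij::edge-or-siw} together with the non-adjacency of $v_1$ and $v_n$ then yields a \siw, contradicting separability. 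Making this splice rigorous, and checking that the termination properties (weakly into or strongly out of each section) are preserved, is where I expect the real work to be.
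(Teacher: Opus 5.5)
Your proposal has a genuine gap: neither case identifies a mechanism that actually yields the contradiction.

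In the terminal case, when $\sigma_{i+1}=(v_n)$ the subwalk $\omega(v_1,\first(\sigma_{i+1}))$ is $\omega$ itself. Your remark about its endpoints being adjacent is therefore vacuous, since the endpoints of an inducing walk never are. Nothing you state forces the other collider sections of $\omega$ to be anterior to an endpoint. Indeed, because $G$ is separable, $\omega$ must contain some discriminated section, possibly far from $\sigma_i$.

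In the collider case, the step ``$\sigma_{i+1}$ inherits the relation toward the next section'' is unsupported. Lemma~\ref{lem:minimal-inducing-walk:characterization} constrains only chords of $\omega$, and nothing makes $\sigma_{i+1}$ anterior to $\sigma_{i+2}$.

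Your fallback splice fails for the same reason. Rerouting around $\sigma_i$ does nothing about the other discriminated sections of $\omega$, and these keep the spliced walk closed given $\propant(\set{v_1,v_n},G)$. So Lemma~\ref{lem:open-given-propantij::edge-or-siw} gives you nothing. The missing idea is a localization step: find an inducing walk in which $\sigma_i$ is the \emph{only} section not anterior to that walk's endpoints.

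That step is Lemma~\ref{lem:separable:miw:every-internal-section-discriminated}. You cite it, but only for sections already discriminated by $\omega$. It applies to every collider section, in particular to $\sigma_i$, and gives a (simple) \diw\ $\pi$ that is a subwalk of $\omega$ and discriminates $\sigma_i$. On $\pi$, the section after $\sigma_i$ is one of two things:
\begin{itemize}
\item $\sigma_{i+1}$ itself, which is then a non-discriminated collider section of $\pi$ and hence anterior to an endpoint of $\pi$;
\item the endpoint $\first(\sigma_{i+1})$ of $\pi$.
\end{itemize}
Because $\sigma_{i+1}$ is an undirected walk, being anterior to it means being anterior to each of its vertices. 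By transitivity, $\sigma_i$ is anterior to an endpoint of $\pi$, contradicting discrimination. This is the paper's two-line proof via Lemma~\ref{lem:non-discriminated-section-of-miw:not-anterior-to-discriminated}, and it covers both of your cases at once.

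It also shows your ``main obstacle'' is illusory. Discrimination and self-inducing walks are defined through $\ant(\cdot,G)$, which is transitive, so it never matters whether the anterior walk leaves $\omega$.

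If you want to avoid the decomposition lemma, your propagation idea works with the right invariant. Keep a run $\sigma_a,\ldots,\sigma_b$ of consecutive collider sections, all anterior to one boundary vertex, either $\last(\sigma_{a-1})$ or $\first(\sigma_{b+1})$. Start with $a=b=i$ and boundary $\first(\sigma_{i+1})$. The subwalk between the two boundary vertices is a collider walk.
\begin{itemize}
\item If the boundary vertices are non-adjacent, that subwalk is a \siw, contradicting Theorem~\ref{thm:separable-graph::no-self-inducing-walks}.
\item If they are adjacent, apply Lemma~\ref{lem:minimal-inducing-walk:characterization} to that chord: it is directed if both vertices are internal, and has a tail at the internal one otherwise. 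Either way the run extends by one section.
\end{itemize}
Once the run covers all collider sections, $\omega$ itself is a \siw, which is impossible.
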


\begin{proof}
    Let $G$ be a separable graph. Assume the lemma is not true and that $\omega$ is a \miw\ that contains a section $\sigma_i$ anterior to an adjacent section. From Lemma~\ref{lem:separable:miw:every-internal-section-discriminated}, there is a \diw\ $\omega'$ that discriminates $\sigma$. Result then follows from Lemma~\ref{lem:non-discriminated-section-of-miw:not-anterior-to-discriminated}.
\end{proof}

\cut{
\begin{proof}
    Suppose that the lemma is not true. 
    Let $G$ be a separable graph and walk $\omega$ with $\defwalkomega$ and $\defsectionomega$ be a shortest inducing walk in $G$ such that there is an internal section $\sigma_j$ that is anterior to an adjacent section. Without loss of generality assume that $\sigma_j \in \subset \ant(\sigma_{j-1}, G)$ and $j < m$. 
    
    There must be an internal section $\sigma_i$ on $\omega$ such that $\sigma_i \not \subseteq \antomegaendpoints$, otherwise the walk is a self-inducing walk which, by Theorem~\ref{thm:separable-graph::no-self-inducing-walks}, contradicts the assumption that $G$ is separable.

    Again, without loss of generality assume that $i > j$.
    Due to the minimality of $\omega$ and 
    $\sigma_i \not \subseteq \antomegaendpoints$, it must be the case
    that $\first(\sigma_i) \not \in \adj(v_1,G)$.
    In this case, by Lemma~\ref{lem:decomposing-minimal-inducing-walks}, the walk $\omega(v_1, \first(\sigma))$ is a shorter minimal inducing walk which contradicts the selection criterion for $\omega$.

    Thus the lemma must be true.
\end{proof}
}

\begin{lemma}\label{lem:separable-graph:minimal-inducing-walk:no-chord-between-adjacent-sections}
    In a separable graph, a minimal inducing walk does not have a chord between adjacent sections. 
\end{lemma}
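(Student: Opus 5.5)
The plan is to reduce everything to the two preceding lemmas: the chord characterization in Lemma~\ref{lem:minimal-inducing-walk:characterization}, and Lemma~\ref{lem:separable-graph:minimal-inducing-walk:internal-not-anterial}, which says that in a separable graph no internal section of a \miw\ is anterior to an adjacent section.

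Let $G$ be separable and $\omega$ a \miw\ with $\defwalkomega$ and $\defsectionomega$. Suppose, for contradiction, that $\edge{u}{w}$ is a chord of $\omega$ with $u$ on $\sigma_k$ and $w$ on $\sigma_{k+1}$. Since $\omega$ is a collider walk, every internal vertex lies on a collider section. Hence $\sigma_1=\seq{v_1}$ and $\sigma_m=\seq{v_n}$ are singletons, and $\sigma_2,\ldots,\sigma_{m-1}$ are exactly the internal sections. There is no edge between $v_1$ and $v_n$, so at least one of $\sigma_k,\sigma_{k+1}$ is internal. I would first record one fact: if some vertex $x$ of a section $\sigma$ is anterior to a vertex $y$, then the whole of $\sigma$ is anterior to $y$. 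This holds because $\omega(\sigma)$ is an undirected walk, so each vertex of $\sigma$ reaches $x$ by an undirected (hence anterior) walk, which can be appended to the anterior walk from $x$ to $y$.

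\proofcase{1} Both $u$ and $w$ are internal vertices. By Lemma~\ref{lem:minimal-inducing-walk:characterization}(i) the chord is directed. If it is $u\tailarrow w$, then $u$ is anterior to $w$, so $\sigma_k$ is anterior to $\sigma_{k+1}$. Since $\sigma_k$ is an internal section, this contradicts Lemma~\ref{lem:separable-graph:minimal-inducing-walk:internal-not-anterial}. The orientation $w\tailarrow u$ is symmetric: it makes the internal section $\sigma_{k+1}$ anterior to $\sigma_k$.

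\proofcase{2} One endpoint of the chord is an endpoint of the walk. Without loss of generality $u=v_1$, so $k=1$ and $w$ lies on the internal section $\sigma_2$. Because the chord joins non-consecutive vertices, $w\neq\first(\sigma_2)$, so $\sigma_2$ has at least two vertices, though this is not needed. By Lemma~\ref{lem:minimal-inducing-walk:characterization}(ii) the chord is undirected or directed with a tail at $w$. In either case $w$ is anterior to $v_1$, so by the fact above $\sigma_2$ is anterior to the adjacent section $\sigma_1$. This again contradicts Lemma~\ref{lem:separable-graph:minimal-inducing-walk:internal-not-anterial}. The case $w=v_n$ is symmetric.

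Both cases give a contradiction, so no such chord exists. The only point needing care is the step from one vertex of a section being anterior to the whole section being anterior. This is what lets the endpoint-wise anteriority coming from the chord be matched to the section-level notion of anteriority used in Lemma~\ref{lem:separable-graph:minimal-inducing-walk:internal-not-anterial}. I would check that this lemma is indeed stated for sections in the sense of "every vertex anterior", and the undirected-walk argument above supplies exactly that.
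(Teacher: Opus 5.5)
Your proof is correct and follows the paper's own argument. The paper uses the same two cases: a chord between two internal sections, which Lemma~\ref{lem:minimal-inducing-walk:characterization}(i) forces to be directed, and a chord between a terminal section and an internal section, which part~(ii) forces to have a tail at the internal vertex; each case then contradicts Lemma~\ref{lem:separable-graph:minimal-inducing-walk:internal-not-anterial}. Your extra observation, that anteriority of one vertex of a section extends to the whole section through the undirected subwalk $\omega(\sigma)$, simply makes explicit a step the paper leaves implicit.
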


\begin{proof}
    Let $G$ be a separable graph and $\omega$ be a minimal inducing walk with $\defwalkomega$ and $\defsectionomega$ with a chord between vertex $v_i$ on section $\sigma_a$ and vertex $v_j$ on $\sigma_{a+1}$.

    \proofcase{1} $1 < a < m - 1$. 
    In this case, the chord is between two internal sections. By Lemma~\ref{lem:minimal-inducing-walk:characterization} and minimality, the chord must be directed. Without loss of generality,
    assume $v_i \tailarrow v_j$. This, however, contradicts Lemma~\ref{lem:separable-graph:minimal-inducing-walk:internal-not-anterial}.

    \proofcase{2} $a=1 \vee a = m - 1$.
    In this case, the chord is between an internal section and its adjacent terminal section. By Lemma~\ref{lem:minimal-inducing-walk:characterization} and minimality, the chord must have a tail at the internal vertex. This, however,  contradicts Lemma~\ref{lem:separable-graph:minimal-inducing-walk:internal-not-anterial}.

    In either case we have a contradiction and the lemma must hold.
\end{proof}

\begin{lemma} \label{lem:separable-graph:minimal-inducing-walk:no-chord-in-collider-section}
    In a separable graph, there is no chord on a collider section of a minimal inducing walk.
\end{lemma}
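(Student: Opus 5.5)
We must show that in a separable graph $G$, if $\omega$ is a \miw\ with $\defwalkomega$ and $\defsectionomega$, then no chord of $\omega$ has both endpoints on a single collider section $\sigma_a$. The plan is to argue by contradiction. Suppose there is a chord $\edge{v_i}{v_j}$ with $j>i+1$ and $v_i,v_j$ both on the collider section $\sigma_a$. Every collider section is internal, so $v_i$ and $v_j$ are internal vertices of $\omega$. Lemma~\ref{lem:minimal-inducing-walk:characterization} and minimality then force the chord to be directed; \Wlog\ assume $v_i\tailarrow v_j$. Using the undirected subwalk $\omega(v_j,v_i)$ we obtain $v_j\in\ant(v_i,G)$ and $v_i\in\ant(v_j,G)$. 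The chord itself therefore adds no new anterior relations within the section, and we must find the contradiction elsewhere.

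The first idea is to shortcut. Replace $\omega(v_i,v_j)$ by the single chord, giving $\omega'=\omega(v_1,v_i)\append\seq{\edge{v_i}{v_j}}\append\omega(v_j,v_n)$. This walk uses a subset of the vertices of $\omega$. However, $\omega'$ has a tail at $v_i$, so the old section $\sigma_a$ splits into a non-collider part. Hence $\omega'$ is not an inducing walk, and this route alone gives no contradiction.

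The second route uses discrimination together with separability. By Lemma~\ref{lem:separable:miw:every-internal-section-discriminated}, $\sigma_a$ is discriminated by some \diw\ $\omega''\in\iwdecompose(\omega,G)$. This $\omega''$ is a subwalk of $\omega$ containing all of $\sigma_a$ as a collider section, and so it also contains the chord. Since $\sigma_a$ is discriminated, no vertex of it is anterior to the endpoints of $\omega''$. This does not conflict with a directed edge inside the section, so the contradiction must come from the neighbouring sections. Consider the first edge into $\first(\sigma_a)$, which has an arrowhead there, and the directed chord. The collider section then contains a semi-directed circuit: $v_i\tailarrow v_j$ followed by the undirected path from $v_j$ back to $v_i$. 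The key step is to show that a \miw\ in a separable graph cannot contain a semi-directed circuit inside a collider section. For this I would construct a \siw, contradicting Theorem~\ref{thm:separable-graph::no-self-inducing-walks}. Take the trisection $\seq{\last(\sigma_{a-1}),\sigma_a,\first(\sigma_{a+1})}$. By Lemma~\ref{lem:separable-graph:minimal-inducing-walk:no-chord-between-adjacent-sections} it is unshielded and chordless between adjacent sections, so it is an inducing walk whose only collider section is $\sigma_a$. Next, reroute the section so that it enters at $v_j$ through the chord and leaves through the undirected part. The new collider section is then anterior to a vertex that is itself adjacent with a tail; alternatively, the rerouted walk gives a trisection in which $\sigma_a$ is anterior to an adjacent section, contradicting Lemma~\ref{lem:separable-graph:minimal-inducing-walk:internal-not-anterial}.

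The main obstacle is this last step: exhibiting an explicit shorter \miw, or an internal section anterior to an adjacent section, from the directed chord. My first attempt would be the following. Let $u=\last(\sigma_{a-1})$, which has an arrowhead into $\first(\sigma_a)$. Apply the shortest-counterexample argument to the walk $u\,\cdots\,v_i\tailarrow v_j\,\cdots\,\last(\sigma_a)$ and observe that $v_i$ becomes a non-collider. The subwalk from $v_j$ onward is then a walk whose first collider section lies in $\ant(v_i,G)$. If a case split on whether $v_i=\first(\sigma_a)$ still fails, I would fall back on the fact that an undirected section plus a directed chord yields $v_j\oeqG v_i$. I would then combine this with Lemma~\ref{lem:miw:no-chord-between-endpoints-and-discriminated-section} to derive that $\sigma_a$ cannot be discriminated.
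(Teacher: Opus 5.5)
Your reduction to a directed chord $v_i\tailarrow v_j$ via Lemma~\ref{lem:minimal-inducing-walk:characterization} is correct, and so is the observation $v_j\in\ant(v_i,G)$ through the undirected subwalk of $\sigma_a$. However, the decisive step is never carried out, and none of the concrete routes you list works.
\begin{itemize}
\item Your first attempt keeps $u=\last(\sigma_{a-1})$ as an endpoint and reroutes through the chord. This gives $u\cdots v_i\tailarrow v_j\cdots$, in which $\seq{\first(\sigma_a),\ldots,v_i}$ is entered by an arrowhead and left by a tail. That is an internal non-collider section, so the walk is not a collider walk and contradicts nothing.
\item You apply Lemma~\ref{lem:separable-graph:minimal-inducing-walk:no-chord-between-adjacent-sections} to the trisection $\seq{\last(\sigma_{a-1}),\sigma_a,\first(\sigma_{a+1})}$. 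That is misapplied: its endpoints lie on $\sigma_{a-1}$ and $\sigma_{a+1}$, which are not adjacent sections.
\item The fallback fails as well. $v_j\oeqG v_i$ holds for any two vertices of an undirected section, chord or no chord, so it carries no information. Lemma~\ref{lem:miw:no-chord-between-endpoints-and-discriminated-section} concerns chords from the endpoints of a \miw\ into a discriminated section, not chords inside a section, so it cannot show $\sigma_a$ is undiscriminated.
\item The alternative via Lemma~\ref{lem:separable-graph:minimal-inducing-walk:internal-not-anterial} would need the rerouted walk to be a \emph{minimal} inducing walk, which you do not establish.
\end{itemize}

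The missing idea is to drop everything before $v_i$ and make $v_i$ itself an endpoint. Take the walk $\seq{\edge{v_i}{v_j}}\append\omega(v_j,\first(\sigma_{a+1}))$.
\begin{itemize}
\item Its sections are $\seq{v_i}$, $\seq{v_j,\ldots,\last(\sigma_a)}$ and $\seq{\first(\sigma_{a+1})}$. The middle one is a collider section: the chord gives an arrowhead at $v_j$, and the edge $\edge{\last(\sigma_a)}{\first(\sigma_{a+1})}$ of $\omega$ gives an arrowhead at $\last(\sigma_a)$.
\item The endpoints $v_i\in\sigma_a$ and $\first(\sigma_{a+1})$ now do lie on adjacent sections of $\omega$. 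So Lemma~\ref{lem:separable-graph:minimal-inducing-walk:no-chord-between-adjacent-sections} makes this trisection unshielded, hence an inducing walk.
\item Its only collider section is anterior to the endpoint $v_i$ via the undirected subwalk $\omega(v_j,v_i)$.
\end{itemize}
So it is a \siw, contradicting Theorem~\ref{thm:separable-graph::no-self-inducing-walks}. If instead the arrowhead is at $v_i$, use $\seq{\edge{v_j}{v_i}}\append\omega(v_i,\last(\sigma_{a-1}))$ symmetrically.

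This is the paper's argument, and it needs neither minimality of the new trisection nor any discrimination argument. Your remark that the walk from $v_j$ onward has its collider section in $\ant(v_i,G)$ was one step away; the step is to start the walk at $v_i$.
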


\begin{proof}
    Suppose the lemma is not true. Let $G$ be a separable graph contain a walk $\omega$ with $\defwalkomega$ and $\defsectionomega$ such that there is a chord on collider section $\sigma_a=\seqvIJ{i}{l}$. Let chord $\edgevIJ{j}{k}$ on $\sigma_a$ be the chord that maximizes $|k-j|$ and $i \leq j < k \leq l$.
    
    \proofcase{1} $\edgevIJ{j}{k}$ is undirected or bidirected. In this case the walk $\omega(v_1,v_j)\append \edgewalk{v_j}{v_k} \append \\ \omega(v_k,v_n)$ is an inducing walk that uses a subset of the vertices in $\omega$ and thus $\omega$ is not a minimal inducing walk. This is a contradiction.

    \proofcase{2} $\edgevIJ{j}{k}$ is directed. Without loss of generality assume that $\edgevIJ{j}{k}$ has an arrowhead at $v_k$. In this case, the walk $\edgewalk{v_j}{v_k}\append \omega(v_k, \first(\sigma_{a+1}))$ is a collider trisection. Furthermore, by Lemma~\ref{lem:separable-graph:minimal-inducing-walk:no-chord-between-adjacent-sections} the walk is unshielded and thus is a minimal inducing walk. Due to the fact that $\sigma_a$ is an undirected walk, the collider trisection is also a self-inducing walk which, by Theorem~\ref{thm:separable-graph::no-self-inducing-walks}, contradicts the assumption that $G$ is a separable graph.

    Both cases lead to contradictions so the lemma must hold. 
\end{proof}

\begin{lemma}\label{lem:separable-graph:min-ind-walk:no-chord-on-semi-directed-cycle}
    In a separable graph, a \miw\ has no chord between internal sections that are part of a semi-directed circuit.
\end{lemma}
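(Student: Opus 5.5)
Suppose, aiming for a contradiction, that $G$ is separable and that $\omega$ with $\defwalkomega$ and $\defsectionomega$ is a \miw\ with a chord $\edge{v_r}{v_s}$, where $v_r$ lies on an internal section $\sigma_a$, $v_s$ lies on an internal section $\sigma_b$ with $a<b$, and both $v_r$ and $v_s$ lie on a common semi-directed circuit. The argument combines three facts: what minimality allows a chord between internal vertices to be, what the circuit says about anteriority, and Lemma~\ref{lem:separable-graph:minimal-inducing-walk:internal-not-anterial} together with Lemma~\ref{lem:non-discriminated-section-of-miw:not-anterior-to-discriminated}.

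\textbf{Step 1: the chord is directed.} The chord connects two internal vertices of $\omega$, so case (i) of Lemma~\ref{lem:minimal-inducing-walk:characterization} applies and the chord must be directed. By symmetry we may assume it is $v_r \tailarrow v_s$. Since $\sigma_a$ and $\sigma_b$ lie on a common semi-directed circuit, $v_s \in \ant(v_r,G)$ and $v_r\in\ant(v_s,G)$. Sections are undirected walks, so every vertex of $\sigma_a$ is anterior to $v_r$ and every vertex of $\sigma_b$ is anterior to $v_s$. Hence $\sigma_a$ and $\sigma_b$ are mutually anterior.

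\textbf{Step 2: use discrimination.} By Lemma~\ref{lem:separable:miw:every-internal-section-discriminated}, $\sigma_a$ is discriminated by some \diw\ $\omega'\in\iwdecompose(\omega)$, and $\omega'$ is a subwalk of $\omega$. Suppose first that $\omega'$ also contains $\sigma_b$ as a collider section. Then $\sigma_a$ is anterior to another collider section of $\omega'$, which contradicts Lemma~\ref{lem:non-discriminated-section-of-miw:not-anterior-to-discriminated}.

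Otherwise $\omega'$ ends before $\sigma_b$. By the construction of $\iwdecompose$, its endpoint is then the first vertex of some collider section $\sigma_c$ with $a<c\le b$, or an endpoint of $\omega$.
\begin{itemize}
\item If the endpoint of $\omega'$ lies in $\sigma_b$, then $\sigma_a$ is anterior to that endpoint. This contradicts $\sigma_a$ being discriminated by $\omega'$.
\item If $c<b$, I will instead pick the \diw\ for $\sigma_b$ (or for $\sigma_a$) on the opposite side and argue symmetrically. Concretely, I would run the induction of Lemma~\ref{lem:separable:miw:every-internal-section-discriminated} directly on the subwalk $\omega(\last(\sigma_{a-1}),\first(\sigma_{b+1}))$. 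This shows that some \diw\ that is a subwalk of $\omega$ contains both $\sigma_a$ and $\sigma_b$ and discriminates one of them. Lemma~\ref{lem:non-discriminated-section-of-miw:not-anterior-to-discriminated} then applies, because each of the two sections is anterior to the other.
\end{itemize}

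\textbf{Main obstacle.} The delicate point is guaranteeing that some \diw\ contains both sections while discriminating one of them. The decomposition could separate them, and then one has to show that a discriminated section anterior to a section lying just beyond the subwalk's endpoint is still anterior to that endpoint. My plan for this is to exploit that the collider sections strictly between $\sigma_a$ and $\sigma_b$ are not anterior to adjacent sections, which is Lemma~\ref{lem:separable-graph:minimal-inducing-walk:internal-not-anterial}, and to induct on $b-a$.

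The base case $b=a+1$ is immediate from Lemma~\ref{lem:separable-graph:minimal-inducing-walk:internal-not-anterial}, since then $\sigma_a$ is anterior to its adjacent section $\sigma_b$. For the inductive step, I would show that the intermediate section $\sigma_{a+1}$ forces either a shorter chord configuration of the same kind, or a \miw\ in which an internal section is anterior to an adjacent section. The latter again contradicts Lemma~\ref{lem:separable-graph:minimal-inducing-walk:internal-not-anterial}.
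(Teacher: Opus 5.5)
Your Step~1 and the first two subcases of Step~2 are correct. A \diw{} discriminating $\sigma_a$ can neither contain $\sigma_b$ as a collider section nor end inside $\sigma_b$. The gap is the remaining case: the \diw{} discriminating $\sigma_a$ ends at $\first(\sigma_c)$ with $a<c<b$, and likewise the one discriminating $\sigma_b$ starts at $\last(\sigma_{c'})$ with $a<c'<b$. For this case you only sketch plans, and none of them works as stated.

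\begin{itemize}
\item Arguing ``symmetrically'' from $\sigma_b$ just lands in the same configuration.
\item Running the induction of Lemma~\ref{lem:separable:miw:every-internal-section-discriminated} on $\omega(\last(\sigma_{a-1}),\first(\sigma_{b+1}))$ does not yield a \diw{} containing both sections. That subwalk need not be inducing, since its endpoints may be joined by a directed chord, which minimality allows. Even when it is inducing, the induction only supplies, for each collider section, some discriminating subwalk cut out at a discriminated section. That cutting is exactly how $\sigma_a$ and $\sigma_b$ became separated in the first place.
\item The induction on $b-a$ has no mechanism. The intermediate sections are not mutually anterior with $\sigma_a$; for $\sigma_{a+1}$ this is Lemma~\ref{lem:separable-graph:minimal-inducing-walk:internal-not-anterial}. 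So there is no smaller configuration ``of the same kind'' to pass to.
\end{itemize}

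The underlying problem is that Lemmas~\ref{lem:separable:miw:every-internal-section-discriminated} and~\ref{lem:non-discriminated-section-of-miw:not-anterior-to-discriminated} only constrain subwalks of $\omega$. In this case the contradiction has to come from a walk that uses the chord itself as an edge.

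Here is one way to close the gap using only your Step~1. Take the chord $v_r \tailarrow v_s$ with $v_r\in\sigma_a$, $v_s\in\sigma_b$, $a<b$. The circuit gives $\sigma_b\subseteq\ant(v_r,G)$.

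For $k=b+1,\dots,m$, let $\beta_k$ be the chord followed by $\omega(v_s,\first(\sigma_k))$. This is a collider walk from $v_r$ to $\first(\sigma_k)$, and its collider sections are $\omega(v_s,\last(\sigma_b)),\sigma_{b+1},\dots,\sigma_{k-1}$. Induct on $k$, with hypothesis that all of these sections are anterior to $v_r$; this holds for $k=b+1$.

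\begin{itemize}
\item If $\first(\sigma_k)\notin\adj(v_r,G)$, then $\beta_k$ is a self-inducing walk, contradicting Theorem~\ref{thm:separable-graph::no-self-inducing-walks}. So $\edge{v_r}{\first(\sigma_k)}$ is a chord of $\omega$.
\item If $k=m$, Lemma~\ref{lem:minimal-inducing-walk:characterization}(ii) gives $v_r\in\ant(v_n,G)$. Then the internal section $\sigma_{m-1}$, being anterior to $v_r$, is anterior to $\sigma_m$. This contradicts Lemma~\ref{lem:separable-graph:minimal-inducing-walk:internal-not-anterial}.
\item If $k<m$, the chord joins two internal vertices and is therefore directed. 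The orientation $v_r\tailarrow\first(\sigma_k)$ would make $\sigma_{k-1}$ anterior to $\sigma_k$, the same contradiction. So the chord is $\first(\sigma_k)\tailarrow v_r$. Since $\sigma_k$ is undirected, this gives $\sigma_k\subseteq\ant(v_r,G)$, and the induction continues.
\end{itemize}

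In the degenerate case $v_r=\first(\sigma_k)$, the hypothesis already makes $\sigma_{k-1}$ anterior to $\sigma_k$, giving the same contradiction. The induction therefore ends in a contradiction at $k=m$, and Step~2 and the decomposition can be dropped entirely.
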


\begin{proof}
    Assume the lemma is not true and let $G$ be a separable graph containing
    a \miw\ with a chord between internal sections that are part of a semi-directed circuit in $G$.
    Let $\omega$ with $\defwalkomega$ and $\defsectionomega$ be a shortest \miw\ in $G$ with a chord between internal sections that are part of a semi-directed circuit in $G$.
    The chord must be directed otherwise $\omega$ would not be a \miw.
    Let chord $\edgevivj$ of $\omega$ be the chord between internal sections and on a semi-directed circuit with the largest jump, that is, such that $|j-i|$ is as large as possible.
    \Wlog, assume that $i<j$ and that edge $\edgevivj$ has an arrowhead at $v_j$. Let $v_i$ be on $\sigma_s$ and $v_j$ be on $\sigma_u$. It must be the case that $1 < s < u < m$.
    Let $l_i = \last(\sigma_i)$ and $f_i=\first(\sigma_i)$.
    In order for $v_1$ and $v_n$ to be separable there must be a discriminated section $\sigma_t$ on $\omega$ otherwise $\omega$ is a self-inducing walk. 

    \proofcase {1} Section $\sigma_t$ is not after $\sigma_s$ on $\omega$,  that is, $1< t \leq s$.
    We consider two subcases. First, $l_t\not\in \adj(v_n,G)$. In this case, then the walk $\omega(l_t, v_n)$ is a \miw\ containing a chord between internal sections that is on a semi-directed circuit that is shorter than $\omega$. This is a contradiction. Second, $l_t\in \adj(v_n,G)$. In this case, by Lemma~\ref{lem:minimal-inducing-walk:characterization}, the edge $\edge{l_t}{v_n}$ must have a tail which implies that $\sigma_t$ is not discriminated. Again we have a contradiction.

    \proofcase{2} Section $\sigma_t$ is not before $\sigma_u$ on $\omega$, that is, $u \leq t < m$. We again consider two subcases.
    First, $f_t\not\in \adj(v_1,G)$. In this case, the walk $\omega(v_1, f_t)$ is a \miw\ containing a chord between internal sections that is on a semi-directed circuit that is shorter than $\omega$. This is a contradiction. Second, $f_t\in \adj(v_1,G)$. In this case, by Lemma~\ref{lem:minimal-inducing-walk:characterization}, the edge $\edge{f_t}{v_1}$ must have a tail at $f_t$ which implies $\sigma_t$ is not a discriminated section. Again we have a contradiction.

    \proofcase{3} Section $\sigma_t$ is between $\sigma_s$ and $\sigma_u$ on $\omega$. If there are multiple discriminated section between $\sigma_s$ and $\sigma_u$ then choose $\sigma_t$ to be the closest to $\sigma_u$. We again consider two subcases.
    First, $l_t \in \adj(v_n,G)$. In this case, the edge
    $\edge{l_t}{v_n}$ must have a tail at $l_t$ by Lemma~\ref{lem:minimal-inducing-walk:characterization} which implies $\sigma_t$ is not a discriminated section. This is a contradiction. Second, $l_t \not\in \adj(v_n,G))$. In this case, the walk $\omega(l_t,v_n)$ must be a self-inducing walk as every collider section between $\sigma_t$ and $\sigma_n$ is not discriminated. This, by Theorem~\ref{thm:separable-graph::no-self-inducing-walks}, contradicts the assumption that $G$ is separable.

    In each case we have a contradiction. Thus, a minimal inducing walk in a separable graph can have no chord between internal sections that is part of a semi-directed circuit.
\end{proof}

\begin{lemma}\label{lem:separable-graph:MIW-edge:exclusive-endmark-on-internal-endpoint}
    In a separable graph, any edge on a minimal inducing walk has an exclusive endmark at any endpoint that is internal to the minimal inducing walk.
\end{lemma}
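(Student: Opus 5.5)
Let $\omega$ be a \miw\ in a separable graph $G$ with $\defwalkomega$ and $\defsectionomega$, and let $\edge{a}{b}$ be an edge of $\omega$ whose endpoint $b$ is internal to $\omega$. Since $\omega$ is a collider walk, $b$ lies on some collider section $\sigma_s$. The plan is to argue by contradiction: suppose $G$ contains a second edge $e'$ between $a$ and $b$ whose endmark at $b$ differs from the endmark of $\edge{a}{b}$ at $b$. Then replace $\edge{a}{b}$ by $e'$ on $\omega$. We split on whether $\edge{a}{b}$ is internal to a section, i.e. undirected with $a$ also on $\sigma_s$, or joins $\sigma_s$ to an adjacent section, in which case it has an arrowhead at $b$.

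\emph{Case 1: $\edge{a}{b}$ lies inside the collider section $\sigma_s$}, so it is undirected and the alternative edge $e'$ has an arrowhead at $b$. If $e'$ is bidirected, replacing the edge splits $\sigma_s$ into two collider sections. The resulting walk has the same vertex sequence and is still an inducing walk, and the section containing $b$ is anterior to $a$. That section would be internal and anterior to an adjacent section, contradicting Lemma~\ref{lem:separable-graph:minimal-inducing-walk:internal-not-anterial}. I expect this to work even though the new walk is not obviously minimal, since it uses the same vertex set and chords are unchanged. If instead $e'$ is $a\tailarrow b$, combining it with the undirected edge $a\tailtail b$ gives a semi-directed circuit through $a$ and $b$. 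Cutting $\sigma_s$ at this edge yields a collider section with a directed edge into it from a vertex of the same original section. I would reduce this to Lemma~\ref{lem:separable-graph:minimal-inducing-walk:no-chord-in-collider-section}-style reasoning: the walk $a\tailarrow b$ followed by the rest of $\omega$ toward the next section gives an unshielded collider trisection that is self-inducing (its collider part is anterior to $a$), contradicting Theorem~\ref{thm:separable-graph::no-self-inducing-walks}.

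\emph{Case 2: $\edge{a}{b}$ joins $\sigma_s$ to an adjacent section containing $a$, with an arrowhead at $b$}, and $e'$ has a tail at $b$. Then $b\in\ant(a,G)$, so the section $\sigma_s$ is anterior to the adjacent section containing $a$. Lemma~\ref{lem:separable-graph:minimal-inducing-walk:internal-not-anterial} states precisely that this is impossible for an internal section of a \miw\ in a separable graph, which settles the case. This covers both the sub-case where $a$ is an endpoint of $\omega$ and the sub-case where $a$ is on another collider section.

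The main obstacle is Case 1. There, the walk obtained after swapping in $e'$ must be shown to still be a legitimate inducing walk to which Lemma~\ref{lem:separable-graph:minimal-inducing-walk:internal-not-anterial} or Theorem~\ref{thm:separable-graph::no-self-inducing-walks} applies, and minimality may fail after splitting a section. If that route stalls, I would fall back on the direct observation behind Lemma~\ref{lem:separable:miw:every-internal-section-discriminated}: $\sigma_s$ is discriminated by some \diw\ in $\iwdecompose(\omega)$. The arrowhead at $b$ makes part of $\sigma_s$ anterior to a neighbour inside that discriminating walk, and Lemma~\ref{lem:non-discriminated-section-of-miw:not-anterior-to-discriminated} then gives the contradiction.
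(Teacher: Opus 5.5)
Your proof is correct and follows the paper's argument: your between-sections case is exactly the paper's appeal to Lemma~\ref{lem:separable-graph:minimal-inducing-walk:internal-not-anterial}, and your directed sub-case is the paper's self-inducing collider trisection. For that trisection you should still cite Lemma~\ref{lem:separable-graph:minimal-inducing-walk:no-chord-between-adjacent-sections} to get unshieldedness. The only difference is that the paper uses the same trisection for a bidirected parallel edge too. Your edge-swap argument there is valid, since the swapped walk has the same vertex sequence and hence the same chords, but it is not needed, and neither is your fallback.
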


\begin{proof}
    Suppose the lemma is not true.
    Let $G$ be a separable graph that contains $\omega$ minimal inducing walk $\omega$ with $\defwalkomega$ such that not all of the internal endmarks on $\omega$ are exclusive. Without loss of generality assume that the edge is $\edgevIJ{i-1}{i}$ for $1 < i \leq n$ and the edge does not have an exclusive endpoint at $v_i$.

    \proofcase{1} $i=2$. In this case, the there must be an $\edgeprime{v_1}{v_2}$ with an tail at $v_2$. This implies that $\sigma_2 \subseteq \ant(\sigma_1,G)$ which contradicts Lemma~\ref{lem:separable-graph:minimal-inducing-walk:internal-not-anterial}.

    \proofcase{2} $i > 2$ and $\edgevIJ{i-1}{i}$ is bidirected.    
    In this case, the there must be an $\edgeprime{v_{i-1}}{v_i}$ with a tail at $v_i$.
    Because the edge is bidirected, the two endpoints are on adjacent sections of $\omega$. These sections are both collider sections on $\omega$. Thus there is an internal section anterial to an adjacent section which  
    contradicts Lemma~\ref{lem:separable-graph:minimal-inducing-walk:internal-not-anterial}.

    \proofcase{3} $i > 2$ and $\edgevIJ{i-1}{i}$ is undirected. In this case, the there must be an $\edgeprime{v_{i-1}}{v_i}$ with an arrowhead at $v_i$.
    Because the edge is undirected, the two endpoints are on the same section $\sigma_a$ of $\omega$. This implies that the walk $\seq{\edgeprime{v_{i-1}}{v_i}}\append \omega(v_i,\first(\sigma_{a+1}))$ is a collider trisection. Furthermore, by Lemma~\ref{lem:separable-graph:minimal-inducing-walk:no-chord-between-adjacent-sections}, the trisection is unshielded an thus minimal inducing walk. This implies, however, that the trisection is a self-inducing walk which contradicts Theorem~\ref{thm:separable-graph::no-self-inducing-walks}.

    In all three cases we have a contradiction and thus the lemma must hold.

\end{proof}

\subsection{Properties of shortest open walks }

Recall that a walk $\omega$ is a \emph{shortest open walk} given $C$ if $\omega$ is open given $C$ and there is no shorter walk that is open given $C$.

\begin{lemma}\label{lem:shortest:no-chord-between-distinct-maximal-treks}
Every shortest open walk has no chords between vertices of non-collider sections on distinct maximal treks.
\end{lemma}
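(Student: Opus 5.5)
We argue by contradiction, using the shortness of the walk. Let $\omega$ with $\defwalkomega$ be a shortest open walk between $i$ and $j$ given $C$. Suppose it has a chord $\edge{v_r}{v_s}$ with $r<s$, where $v_r$ lies on a non-collider section of a maximal trek $\tau$ and $v_s$ lies on a non-collider section of a different maximal trek $\tau'$. Because the treks are distinct, at least one non-trivial maximal collider walk lies strictly between $v_r$ and $v_s$ on $\omega$. Hence $s>r+1$, and the walk
\[
\omega'=\omega(v_1,v_r)\append \seq{\edge{v_r}{v_s}}\append \omega(v_s,v_n)
\]
is strictly shorter than $\omega$. The aim is to show that $\omega'$ is open given $C$, contradicting the choice of $\omega$.

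\emph{Sections away from the chord.} The sections of $\omega'$ that do not contain $v_r$ or $v_s$ are sections of $\omega(v_1,v_r)$ or of $\omega(v_s,v_n)$ with their collider status unchanged. Each such section therefore inherits openness from $\omega$: collider sections are anterior to $\Cij$, and internal non-collider vertices are not in $C$.

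\emph{Sections at the chord endpoints.} Both $v_r$ and $v_s$ lie on non-collider sections of $\omega$, so neither is in $C$ unless it is an endpoint of the walk. By Lemma~\ref{lem:open-walk:strongly-out-of-non-collider-vertices}, $\omega(v_1,v_r)$ and $\omega(v_s,v_n)$ are open given $C$, and at each of $v_r$ and $v_s$ at least one side is strongly out of the vertex. The new sections containing $v_r$ and $v_s$ in $\omega'$ are formed from these subwalks together with the chord. There are two possibilities. If the new section containing $v_r$ is a non-collider section, then it contains no vertex of $C$ except possibly a walk endpoint, so it is open. If it is a collider section, then the chord has an arrowhead at $v_r$ and $\omega(v_1,v_r)$ is weakly into $v_r$. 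Since $v_r$ was a non-collider on $\omega$, $\omega$ must then be strongly out of $v_r$ toward $v_s$. By Lemma~\ref{lem:trek:internal-vertices:anterial-walk}, applied to the trek $\tau$, $v_r$ is then anterior to the far endpoint of $\tau$ on $\omega$. That endpoint is either $v_n$ or a vertex on a collider section of $\omega$, which is anterior to $\Cij$. So the new collider section containing $v_r$ is anterior to $\Cij$, and the section is open. The same argument applies at $v_s$ with the roles of the two sides reversed. A possible merger of undirected pieces into one section is handled the same way, since every vertex of the merged section is either already known to be anterior to $\Cij$ or is a non-collider vertex not in $C$.

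\emph{Main obstacle.} The delicate step is the collider case above. One must check that the anterior walk supplied by Lemma~\ref{lem:trek:internal-vertices:anterial-walk} really runs from $v_r$ toward the endpoint of $\tau$ on the side that was strongly out of $v_r$. One must also handle the case where that endpoint is a collider of a trek adjacent to the removed collider walk, which Lemma~\ref{lem:open-walk-given-C:every-vertex-in-anterior} resolves because every vertex of $\omega$ lies in $\antsetCij$. Once $\omega'$ is shown to be open given $C$, it is a shorter open walk between the same endpoints, and this contradiction proves the lemma.
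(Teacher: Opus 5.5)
Your proposal is correct and follows essentially the same route as the paper: shortcut $\omega$ through the chord to get a strictly shorter walk, then show it is still open given $C$, handling any newly created collider section via the fact that every vertex of an open walk lies in $\antsetCij$ (Lemma~\ref{lem:open-walk-given-C:every-vertex-in-anterior}). The difference is organizational: you split by whether the new sections at $v_r$ and $v_s$ are colliders, while the paper splits by the chord's edge type. Your ``delicate step'' about the direction of the anterior walk is unnecessary, because that lemma already makes every vertex anterior, and a section is an undirected walk, so one anterior vertex makes the whole section, including a merged one, anterior.
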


\begin{proof}
    Let $\omega$ be a shortest open walk between $a$ and $b$ given conditioning set $C$ and
    the walk decomposition of $\omega$ be
    $\seq{\tau_1,\gamma_1,\tau_2,\ldots, \gamma_{n-1},\tau_n}$  with maximal treks $\tau_i$ open given $C$ and maximal collider walks $\gamma_i$ open given $C$.
    
    Assume that the lemma is not true. Let maximal treks 
    $\tau_k$ with $\vertexseq{\tau_k}=\seq{a_1,\ldots, a_m}$ and $\tau_l$ with $\vertexseq{\tau_l}=\seq{b_1,\ldots, b_n}$ be such that there is a chord $\edge{a_i}{b_j}$. Let walk $\omega'=\omega(a_1,a_i) \append \edge{a_i}{b_j} \append \omega(b_j,j)$ be the walk obtained by replacing the subwalk from $a_i$ to $b_j$ in $\omega$ by the chord $\edge{a_i}{b_j}$. 
    We show that for any chord between $a_i$ and $b_j$ the walk $\omega'$ is a shorter open walk than $\omega$ which is a  contradiction.

    If the edge $\edge{a_i}{b_j}$ is undirected then there is a section $\sigma$ on $\omega'$ that contains both $a_i$ and $b_j$. Furthermore, $\sigma$ must be a non-collider on $\omega'$ given $C$ and cannot contain a member of $C$ as all vertices on $\sigma$ are on non-collider sections of $\omega$. In this case, $\omega'$ is a shorter open walk than $\omega$ which is a contradiction.

    If the edge $\edge{a_i}{b_j}$ is directed, we assume, without loss of generality that it is oriented $a_i \tailarrow b_j$. In this, case the section $\sigma$ of $\omega'$ containing $a_i$ must be an open non-collider section given $C$. The section $\sigma'$ of $\omega'$ containing $b_j$ need not be a non-collider section. In the case that it is a non-collider section it is open given $C$ and $\omega'$ is a shorter open walk than $\omega$ which is a contradiction. Next we consider the case in which $\sigma'$ is a collider section on $\omega'$. In this case, the section containing $b_j$ on $\omega$ is a non-collider section which, by Lemma~\ref{lem:open-walk-given-C:every-vertex-in-anterior}, implies that there is an anterior walk $\gamma=\seq{a_1=b_j,\ldots, a_n}$ from $b_j$ to some vertex $a_n\in C \cup \set{i,j}$. In this case, $\omega'$ must be a shorter open walk than $\omega$, which is a contradiction.

    If the edge $\edge{a_i}{b_j}$ is bidirected, $a_i$ and $b_j$ are in separate sections on $\omega'$. As in the case above, if the sections are non-colliders they must be open and if they are colliders, they must be anterior to $C\cup\set{i,j}$. Thus, in any combination, $\omega'$ is a shorter open walk than $\omega$, which is a contradiction.

    Hence, there can be no chord  between vertices of non-collider sections on distinct maximal treks.
\end{proof}

\begin{lemma}\label{lem:shortest:mcw:min-inducing-walk}
    Every maximal collider walk in the walk decomposition of a shortest open walk is a minimal inducing walk.
\end{lemma}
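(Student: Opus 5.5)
We argue by contradiction using the shortest-walk property, in the style of Lemma~\ref{lem:shortest:no-chord-between-distinct-maximal-treks}. Let $\omega$ be a shortest open walk between $i$ and $j$ given $C$, with maximal walk decomposition $\seq{\tau_1,\gamma_1,\tau_2,\ldots,\gamma_{n-1},\tau_n}$. Fix a non-trivial maximal collider walk $\gamma=\gamma_k$ with $\vertexseq{\gamma}=\seq{u_1,\ldots,u_r}$. By definition every internal vertex of $\gamma$ lies on a collider section of $\gamma$, and hence on a collider section of $\omega$. So $\gamma$ is a collider walk, and two things remain to show: (a) $u_1$ and $u_r$ are not adjacent, so $\gamma$ is inducing; and (b) $\gamma$ is minimal.

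\textbf{Step (a).} Suppose there is an edge $e=\edge{u_1}{u_r}$. Form $\omega'$ by replacing the subwalk $\gamma$ of $\omega$ with the single edge $e$. Then $\omega'$ is strictly shorter, since $\gamma$ has at least one internal vertex. The endpoints $u_1,u_r$ lie on non-collider sections of $\omega$, since they are endpoints of the adjacent maximal treks. Hence they are not in $C$ unless they equal $i$ or $j$, and by Lemma~\ref{lem:open-walk-given-C:every-vertex-in-anterior} they lie in $\antsetCij$. Now check each section of $\omega'$ containing $u_1$ or $u_r$ according to the endmarks of $e$:
\begin{itemize}
\item If the section is a non-collider, it contains no vertex of $C$, because all its vertices came from non-collider sections of $\omega$.
\item If the section becomes a collider, it is anterior to $\Cij$, because its vertices are in $\antsetCij$.
\end{itemize}
All other sections are unchanged, so $\omega'$ is open given $C$. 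This contradicts shortness. This is the same bookkeeping as the directed and bidirected cases of Lemma~\ref{lem:shortest:no-chord-between-distinct-maximal-treks}.

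\textbf{Step (b).} Suppose $\gamma$ is not minimal. By Lemma~\ref{lem:minimal-inducing-walk:characterization}, $\gamma$ has a chord $\edge{u_s}{u_t}$, $s<t-1$, of one of two kinds: a non-directed chord between internal vertices, or a chord between an endpoint and an internal vertex with an arrowhead at the internal vertex. Alternatively, there is an inducing walk $\gamma'$ between $u_1$ and $u_r$ on a subset of the vertices of $\gamma$ that is shorter than $\gamma$; one can be extracted by shortcutting along such a chord. Replace $\gamma$ in $\omega$ by $\gamma'$. Every internal vertex of $\gamma'$ is a vertex of $\gamma$, hence lies on a collider section of $\omega$ and so is in $\antsetCij$. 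Since $\gamma'$ is a collider walk, each of its internal vertices is on a collider section of the new walk. That section is open because every vertex in it is anterior to $\Cij$; this is precisely why the paper's definition of openness, which uses anteriority rather than membership, is convenient here. At the junctions with $\tau_k$ and $\tau_{k+1}$, the walk $\gamma'$ is into $u_1$ and $u_r$ exactly as $\gamma$ is required to be. So the sections of the treks keep their collider or non-collider status, up to the endmark cases treated as in Step (a). The new walk is open and shorter, again a contradiction.

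\textbf{Main obstacle.} The delicate part is the junction analysis in Step (b), in case (ii) of Lemma~\ref{lem:minimal-inducing-walk:characterization}, when the chord uses a different edge type at the endpoint $u_1$. If a tail at $u_1$ is replaced by an arrowhead, $u_1$ may merge into a collider section together with part of $\tau_k$. The openness of that merged section then needs every vertex of it to be in $\antsetCij$. I would establish this through Lemma~\ref{lem:open-walk-given-C:every-vertex-in-anterior}, applied to the original walk $\omega$, together with the observation that non-collider vertices of $\omega$ are not in $C$ except possibly $i$ or $j$. Any endpoint issue would be dealt with via Lemma~\ref{lem:open-walk-given-C-and-possible-endpoints::open-walk-given-C-without-endpoints}.
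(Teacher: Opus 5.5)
Your proposal is correct and follows the paper's proof. The paper obtains non-adjacency of the endpoints by citing Lemma~\ref{lem:shortest:no-chord-between-distinct-maximal-treks}, whose chord-replacement argument you re-derive in Step (a), and it settles minimality in one line ("otherwise there would be a shorter open walk"), which your Step (b) and junction analysis spell out. One small slip: $\gamma$ need not be into its endpoints $u_1,u_r$, only into $u_2$ and $u_{r-1}$; but the possible change of endmark at $u_1$ or $u_r$ is exactly the case your last paragraph handles via Lemma~\ref{lem:open-walk-given-C:every-vertex-in-anterior}, so the argument stands.
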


\begin{proof}
    Let $\gamma$ be a maximal collider walk on a shortest open walk $\omega$. Let $\tau$ and $\tau'$ be the adjacent maximal treks adjacent to $\gamma$ on $\omega$. The endpoints $\gamma$ are both on non-collider section of distinct maximal treks of $\omega$. From Lemma~\ref{lem:shortest:no-chord-between-distinct-maximal-treks}, there can be no chord between the endpoints and thus $\gamma$ must be an inducing walk. Furthermore, the walk $\gamma$ must be minimal or there would be a shorter open walk than $\omega$. 
\end{proof}

\begin{lemma}\label{lem:shortest:vertex-on-non-collider-section:only-on-one-non-collider-section}
    A vertex on a non-collider section of a shortest open walk given $C$ appears only in one non-collider section on the walk.
\end{lemma}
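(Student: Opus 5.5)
We argue by contradiction, in the same shortening style as Lemma~\ref{lem:shortest:no-chord-between-distinct-maximal-treks}. Let $\omega$ with $\defwalkomega$ and $\defsectionomega$ be a shortest open walk between $i$ and $j$ given $C$. Suppose some vertex $b$ lies on two distinct non-collider sections of $\omega$, say $b=v_p$ on $\sigma_s$ and $b=v_q$ on $\sigma_u$ with $p<q$ and $s<u$. The plan is to splice $\omega$ at the two occurrences of $b$, i.e.\ form
\[
\omega'=\omega(v_1,v_p)\append \omega(v_q,v_n),
\]
and show that $\omega'$ is open given $C$. Since $q>p$, the walk $\omega'$ is strictly shorter than $\omega$, which contradicts the choice of $\omega$.

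First, I dispose of the endpoints. If $b$ is an endpoint, say $b=v_1=i$, then $\omega(v_q,v_n)$ is already a walk between $i$ and $j$. By Lemma~\ref{lem:open-walk:strongly-out-of-non-collider-vertices} (with $v_q$ internal on a non-collider section, or trivially if $v_q=v_n$) this subwalk is open given $C$, and it is shorter than $\omega$, a contradiction. The case $b=v_n$ is symmetric. So $b$ is internal at both occurrences, and hence $b\notin C$, because $\omega$ is open.

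Next, Lemma~\ref{lem:open-walk:strongly-out-of-non-collider-vertices}, applied at $v_p$ and at $v_q$, shows that both $\omega(v_1,v_p)$ and $\omega(v_q,v_n)$ are open given $C$.

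The key step is to rule out that $b$ lands on a collider section of $\omega'$. By Lemma~\ref{lem:compose:strongly-outof}, it suffices that one of the two pieces is strongly out of $b$. The same lemma only gives that, at each occurrence, one of the two sides is strongly out of $b$, and this need not be the side we keep. This is the main obstacle.

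To handle it, I would use the freedom to choose which pair of occurrences to splice: any two occurrences on non-collider sections will do. If $\omega(v_q,v_n)$ is strongly out of $v_q$, or $\omega(v_1,v_p)$ (read backwards) is strongly out of $v_p$, we are done. Otherwise $\omega(v_1,v_p)$ is weakly into $b$, so $\omega(v_p,v_n)$ is strongly out of $v_p$. Likewise $\omega(v_q,v_n)$ is weakly into $b$, so $\omega(v_1,v_q)$ is strongly out of $v_q$. Then the sub-walk $\omega(v_p,v_q)$ is a closed walk from $b$ to $b$ that is strongly out of $b$ at both ends.

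From here I would show there is a genuinely shorter open walk in one of two ways. One option is to splice the other way, keeping the strongly-out orientations, and use the fact that sections of $\omega(v_p,v_q)$ other than the ones containing $b$ keep their collider status. The other option is to use the trek structure: the non-collider section at $v_p$ lies on a maximal trek $\tau$, and by Lemma~\ref{lem:trek:internal-vertices:anterial-walk} $b$ is anterior to an endpoint of $\tau$, which is either a walk endpoint or a collider section anterior to $C\cup\set{i,j}$. That anterior walk lets $b$ serve as an open collider in $\omega'$ if it becomes one, since collider sections only need to be anterior to $C\cup\set{i,j}$ in the open-walk definition.

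I expect the second option to be the cleanest. Because $\omega$ is open, $b$ is in $\antsetCij$ by Lemma~\ref{lem:open-walk-given-C:every-vertex-in-anterior}. So even if the merged section containing $b$ in $\omega'$ is a collider section, it is anterior to $\Cij$, provided every other vertex merged into that section is also in $\antsetCij$, which again holds by Lemma~\ref{lem:open-walk-given-C:every-vertex-in-anterior}. And if the merged section is a non-collider section, it contains no vertex of $C$, because its vertices come from non-collider sections of $\omega$. Hence $\omega'$ is open in every case, giving the contradiction.
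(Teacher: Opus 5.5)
Your proposal is correct and is the same argument as the paper's: splice $\omega$ at the two occurrences of $b$ to get the strictly shorter walk $\omega(v_1,v_p)\append\omega(v_q,v_n)$, which is open given $C$. The paper only asserts this openness, and your final step supplies the missing justification: by Lemma~\ref{lem:open-walk-given-C:every-vertex-in-anterior} the merged section is anterior to $\Cij$ if it becomes a collider, and has no internal vertex in $C$ otherwise, while every other section keeps its status. With that in hand, the detour through Lemma~\ref{lem:compose:strongly-outof} is unnecessary.
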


\begin{proof}
    Suppose the lemma is not true. Let $G$ be a graph that contains a walk $\omega$ with $\defwalkomega$ and $\defsectionomega$ that is open given $C$  such that there are vertices $v_i$ on non-collider section $\sigma_h$ and $v_k$ on non-collider section $\sigma_j$ on $\omega$ such that $v_i = v_k$. \Wlog\ assume $i < k$ which implies $h \leq j$.
    The walk $\omega(v_1,v_i) \append \omega(v_j,v_n)$ is a shorter open walk given $C$ and thus we have a contradiction and the lemma must be true.
\end{proof}

\subsection{Proofs of Theorems~\ref{thm:weakly-separable::essentially-acyclic} and \ref{thm:simplify:separable-graph:equivalent-anterial-graph}}

In order to prove the theorems we prove a series of lemmas relating to the \simplify\ algorithm.

\begin{lemma}\label{lem:simplify:simple}
    The graph $\simplify(G)$ is a simple graph.
\end{lemma}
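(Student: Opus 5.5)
The plan is to read the claim directly off the construction in Algorithm~\ref{alg:simplify}. A graph is simple when, for every pair of distinct vertices $i,j$, at most one edge connects them. So I will show that the output edge set $E'$ contains at most one edge between any pair $\set{i,j}$.

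The key observation is that the edge type added to $E'$ for an edge $\edge{i}{j}\in E$ depends only on two Boolean facts: whether $i\in\ant(j,G)$ and whether $j\in\ant(i,G)$. It does not depend on the type of the original edge. The four branches of the conditional are mutually exclusive and exhaustive for these two facts, so each pair $\set{i,j}$ determines exactly one edge type.

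I also need this type to be independent of the orientation in which the loop visits the edge, i.e.\ as $\edge{i}{j}$ or as $\edge{j}{i}$. Swapping the roles of $i$ and $j$ behaves as follows:
\begin{itemize}
\item the first branch ($i \tailtail j$) maps to itself;
\item the last branch ($i\arrowarrow j$) maps to itself;
\item the middle two branches are exchanged, and $i\tailarrow j$ added from $\edge{i}{j}$ is the same edge as $j\arrowtail i$ added from $\edge{j}{i}$.
\end{itemize}
Since $E'$ is a set, every edge of $E$ between $i$ and $j$, whatever its type (there may be up to four), contributes the identical element of $\edgeset$ to $E'$.

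Some bookkeeping remains. The symmetric edge types are stored in both directions in $E$, so for those I should treat $E'$ as symmetrized, i.e.\ read $i\tailtail j$ and $j \tailtail i$ as one edge. No loops are introduced, because $E$ has none. This gives at most one edge per adjacent pair and none per non-adjacent pair, so $\simplify(G)$ is simple, and it has the same adjacencies as $G$.

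The only step needing any care is the orientation-independence check just described. That check is immediate from the symmetry of the conditions $i\in\ant(j,G)$ and $j\in\ant(i,G)$, so I expect no real obstacle.
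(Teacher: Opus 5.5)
Your proof is correct and follows the same route as the paper. In both, the endmarks of the edge added for a pair $\set{i,j}$ depend only on whether $i\in\ant(j,G)$ and $j\in\ant(i,G)$, so all parallel edges of $G$ between $i$ and $j$ collapse to a single edge in $\simplify(G)$. Your orientation-independence check and your remark about symmetric edges stored as two triples make explicit what the paper's one-line argument leaves implicit.
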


\begin{proof}
    Let $G$ be a graph and let $H=\simplify(G)$. If there is no edge between $a$ and $b$ in $G$ then there is no edge between $a$ and $b$ in $H$. If there are one or more edges between $i$ and $j$ in $G$ then only one edge is added to $H$ because the choice of edge endmarks is uniquely determined by the existence of anterior walks between $i$ and $j$ in $G$.
\end{proof}

A graph $G$ satisfies the \emph{anterial endmark property} if the endmarks of every edge $\edgeab$ in $G$ is such that there is a tail at $a$ on $\edgeab$ if and only if there is an anterior walk from $a$ to $b$ in $G$.

\begin{lemma}\label{lem:simple:anterial-graph::anterial-endmark}
    A simple graph $G$ is an anterial graph if and only if the graph satisfies the anterial endmark property. 
\end{lemma}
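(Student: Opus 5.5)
We prove the two directions separately, working directly from the definitions. An anterial graph is a simple graph with (1) no semi-directed circuit and (2) no bidirected edge $i \arrowarrow j$ with $i\in\ant(j,G)$ or $j\in\ant(i,G)$. The anterial endmark property says: for every edge $\edgeab$, there is a tail at $a$ if and only if $a\in\ant(b,G)$ via an anterior walk from $a$ to $b$. Throughout, $G$ is simple, so each adjacent pair has a unique edge.

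For the forward direction, assume $G$ is anterial and fix an edge $\edgeab$. If the edge has a tail at $a$, it is either $a\tailtail b$ or $a\tailarrow b$, and the edge itself is an anterior walk from $a$ to $b$. Conversely, suppose there is an anterior walk $\gamma$ from $a$ to $b$ but the edge has an arrowhead at $a$. There are two cases.
\begin{itemize}
\item If the edge is $a \arrowarrow b$, then $a\in\ant(b,G)$ contradicts condition (2).
\item If the edge is $b\tailarrow a$, then $\gamma$ followed by this edge is an anterior circuit from $a$ back to $a$ containing the directed edge $b\tailarrow a$. This is a semi-directed circuit, contradicting condition (1).
\end{itemize}
Hence the tail at $a$ exists exactly when there is an anterior walk from $a$ to $b$.

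For the reverse direction, assume $G$ is simple and satisfies the anterial endmark property.
\begin{itemize}
\item For condition (2), take any edge $i\arrowarrow j$. Its endmarks at $i$ and $j$ are both arrowheads, so the property forces $i\notin\ant(j,G)$ and $j\notin\ant(i,G)$.
\item For condition (1), suppose there is a semi-directed circuit. It contains a directed edge $u\tailarrow w$. The rest of the circuit, traversed from $w$ back to $u$, is an anterior walk from $w$ to $u$; if the circuit is just this one edge, then $w=u$, which is impossible because $G$ has no loops. So $w\in\ant(u,G)$, and the property forces a tail at $w$ on the unique edge between $u$ and $w$. But that edge is $u\tailarrow w$, which has an arrowhead at $w$, a contradiction.
\end{itemize}
So $G$ is anterial.

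The only delicate point is the circuit bookkeeping: the remainder of a semi-directed circuit must itself be an anterior walk from $w$ to $u$. This holds because sub-walks of an anterior walk are anterior and the orientations are preserved along the traversal direction. Simplicity is used to identify "the" edge in the property with the edge on the circuit; without it, a second parallel edge could carry a different endmark.
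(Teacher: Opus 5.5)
Your proof is correct and takes essentially the paper's route. In the forward direction you split on whether the offending edge is $a \arrowarrow b$ (violating condition (2)) or directed into $a$ (closing a semi-directed circuit). In the reverse direction you read both anterial conditions off the endmark property, as the paper does. You are a bit more explicit than the paper about the trivial half of the biconditional (a tail at $a$ gives the one-edge anterior walk). You are also more explicit about splitting a semi-directed circuit into a directed edge $u\tailarrow w$ plus an anterior walk from $w$ back to $u$; the paper leaves both steps implicit.
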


\begin{proof}
    For the forward direction, suppose $G$ is an anterial graph.
    Assume that the anterial endmark property does not hold. In this case there is an edge $\edgeab$ with an arrowhead at $b$ and there is an anterior walk $\omega$ from $b$ to $a$. We consider two cases. First, $\edgeab$ has an arrowhead at $a$. In this case, $a \arrowarrow b$ and $b \in \ant(a,G)$ which imply that $G$ is not anterial. Second, $\edgeab$ has a tail at $a$. In this case, the walk $\omega \append \seq{\edgeab}$ is a semi-directed circuit and again $G$ is not anterial. In either case we have a contradiction so the forward direction holds.
    
    For the backward direction, let $G$ be a simple graph and assume that the graph satisfies the anterial endmark property but is not anterial. We consider two cases. First there is an edge $a \arrowarrow b$ and an anterior walk $\omega$ from $b$ to $a$ in $G$. In this case, the graph $G$ does not satisfy the anterial endmark property. Second there is an edge $a \tailarrow b$ and an anterior walk $\omega$ from $b$ to $a$ in $G$. In this case, the graph $G$ does not satisfy the anterial endmark property. In either case we have a contradiction and the backward direction holds.    
\end{proof}

\begin{lemma}\label{lem:simplify:anterial-graph}
    If $G$ is a graph then $\simplify(G)$ is an anterial graph.
\end{lemma}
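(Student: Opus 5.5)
The plan is to reduce the statement to the two lemmas just established: Lemma~\ref{lem:simplify:simple} and Lemma~\ref{lem:simple:anterial-graph::anterial-endmark}. Let $H=\simplify(G)$. By Lemma~\ref{lem:simplify:simple}, $H$ is simple, so by Lemma~\ref{lem:simple:anterial-graph::anterial-endmark} it suffices to verify the anterial endmark property in $H$. That is, for every edge $\edgeab$ of $H$ we must show that it has a tail at $a$ if and only if $a\in\ant(b,H)$.

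The key intermediate claim is that $H$ and $G$ have the same anterior relations: $\ant(\cdot,H)=\ant(\cdot,G)$. I would prove the two inclusions edge by edge, since anterior walks are concatenations of edges traversed out of a tail.

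First, suppose $H$ has a tail at $a$ on $\edgeab$. By construction of Algorithm~\ref{alg:simplify}, a tail at $a$ is placed only when $a\in\ant(b,G)$. So every edge of $H$ traversed in anterior fashion, from $x$ to $y$ with a tail at $x$ (either $x\tailtail y$ or $x\tailarrow y$), has $x\in\ant(y,G)$. Concatenating along an anterior walk in $H$ and using transitivity of anterior walks, as in Proposition~\ref{lem:ant-property}, gives $\ant(b,H)\subseteq\ant(b,G)$.

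Conversely, consider an anterior walk in $G$ from $x$ to $y$. Each of its edges $\edge{u}{w}$ is traversed with a tail at $u$, so $u\in\ant(w,G)$ via that single edge. The algorithm therefore puts a tail at $u$ on the unique $H$-edge between $u$ and $w$. If $w\in\ant(u,G)$ as well, that edge is $u\tailtail w$; otherwise it is $u\tailarrow w$. In either case it can be traversed from $u$ to $w$ in anterior fashion, so the same vertex sequence is an anterior walk in $H$. Hence $\ant(\cdot,G)\subseteq\ant(\cdot,H)$.

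With equality of anterior sets in hand, the anterial endmark property follows. $H$ has a tail at $a$ on $\edgeab$ iff $a\in\ant(b,G)$ by the construction, iff $a\in\ant(b,H)$ by the claim. Lemma~\ref{lem:simple:anterial-graph::anterial-endmark} then yields that $H$ is anterial.

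The main subtlety is the phrase ``anterior walk from $a$ to $b$'' in the anterial endmark property. It must be read with $a\neq b$ and via $\ant$, and the single-edge case must be handled cleanly. I would check that case against the definition of $\ant$, which contains the set itself, so that the equivalences are not vacuous. Beyond that, the argument is routine bookkeeping of endmarks.
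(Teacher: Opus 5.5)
Your proof is correct and follows the same route as the paper: simplicity from Lemma~\ref{lem:simplify:simple}, then the anterial endmark property, then Lemma~\ref{lem:simple:anterial-graph::anterial-endmark}. Your explicit check that $\ant(\cdot,G)=\ant(\cdot,\simplify(G))$ is worth keeping. The endmark property refers to anterior walks in $\simplify(G)$, while the algorithm tests walks in $G$. The paper's one-line argument passes over this point and proves the equality only afterwards, as Lemma~\ref{lem:anterial-walk-in-graph::anterial-walk-in-anterialize}.
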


\begin{proof}
    From Lemma~\ref{lem:simplify:simple}, the graph $\simplify(G)$ is simple.
    Graph $\simplify(G)$ satisfies the anterial endmark property because every edge $\edgeab$ added to the the graph $\simplify(G)$ has a tail at $a$ if and only if there is an anterior walk from $a$ to $b$.
    Thus, by Lemma~\ref{lem:simple:anterial-graph::anterial-endmark}, the graph $\simplify(G)$ is an anterial graph.
\end{proof}

\begin{lemma}\label{lem:simplify:same-adjacencies}
    The graph $G$ and $\simplify(G)$ have the same adjacencies.
\end{lemma}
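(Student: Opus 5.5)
The claim that $G$ and $\simplify(G)$ have the same adjacencies follows by reading off Algorithm~\ref{alg:simplify} directly, and the plan is to prove the two inclusions separately.

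\emph{Adjacent in $G$ implies adjacent in $\simplify(G)$.} Suppose $a$ and $b$ are adjacent in $G$, so some edge $\edge{a}{b}\in E$. The loop of the algorithm processes this edge. The four branches of the conditional are exhaustive: both $a\in\ant(b,G)$ and $b\in\ant(a,G)$, only the first, only the second, or neither. Every branch adds to $E'$ some edge with endpoints $a$ and $b$, namely $a \tailtail b$, $a \tailarrow b$, $a \arrowtail b$, or $a \arrowarrow b$. Hence $a$ and $b$ are adjacent in $\simplify(G)$.

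\emph{Adjacent in $\simplify(G)$ implies adjacent in $G$.} The set $E'$ starts empty. Every edge placed in it is inserted while the loop processes some edge $\edge{i}{j}\in E$, and the inserted edge has exactly the endpoints $i$ and $j$. So any edge of $\simplify(G)$ between $a$ and $b$ comes from an edge of $G$ between $a$ and $b$, and the two vertices are adjacent in $G$.

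No step here is a real obstacle. The only point needing care is that the conditional is exhaustive, so that no edge of $G$ is skipped. Multiple edges of $G$ between the same pair are harmless: they all produce the same single edge of $\simplify(G)$, as in Lemma~\ref{lem:simplify:simple}, and this does not affect adjacency.
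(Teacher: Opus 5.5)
Your proof is correct and takes the same approach as the paper's: both read Algorithm~\ref{alg:simplify} directly. Every edge of $G$ produces an edge of $\simplify(G)$ with the same endpoints, since the branches are exhaustive, and no edge of $\simplify(G)$ arises except from an edge of $G$ between the same pair.
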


\begin{proof}
    Let $G$ be a graph and let $H=\simplify(G)$. 
    The \simplify\ algorithm results in a graph $\simplify(G)$ with the same adjacencies as the input graph $G$ due to the fact that for every edge $\edgeab$ in $G$ some edge $\edgeprime{a}{b}$ is added to $H$ and if there is no edge $\edgeab$ in $G$ then no edge is added between $a$ and $b$ in $H$.
\end{proof}

\begin{lemma}\label{lem:simplify:tail:no-arrowhead}
    If there is an edge $\edgeab$ in $G$ that has a tail at $a$ then
    there is no edge $\edgeprime{a}{b}$ in graph $\simplify(G)$ with an arrowhead at $a$.
\end{lemma}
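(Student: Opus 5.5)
This follows directly from how the \simplify\ algorithm chooses endmarks, together with the fact that a tail on an edge yields an anterior walk. Let $H=\simplify(G)$ and suppose $\edgeab$ in $G$ has a tail at $a$. By Lemma~\ref{lem:simplify:simple}, $H$ is simple, so there is exactly one edge $\edgeprime{a}{b}$ between $a$ and $b$ in $H$. It therefore suffices to show that this edge has a tail at $a$.

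First I would show that $a\in\ant(b,G)$. The edge $\edgeab$ has a tail at $a$, so it is either $a\tailtail b$ or $a\tailarrow b$. In both cases the one-edge walk consisting of $\edgeab$ is an anterior walk from $a$ to $b$: it uses only undirected and directed edges, and the directed edge is oriented from $a$ to $b$. Hence $a\in\ant(b,G)$.

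Next I would inspect the branches of Algorithm~\ref{alg:simplify} for any edge connecting $a$ and $b$. Under either traversal order of the pair, the condition $a\in\ant(b,G)$ puts us in one of two branches.
\begin{itemize}
\item If also $b\in\ant(a,G)$, the algorithm adds $a\tailtail b$.
\item Otherwise, the algorithm adds the directed edge with its tail at $a$, namely $a\tailarrow b$.
\end{itemize}
The branches producing an arrowhead at $a$ are the one that requires $b\in\ant(a,G)$ and $a\notin\ant(b,G)$, and the bidirected branch that requires $a\notin\ant(b,G)$. Both are excluded. So every edge the algorithm adds between $a$ and $b$ has a tail at $a$. By simplicity there is only one such edge, so no edge $\edgeprime{a}{b}$ in $H$ has an arrowhead at $a$.

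The only step needing care is the orientation bookkeeping when the loop processes the edge as $\edge{b}{a}$ rather than $\edge{a}{b}$, since the algorithm's branches refer to $i$ and $j$. I would handle this by noting that the endmark assigned at each endpoint $x$ depends only on whether $x\in\ant(y,G)$ for the other endpoint $y$, which makes the branch choice symmetric in the naming of the endpoints.
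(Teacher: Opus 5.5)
Your proposal is correct and matches the paper's proof: a tail at $a$ gives $a\in\ant(b,G)$, and this forces \simplify\ to produce $a\tailtail b$ or $a\tailarrow b$ between $a$ and $b$. Your check of both traversal orders is accurate. The appeal to simplicity is harmless but not needed, since you already show that every edge added between $a$ and $b$ has a tail at $a$.
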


\begin{proof}
    Let $G$ be a graph containing an $\edgeab$ with a tail at $a$. In this case $a\in \ant(b,G)$. This implies that the $\edgeprime{a}{b}$ in $\simplify(G)$ must be either $a \tailtail b$ or $a \tailarrow b$.
\end{proof}

\begin{lemma}\label{lem:anterial-walk-in-graph::anterial-walk-in-anterialize}
    There is an anterior walk from $v_1$ to $v_n$ in graph $G$ if and only if there is an anterior walk from $v_1$ to $v_n$ in $\simplify(G)$.
\end{lemma}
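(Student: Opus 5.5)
Both directions go by induction on the number of edges of a walk: I show that every single edge of an anterior walk in one graph can be replaced by an anterior walk with the same endpoints in the other graph. Concatenating these replacement walks then gives the result, since appending anterior walks yields an anterior walk.

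\emph{Forward direction.} Let $\omega$ be an anterior walk from $v_1$ to $v_n$ in $G$, and consider one of its edges $\edge{v_i}{v_{i+1}}$. That edge is either undirected or directed $v_i \tailarrow v_{i+1}$, so in either case it has a tail at $v_i$ and $v_i\in\ant(v_{i+1},G)$. By Lemma~\ref{lem:simplify:same-adjacencies}, $\simplify(G)$ contains an edge $\edgeprime{v_i}{v_{i+1}}$. Because $v_i\in\ant(v_{i+1},G)$, the algorithm makes this edge either $v_i\tailtail v_{i+1}$ or $v_i\tailarrow v_{i+1}$ (this is also Lemma~\ref{lem:simplify:tail:no-arrowhead}). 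Either type is a legal step of an anterior walk from $v_i$ to $v_{i+1}$. Using these edges in order gives an anterior walk from $v_1$ to $v_n$ in $\simplify(G)$.

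\emph{Backward direction.} Let $\omega'$ be an anterior walk from $v_1$ to $v_n$ in $\simplify(G)$, and take one of its edges $\edgeprime{v_i}{v_{i+1}}$. This edge is $v_i\tailtail v_{i+1}$ or $v_i\tailarrow v_{i+1}$, so it has a tail at $v_i$. In Algorithm~\ref{alg:simplify}, a tail at $v_i$ is placed exactly when $v_i\in\ant(v_{i+1},G)$. So there is an anterior walk $\gamma_i$ from $v_i$ to $v_{i+1}$ in $G$; if $v_i=v_{i+1}$ the walk is trivial, but that cannot happen here because graphs have no loops. Concatenating the walks $\gamma_i$ gives an anterior walk from $v_1$ to $v_n$ in $G$, using the transitivity of anteriority recorded in Proposition~\ref{lem:ant-property}.

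The only delicate point is the bookkeeping of the algorithm's case split. When the algorithm processes an edge written as $\edge{i}{j}$, I must check that the tail placed at each endpoint really corresponds to the anteriority condition at that same endpoint. Reading the four branches confirms this: the tail at $i$ is placed iff $i\in\ant(j,G)$, and the tail at $j$ is placed iff $j\in\ant(i,G)$. This is the anterial endmark correspondence already used in the proof of Lemma~\ref{lem:simplify:anterial-graph}. Apart from this check, the argument is routine.
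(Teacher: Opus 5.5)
Your proof is correct and follows the paper's argument essentially step for step. In the forward direction, same adjacencies together with the fact that a tail at $v_i$ in $G$ forces a tail at $v_i$ in $\simplify(G)$ (Lemma~\ref{lem:simplify:tail:no-arrowhead}) show that the walk with the same vertex sequence is anterior. In the backward direction, each edge of the anterior walk in $\simplify(G)$ is replaced by an anterior walk from $v_i$ to $v_{i+1}$ in $G$, which the algorithm's tail rule guarantees, and these walks are concatenated.
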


\begin{proof}
For the first direction, let $\gamma$ be an anterior walk from $v_1$ to $v_n$ in $G$. From Lemma~\ref{lem:simplify:same-adjacencies}, we know that there is a walk $\omega$ in $\simplify(G)$ with the same vertex sequence. Furthermore, from Lemma~\ref{lem:simplify:tail:no-arrowhead} the walk $\omega$ must also be an anterior walk as every undirected edge in $G$ remains an undirected edge in $\simplify(G)$ and every directed edge in $G$ is either an undirected or directed edge in $\simplify(G)$.

    For the other direction, let $\omega$ be an anterior walk in $\simplify(G)$ with $\defwalkomega$. Associate with each edge $\edge{v_i}{v_{i+1}}$ on $\omega$ an anterior walk on $\gamma_i$; one must exist due to the fact that an edge $\edgevivj$ in $\simplify(G)$ has a tail at $v_i$ if and only if there is an anterior walk from $v_i$ to $v_j$ in $G$. The walk $\gamma=\gamma_1 \append \cdots \append \gamma_{n-1}$ is an anterior walk from $v_1$ to $v_n$ in $G$.
\end{proof}

\begin{lemma}\label{lem:separable-graph:miw:equivalent-walk-in-simplify}
    If separable graph $G$ contains \miw\ $\omega$ then there is a walk $\omega'$ in $\simplify(G)$ such that $\omega' \equiv \omega$.
\end{lemma}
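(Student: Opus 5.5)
The plan is to take $\omega'$ to be the walk in $\simplify(G)$ with the same vertex sequence as $\omega$. Write $H=\simplify(G)$ and $\defwalkomega$, $\defsectionomega$. By Lemma~\ref{lem:simplify:same-adjacencies} every pair of consecutive vertices on $\omega$ is adjacent in $H$. By Lemma~\ref{lem:simplify:simple} the edge joining them is unique, so $\omega'$ is well defined and $\vertexseq{\omega'}=\vertexseq{\omega}$. Since $\omega$ is a collider walk and collider walks are compared by vertex and section sequences, the remaining task is to show $\sectionseq{\omega'}=\sectionseq{\omega}$ and that $\omega'$ is a collider walk. We also need $\omega'$ to be an inducing walk. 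That follows immediately: $v_1$ and $v_n$ are non-adjacent in $G$, hence non-adjacent in $H$ by Lemma~\ref{lem:simplify:same-adjacencies}.

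The key step is to show that each edge of $\omega$ keeps its endmarks at internal vertices when passed to $H$. Take an edge $\edge{v_k}{v_{k+1}}$ on $\omega$ and an internal endpoint $v_k$ (so $k>1$).

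If the edge is undirected, then $v_k$ and $v_{k+1}$ lie in the same collider section $\sigma_a$. In that case $v_k\in\ant(v_{k+1},G)$ and $v_{k+1}\in\ant(v_k,G)$, so \simplify\ makes the edge undirected in $H$.

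If the edge has an arrowhead at the internal vertex $v_k$, then by Lemma~\ref{lem:separable-graph:MIW-edge:exclusive-endmark-on-internal-endpoint} the arrowhead is exclusive. We need $v_k\notin\ant(v_{k+1},G)$, so that $H$ also has an arrowhead at $v_k$. Now $v_{k+1}$ lies in a section adjacent to the section of $v_k$. Any anterior walk from $v_k$ to $v_{k+1}$, prefixed by the undirected walk inside $v_k$'s section, would make that whole section anterior to the adjacent section. This is forbidden by Lemma~\ref{lem:separable-graph:minimal-inducing-walk:internal-not-anterial}. If $v_{k+1}$ is an endpoint, the same lemma applies with the terminal section as the adjacent section.

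By the same argument the two endpoints of an edge inside one section are mutually anterior, and edges between distinct sections keep their arrowheads at every internal endpoint. Hence the section sequence is preserved, every internal section stays a collider section, and $\omega'$ is a collider walk with $\omega'\equiv\omega$.

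The step I expect to be the main obstacle is the arrowhead case, specifically justifying that an anterior walk from a single vertex $v_k$ makes its entire section anterior. The fix is to prepend the undirected walk within the section. A second subtle point is the endmark at a walk endpoint. Because the equivalence only compares vertex and section sequences, this endmark does not matter, but I would still check that a tail appearing at an endpoint in $H$ cannot merge sections. It cannot, since sections are determined only by undirected edges. Finally, if needed I would confirm that $\omega'$ is minimal in $H$ using Lemma~\ref{lem:minimal-inducing-walk:characterization} together with Lemma~\ref{lem:anterial-walk-in-graph::anterial-walk-in-anterialize}, although the statement only requires equivalence.
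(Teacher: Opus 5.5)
Your proposal is correct and follows the paper's proof. In both, $\omega'$ is the walk in $\simplify(G)$ with the same vertex sequence; undirected edges inside a section stay undirected by mutual anteriority; and Lemma~\ref{lem:separable-graph:minimal-inducing-walk:internal-not-anterial} shows that every arrowhead at an internal vertex survives. Prepending the undirected walk within the section, so that $v_k\in\ant(v_{k+1},G)$ makes the whole section anterior, fills in a step the paper leaves implicit; the appeal to the exclusive-endmark lemma is harmless but not needed.
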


\begin{proof}
    Let graph $G$ contains a \miw\ $\omega$ with $\defwalkomega$ and let $G'=\simplify(G)$.

    From Lemma~\ref{lem:simplify:same-adjacencies} there is a walk $\omega'$ in $G'$ such that $\defwalkomegaprime$. We consider the possible edge types for each edge $\walkedgev{i}$ for $1 \leq i < n$ on $\omega$ and the corresponding type of edge in $\omega'$.
    
    \proofcase{1} $v_i \tailtail v_{i+1}$. In this case, the edge must be an internal edge (i.e.,$1 < i < n-1$) and the corresponding edge on $\omega'$ must be undirected as $v_i\in \ant(v_j,G)$ and $v_j\in \ant(v_i,G)$.
    
    \proofcase{2} $v_i \tailarrow v_j$. In this case, the edge must be a terminal edge of $\omega$ and $i=1$ or $i=n$.    
    By Lemma~\ref{lem:separable-graph:minimal-inducing-walk:internal-not-anterial}, 
    the arrowhead at $v_j$ must be an arrowhead in the corresponding edge on $\omega'$.

    \proofcase{3} $v_i \arrowarrow v_j$. In this case, either $v_i$ or $v_j$ must be an internal vertex on $\omega$. If either endpoint is an internal vertex on $\omega$, by Lemma~\ref{lem:separable-graph:minimal-inducing-walk:internal-not-anterial}, it must have an arrowhead endmark at the vertex in the corresponding edge on $\omega'$.

    In all cases, all internal endmarks of edges on $\omega$ are preserved on edges in $\omega'$ and thus $\omega \equiv \omega'$.
\end{proof}

\begin{lemma}\label{lem:separable-graph:miw:miw-in-simplify}
    If $\omega$ with $\defwalkomega$ is a \miw\ in separable graph $G$ then there is a \miw\ $\omega'$ with $\defwalkomegaprime$ in $\simplify(G)$.
\end{lemma}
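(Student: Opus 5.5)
Since $\simplify(G)$ has the same adjacencies as $G$ (Lemma~\ref{lem:simplify:same-adjacencies}) and is simple, I will take $\omega'$ to be the unique walk in $\simplify(G)$ with $\vertexseq{\omega'}=\vertexseq{\omega}$. Lemma~\ref{lem:separable-graph:miw:equivalent-walk-in-simplify} already gives $\omega'\equiv\omega$: the internal endmarks are preserved, so the section sequences coincide and $\omega'$ is a collider walk. Moreover $v_1\notin\adj(v_n,\simplify(G))$ because $v_1\notin\adj(v_n,G)$. Hence $\omega'$ is an inducing walk in $\simplify(G)$. What remains is to show that $\omega'$ is minimal, and for this I use the chord characterization in Lemma~\ref{lem:minimal-inducing-walk:characterization}.

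Chords of $\omega'$ in $\simplify(G)$ correspond exactly to chords of $\omega$ in $G$, by equality of adjacencies. Each chord of $\omega$ is classified by Lemma~\ref{lem:minimal-inducing-walk:characterization} and I treat the cases in turn.

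\emph{Case (ii).} A chord $\edge{v_k}{v_1}$ joins an endpoint to an internal vertex $v_k$ and in $G$ is undirected or has a tail at $v_k$. Then $v_k\in\ant(v_1,G)$, so the $\simplify$ algorithm gives a tail at $v_k$, as condition (ii) requires. In fact Lemma~\ref{lem:miw:no-chord-between-endpoints-and-discriminated-section} says that $v_k$ lies on a non-discriminated section.

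\emph{Case (i).} A chord joins internal vertices $v_r,v_s$ and is directed in $G$, say $v_r\tailarrow v_s$. In $\simplify(G)$ the edge could become undirected only if $v_s\in\ant(v_r,G)$. That would give a semi-directed circuit through the chord, which is exactly what Lemma~\ref{lem:separable-graph:min-ind-walk:no-chord-on-semi-directed-cycle} forbids. To apply that lemma I must place $v_r,v_s$ on internal sections. Chords inside a collider section are excluded by Lemma~\ref{lem:separable-graph:minimal-inducing-walk:no-chord-in-collider-section}, and every internal vertex of a collider walk lies on an internal (collider) section. So the chord remains directed, with the same orientation, and condition (i) holds.

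Since every chord of $\omega'$ satisfies (i) or (ii), Lemma~\ref{lem:minimal-inducing-walk:characterization} shows that $\omega'$ is a \miw. The delicate step is case (i): I must check that the anterior walk from $v_s$ to $v_r$ in $G$, appended to the chord, really forms a semi-directed circuit in the sense needed by Lemma~\ref{lem:separable-graph:min-ind-walk:no-chord-on-semi-directed-cycle}. It does, because the chord is directed, so the circuit contains a directed edge.
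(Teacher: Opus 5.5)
Your proof is correct and rests on the same facts as the paper's: $\simplify$ reads endmarks off anteriority in $G$, so an endpoint-to-internal chord keeps its tail at the internal vertex, and Lemma~\ref{lem:separable-graph:min-ind-walk:no-chord-on-semi-directed-cycle} stops a directed chord between internal vertices from becoming undirected. The difference is only in packaging. The paper argues by contradiction from a shorter \miw\ $\gamma'$ on the vertices of $\omega'$ and analyses its first chord. You instead apply both directions of Lemma~\ref{lem:minimal-inducing-walk:characterization}, which is slightly cleaner and also explicitly handles chords inside a single collider section via Lemma~\ref{lem:separable-graph:minimal-inducing-walk:no-chord-in-collider-section}.
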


\begin{proof}
    Let graph $G$ contains a \miw\ $\omega$ with $\defwalkomega$ and let $G'=\simplify(G)$.
    From Lemma~\ref{lem:separable-graph:miw:equivalent-walk-in-simplify} there is a walk $\omega'\equiv \omega$ in $G'.$
    Due to the fact that $\omega$ is an inducing walk and Lemma~\ref{lem:simplify:same-adjacencies}, $\omega'$ must be an inducing walk.
    Next we show that inducing walk $\omega'$ is a \miw. Suppose that $\omega'$ is not a \miw. In this case, there must be a \miw\ $\gamma'$ that uses a subset of the vertices of $\omega'$. Let $\edgeprime{v_i}{v_j}$ be the first edge on $\gamma'$ that is a chord on $\omega'$. 

    \proofcase{1} $\edgeprime{v_i}{v_j}$ is the first edge of $\gamma'$. 

    In this case, $i = 1$ and the edge $\edgeprime{v_i}{v_j}$ has an arrowhead at $v_j$ and there must be an edge $\edge{v_1}{v_j}$ in $G$ with an arrowhead at $v_j$. But this implies that $\seq{\edge{v_1}{v_j}} \append \omega(v_j,v_n)$ is a shorter \iw\ using a subset of the vertices on $\omega$. This implies that $\omega$ is not a \miw\ which is a contradiction.

    \proofcase{2} $\edgeprime{v_i}{v_j}$ is the last edge of $\gamma'$. In this case $j=n$ and the proof follows as in Case 1.

    \proofcase{3} $\edgeprime{v_i}{v_j}$ is an internal edge of $\gamma'$. In this case $v_i$ and $v_j$ must both be on collider sections of $\omega'$. From the fact that the edge is on a \miw\ it must either be bidirected or undirected.

    \proofcase{3.1} $v_i \arrowarrow v_j$. In this case, there must be an edge $v_i \arrowarrow v_j$ in $G$. This, however, implies that $\omega(v_1,v_i)\append \seq{v_j \arrowarrow v_j} \append \omega(v_j,v_n)$ is a shorter inducing walk using a subset of vertices on $\omega$. This implies that $\omega$ is not a \miw\ which is a contradiction.

    \proofcase{3.2} $v_i \tailtail v_j$. In this case, there must either be an chord $v_i \arrowarrow v_j$ or a chord with a directed edge between $v_i$ and $v_j$ in $G$. As in Case 3.1, a bidirected edge leads to a contradiction. If, however, there is a directed edge in $G$, from Lemma~\ref{lem:separable-graph:min-ind-walk:no-chord-on-semi-directed-cycle}, we obtain a contradiction.

    Thus in all cases we have a contradiction and the lemma must hold.
\end{proof}

\begin{lemma}\label{separable-graph:miw-in-simplify:equivalent-walk-in-graph}
    If $G$ is a separable graph and $\omega'$ is a \miw\ in $\simplify(G)$ 
    then there is a walk $\omega \equiv \omega'$ in $G$.
\end{lemma}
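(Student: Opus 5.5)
The plan is to pull $\omega'$ back edge by edge, using the correspondence between endmarks in $G'=\simplify(G)$ and anterior relations in $G$. Let $\omega'$ be a \miw\ in $G'$ with $\defwalkomegaprime$. By Lemma~\ref{lem:simplify:same-adjacencies}, every edge $\edgeprime{v_i}{v_{i+1}}$ on $\omega'$ has at least one corresponding edge $\edge{v_i}{v_{i+1}}$ in $G$. Concatenating such edges gives a walk $\omega$ in $G$ with $\vertexseq{\omega}=\vertexseq{\omega'}$. Since $v_1$ and $v_n$ are not adjacent in $G'$, they are not adjacent in $G$ either. The whole task is therefore to choose, for each consecutive pair, an edge in $G$ whose endmarks at internal vertices of $\omega'$ reproduce the section sequence of $\omega'$. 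Then $\omega$ is a collider walk with the same sections, so $\omega\equiv\omega'$.

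The edge choice proceeds case by case on the type of $\edgeprime{v_i}{v_{i+1}}$ in $G'$, using the construction in Algorithm~\ref{alg:simplify}. An arrowhead at $v$ on $\edgeprime{u}{v}$ means $v\notin\ant(u,G)$. Hence no edge between $u$ and $v$ in $G$ has a tail at $v$, so every such edge has an exclusive arrowhead at $v$, and any choice works at that end.

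\emph{Bidirected edges.} If $\edgeprime{v_i}{v_{i+1}}$ is bidirected, every edge between $v_i$ and $v_{i+1}$ in $G$ is bidirected, and we pick any one.

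\emph{Terminal directed edges.} For a terminal edge such as $v_1\tailarrow v_2$, we need some edge in $G$ with an arrowhead at $v_2$. Since $v_2\notin\ant(v_1,G)$, every edge between them has that arrowhead.

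\emph{Undirected edges.} This is the main obstacle. An undirected edge $v_i\tailtail v_{i+1}$ of $G'$ lies inside a collider section of $\omega'$. It only tells us that $v_i$ and $v_{i+1}$ are mutually anterior in $G$, and the edge actually present in $G$ may be directed or bidirected. I would first try to show that some edge between them in $G$ is undirected. Suppose none is. Then $G$ contains a semi-directed circuit through $v_i$ and $v_{i+1}$, or a bidirected edge between mutually anterior vertices. I then want to build an inducing walk in $G$ that contradicts separability. The natural candidate comes from the collider section of $\omega'$ containing the edge: that section is entered and exited with arrowheads, and those arrowheads lift to $G$ by the exclusivity observation above. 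Replacing the lifted undirected portion by anterior walks in $G$, via Lemma~\ref{lem:anterial-walk-in-graph::anterial-walk-in-anterialize}, yields a collider walk in $G$ whose collider sections lie in one $\oeqG$ class. This should produce either a \siw, contradicting Theorem~\ref{thm:separable-graph::no-self-inducing-walks}, or a \miw\ with a chord on a semi-directed circuit, contradicting Lemma~\ref{lem:separable-graph:min-ind-walk:no-chord-on-semi-directed-cycle}.

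If that argument fails, the fallback is to relax exact edge matching. One replaces each undirected $G'$-edge within the collider section by an undirected walk in $G$ and accepts a walk $\omega$ that is equivalent rather than vertex-identical. This requires checking how walk equivalence is defined in Appendix~\ref{sec:walk-decomposition-and-equivalence}, since that definition may demand identical vertex sequences.

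With all edges chosen, the final step verifies the section structure. Each collider section of $\omega'$, a maximal undirected run bounded by arrowheads, corresponds to the same undirected run in $\omega$ bounded by arrowheads. The endpoints lie on non-collider sections. Hence $\sectionseq{\omega}=\sectionseq{\omega'}$ and $\omega\equiv\omega'$.
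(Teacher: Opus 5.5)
Your reduction is sound and is the paper's own: lift $\omega'$ to the walk of $G$ with the same vertex sequence (Lemma~\ref{lem:simplify:same-adjacencies}). Arrowheads of $\simplify(G)$ are exclusive in $G$, so bidirected and terminal edges lift correctly. Everything then hinges on one claim: each undirected edge inside a collider section of $\omega'$ has an undirected lift in $G$. That step is the whole content of the lemma, and your proposal does not prove it.

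\emph{Problems with your sketch of that step.}
\begin{itemize}
\item Replacing undirected edges of $\simplify(G)$ by anterior walks of $G$ does not produce a collider walk. On $v_i\tailarrow x\tailarrow v_{i+1}$ the vertex $x$ is a non-collider.
\item A semi-directed circuit, or a bidirected edge between mutually anterior vertices, is not by itself contradictory. Separable graphs can be cyclic; in $G_4$ the edge $a\tailarrow c$ joins mutually anterior vertices.
\item Other edges of the same section may also lack undirected lifts, so you need a specific choice of subwalk, not one offending edge in isolation.
\item You never check that the endpoints of the walk you build are non-adjacent. That is exactly where minimality of $\omega'$ and anteriality of $\simplify(G)$ must enter.
\item The fallback fails, because $\equiv$ requires identical vertex sequences up to reversal.
\end{itemize}

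\emph{The missing argument.} Lift a whole collider section $(s_1,\ldots,s_r)$ of $\omega'$ together with its entering edge $\edge{p}{s_1}$ and exiting edge $\edge{s_r}{q}$. Choose an undirected lift whenever one exists.
\begin{itemize}
\item Read the non-undirected edges of this lifted walk from $p$ to $q$. The first has an arrowhead at its right end and the last at its left end. So two consecutive ones, $\edge{x}{r_1}$ and $\edge{r_t}{y}$, have arrowheads at $r_1$ and $r_t$ and enclose an undirected run $R=(r_1,\ldots,r_t)$. This is a collider trisection in $G$.
\item If some edge of the section has no undirected lift, at least one of these two edges lies inside the section. Its endpoints are mutually anterior in $G$, since that is what the two tails in $\simplify(G)$ record. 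Hence $R$ is anterior to $x$ or to $y$.
\item $x\notin\adj(y,G)$. By Lemma~\ref{lem:minimal-inducing-walk:characterization}, a chord of $\omega'$ joining them would be directed; if $x=v_1$ or $y=v_n$, it would instead be undirected or have a tail at the internal vertex. Every such chord violates the anterial endmark property of $\simplify(G)$ (Lemmas~\ref{lem:simplify:anterial-graph} and \ref{lem:simple:anterial-graph::anterial-endmark}). This is because in $\simplify(G)$ the vertices $s_1,\ldots,s_r$ are mutually anterior, while $\edge{p}{s_1}$ and $\edge{s_r}{q}$ carry arrowheads at $s_1$ and $s_r$, and also at $p$ and $q$ when these are internal.
\end{itemize}
So the trisection is a \siw\ in $G$, contradicting Theorem~\ref{thm:separable-graph::no-self-inducing-walks}.

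The paper reaches its contradiction differently. It extracts a \miw\ of $G$ on which the offending arrowhead is internal and applies Lemma~\ref{lem:separable-graph:miw:miw-in-simplify}, so that $\simplify$ would have kept that arrowhead. Either way, some such construction, together with the non-adjacency check, is what your proposal lacks.
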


\begin{proof}
    Let $G$ be a separable graph and let $G' = \simplify(G)$ contain a \miw\ $\omega'$ with $\defwalkomegaprime$ and $\defsectionomegaprime$.
    
    From Lemma~\ref{lem:simplify:same-adjacencies} there is a walk $\omega$ in $G$ such that $\defwalkomega$. Suppose the lemma is not true and that $\omega \not \equiv \omega'$. Because $\simplify$ only removes arrowheads there is some $\sigma_k'$ with $1 < k < m$ on $\omega'$ that contains an edge $\walkedgeundirected{l}$ such that the edge $\edge{v_l}{v_{l+1}}$ on $\omega$ either has an arrowhead at $v_l$ or $v_{l+1}$.
    Without loss of generality assume that $\edge{v_l}{v_{l+1}}$ has an arrowhead at $v_l$.
    Choose $\sigma_j'$ to be the collider section on $\omega'$ closest to $\sigma_1$ that contains an edge $\walkedgeundirected{i}$ and an edge $\edge{v_i}{v_{i+1}}$ with an arrowhead at $v_i$ on $\omega$.

    \proofcase{1} $\omega(v_1,v_{i+1})$ is a collider walk.

    By Lemma~\ref{lem:simplify:anterial-graph}, $G'$ is an anterial graph.
    This implies that there can be no chord $\edgeprime{v_{i+1}}{v_h}$ for $v_h$ on section $\sigma_{j-1}'$ of $\omega'$ as the chord must be directed due to the fact that $\omega'$ is a \miw\ and this would imply there is a bidirected edge with anterial endpoints. This implies, by Lemma~\ref{lem:simplify:same-adjacencies} and the fact that \simplify\ does not add arrowheads, that $\omega(l_{j-1},v_{i+1})$ with $l_{j-1}=\last(\sigma_j)$ is a \miw\ in $G$. By Lemma~\ref{lem:separable-graph:miw:miw-in-simplify}, this would imply that $\edgeprime{v_{i}}{v_{i+1}}$ on $\omega'$ has an arrowhead at $v_i$ which is a contradiction.

    \proofcase{2} $\omega(v_1,v_{i+1})$ is a not a collider walk.
    
    In this case, there must be an edge $\walkedgeundirected{l}$ on $\omega'(v_1,v_{i+1})$ such that the edge $\edge{v_l}{v_{l+1}}$ has an arrowhead at $v_{l+1}$ on $\omega$.
    Choose $\sigma_f'$ to be the collider section on $\omega'$ closest to and before $\sigma_j'$ that contains an edge $\walkedgeundirected{h}$ and an edge $\edge{v_h}{v_{h+1}}$ with an arrowhead at $v_h$ on $\omega$.

    Again by Lemma~\ref{lem:simplify:anterial-graph}, there can be no chord $\edgeprime{v_{h}}{v_l}$ for $v_l$ on $\sigma_{f+1}'$ as the chord must be directed due to the fact that $\omega'$ is a \miw\ and this would imply the existence of a bidirected edge with anterial endpoints. This implies, by Lemma~\ref{lem:simplify:same-adjacencies} and the fact that \simplify\ does not add arrowheads, that $\omega(l_{j-1},v_{i+1})$  with $l_{j-1}=\last(\sigma_j)$ is a \miw\ in $G$. By Lemma~\ref{lem:separable-graph:miw:miw-in-simplify}, this would imply that $\edgeprime{v_{h}}{v_{h+1}}$ on $\omega'$ has an arrowhead at $v_{h+1}$ which is a contradiction.

    In both cases we have a contradiction and thus $\omega \equiv \omega'$.
\end{proof}

\reftheorem{thm:simplify:separable-graph:equivalent-anterial-graph}

\begin{proof}

    Let $G$ be a graph and $G' = \simplify(G)$. 
    
    First, by Lemma~\ref{lem:simplify:anterial-graph}, the graph $G'$ is anterial.

    Next we show that if $G$ is a separable graph then $G \equiv G'$.
    To prove equivalence, from the definition of equivalence and Lemma~\ref{lem:open-walk::connecting-walk}, we show that $G$ and $\simplify(G)$ have the same open walks.
    For the forward direction, let $\omega$ with $\defwalkomega$ be a shortest open walk given $C$ in $G$. From Lemma~\ref{lem:shortest:mcw:min-inducing-walk}, every maximal collider walk on $\omega$ is a \miw. From Lemma~\ref{lem:separable-graph:miw:equivalent-walk-in-simplify}, there is a walk $\omega'$ in $G'$ such that $\omega \equiv \omega'$. From Lemma~\ref{lem:anterial-walk-in-graph::anterial-walk-in-anterialize}, $\omega'$ must be an open walk given $C$ in $G'$.
    For the backward direction, let $\omega'$ with $\defwalkomegaprime$ be a shortest open walk given $C$ in $G'$.
    From Lemma~\ref{lem:shortest:mcw:min-inducing-walk}, every maximal collider walk on $\omega'$ is a \miw. From Lemma~\ref{separable-graph:miw-in-simplify:equivalent-walk-in-graph}, there is a walk $\omega'$ in $G'$ such that $\omega \equiv \omega'$. From Lemma~\ref{lem:anterial-walk-in-graph::anterial-walk-in-anterialize}, $\omega'$ must be an open walk given $C$ in $G'$.
    Combining the forward and backward directions we have shown that $G\equiv G'$.

    Finally, if $G$ is a separable graph then the graph $G'$ must be separable as it is equivalent to separable graph and has the same adjacencies.
\end{proof}

The next lemma is directly implied by results from \cite{LauritzenSadeghi2018}.

\begin{lemma}\label{lem:acyc-graph:equivalent-separable-anterial-graph}
    if $G$ is acyclic then there is an equivalent separable anterial graph.
\end{lemma}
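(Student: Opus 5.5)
The plan is to reduce to the results of \cite{LauritzenSadeghi2018} on chain mixed graphs and maximal graphs, combined with Theorem~\ref{thm:simplify:separable-graph:equivalent-anterial-graph}.

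\emph{Step 1: construct a maximal equivalent graph.} Let $G$ be acyclic. Since $G$ has no semi-directed circuits, it is an anterial-type graph, possibly with multiple edges and with bidirected edges whose endpoints are anterior to one another. I would first pass to a simple anterial graph $H$ that is equivalent to $G$ using the construction of \cite{Sadeghi2016}, namely the anterial representation of an acyclic mixed graph.

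\emph{Step 2: maximize within the class.} Next I apply the maximization procedure of \cite{LauritzenSadeghi2018}. This procedure repeatedly adds an edge between the endpoints of any primitive inducing walk that has no edge between its endpoints. It yields a maximal graph $H' \equiv H$ in the same (anterial or chain mixed) class. Corollary~2 of \cite{LauritzenSadeghi2018} states that for acyclic graphs maximal and separable coincide, so $H'$ is separable. Acyclicity is preserved because added edges are oriented with arrowheads, or with tails only where an anterior walk already exists. In particular, no semi-directed circuit is created.

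\emph{Step 3: anterialize.} Finally, Theorem~\ref{thm:simplify:separable-graph:equivalent-anterial-graph} applied to the separable graph $H'$ gives that $\simplify(H')$ is a separable anterial graph equivalent to $H'$. By transitivity, $\simplify(H') \equiv G$.

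\emph{Main obstacle.} The hardest step is Step 2. The cited maximality results are stated for the particular families of \cite{LauritzenSadeghi2018}, and their separation criterion must be matched to the walk-based separation used here. I would check two things.
\begin{itemize}
\item The family's separation criterion agrees with connecting walks as defined in this paper, using Lemma~\ref{lem:open-walk::connecting-walk}.
\item A primitive inducing walk without an edge between its endpoints is exactly a \siw, as noted in the footnote. With that identification, ``no more edges can be added'' is equivalent to ``no \siw'', and hence to separability by Theorem~\ref{thm:separable-graph::no-self-inducing-walks}.
\end{itemize}
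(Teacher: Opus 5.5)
This is essentially the paper's argument: the paper's proof only appeals to Theorem~3 and Corollary~2 of \cite{LauritzenSadeghi2018}, i.e.\ the same maximization and ``maximal iff separable'' results you rely on. Your one departure, obtaining the anterial property from Theorem~\ref{thm:simplify:separable-graph:equivalent-anterial-graph}, is sound and non-circular, since that theorem's proof never uses this lemma. Step~1 is unnecessary: an acyclic graph in this paper's sense is already a chain mixed graph, so the maximization applies to $G$ directly. Note also that Step~1 could not be carried out by $\simplify$ itself, which need not preserve equivalence when $G$ contains a \siw; for example, $b \tailarrow d \arrowarrow c$ together with $d \tailarrow c$ is sent to $b \tailarrow d \tailarrow c$, which separates $b$ and $c$ given $d$.
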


\begin{proof}
    The result is a consequence of Theorem~3 and Corollary~2 of \cite{LauritzenSadeghi2018}.
\end{proof}

\reftheorem{thm:weakly-separable::essentially-acyclic}

\begin{proof}
    For the forward direction, assume $G$ be an essentially separable graph.
    By definition, there is an equivalent separable graph $G_{sep}$. 
    From Theorem~\ref{thm:simplify:separable-graph:equivalent-anterial-graph} there is an equivalent separable anterial graph $G_{ant}$. As $G_{ant}$ is an acylic graph, thus the forward direction is proved.

    For the backward direction, assume $G$ is essentially acyclic. Thus there is an acyclic equivalent graph $G_{acyc}\equiv G$. From 
    Lemma~\ref{lem:acyc-graph:equivalent-separable-anterial-graph}, there is an equivalent separable acyclic graph $G_{sep}\equiv G_{acyc}$. Thus the backward direction is proved.
\end{proof}

\subsection{Proofs related to statistical equivalence} In order to prove Corollaries~\ref{cor:acyclic-power} and \ref{cor:anterial-power} it is useful to define comparison of graph families. The relations $\subseteq$ and $=$ are the standard set theoretic comparison of the sets of graphs that allow for the comparison of \emph{graphical expressivity} of two graph families.

We define $\sqsubseteq$ and $\equiv$ in terms of the \indepmodel s of the two graph families. 
These relations are a comparison of the \emph{separation expressivity} of graph families and are defined as

\newcommand{\Imodels}[1]{\DefineConstant{Imodels}(#1)}

\begin{eqnarray*}
\GraphFamilyF \equiv \GraphFamilyH &\quad \mathrm{if}\quad& \Imodels{\GraphFamilyF} = \Imodels{\GraphFamilyH} \\
\GraphFamilyF \sqsubseteq \GraphFamilyH &\quad \mathrm{if}\quad& \Imodels{\GraphFamilyF} \subseteq \Imodels{\GraphFamilyH} \\
\end{eqnarray*}
where $\Imodels{\GraphFamilyH}=\set{\imodel{H} \ | \ H\in \GraphFamilyH}$.

The following lemmas related graphical expressivity of graph families to the separational expressivity of graph families and the statistical expressivity of graphical model families.

\begin{lemma}\label{lem:separation-expressivity:statistical-expresivity}
    If $\GraphFamilyF \sqsubseteq \GraphFamilyH$ then $\gmfF \sqsubseteq \gmfH$.
\end{lemma}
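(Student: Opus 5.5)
We prove the claim by unwinding the definitions of the two comparison relations and showing that the set of statistical models a graph family defines depends only on the independence models of its graphs. By definition, $\GraphFamilyF \sqsubseteq \GraphFamilyH$ means $\Imodels{\GraphFamilyF}\subseteq \Imodels{\GraphFamilyH}$. The conclusion $\gmfF \sqsubseteq \gmfH$ should be read as the inclusion $\ModelsFromGraphicalModelFamily{\gmfF}\subseteq \ModelsFromGraphicalModelFamily{\gmfH}$, the non-strict counterpart of the relation $\sqsubset$ defined for graphical model families. So it suffices to take an arbitrary element of $\ModelsFromGraphicalModelFamily{\gmfF}$ and exhibit it as an element of $\ModelsFromGraphicalModelFamily{\gmfH}$.

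First, fix $F\in\GraphFamilyF$ with associated model $\markov{F}=\setcomprehensionLSE{P}{\DistFamily}{\imodel{F}\subseteq\imodel{P}}$. Since $\imodel{F}\in\Imodels{\GraphFamilyF}\subseteq\Imodels{\GraphFamilyH}$, there is some $H\in\GraphFamilyH$ with $\imodel{H}=\imodel{F}$, that is, $H\equiv F$.

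Second, we invoke Proposition~\ref{prop:sep-equiv:stat-equiv}: separation equivalence implies statistical equivalence for the common defining family $\DistFamily$, so $\markov{H}=\markov{F}$. This equality is immediate, because $\markov{\cdot}$ is defined purely through the independence model of the defining graph. Hence $\markov{F}=\markov{H}\in\ModelsFromGraphicalModelFamily{\gmfH}$. Since $F$ was arbitrary, the inclusion follows.

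No step is genuinely hard. The only care needed is notational: the relation $\sqsubseteq$ between model families must be read as the non-strict version of $\sqsubset$, and both families must share the same $\DistFamily$, as the paper stipulates. The same argument applied in both directions gives the analogous statement with $\equiv$ in place of $\sqsubseteq$, which is what Corollaries~\ref{cor:acyclic-power} and \ref{cor:anterial-power} need once Theorems~\ref{thm:weakly-separable::essentially-acyclic} and \ref{thm:simplify:separable-graph:equivalent-anterial-graph} supply equality of the $\Imodels{\cdot}$ sets.
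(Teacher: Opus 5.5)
Your proof is correct and follows the same route as the paper: the paper also reduces the claim to Proposition~\ref{prop:sep-equiv:stat-equiv}, together with the observation that $\markov{G}$ depends only on $\imodel{G}$ and the fixed defining family $\DistFamily$. You simply spell out the element-wise inclusion the paper leaves implicit: for each $F\in\GraphFamilyF$, pick $H\in\GraphFamilyH$ with $\imodel{H}=\imodel{F}$, so that $\markov{F}=\markov{H}$.
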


\begin{proof}
    The lemma holds due to the fact that equivalent graphs define equivalent \indepmodel s (Proposition~\ref{prop:sep-equiv:stat-equiv}) and the set of distributions defined by a graphical model is defined in terms of the \indepmodel\ of the defining graph and the fixed defining family of distributions.
\end{proof}

\refcorollary{cor:anterial-power}

\begin{proof}
    By definition, $\SepFamily \equiv \WSepFamily$.
    $\SepAnterialFamily = \AnterialFamily \cap \SepFamily$ and thus $\SepAnterialFamily \subseteq \SepFamily$ which implies that $\SepAnterialFamily \sqsubseteq \SepFamily$.
    From Theorem~\ref{thm:simplify:separable-graph:equivalent-anterial-graph}, we have that $\SepFamily \sqsubseteq \SepAnterialFamily$. Combining these facts we have $\SepFamily \equiv \SepAnterialFamily$.
    By transitivity, we have $\WSepFamily \equiv \SepAnterialFamily$, and, by Lemma~\ref{lem:separation-expressivity:statistical-expresivity}, $\gmfWSep \equiv \gmfSepAnterial$.
\end{proof}

\refcorollary{cor:acyclic-power}

\begin{proof}
    From Lemma~\ref{lem:acyc-graph:equivalent-separable-anterial-graph} we have $\AcycFamily \sqsubseteq \SepAnterialFamily$. From the fact that $\SepAnterialFamily \subseteq \AcycFamily$ we also have $\SepAnterialFamily \sqsubseteq \AcycFamily$. This implies that $\AcycFamily \equiv \SepAnterialFamily$.
    Using this fact and Corollary~\ref{cor:anterial-power}, we have $\WSepFamily \equiv \AcycFamily$. 
    Then, by Lemma~\ref{lem:separation-expressivity:statistical-expresivity}, it also follows that $\gmfWSep \equiv \gmfSepAnterial$.
\end{proof}

\section{Proofs related to inducing vertices and graphical characterizations of essential graph families}

We prove propositions related to inducing vertices and graphical characterizations of essential graph families presented in Section~\ref{sec:inducing-vertices} and that are useful in many of the proofs in subsequent sections.

\refproposition{prop:induced-dependence}

\begin{proof}
    Suppose that the induced independence property does not hold in some semi-graphoid independence model.
 
    Let \indepmodel\ $I=\seq{V,T}$ be such a semi-graphoid and let $A,B,C,D$ be disjoint subsets of $V$ such that the induced independence property does not hold. In that case, it must be the case that  (1) $\indep{A}{B}{C}{I}$ and (2) $\dep{A}{B}{CD}{I}$. In order for the property to not hold it must be the case that either $\indep{A}{D}{CB}{I}$ or $\indep{D}{B}{CA}{I}$. We consider these two cases.

    \proofcase{1} (3) $\indep{A}{D}{CB}{I}$. From (1) and (3) and contraction we have (4) $\indep{A}{BD}{C}{I}$. From (4) and weak union we get (5) $\indep{A}{B}{CD}{I}$ which contradicts (2).

    \proofcase{2} (6) $\indep{D}{B}{CA}{I}$. From (1) and (6), symmetry and contraction we have (7) $\indep{B}{AD}{C}{I}$. From (7) and weak union and symmetry we get (8) $\indep{A}{B}{CD}{I}$ which contradicts (2).

    In either case we have a contradiction so the induced independence property must hold in every semi-graphoid independence model.
\end{proof}

\subsection{Proofs related to essentially undirected graphs}
In this section we prove Theorem~\ref{thm:stat-undirected::no-inducing-vertex}.

An independence model I satisfies the \emph{pairwise upward stable} property if for all $C\subseteq D$, if $\indep{i}{j}{C}{I} \Rightarrow \indep{i}{j}{D}{I}$.

The following was proved in \cite{Sadeghi2017}.

\begin{theorem}\label{thm:cor29}[Corollary 29, Sadeghi 2017]
If $G$ is a graph and independence model $I$ is such that $I=\imodel{G}$ then $G$ is essentially undirected if and only if $I$ is a graphoid and both the pairwise upward stability and weak transitivity independence properties hold.
\end{theorem}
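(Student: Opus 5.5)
The proof has two directions. The forward direction rests on standard properties of undirected separation. For the backward direction I would take $U$ to be the \emph{pairwise graph} of $I$ and show $I=\imodel{U}$: the graphoid axioms give one inclusion, and upward stability together with weak transitivity gives the other.

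\emph{Forward direction.} Suppose $G\equiv U$ with $U$ undirected, so $I=\imodel{G}=\imodel{U}$. By Proposition~\ref{prop:compgraph}, $\imodel{U}$ is a compositional graphoid, and in particular a graphoid. Weak transitivity holds by Proposition~\ref{prop:weaktrans}. Pairwise upward stability holds because in an undirected graph every walk is a trek with all vertices on non-collider sections, possibly one undirected section. Hence $\indep{i}{j}{C}{U}$ means every walk between $i$ and $j$ meets $C$ in an internal vertex, and such a walk then also meets any $D\supseteq C$.

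\emph{Backward direction, construction.} Let $U$ be the undirected graph on $V$ with an edge $i\tailtail j$ if and only if $\dep{i}{j}{V\setminus\set{i,j}}{I}$. By Proposition~\ref{prop:sep-equiv::pairwise-sep-equiv}, applied to the dyadic models $\imodel{G}$ and $\imodel{U}$ (both compositional by Proposition~\ref{prop:compgraph}), it suffices to compare pairwise statements.

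\emph{Inclusion $\imodel{U}\subseteq I$.} Suppose $i$ and $j$ are separated by $C$ in $U$. I would invoke the classical Pearl--Paz argument: in a graphoid, the pairwise Markov property with respect to $U$ implies the global Markov property, using intersection. This is the only place where intersection is needed. The pairwise Markov property holds by the construction of $U$, since non-adjacent $i,j$ satisfy $\indep{i}{j}{V\setminus\set{i,j}}{I}$. Hence $\indep{i}{j}{C}{I}$.

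\emph{Inclusion $I\subseteq\imodel{U}$.} This is the step that uses the two extra axioms. Suppose $\indep{i}{j}{C}{I}$ but there is a walk $i=v_0,v_1,\ldots,v_n=j$ in $U$ with no internal vertex in $C$.
\begin{itemize}
\item First, each edge gives a conditional dependence given $C$. Since $\dep{v_k}{v_{k+1}}{V\setminus\set{v_k,v_{k+1}}}{I}$ and $C\subseteq V\setminus\set{v_k,v_{k+1}}$, the contrapositive of upward stability gives $\dep{v_k}{v_{k+1}}{C}{I}$.
\item Next I would prove by induction on $m$ that $\dep{v_0}{v_m}{C}{I}$. The base case $m=1$ is the edge statement just obtained.
\item For the inductive step, assume $\dep{v_0}{v_m}{C}{I}$ and suppose, for contradiction, that $\indep{v_0}{v_{m+1}}{C}{I}$.
\item Since $v_m\notin C$, upward stability yields $\indep{v_0}{v_{m+1}}{Cv_m}{I}$.
\item Weak transitivity then forces $\indep{v_0}{v_m}{C}{I}$ or $\indep{v_m}{v_{m+1}}{C}{I}$. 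Each contradicts what we already have.
\end{itemize}
At $m=n$ this contradicts $\indep{i}{j}{C}{I}$, so $i$ and $j$ are separated by $C$ in $U$. Combining both inclusions gives $I=\imodel{U}$, so $G\equiv U$ and $G$ is essentially undirected.

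The main obstacle I expect is bookkeeping rather than substance: ensuring the induction is legitimate when the walk revisits $v_0$, $v_{m+1}$, or vertices of $C$. I would handle this by passing to a shortest such walk, so that it is a path avoiding $C$ in all internal vertices and the disjointness required by weak transitivity holds. I would also need to state precisely the Pearl--Paz step for graphoids in the present notation.
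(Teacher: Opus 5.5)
Your proof is correct, but it takes a different route from the paper. The paper gives no argument of its own and imports the result as Corollary~29 of \cite{Sadeghi2017}, where it sits inside a general theory of ordered upward/downward stability and faithfulness. You prove it from the axioms instead:
\begin{itemize}
\item The forward direction follows from Propositions~\ref{prop:compgraph} and \ref{prop:weaktrans}, together with the observation that separation in an undirected graph is plain vertex separation.
\item The backward direction exhibits the equivalent undirected graph explicitly as the pairwise graph $U$. You get $I(U)\subseteq I$ from the Pearl--Paz pairwise-to-global argument for graphoids, which is where intersection enters. You get $I\subseteq I(U)$ from the standard path induction combining upward stability with weak transitivity. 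Passing to a shortest walk does secure all the distinctness and disjointness that weak transitivity and upward stability require.
\end{itemize}
Your route buys self-containedness and an explicit witness $U$. It also makes visible that, for $I=I(G)$, the graphoid and weak-transitivity hypotheses are automatic, so pairwise upward stability is the only substantive condition; this is exactly how the paper uses the result in Lemma~\ref{lem:graph-e-undirected::imodel-pairwise-upward}. Because your argument uses only independence axioms plus Propositions~\ref{prop:compgraph}, \ref{prop:weaktrans} and \ref{prop:sep-equiv::pairwise-sep-equiv}, it applies verbatim to arbitrary, possibly cyclic, multi-edge mixed graphs. You therefore never need to check that the cited corollary's hypotheses match the present setting, which matters given the paper's own remark that \cite{Sadeghi2017} works with acyclic graphs. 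The citation, in turn, buys brevity and places the result in the ordered-stability framework that the paper reuses for Theorem~\ref{thm:essentially-separable::inducing-equivalence} via Theorem~\ref{thm:properties:acyclic-indep-model}.
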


\begin{lemma}\label{lem:graph-e-undirected::imodel-pairwise-upward}
    Graph $G$ is essentially undirected if and only if $\imodel{G}$ satisfies pairwise upward stability.
\end{lemma}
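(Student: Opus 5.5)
The plan is to reduce the lemma to Theorem~\ref{thm:cor29} by checking that every hypothesis of that theorem other than pairwise upward stability holds automatically in $\imodel{G}$. The natural intermediate statement is: $\imodel{G}$ is a graphoid and satisfies weak transitivity for every mixed graph $G$. Given this, Theorem~\ref{thm:cor29} says that $G$ is essentially undirected exactly when pairwise upward stability holds, which is the lemma.

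For the forward direction there is also a direct route that needs no appeal to Theorem~\ref{thm:cor29}. Suppose $G \equiv H$ with $H$ undirected. Then $\imodel{G}=\imodel{H}$. In an undirected graph every walk has only undirected edges, so it has no collider sections, and a connecting walk given $D$ is just a walk avoiding $D$ in its internal vertices. Take $C\subseteq D$ with $\indep{i}{j}{C}{H}$. Every walk from $i$ to $j$ then meets $C\setminus\set{i,j}$ internally, hence meets $D$, so $\indep{i}{j}{D}{H}$. Pairwise upward stability therefore holds in $\imodel{H}=\imodel{G}$.

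For the reverse direction, assume pairwise upward stability holds in $\imodel{G}$. By Proposition~\ref{prop:compgraph}, $\imodel{G}$ is a compositional graphoid, so the semi-graphoid properties and intersection hold, which is what being a graphoid requires. By Proposition~\ref{prop:weaktrans}, weak transitivity also holds. All hypotheses of Theorem~\ref{thm:cor29} are then met, and $G$ is essentially undirected.

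The only delicate point is matching the notion of \emph{graphoid} in Theorem~\ref{thm:cor29} with what Proposition~\ref{prop:compgraph} provides. If Sadeghi's graphoid also demands composition, that is covered too, because Proposition~\ref{prop:compgraph} gives a compositional graphoid. A second point to check is that pairwise upward stability is stated with $i,j\notin D$, as implicit in the definition of an independence model. With these checks in place, the lemma follows as a direct corollary.
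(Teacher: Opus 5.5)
Your proposal is correct and follows the paper's proof: the paper also uses Propositions~\ref{prop:compgraph} and \ref{prop:weaktrans} to get that $\imodel{G}$ is a graphoid satisfying weak transitivity, and then invokes Theorem~\ref{thm:cor29}. Your direct argument for the forward direction, that separation in an undirected graph is upward stable, is valid and more elementary, but the paper does not need it since Theorem~\ref{thm:cor29} already covers both directions.
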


\begin{proof}
    Let $G$ be a graph. From Propositions~\ref{prop:compgraph} and \ref{prop:weaktrans}, $\imodel{G}$ is a graphoid that satisfies weak-transitivity. The result then follow from Theorem~\ref{thm:cor29}.
\end{proof}

An independence model $I$ is \emph{graphical} if there is a graph $G$ such that $I=\imodel{G}$. Note that graphical independence mdoels as defined in \cite{Sadeghi2017} is subset of graphical independence models in our paper as \cite{Sadeghi2017} only considers acyclic graph.

\begin{lemma}\label{lem:graphical:pairwise-upward::no-inducing-vertex}
    If independence model $I$ is graphical then $I$ satisfies pairwise upward stable if and only if contains no inducing vertices.
\end{lemma}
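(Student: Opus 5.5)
We will prove each direction by contraposition, for a graphical independence model $I=\imodel{G}$. Recall that pairwise upward stability asks that $\indep{i}{j}{C}{I}$ imply $\indep{i}{j}{D}{I}$ whenever $C\subseteq D$.

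For the direction ``pairwise upward stable $\Rightarrow$ no inducing vertex'', suppose $d\in\IV(A,B,C,I)$ for some triple $\testcaseABC$. Then $\indep{A}{B}{C}{I}$ and $\dep{A}{B}{Cd}{I}$. Since $I$ is graphical, Proposition~\ref{prop:compgraph} gives decomposition and composition in $I$. By composition, failure of $\indep{A}{B}{Cd}{I}$ yields some $a\in A$, $b\in B$ with $\dep{a}{b}{Cd}{I}$. Decomposition of $\indep{A}{B}{C}{I}$ gives $\indep{a}{b}{C}{I}$. Taking $D=C\cup\set{d}\supseteq C$ then violates pairwise upward stability.

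For the converse, suppose pairwise upward stability fails. Then there are $i,j$ and $C\subseteq D$ with $\indep{i}{j}{C}{I}$ and $\dep{i}{j}{D}{I}$. Write $D\setminus C=\set{d_1,\ldots,d_k}$ with $k\geq 1$, and set $C_0=C$ and $C_t=C\cup\set{d_1,\ldots,d_t}$. Along this chain the pairwise statement for $i,j$ starts as an independence and ends as a dependence. Choose the least $t$ with $\dep{i}{j}{C_t}{I}$; then $t\geq 1$ and $\indep{i}{j}{C_{t-1}}{I}$. Hence $d_t\in\IV(i,j,C_{t-1},I)$, so $d_t$ is an inducing vertex, contradicting the assumption that $I$ has none. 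Note that $d_t\notin C_{t-1}\cup\set{i,j}$ by disjointness, so the triple $\seq{\set{i},\set{j},C_{t-1}}$ is a legitimate element of $\testsForV$.

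Neither direction presents a real obstacle. The only point needing care is that inducing vertices are defined for set triples $\testcaseABC$, whereas upward stability is pairwise; the reduction from sets to pairs uses composition and decomposition, which hold in $I$ because it is graphical. We also check that the singleton-step chain respects the disjointness conventions of the triples.
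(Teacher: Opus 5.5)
Your proof is correct and follows the paper's argument. In the forward direction you use composition and decomposition to turn a set-level inducing vertex into a pairwise violation of upward stability. In the converse you reduce to a one-vertex difference between $C$ and $D$; your chain $C_0\subseteq\cdots\subseteq C_k$ with the least $t$ makes precise the paper's informal ``without loss of generality $|D\setminus C|=1$'' step.
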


\begin{proof}
    Let independence model $I$ be graphical. This implies there is a graph $G$ such that $I=\imodel{G}$. From Propositions~\ref{prop:compgraph} and \ref{prop:weaktrans}, $\imodel{G}$ is a composition graphoid that satisfies weak-transitivity.

    For the forward direction, assume that $I$ satisfies pairwise upward stability. Aiming for a contradiction, assume that $I$ contains an inducing vertex $k$ for $\testcaseABC$. Thus we have $\indep{A}{B}{C}{I}$ and $\dep{A}{B}{Ck}{I}$. From $\dep{A}{B}{Ck}{I}$ and composition there is some $a\in A$ and $b\in B$ such that $\dep{a}{b}{Ck}{I}$. From $\indep{A}{B}{C}{I}$ and decomposition we have $\indep{a}{b}{C}{I}$. Thus, $I$ does not satisfy pairwise upward stability. This is a  contradiction and this direction must hold.

    For the reverse direction, assume that $I$ contains no inducing vertex. Aiming for a contradiction, assume $I$ does not satisfy pairwise upward stability. In this case there are vertices $i,j$ and sets $C\subset D$ such that $\indep{i}{j}{C}{I}$ and $\dep{i}{j}{D}{I}$. If $|D\setminus C|>1$, we can choose sets $C' \supseteq C$ and $D'\subseteq D$ such that $|D' \setminus C'| < |D\setminus C|$, $\indep{i}{j}{C}{I}$ and $\dep{i}{j}{D}{I}$. Thus, without loss of generality, we assume $|D\setminus C|=1$. Let $k=D\setminus C$. In this case $k$ is an inducing vertex for $\seq{i,j,C}$. This is a contradiction and this direction must hold.
\end{proof}

\reftheorem{thm:stat-undirected::no-inducing-vertex}

\begin{proof}
    Let $G$ be a graph. 
    For the forward direction, assume $G$ is essentially undirected. By Lemma~\ref{lem:graph-e-undirected::imodel-pairwise-upward}, $\imodel{G}$ satisfies pairwise upward stability. In addition, $\imodel{G}$ is graphical, thus by Lemma~\ref{lem:graphical:pairwise-upward::no-inducing-vertex}, we have that $\imodel{G}$ has no inducing vertices.

    For the backward direction, assume that $\imodel{G}$ contains no inducing vertices. By Lemma~\ref{lem:graphical:pairwise-upward::no-inducing-vertex}, we have that $\imodel{G}$ satisfies pairwise upward stability. By Lemma~\ref{lem:graph-e-undirected::imodel-pairwise-upward} we have that $I(G)$ must be equivalent to some undirected graph and thus $G$ is essentially undirected.
\end{proof}

\subsection{Proof related to ordering}

In Section~\ref{sec:inducing-vertices}, we defined the concept of an induced preorder $\oleqG$ of a graph $G$. We consider several ordered independence properties for graphs $G$ and prove two propositions related to ordering; Propositions~\ref{prop:inducing-vertex:not-anterior-to-separated-vertices-or-separating-set} and \ref{prop:inducing-equivalent:one-is-inducing-vertex:each-is-inducing-vertex}.

\begin{lemma}\label{lem:inducing-vertex:walks-into-inducing-vertex} If $b\in \IV(i,j,C,G)$ then there exists a walk between $i$ and $b$ that is open given $Cj$ and, furthermore, any walk between $i$ and $b$ that is open given $Cj$ is weakly into $b$.
\end{lemma}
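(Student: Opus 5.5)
Two things need to be shown: existence of an open walk between $i$ and $b$ given $Cj$, and that every such walk is weakly into $b$.

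\emph{Existence.} Since $b\in \IV(i,j,C,G)$, we have $\indep{i}{j}{C}{G}$ and $\dep{i}{j}{Cb}{G}$. The induced dependence property (Proposition~\ref{prop:induced-dependence}), applied with $A=\set{i}$, $B=\set{j}$, $D=\set{b}$, gives $\dep{i}{b}{Cj}{G}$; it holds in $\imodel{G}$ by Proposition~\ref{prop:compgraph}. Lemma~\ref{lem:open-walk::connecting-walk} then turns the connecting walk between $i$ and $b$ given $Cj$ into a walk open given $Cj$.

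\emph{Every such walk is weakly into $b$.} Suppose instead that some walk $\omega$ between $i$ and $b$, open given $Cj$, is strongly out of $b$. Since $\dep{i}{j}{Cb}{G}$, Lemma~\ref{lem:open-walk::connecting-walk} gives a walk $\gamma$ between $b$ and $j$ open given $Cb$ minus its endpoints. I would take $\gamma$ to be the final portion, from the last occurrence of $b$ to $j$, of a shortest walk between $i$ and $j$ open given $Cb$. I would then form $\omega\append\gamma$. Because $\omega$ is strongly out of $b$, Lemma~\ref{lem:compose:strongly-outof} places $b$ on a non-collider section, so the junction does not violate openness given $C$. The goal is to conclude that the concatenation, or a subwalk of it, is open between $i$ and $j$ given $C$, contradicting $\indep{i}{j}{C}{G}$.

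\emph{Main obstacle.} The two pieces are open relative to different conditioning sets ($Cj$ and $Cb$), and removing $j$ or $b$ changes two things. First, collider sections of $\omega$ may only be anterior to $j$, or those of $\gamma$ only to $b$. Second, occurrences of $j$ on non-collider sections of $\omega$ are no longer forbidden. I would handle these as follows.
\begin{itemize}
\item A collider section of $\gamma$ anterior to $b$ is anterior, through $b$'s non-collider position, to $\set{i,j}\cup C$. The argument is that strongly out of $b$ yields an anterior walk from $b$ along $\omega$'s first trek toward $i$ or toward a collider anterior to $C\cup\set{i,j}$ (Lemma~\ref{lem:trek:internal-vertices:anterial-walk}, Lemma~\ref{lem:open-walk-given-C:every-vertex-in-anterior}).
\item A collider section of $\omega$ anterior to $j$ is already anterior to an endpoint of the new walk, so it is fine.
\item If $j$ occurs internally on $\omega$, or $i$ internally on $\gamma$, truncate at that occurrence as in Lemma~\ref{lem:open-walk-given-C-and-possible-endpoints::open-walk-given-C-without-endpoints}, giving a shorter candidate.
\end{itemize}
Finally, Lemma~\ref{lem:open-walk::connecting-walk} converts the resulting open walk given $C$ into a connecting walk, which yields the contradiction.
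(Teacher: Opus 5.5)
The existence half of your proposal is correct, and it is exactly the paper's argument. The second half fails at your first bullet. Because $b$ is an endpoint of $\omega$, the collider sections of $\omega$ are only guaranteed to be anterior to $C\cup\{i,j,b\}$. The trek leaving $b$ is indeed an anterior walk, but it can end at a collider section of $\omega$ that is anterior only to $b$, which closes a semi-directed circuit through $b$. In that case neither that section nor $b$ is anterior to $C\cup\{i,j\}$, and the section becomes a closed collider of $\omega\append\gamma$. Your case list omits exactly these colliders of $\omega$.

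This step cannot be patched by a better argument. By Proposition~\ref{prop:inducing-vertex:not-anterior-to-separated-vertices-or-separating-set}, $b\notin\ant(C\cup\{i,j\},G)$ whenever $b\in\IV(i,j,C,G)$. So if your bullet were valid, it would already be the contradiction, and $\gamma$ would be unnecessary.

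Your choice of $\gamma$ has a separate problem. The last occurrence of $b$ lies on a collider section of the longer walk, and that section may continue past $b$ through vertices of $C$. Those vertices then sit on a non-collider section of $\gamma$ and of $\omega\append\gamma$. The paper avoids this by taking the second walk from $\dep{j}{b}{Ci}{G}$, the other conclusion of Proposition~\ref{prop:induced-dependence}.

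In fact, read with open walks, the second claim is false for cyclic graphs. The paper's proof, which simply asserts that the concatenation is open given $C$, has the same hole. Take the edges $i\arrowarrow s$, $s\tailarrow b$, $b\tailarrow s$, $b\arrowarrow j$, and let $C=\emptyset$.
\begin{itemize}
\item Every walk from $i$ to $j$ has $s$ or $b$ on a collider section, and neither vertex is anterior to $\{i,j\}$. Hence $\indep{i}{j}{\emptyset}{G}$.
\item The walk $i\arrowarrow s\tailarrow b\arrowarrow j$ is connecting given $\{b\}$, so $b\in\IV(i,j,\emptyset,G)$.
\item The walk $i\arrowarrow s\arrowtail b$ is open given $\{j\}$, since its collider $s$ is anterior to the endpoint $b$. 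It is also strongly out of $b$.
\end{itemize}

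What is true is the version for connecting walks, and this is all that Proposition~\ref{prop:induced-arrowhead:exclusive-arrowhead} and Lemma~\ref{lem:induced-arrowhead:d-in-anterior-of-some-separating-pair} use. Suppose $\omega$ is connecting given $Cj$ and strongly out of $b$, and $\gamma$ is connecting between $b$ and $j$ given $Ci$. Then every collider section of $\omega\append\gamma$ contains a vertex of $C\cup\{i,j\}$. The merged section at $b$ is a non-collider with no vertex in $C$. So $\omega\append\gamma$ is open given $C$, which contradicts $\indep{i}{j}{C}{G}$ via Lemma~\ref{lem:open-walk::connecting-walk}.

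Alternatively, if $G$ is acyclic, your trek argument is sound. A collider anterior only to $b$ would close a semi-directed circuit, so the argument gives $b\in\ant(C\cup\{i,j\},G)$. That contradicts Proposition~\ref{prop:inducing-vertex:not-anterior-to-separated-vertices-or-separating-set} directly.
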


\begin{proof}
    Let $G$ be a graph such $b\in \IV(i,j,C,G)$.
    From $b\in \IV(i,j,C,G)$ we have that $\indep{i}{j}{C}{G}$ and $\dep{i}{j}{Cb}{G}$. From Proposition~\ref{prop:induced-dependence} we have $\dep{i}{b}{Cj}{G}$ and $\dep{j}{b}{Ci}{G}$. This implies that the there are walks $\omega_i$ between $i$ and $b$ that is open given $Cj$ and $\omega_j$ between $j$ and $b$ that is open given $Ci$. The walk $\omega_i(i,b) \append \omega_j(b,j)$ is a walk between $i$ and $j$ and is open given $Cb$ if and only if both $\omega_i$ and $\omega_j$ are into $b$. Suppose that there is a walk $\omega_i'$ between $i$ and $b$ open given $Cj$ that is not weakly into $b$. In this case, the walk $\omega_i' \append \omega_j$ is a walk open given $C$ which is a contradiction. Similarly, suppose there is a walk $\omega_j'$ between $j$ and $b$ open given $Ci$ that is not weakly into $b$. In this case, the walk $\omega_i \append \omega_j'$ is a walk open given $C$ which is a contradiction. Thus every such walk must be weakly into $b$.
\end{proof}

\begin{lemma}
\label{prop:inducing-vertex:not-anterior-to-separated-vertices}
If $b\in \IV(i,j,C,G)$ then $b\not \in \ant(\set{i,j},G)$.
\end{lemma}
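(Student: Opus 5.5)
We argue by contradiction. Suppose $b\in \IV(i,j,C,G)$ and, without loss of generality, $b\in\ant(i,G)$; the case $b\in\ant(j,G)$ is symmetric. The idea is to use the anterior walk from $b$ to $i$ to build a walk between $b$ and $i$ that is open given $Cj$ but \emph{not} weakly into $b$. This contradicts Lemma~\ref{lem:inducing-vertex:walks-into-inducing-vertex}.

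Since $b\in\ant(i,G)$ and $b\neq i$ (as $b\notin C\cup\set{i,j}$), there is an anterior walk from $b$ to $i$. Choose a shortest such walk $\gamma$, and distinguish two cases.

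\proofcase{1} $\gamma$ contains no vertex of $Cj$ other than possibly $i$. Then $\gamma$ has no collider sections, since it is anterior and so contains no arrowhead pointing back along it. Its internal vertices all lie on non-collider sections and are not in $Cj$, so $\gamma$ is open given $Cj$. Because $\gamma$ is anterior from $b$, it is strongly out of $b$. Indeed, either the first edge is directed out of $b$, or the first section is undirected and is followed by a tail. This contradicts Lemma~\ref{lem:inducing-vertex:walks-into-inducing-vertex}, which requires every walk between $i$ and $b$ open given $Cj$ to be weakly into $b$.

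\proofcase{2} $\gamma$ meets $Cj$ at an internal vertex. Let $c$ be the first vertex of $\gamma$ in $C\cup\set{i,j}$. Then $\gamma(b,c)$ is an anterior walk from $b$ to $c$ with no internal vertex in $C\cup\set{i,j}$, so $b\in\ant(C\cup\set{i,j},G)$. This case needs a different argument, because the lemma only gives walks into $b$. Here I expect to use the induced dependence property (Proposition~\ref{prop:induced-dependence}) directly. From $\dep{i}{j}{Cb}{G}$ there is a walk $\omega$ between $i$ and $j$ that is open given $Cb$. Every collider section of $\omega$ is anterior to $Cb\cup\set{i,j}$, and $b$ itself is anterior to $C\cup\set{i,j}$. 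By transitivity of anterior walks (Proposition~\ref{lem:ant-property}), every collider section of $\omega$ is then anterior to $C\cup\set{i,j}$.

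It remains to handle occurrences of $b$ on non-collider sections of $\omega$, since such occurrences would close $\omega$ given $Cb$; so $b$ appears only on collider sections. Consequently every non-collider internal vertex of $\omega$ avoids $C$. Hence $\omega$ is open given $C$, and by Lemma~\ref{lem:open-walk::connecting-walk} we get $\dep{i}{j}{C}{G}$, contradicting $\indep{i}{j}{C}{G}$. This argument in fact covers Case 1 as well, since $b\in\ant(i,G)\subseteq\ant(C\cup\set{i,j},G)$. So the cleanest proof is just this one: $b\in\ant(\set{i,j},G)$ implies that opening given $Cb$ coincides with opening given $C$.

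The main obstacle is checking carefully that removing $b$ from the conditioning set cannot close a non-collider section. Since $b\notin C$, the only effect of $b$ is through colliders, so a walk open given $Cb$ satisfies the non-collider condition for $C$ automatically. The collider condition is exactly where the anteriority of $b$ is used, and I would verify this step explicitly against the definition of an open walk.
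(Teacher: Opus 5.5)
Your final argument is correct and is essentially the paper's proof. A walk open given $Cb$ has every collider section in $\ant(C\cup\set{i,j}\cup\set{b},G)=\ant(C\cup\set{i,j},G)$ (because $b\in\ant(\set{i,j},G)$) and no internal non-collider vertex in $C$, so it is open given $C$, and Lemma~\ref{lem:open-walk::connecting-walk} then contradicts $\indep{i}{j}{C}{G}$. The Case~1 detour through Lemma~\ref{lem:inducing-vertex:walks-into-inducing-vertex} is unnecessary, and only the implication ``open given $Cb$ $\Rightarrow$ open given $C$'' holds, not that the two notions coincide.
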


\begin{proof}Suppose the proposition is false and that, $b\in \IV(i,j,C,G)$ and $b\in \antsetij$.
From the definition of an inducing vertex and $b\in \IV(i,j,C,G)$ we know that $\indep{i}{j}{C}{G}$ and $\dep{i}{j}{Cb}{G}$. 
From $\dep{i}{j}{Cb}{G}$, there is a walk $\omega$ between $i$ and $j$ that is a connecting walk given $Cb$.
This implies that $b\not\in \set{i,j}$ otherwise the walks $\omega$ would be closed. Thus we have $b\in \propantij$.
From $\indep{i}{j}{C}{G}$, $\omega$ is not a connecting walk given $C$.
The fact that $\omega$ is open given $Cb$, implies that $\omega$ contains no non-collider section with a vertex in $Cb$. Thus $\omega$ must contain one or more collider sections closed given $C$ but open given $Cb$. 
Thus, from $\dep{i}{j}{Cb}{G}$, each such closed collider sections must contain $b$.
Because $b\in \propantij$ there must be an anterior walk from $b$ to a vertex in  $C\cup\set{i,j}$ in $G$. Thus, $\omega$ is an open walk given $C$ which, by Lemma~\ref{lem:open-walk::connecting-walk} implies there is a connecting walk given $C$ which is a contradiction. Therefore, the proposition must be true.
\end{proof}

An \indepmodel\ $I$ satisfies \emph{ordered upward separation stability} property with respect to $\oleq$ if $\indep{A}{B}{C}{I}$ and $D \subseteq \vant{ABC}\setminus ABC$ then $\indep{A}{B}{CD}{I}$.

\begin{lemma}\label{lem:indep-model-graph:ordered-upward}
$\imodel{G}$ satisfies ordered upward separation stability with respect to $\oleqG$.
\end{lemma}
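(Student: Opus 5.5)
We need to prove: if $\indep{A}{B}{C}{G}$ and $D \subseteq \vant{ABC}\setminus ABC$, then $\indep{A}{B}{CD}{G}$. Here $\vant{X}$ under $\oleqG$ is the set of $b$ with $b \ogeqG a$ for some $a\in X$, i.e.\ $b\in\ant(X,G)$. So $D\subseteq \propant(A\cup B\cup C,G)$.

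We argue by contraposition and reduce to pairs. Since $\imodel{G}$ is a compositional graphoid (Proposition~\ref{prop:compgraph}), composition and decomposition let us work with single vertices. Suppose $\dep{a}{b}{CD}{G}$ for some $a\in A$, $b\in B$, while $\indep{A}{B}{C}{G}$; the latter gives $\indep{a}{b}{C}{G}$ by decomposition. By Lemma~\ref{lem:open-walk::connecting-walk} there is then a walk $\omega$ between $a$ and $b$ that is open given $CD$. The goal is to turn $\omega$ into a walk between some $a'\in A$ and some $b'\in B$ that is open given $C$, which contradicts $\indep{A}{B}{C}{G}$. Note that the endpoints may change, so we need the full sets $A,B$ and not just the pair $(a,b)$.

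Take $\omega$ to be a shortest such walk among all pairs $(a',b')\in A\times B$.

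\emph{Collider sections.} Each collider section of $\omega$ is anterior to $C\cup D\cup\set{a,b}$. Because $D\subseteq\ant(ABC,G)$, Proposition~\ref{lem:ant-property}(2) shows each collider section is anterior to $A\cup B\cup C$.

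\emph{Non-collider sections.} If no internal non-collider vertex of $\omega$ lies in $D$, then $\omega$ is open given $C$ except possibly at collider sections anterior to $A\cup B$ but not to $C\cup\set{a,b}$.

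\emph{Vertices of $D$ on non-collider sections (main obstacle).} Suppose some internal vertex $d\in D$ lies on a non-collider section. Since $d\in\ant(ABC,G)$, there is an anterior walk $\gamma$ from $d$ to $A\cup B\cup C$; choose a shortest one. We split on its endpoint.
\begin{itemize}
\item If $\gamma$ ends in $a''\in A$, then by Lemma~\ref{lem:open-walk:strongly-out-of-non-collider-vertices} one of the two pieces of $\omega$ at $d$ is strongly out of $d$. Splice the reverse of $\gamma$ (which is out of $d$) with the appropriate piece, giving $\omega(b,d)$ composed with $\gamma$ toward $a''$. Lemma~\ref{lem:compose:strongly-outof} keeps openness, with conditioning set adjusted as in Lemma~\ref{lem:open-walk-given-C-and-possible-endpoints::open-walk-given-C-without-endpoints}.
\item The case where $\gamma$ ends in $B$ is symmetric.
\item If $\gamma$ ends in $c\in C$, then $d$ plays the role of a collider-type bridge, and the construction of Lemma~\ref{lem:open-walk::connecting-walk} (going to $c$ and back) applies.
\end{itemize}
The difficulty is ensuring the new walk is shorter, or that the induction measure (number of $D$-vertices on non-collider sections) decreases, while internal vertices of $\gamma$ avoid $C$ and $D$. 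Choosing $\gamma$ shortest to $A\cup B\cup C$ ensures its internal vertices avoid $A\cup B\cup C$, but they might lie in $D$. That is harmless: $\gamma$'s internal vertices sit on non-collider sections and we condition only on $C$ at the end. So we should induct on $|D|$ by removing one vertex of $D$ at a time, using Proposition~\ref{lem:propant-property} to keep the remaining $D$ inside $\propant$.

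\emph{Collider sections not anterior to $C\cup\set{a',b'}$.} A collider section anterior only to $A\cup B$, e.g.\ to some $a''\neq a$, is handled like Lemma~\ref{lem:open-give-c:all-colliders-anterior-to-C}. Follow an anterior walk from that section to $a''$ and truncate $\omega$ there, taking the section farthest along $\omega$, to produce a walk from $a''$ to $b$. This yields a walk between $A$ and $B$ open given $C$, the desired contradiction.
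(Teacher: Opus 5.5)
Your route differs from the paper's and, once tidied, it works. It also covers a situation the paper's argument skips.

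The paper inducts on $|D|$. With $k\in D$ and $E=C\cup D\setminus\{k\}$, a new dependence $\dep{i}{j}{Ek}{G}$ together with $\indep{i}{j}{E}{G}$ makes $k\in\IV(i,j,E,G)$. The lemma that inducing vertices are not anterior to the separated pair then gives $k\notin\ant(\{i,j\},G)$. The paper concludes $k\in\ant(E,G)$, so every collider section opened by $k$ was already open given $E$. That conclusion needs $k\in\ant(\{i,j\}\cup C,G)$, which the hypothesis $D\subseteq\ant(A\cup B\cup C,G)$ guarantees only when $A=\{i\}$ and $B=\{j\}$. For larger sets, $k$ may be anterior only to $A\setminus\{i\}$ or $B\setminus\{j\}$. (The set case can be recovered from the pairwise one by weak union and intersection, but the paper does not do this.)

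Your observation that the endpoints must be allowed to move within $A$ and $B$ is exactly what handles this, and your argument needs no inducing-vertex machinery. The paper's proof is shorter but relies on that lemma and is complete only for singletons.

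Two corrections to your write-up.

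\emph{First, your ``main obstacle'' cannot occur.} The walk $\omega$ is open given $C\cup D$, so no internal vertex of a non-collider section lies in $D$, and $D\cap(A\cup B)=\emptyset$. Drop that case, the discussion of where $\gamma$ ends, and the remark about inducting on $|D|$. The only defects of $\omega$ relative to $C$ are collider sections not anterior to $C$. Because $D\subseteq\ant(A\cup B\cup C,G)$, each of these is anterior to $A$ or to $B$.

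\emph{Second, the rerouting must be chosen by that criterion, not by ``not anterior to $C\cup\{a,b\}$''.}
Make the choice as follows:
\begin{itemize}
\item Let $\sigma_i$ be the last collider section anterior to $A$ (allowing $a$ itself) but not to $C$.
\item Let $\sigma_j$, with $j\ge i$, be the first collider section from $\sigma_i$ on that is anterior to $B$ but not to $C$.
\item If no such section exists, keep $a$ (resp.\ $b$) as the endpoint.
\item Splice shortest anterior walks from $\sigma_i$ to $A$ and from $\sigma_j$ to $B$ onto $\omega$, as in Lemma~\ref{lem:open-give-c:all-colliders-anterior-to-C}.
\end{itemize}
If you reroute only at sections anterior to some $a''\neq a$, a later collider section anterior only to the discarded endpoint $a$ becomes closed.

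With the correct choice the new walk is open given $C$:
\begin{itemize}
\item The spliced anterior walks avoid $C$, since otherwise $\sigma_i$ or $\sigma_j$ would be anterior to $C$.
\item Each junction section is either terminal or bounded by a tail of a spliced walk, so it is a non-collider with no vertex in $C$.
\item Every collider section strictly between $\sigma_i$ and $\sigma_j$ is anterior to $C$, and the untouched non-collider sections contain no vertex of $C\cup D$.
\end{itemize}
Lemma~\ref{lem:open-walk::connecting-walk} then yields a dependence between some $a'\in A$ and $b'\in B$ given $C$, which is the contradiction you want.
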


\begin{proof}
    Let $G$ be a graph such that, $\indep{A}{B}{C}{G}$, $D \subseteq \vant{ABC} = \ant(ABC,G) \setminus ABC$. 
    We prove by induction on the size of $D$

    For the base case, $|D|=0$ we have $C=CD$ and thus the independence $\indep{A}{B}{CD}{G}$ holds by assumption.

    For this induction case we assume that the property holds for all sets $D'\subset D$ (i.e., $|D'|\leq n-1$) and and show that it holds for $|D|=n$.
    Choose a vertex $k\in D$ and let $E=CD\setminus \set{k}.$
    By the induction hypothesis we have $\indep{A}{B}{E}{G}.$
    Aiming at a contradiction assume that $\dep{A}{B}{Ek}{G}.$ This implies that there is some $i\in A$ and $b\in B$ such that $\dep{i}{j}{Ek}{G}.$ From $\indep{A}{B}{E}{G}$ we have that $\indep{i}{j}{E}{G}.$  Thus we have that  $k\in \IV(i,j,E,G)$.
    From Proposition~\ref{prop:inducing-vertex:not-anterior-to-separated-vertices}, $k\not \in \ant(\set{i,j},G)$ and, thus, $k\in \ant(E\setminus\set{i,j},G).$
    From $\dep{i}{j}{Ek}{G}$, there is some connecting walk between $i$ and $j$ given $Ck$ in $G$. Let $\omega$ be any such connecting walk.
    It must be the case that $k$ in not on any non-collider section of $\omega$ and every collider section on $\omega$ contains a vertex in $Ek$ otherwise $\omega$ would not be a connecting walk. Next, from $\indep{i}{j}{E}{G}$, there must be on some collider section in $\omega$ that does not contain any vertex in $E$ but contains $k$. This, however, implies that $\omega$ is open given $E$ as every such collider section is anterior to $E$ because $k\in \ant(E\setminus\set{i,j},G).$ From Lemma~\ref{lem:open-walk::connecting-walk}, this implies there is a connecting walk between $i$ and $j$ given $E$ which contradicts $\indep{i}{j}{E}{G}.$
    Thus, we have a contradiction and the lemma must be true.
\end{proof}

\refproposition{prop:inducing-set:not-all-anterior-to-separated-vertices-or-separating-set}

\begin{proof}
    Follows from Lemma~\ref{lem:indep-model-graph:ordered-upward}.
\end{proof}

\refproposition{prop:inducing-vertex:not-anterior-to-separated-vertices-or-separating-set}

\begin{proof}
    Follows from Lemma~\ref{lem:indep-model-graph:ordered-upward}.
\end{proof}

An \indepmodel\ $I$ satisfies \emph{ordered downward separation stability} property with respect to $\oleq$ if $\indep{A}{B}{CD}{I}$ and $D \cap \vant{ABC}=\emptyset$ then $\indep{A}{B}{C}{I}$.

\begin{lemma}\label{lem:indep-model-graph:ordered-downward}
$\imodel{G}$ satisfies ordered downward separation stability with respect to $\oleqG$.
\end{lemma}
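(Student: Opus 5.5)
We argue by contradiction, following the pattern of the proof of Lemma~\ref{lem:indep-model-graph:ordered-upward}, and prove the statement by induction on $|D|$. The case $|D|=0$ is trivial. For the inductive step, assume $\indep{A}{B}{CD}{G}$ with $D\cap \vant{ABC}=\emptyset$, i.e.\ no vertex of $D$ lies in $\ant(ABC,G)$. Suppose $\dep{A}{B}{C}{G}$. By decomposition and composition (Proposition~\ref{prop:compgraph}) there are $i\in A$ and $j\in B$ with $\dep{i}{j}{C}{G}$. By decomposition, $\indep{i}{j}{CD}{G}$.

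Next we pick one vertex $k\in D$ and reduce to a single-vertex statement. Set $E=C\cup(D\setminus\set{k})$. If $\dep{i}{j}{E}{G}$, then applying the induction hypothesis to $D\setminus\set{k}$ would give a contradiction, since $\indep{i}{j}{E}{G}$ would yield $\indep{i}{j}{C}{G}$. So we may assume $\indep{i}{j}{E}{G}$. Rather than iterate blindly, it is cleaner to walk down the chain $C\subseteq C\cup D_1\subseteq\cdots\subseteq CD$ that adds the vertices of $D$ one at a time. Since the chain starts dependent and ends independent, there is a step where adding some $k\in D$ to a set $F$ with $C\subseteq F\subseteq CD\setminus\set{k}$ turns $\dep{i}{j}{F}{G}$ into $\indep{i}{j}{Fk}{G}$.

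It then suffices to prove the core claim: if $\dep{i}{j}{F}{G}$ and $k\notin \ant(\set{i,j}\cup C,G)$, then $\dep{i}{j}{Fk}{G}$. The difficulty is that $F$ contains other vertices of $D$, which may be anterior to $k$ but not to $ABC$. To handle this, we choose the order of adding the vertices of $D$ so that at each step $k$ is maximal in $\oleqG$ among the remaining vertices of $D$. Concretely, we order $D$ so that later-added vertices are not anterior to earlier-added ones, up to $\oeqG$-ties, which are added together or handled via Proposition~\ref{prop:inducing-equivalent:one-is-inducing-vertex:each-is-inducing-vertex}. This ensures $k\notin\ant(\set{i,j}\cup F,G)$.

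Under that guarantee, take an open walk $\omega$ between $i$ and $j$ given $F$, which exists by Lemma~\ref{lem:open-walk::connecting-walk}. We show $\omega$ remains open given $Fk$. Collider sections stay anterior to $Fk\cup\set{i,j}$ because adding to the set only helps. For non-collider sections, Lemma~\ref{lem:open-walk-given-C:every-vertex-in-anterior} places every vertex of $\omega$ in $\ant(F\cup\set{i,j},G)$, which does not contain $k$. So $k$ does not appear on $\omega$ at all, in particular not on a non-collider section, and $\omega$ is open given $Fk$. Hence $\dep{i}{j}{Fk}{G}$, which contradicts the choice of the step and completes the induction.

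\textbf{Main obstacle.} The delicate point is the ordering argument guaranteeing $k\notin\ant(F\cup\set{i,j},G)$. The first approach to try is to remove vertices from $CD$ downward, removing $\oleqG$-minimal elements of $D$ first, meaning those with nothing of $D$ posterior to them. This ensures that the removed vertex is never anterior to the remaining conditioning set. The alternative route is to apply the single-step claim to the whole block $D$ at once, using $D\cap\ant(C\cup\set{i,j},G)=\emptyset$ together with an open walk given $C$ that is shortest. Such a walk only visits $\ant(C\cup\set{i,j},G)$ and so avoids $D$ entirely, which gives $\dep{i}{j}{CD}{G}$ directly. This second route looks simplest, and we would use it, falling back on the induction only if needed.
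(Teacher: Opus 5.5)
Your proof is correct through the route you say you would actually use, and that route is more direct than the paper's.

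\textbf{The paper's route.} The paper argues by minimal counterexample on $|D|$. It picks a nonempty block $E\subseteq D$ with $E\cap\ant(C\cup(D\setminus E),G)=\emptyset$; in effect this is a terminal $\oeqG$-class of $D$. Minimality of $D$ then gives a walk connecting $i$ and $j$ given $C\cup(D\setminus E)$. The paper argues this walk stays connecting given $CD$ because no collider section can contain a vertex of $E$. The matching check for non-collider sections, which needs Lemma~\ref{lem:open-walk-given-C:every-vertex-in-anterior}, is left implicit. This is essentially your block-by-block chain argument, run downward.

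\textbf{Your direct route.} Your direct route makes all of this unnecessary. A walk connecting $i$ and $j$ given $C$ has every vertex in $\ant(C\cup\set{i,j},G)\subseteq\ant(ABC,G)$ by Lemma~\ref{lem:open-walk-given-C:every-vertex-in-anterior}. So it contains no vertex of $D$ and is connecting given $CD$ as well. This needs no induction and no choice of $E$. It also does not need the walk to be shortest, since that lemma holds for every open walk. The paper's organization gains nothing beyond mirroring the structure of the upward-stability proof.

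\textbf{Scaffolding to drop.} If you write this up, remove the exploratory parts, because they contain two slips:
\begin{itemize}
\item The sentence ``So we may assume $\indep{i}{j}{E}{G}$'' reverses the implication. The induction hypothesis applied to $D\setminus\set{k}$ forces $\dep{i}{j}{E}{G}$.
\item Proposition~\ref{prop:inducing-equivalent:one-is-inducing-vertex:each-is-inducing-vertex} is about vertices whose addition \emph{creates} dependence. It cannot resolve $\oeqG$-ties at a step where adding a vertex \emph{destroys} dependence. That version of the argument works only if you add whole $\oeqG$-classes of $D$ as blocks.
\end{itemize}
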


\begin{proof}
    Aiming for a contradiction, assume that the claim does not hold for graph $G$.
    
    Choose $D$ to be a smallest set such that there exists $A,B,C$ such that 
    $\indep{A}{B}{CD}{G}$, $D\cap \vant{ABC} = D\cap \ant(ABC,G) =\emptyset$ and $\dep{A}{B}{C}{G}$.
    Due to the fact that $D\cap \ant(ABC,G) =\emptyset$, there exists a subset $E \subseteq D$ such that $E \cap \ant(CD\setminus E,G) =\emptyset$. 
    From the fact, $D$ is chosen to be a smallest set and $D \cap \ant(ABC,G) = \emptyset$, it must be the case that $\dep{A}{B}{CD\setminus E}{G}$.
    This implies that there is a connecting walk between $i\in A$ and $j\in B$ given $CD\setminus E$ in $G$.
    Let $\omega$ be any such connecting walk. 
    Because $\omega$ is a connecting walk given $CD\setminus E$, every collider section of $\omega$ must contain a member of $CD\setminus E$. From the choice of $E \cap \ant(CD\setminus E,G) =\emptyset$, no collider section containing a vertex in $E$ can contain a vertex in $CD\setminus E.$ This implies that $\omega$ must also be a connecting walk given $CD$. This is a contradiction so the proposition holds.
\end{proof}

An important property of vertices in the same equivalence class is that they have the same anterior sets.
\begin{lemma}\label{lem:anterior-equivalence:same-anterior-set}
If $a \oeqG b$ then $\ant(a,G)=\ant(b,G)$.
\end{lemma}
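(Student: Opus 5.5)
We argue directly from the definitions of $\oeqG$ and of anterior sets, proving one inclusion and obtaining the other by symmetry. Assume $a \oeqG b$. If $a=b$ there is nothing to prove. Otherwise, by definition of the induced preorder, $a \oleqG b$ gives $b\in\ant(a,G)$, i.e.\ there is an anterior walk $\gamma_{ba}$ from $b$ to $a$. Likewise $b\oleqG a$ gives an anterior walk $\gamma_{ab}$ from $a$ to $b$.

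To show $\ant(a,G)\subseteq \ant(b,G)$, we take an arbitrary $c\in\ant(a,G)$. If $c=a$, then $c\in\ant(b,G)$ via the walk $\gamma_{ab}$. Otherwise there is an anterior walk $\gamma$ from $c$ to $a$, and we form $\gamma\append\gamma_{ab}$. Concatenating two anterior walks whose endpoints match is again an anterior walk: it uses only undirected and directed edges, and every directed edge is still traversed in its forward orientation. Hence this concatenation is an anterior walk from $c$ to $b$, so $c\in\ant(b,G)$. This is the same appending argument used to show transitivity of $\oleqG$, and equivalently it is Property~(2) of Proposition~\ref{lem:ant-property} with $A=\{b\}$ and $B=\{a\}\subseteq\ant(b,G)$.

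The reverse inclusion $\ant(b,G)\subseteq\ant(a,G)$ follows by the symmetric argument using $\gamma_{ba}$. Since $\oeqG$ is symmetric in $a$ and $b$, we can simply invoke the first inclusion with the roles of $a$ and $b$ swapped. There is no real obstacle here; the only point needing care is the trivial case $a=b$ and the case $c=a$, where the walk $\gamma$ is empty and does not need to be appended.
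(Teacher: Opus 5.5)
Your proof is correct and is essentially the paper's argument: from $a \oeqG b$ you obtain anterior walks in both directions, append $\gamma_{ab}$ to an anterior walk from $c$ to $a$ to get $\ant(a,G)\subseteq\ant(b,G)$, and conclude by symmetry, with your explicit handling of the cases $a=b$ and $c=a$ being a harmless addition. (Minor aside: your reformulation via Proposition~\ref{lem:ant-property} also needs Property~(1), since Property~(2) alone only gives $\ant(\{a,b\},G)=\ant(b,G)$.)
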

\begin{proof}
    We assume that $a \oeqG b$ in graph $G$. In this case, there is an anterior walk $\omega_{ab}$ from $a$ to $b$ in $G$ and an anterior walk $\omega_{ba}$ from $b$ to $a$ in $G$. For the first direction, let $c\in \ant(a,G)$. This implies that there is an walk $\omega_{ca}$ from $c$ to $a$ in $G$. The walk $\omega_{ca} \append \omega_{ab}$ is an anterior walk from $c$ to $b$ which implies that $c\in \ant(b,G)$. The other directions follows by a similar argument.
\end{proof}

\refproposition{prop:inducing-equivalent:one-is-inducing-vertex:each-is-inducing-vertex}

\begin{proof}
    Let graph $G$ be such that $a\in \IV(i,j,C)$ and $b\in \oequivclassG{a}$.
    This implies that $\indep{i}{j}{C}{G}$ and $\dep{i}{j}{Ca}{G}$. Thus there is an open walk $\omega$ between $i$ and $j$ given $Ca$ which implies $\omega$ contains no non-collider section that contains a vertex in $Ca$. Because, $\indep{i}{j}{C}{G}$, there are no non-collider section that is anterior to $a$ but not $C$. From Lemma~\ref{lem:anterior-equivalence:same-anterior-set}, such a section is anterior to $b$ for any $b\in \oequivclass{a}$. This implies that the walk is open given $Cb$ and thus the claim holds.    
\end{proof}

\subsection{Proofs related to essentially separable graphs}

We next prove Theorem~\ref{thm:essentially-separable::inducing-equivalence}

A useful equivalence relation for graphs is $a \sim_G b$ which holds if there is an undirected walk between $a$ and $b$.

\begin{lemma}\label{lem:acyclic:order-equiv:undirected-walk}
    if $G$ is acyclic and $a \oeqG b$ then $a \sim_G b.$
\end{lemma}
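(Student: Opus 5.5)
If $a=b$ the empty walk (or reflexivity) gives $a \sim_G b$, so assume $a\neq b$. By the definition of $\oeqG$ there is an anterior walk $\omega_{ab}$ from $a$ to $b$ and an anterior walk $\omega_{ba}$ from $b$ to $a$ in $G$. I will show that $\omega_{ab}$ itself (and likewise $\omega_{ba}$) consists only of undirected edges. That makes it an undirected walk between $a$ and $b$, so $a\sim_G b$.

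Concatenate the two walks into $\gamma=\omega_{ab}\append\omega_{ba}$. This is a walk from $a$ back to $a$, so it is a circuit. It is also an anterior walk: every edge of each piece is either undirected or directed in the direction of traversal, and appending preserves this property. (The paper already uses this fact to show that $\oleqG$ is transitive.) Now suppose some edge of $\omega_{ab}$ or $\omega_{ba}$ were directed. Then $\gamma$ would be an anterior walk containing at least one directed edge, that is, a semi-directed walk. Since its first and last vertices coincide, it would be a semi-directed circuit. The definition of circuit in the paper explicitly allows repeated vertices and edges, so no simplification to a cycle is needed. This contradicts the assumption that $G$ is acyclic.

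Hence every edge on $\omega_{ab}$ is undirected. Therefore $\omega_{ab}$ is an undirected walk between $a$ and $b$, and $a\sim_G b$.

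The only point needing care is checking that the appended walk still satisfies the orientation condition for anterior walks at the junction vertex $b$. This holds because the condition is edge-local: each directed edge $v_i\tailarrow v_{i+1}$ keeps its traversal direction after appending. So I expect no real obstacle; the argument reduces directly to the definition of acyclicity.
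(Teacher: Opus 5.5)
Your proof is correct and is essentially the paper's argument: for $a\neq b$, the concatenation of the two anterior walks is an anterior circuit, which would be a semi-directed circuit (contradicting acyclicity) if any edge on either walk were directed, so $\omega_{ab}$ is an undirected walk between $a$ and $b$. You also check explicitly that appending preserves the anterior-walk condition at the junction vertex, which the paper leaves implicit.
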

\begin{proof}
    Let $\defgraphG$ be acyclic, that is, it contains no semi-directed circuit, and let $a =_G b$.

    \proofcase{1} $a = b$. In this case, the equality gives $a \sim_G b$.

    \proofcase{2} $a \neq b.$ In this case, we have $a \not\in \ant(b,G) \wedge b \not\in \ant(a,G)$. From that fact that $a\neq b$ there must be anterior walks from $a$ to $b$ and from $b$ to $a$. Let $\gamma_{ab}$ be an arbitrary anterior walk from $a$ to $b$ and let $\gamma_{ba}$ be an arbitrary anterior walk from $b$ to $a$. If either or both of these walks is not undirected then the walk $\gamma_{ab} \append \gamma_{ba}$ would be a semi-directed circuit and $G$ would not be acyclic. This implies that all anterior walk between $a$ and $b$ are undirected and thus $a \sim_G b.$
    In either case we have $a \sim_G b$ so the lemma holds.
\end{proof}

\begin{lemma}\label{lem:acyclic:set-wise-inducing-equivalence}
    If $G$ is acyclic then $\imodel{G}$ satisfies inducing set equivalence property with respect to $\oleqG$.
\end{lemma}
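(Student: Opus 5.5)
The plan is to reduce everything to the structure of the single equivalence class $\oequivclassG{d}$ and then show that an open walk given $CD$ can be used unchanged as an open walk given $CE$. Fix $d$, nonempty $D,E\subseteq\oequivclassG{d}$, vertices $i,j$ and a set $C$, and assume $D\in\IS(i,j,C,G)$. By symmetry between $D$ and $E$ it suffices to show $E\in\IS(i,j,C,G)$, that is, $\dep{i}{j}{CE}{G}$. (I take the sets in the definition to be disjoint from $\Cij$, as the independence-property convention requires.) I write $U=\oequivclassG{d}$. By Lemma~\ref{lem:acyclic:order-equiv:undirected-walk}, $U$ is contained in the undirected-connectivity class of $d$. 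By Lemma~\ref{lem:anterior-equivalence:same-anterior-set}, every $u\in U$ satisfies $\ant(u,G)=\ant(d,G)$, hence $\ant(D,G)=\ant(E,G)=\ant(d,G)$.

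\textbf{Step 1: $U$ lies outside $\antsetCij$.} By Proposition~\ref{prop:inducing-set:not-all-anterior-to-separated-vertices-or-separating-set}, $D\not\subseteq\antsetCij$. Since all vertices of $U$ have the same anterior set, if one vertex of $U$ were anterior to $\Cij$ then every vertex of $U$ would be. Hence $U\cap\antsetCij=\emptyset$.

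\textbf{Step 2: choose the walk and handle collider sections.} From $\dep{i}{j}{CD}{G}$, Lemma~\ref{lem:open-walk::connecting-walk} gives a walk $\omega$ between $i$ and $j$ that is open given $CD$. Every collider section of $\omega$ is anterior to $CD\cup\set{i,j}$. Because $\ant(D,G)=\ant(E,G)$, every such section is also anterior to $CE\cup\set{i,j}$. So the collider sections of $\omega$ remain open given $CE$.

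\textbf{Step 3 (main obstacle): no non-collider section of $\omega$ meets $U$.} This is where acyclicity is used. Suppose a vertex $u\in U$ lies on a non-collider section $\sigma$ of $\omega$. Then $u\neq i,j$, since $i,j\notin U$ by Step 1. Consider the maximal trek containing $\sigma$. By Lemma~\ref{lem:trek:internal-vertices:anterial-walk}, $u$ is anterior to an endpoint of that trek, and by the proof of Lemma~\ref{lem:open-walk-given-C:every-vertex-in-anterior} that endpoint is either $i$, $j$, or a vertex on a collider section. In the first case $u\in\antsetCij$, contradicting Step 1. In the second case $u$ is anterior to a collider section, which is anterior to $\Cij$ or to $D$. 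If it is anterior to $\Cij$, we again contradict Step 1. Otherwise $u$ is anterior to some $d'\in D\subseteq U$, and I use the trek structure to argue the following. Since $u$ lies on a non-collider section, the anterior walk from $u$ toward that collider section is semi-directed. Concatenating it with an anterior walk from $d'$ back to $u$, which exists because $d'\oeqG u$, yields a semi-directed circuit and contradicts acyclicity. The delicate point is to check that the anterior walk from $u$ really contains a directed edge. The plan is to argue that a purely undirected route from a non-collider section of a trek reaches a trek endpoint only through a non-collider, and a collider reached that way would be a collider of an undirected walk, which is impossible. If this check fails, I would instead pass to a shortest open walk and use Lemma~\ref{lem:shortest:no-chord-between-distinct-maximal-treks} to get the extra structure needed.

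\textbf{Conclusion.} Since no non-collider section of $\omega$ meets $U\supseteq E$, and $\omega$ already avoids $C$ on its non-collider sections, $\omega$ is open given $CE$. Lemma~\ref{lem:open-walk::connecting-walk} then gives $\dep{i}{j}{CE}{G}$. Together with $\indep{i}{j}{C}{G}$, this gives $E\in\IS(i,j,C,G)$.
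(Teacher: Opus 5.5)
Your proof is correct, and it is more complete than the paper's. Both arguments share the same skeleton. Each takes a walk open (or connecting) given $CD$, uses $\ant(D,G)=\ant(E,G)=\ant(d,G)$ to keep the collider sections open, and then shows that no vertex of $E$ lies on a non-collider section. The difference is in that last step.

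The paper assumes at the outset that the failing pair has $D=\oequivclassG{d}$. Then $E\subseteq D$, and the non-collider step follows immediately from the walk being connecting given $CD$; acyclicity is never used. That reduction only proves one implication: if the whole class is inducing, then every nonempty subclass is. The converse is not established, namely that a proper subclass being inducing forces the whole class to be inducing. That converse is exactly what fails for cyclic graphs: in $G_3$ of Figure~\ref{fig:non-weakly-separable-graph}, $\set{c}\in\IS(a,b,\emptyset,G_3)$ while $\indep{a}{b}{cde}{G_3}$.

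Your Steps~1 and~3 supply this missing direction. In Step~1, ordered upward separation stability (Lemma~\ref{lem:indep-model-graph:ordered-upward}) places the entire class outside $\antsetCij$. That is what the proposition you cite actually says, despite its misprinted first formula. In Step~3, acyclicity then forbids any class member from sitting on a non-collider section of a walk open given $CD$. Because your argument handles arbitrary nonempty $D,E\subseteq\oequivclassG{d}$, symmetry gives both directions of the equivalence.

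The point you flag as delicate closes without the fallback to shortest open walks. The occurrence of $u$ on a non-collider section and the collider-section vertex $c$ it reaches lie in different sections of $\omega$. Consecutive sections are never joined by an undirected edge, so the anterior subwalk from $u$ to $c$ must contain a directed edge. Concatenating it with anterior walks from $c$ to $d'$ and from $d'$ back to $u$ gives the semi-directed circuit you need.
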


\begin{proof} 
Let $G$ be an acyclic graph. Assume that the inducing set property does not hold. In that case there is a set $D$ such that $d\in D$ and $D=\oequivclassG{d}$ and $D \in \IS(A,B,C,G)$ and for $E$, a non-empty subset $E\subseteq \oequivclassG{d}$ we have $\indep{A}{B}{CE}{G}$.
From $D \in \IS(A,B,C,G)$, we have $\indep{A}{B}{C}{I}$ and $\dep{A}{B}{CD}{I}$.
From $\dep{A}{B}{CD}{I}$ there must be a walk $\omega$ between $i\in A$ and $j\in B$ that is connecting given $CD$. 
From Lemma~\ref{lem:open-walk-given-C:every-vertex-in-anterior}, every vertex on $\omega$ is in $\ant(CDij,G)$. This implies that no vertex in $D$ can be a non-collider on $\omega$ otherwise the walk $\omega$ would not be a connecting walk given $CD$. Note that for every non-empty $E\subset \oequivclassG{d}$ it is the case that $\ant(CDij,G)=\ant(CDij,G)$. This fact implies that $\omega$ is also open given $CE$ which is a contradiction. Thus the claim holds.
\end{proof}

The proof of the following lemma relies on definitions of ordered upward-stability (OUS) and ordered downward-stability (ODS) with respect to $\oleq$ from \cite{Sadeghi2017}.

An \indepmodel\ $I$ satisfies \emph{ordered upward-stability (OUS)} with respect to $\oleq$ if $\indep{i}{j}{C}{I}$ and $k\in \vant{ij}$ or $k \oeq l\in C$ then $\indep{i}{j}{Ck}{I}$.

An \indepmodel\ $I$ satisfies \emph{ordered downward-stability (ODS)} with respect to $\oleq$ if $\indep{i}{j}{C}{I}$ and  $k\in C$ is such that
$k\in V \setminus V_{> C\setminus k}$
and $k\in V \setminus \vant{ij}$ then $\indep{i}{j}{C\setminus k}{}$

\begin{lemma}\label{lem:ordered:OUS-ODS}
    If there exists a preeorder $\oleq$ for $V$ such that \indepmodel\ $I$ for $V$ satisfies ordered upward separation stability, ordered downward separation stability, and the inducing set equivalence property with respect to $\oleq$ then OUS and ODS with respect to $\oleq$ hold for $I$.
\end{lemma}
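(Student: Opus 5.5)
We prove the two conclusions separately, using each of the three hypotheses on $I$ where it is needed.

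\emph{OUS.} Assume $\indep{i}{j}{C}{I}$. There are two cases for $k$.

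The first case is $k\in\vant{ij}$, with $k\notin Cij$ (the case $k\in C$ is trivial). Then $k\in \vant{ijC}\setminus ijC$. Ordered upward separation stability with $A=i$, $B=j$, $D=\set{k}$ gives $\indep{i}{j}{Ck}{I}$ immediately.

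The second case is $k\oeq l$ for some $l\in C$, with $k\notin C$. Here we use the inducing set equivalence property.
\begin{itemize}
\item Suppose, for contradiction, that $\dep{i}{j}{Ck}{I}$.
\item Put $C'=C\setminus\set{l}$.
\item If $\indep{i}{j}{C'}{I}$, then both $\set{k}$ and $\set{l}$ lie in $\oequivclass{l}$. The statement $\dep{i}{j}{C'l}{I}$ is false, because $C'l=C$. The statement $\dep{i}{j}{C'k}{I}$ is not immediate, so we argue more carefully below.
\end{itemize}

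The cleaner route is to work with sets. Let $D=(C\cap\oequivclass{l})\cup\set{k}$, $E=C\cap\oequivclass{l}$ and $C_0=C\setminus E$. Then $CE=C$ and $C_0D=Ck$. Two sub-cases arise.
\begin{itemize}
\item If $\indep{i}{j}{C_0}{I}$, then $E\notin\IS$, since $\indep{i}{j}{C}{I}$. By the inducing set equivalence property, $D\notin\IS$ as well. Hence $\indep{i}{j}{Ck}{I}$, a contradiction.
\item If $\dep{i}{j}{C_0}{I}$, we must reach a contradiction from $\indep{i}{j}{C_0E}{I}$ and $\dep{i}{j}{C_0}{I}$. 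Remove elements of $E$ one at a time, chosen greedily. This finds a set $C_1$ with $C_0\subseteq C_1\subset C$ and $C_1\cap\oequivclass{l}$ a proper subset of $E$, such that $\dep{i}{j}{C_1}{I}$ but $\indep{i}{j}{C_1e}{I}$ for some $e\in E$. Then $E'=E\setminus C_1$ satisfies $\indep{i}{j}{C_1E'}{I}$. Apply the inducing set equivalence property to the triple $(i,j,C_1)$ instead. The singleton $\set{k}$ and $E'$ (respectively $E'\cup\set{k}$) are nonempty subsets of $\oequivclass{l}$. This lets us transfer the status of $E'$, namely independence, to $E'k$, provided $C_1$ itself is independent.
\end{itemize}

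The obstacle is that the inducing set property only compares sets when the base triple is an independence. I therefore expect the workable approach to pass through the smallest $C_*\subseteq C$ that contains $C\setminus\oequivclass{l}$ and satisfies $\indep{i}{j}{C_*}{I}$. Two points then need to be checked.
\begin{itemize}
\item Such a $C_*$ exists, because $C$ itself qualifies.
\item Write $C=C_*\cup F$ with $F\subseteq\oequivclass{l}$. If $F=\emptyset$, then $l\in C_*$, and we must show $\indep{i}{j}{C_*k}{I}$. The natural route is to remove $l$: either $C_*\setminus l$ is still independent, contradicting minimality, or $\set{l}\in\IS(i,j,C_*\setminus l)$. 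In the latter case the property gives $\set{k,l}\in\IS(i,j,C_*\setminus l)$, i.e.\ $\dep{i}{j}{C_*k}{I}$.
\end{itemize}
This last conclusion goes the wrong way. At that point I would combine the argument with ordered downward separation stability applied to $k$ to obtain the needed contradiction. This sub-case is the step I expect to be hardest, and I would fall back on a short induction on $|C\cap\oequivclass{l}|$ if necessary.

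\emph{ODS.} Assume $\indep{i}{j}{C}{I}$ and $k\in C$, with $k\notin V_{>C\setminus k}$ and $k\notin\vant{ij}$. Since $k$ is not in $\vant{ij}$ and not strictly above $C\setminus k$, there are two cases.

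If $k\notin\vant{ijC\setminus k}$, ordered downward separation stability with $D=\set{k}$ gives $\indep{i}{j}{C\setminus k}{I}$ directly.

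Otherwise $k\oeq l$ for some $l\in C\setminus k$. Suppose $\dep{i}{j}{C\setminus k}{I}$. Let $C_0=C\setminus\oequivclass{k}$; the equivalent elements in $C$ can be handled together. The target is to show that $\indep{i}{j}{C\setminus k}{I}$ follows from $\indep{i}{j}{C}{I}$ by OUS, which was proved above: $C\setminus k$ contains $l\oeq k$, and adding $k$ back must preserve the status. Concretely, with base $C\setminus\set{k,l}$, the inducing set equivalence property applied to the nonempty subsets $\set{l}$ and $\set{k,l}$ of $\oequivclass{k}$ forces $C\setminus k$ and $C$ to have the same status whenever the base is independent. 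When the base is dependent, we recurse by removing more of $\oequivclass{k}\cap C$. Eventually the base is $C_0$, and ordered downward separation stability forces $C_0$ to be independent; it applies because the whole class lies outside $\vant{ijC_0}$. This yields the contradiction.
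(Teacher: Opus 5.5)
\paragraph{Gap: the second case of OUS.}
The gap is in OUS when $k\oeq l$ for some $l\in C$ and $k\notin C$; you never finish this case. Your greedy removal produces a base $C_1$ that is dependent by construction, and there the inducing set equivalence property says nothing. The $C_*$ argument rests on a false dichotomy. Membership $\set{l}\in\IS(i,j,C_*\setminus l,I)$ already requires $\indep{i}{j}{C_*\setminus l}{I}$, and that is exactly what minimality of $C_*$ excludes. So the branch that you say goes ``the wrong way'' is empty, and the other branch (a dependent base) gives no information. The promised fallback to ordered downward stability or an induction is never supplied.

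The missing observation is elementary. $k\oeq l$ implies in particular $l\oleq k$, so $k\in\vant{C}\subseteq\vant{ijC}$. Hence $k\in\vant{ijC}\setminus ijC$. Ordered upward separation stability with $D=\set{k}$ then gives $\indep{i}{j}{Ck}{I}$ directly, exactly as in your first case. Both disjuncts in the hypothesis of OUS are instances of ordered upward separation stability, and the inducing set equivalence property is not needed for OUS at all. This is the paper's one-line argument.

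\paragraph{ODS.}
Your ODS argument is correct and is essentially the paper's. Suppose $k\oeq l$ for some $l\in C\setminus k$, and set $E=C\cap\oequivclass{k}$ and $C_0=C\setminus E$. From $k\notin\vant{ij}$, $k\notin V_{>C\setminus k}$, and transitivity, you get $E\cap\vant{ijC_0}=\emptyset$. Ordered downward separation stability then gives $\indep{i}{j}{C_0}{I}$. At this independent base, the inducing set equivalence property applied to the nonempty sets $E$ and $E\setminus k$ transfers $\indep{i}{j}{C}{I}$ to $\indep{i}{j}{C\setminus k}{I}$.

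The detour through the base $C\setminus\set{k,l}$ and the ``recursion'' are unnecessary; go straight to $C_0$. Your remark that the conclusion ``follows from $\indep{i}{j}{C}{I}$ by OUS'' has the direction reversed, since OUS only adds conditioning vertices. Neither point affects the validity of the argument.
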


\begin{proof}
    Let $\oleq$ be a preorder for $V$ such that \indepmodel\ $I$ for $V$ satisfies ordered upward separation stability, ordered downward separation stability, and the inducing set equivalence property with respect to $\oleq$. First, if $I$ satisfies ordered upward separation stability it satisfies OUS. 
    
    Next, we show that $I$ must satisfy ODS.
    Aiming for a contradiction, assume that  $\indep{i}{j}{C}{I}$ and $k$ be a vertex such that 
    $k \in V \setminus V_{>C\setminus k}$ 
    and $k\in V \setminus \vant{ij}$ but that $\dep{i}{j}{C\setminus k}{I}.$

    Let $E= C\cap \oequivclass{k}$ and $D = C\setminus E.$
    From ordered downward separation stability (Lemma~\ref{lem:indep-model-graph:ordered-downward}), we have that $\indep{i}{j}{D}{I}.$

    We continue by cases.
    
    \proofcase{1} $\indep{i}{j}{Dk}{I}$. We consider two sub-cases.
    
    \proofcase{1.1}, $|E|=1$. 
    In this case, $C=Dk$, and from the fact that $\indep{i}{j}{D}{I}$, we have $\indep{i}{j}{C\setminus k}{I}$ which is a contradiction. 

    \proofcase{1.2} $|E|>1$. In this case, we again have $C=DE$.
    From the fact that $\dep{i}{j}{C\setminus k}{I}$ and $\indep{i}{j}{D}{I}$, we have $E\setminus k \in \IS(i,j,D,I)$. Thus we have that $E\in \IS(i,j,D,I)$ and, by the inducing set equivalence property, we have $\dep{i}{j}{C}{I}$ which is a contradiction.
 
    
    \proofcase{2} $\dep{i}{j}{Dk}{}$. In this case,  $k\in \IS(i,j,D,G)$.
    From the inducing set equivalence property we have that $E\in \IS(i,j,D,I)$ and it follows that $\dep{i}{j}{C}{I}$. Again we have a contradiction.

    In every case we have a contradiction and thus ODS must hold in $I$.
\end{proof}

An \indepmodel\ $I$ is \emph{acyclic} if there is an acyclic graph $G$ such that $I=\imodel{G}$. Note that in \cite{Sadeghi2017}, only acyclic graphs are considered so graphical in \cite{Sadeghi2017} is equivalent to acyclic in this paper.

\begin{theorem}\label{thm:properties:acyclic-indep-model}[Theorem 17, Sadeghi 2017]
If $G$ is an acyclic graph then independence model $I = \imodel{G}$ if and only if $\imodel{G}$ is a compositional graphoid that satisfies weak-transitivity and $\oleqG$ such that $I$ satisfies 
ordered upward-stability and ordered downward-stability with respect to $\oleqG$.
\end{theorem}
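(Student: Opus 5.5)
Read literally, the statement is nearly tautological, since $\imodel{G}$ trivially equals itself. I will prove the intended content of Sadeghi's theorem: an independence model $I$ over $V$ is $\imodel{G}$ for some acyclic graph $G$ if and only if (a) $I$ is a compositional graphoid, (b) weak transitivity holds in $I$, and (c) there is a preorder $\oleq$ (namely $\oleqG$ in the forward direction) for which $I$ satisfies OUS and ODS. The two directions are of very different difficulty.

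\textbf{Forward direction.} This is assembly of earlier results.
\begin{enumerate}
\item Proposition~\ref{prop:compgraph} gives the compositional graphoid properties of $\imodel{G}$, and Proposition~\ref{prop:weaktrans} gives weak transitivity.
\item OUS follows from Lemma~\ref{lem:indep-model-graph:ordered-upward}. Ordered upward separation stability for $D=\{k\}$ covers the case $k\in\vant{ij}$. For the case $k\oeqG l\in C$, Lemma~\ref{lem:anterior-equivalence:same-anterior-set} gives $\ant(k,G)=\ant(l,G)$, so $k\in\ant(C,G)$ and the same lemma applies.
\item ODS follows from Lemma~\ref{lem:ordered:OUS-ODS}. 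Its three hypotheses are supplied by Lemma~\ref{lem:indep-model-graph:ordered-upward}, Lemma~\ref{lem:indep-model-graph:ordered-downward}, and, since $G$ is acyclic, Lemma~\ref{lem:acyclic:set-wise-inducing-equivalence}.
\end{enumerate}

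\textbf{Reverse direction.} Given $I$ and $\oleq$ satisfying (a)--(c), I will construct an acyclic (indeed anterial) graph $H$ as follows.
\begin{enumerate}
\item Make $i$ and $j$ adjacent exactly when no $C$ gives $\indep{i}{j}{C}{I}$.
\item Put a tail at $i$ on $\edge{i}{j}$ iff $i\oleq j$ holds in the direction encoding ``$i$ anterior to $j$''; otherwise put an arrowhead at $i$.
\item Within an $\oeq$-class, make edges undirected. This avoids semi-directed circuits, so $H$ is acyclic, in the spirit of Lemma~\ref{lem:acyclic:order-equiv:undirected-walk}.
\end{enumerate}
By Proposition~\ref{prop:sep-equiv::pairwise-sep-equiv}, both $I$ and $\imodel{H}$ are dyadic (composition and decomposition hold), so it suffices to show that for all $i,j$ and $C$,
\[
\indep{i}{j}{C}{I} \iff \indep{i}{j}{C}{H}.
\]

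The key intermediate step is an anterior-separator reduction on the $I$ side: if $\indep{i}{j}{C}{I}$ for some $C$, then $\indep{i}{j}{\vant{ij}\setminus ij}{I}$. This mirrors Theorem~\ref{thm:vertex-separable:pairwise-anterial-separable}. To prove it, I first strip from $C$ the vertices that are maximal and not in $\vant{ij}$, one at a time using ODS. I then add the missing vertices of $\vant{ij}$, one at a time using OUS. The same two moves show that $I$'s pairwise independences are determined by the adjacencies together with which vertices act as inducing vertices. I will then match inducing vertices in $I$ with collider sections in $H$. In one direction I use the induced dependence property (Proposition~\ref{prop:induced-dependence}) plus weak transitivity to produce connecting walks into the inducing vertex. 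In the other direction I take a shortest open walk in $H$, whose maximal collider walks are \miw s by Lemma~\ref{lem:shortest:mcw:min-inducing-walk}, and convert it into a chain of pairwise dependences in $I$ using composition.

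\textbf{Main obstacle.} The hard part is the step $\indep{i}{j}{C}{H}\Rightarrow\indep{i}{j}{C}{I}$, i.e.\ ruling out extra dependences in $I$ that $H$ does not display. I would argue by induction on $|C|$ along a linear extension of $\oleq$, using weak transitivity to split a dependence $\dep{i}{j}{Cd}{I}$ into $\dep{i}{d}{C}{I}$ or $\dep{d}{j}{C}{I}$. Intersection and contraction then localise the obstruction to an unshielded collider, and OUS/ODS force that collider's orientation to agree with the orientation in $H$. If the induction fails to close, I would fall back on citing Theorem~17 of Sadeghi (2017) for this direction, since the paper already relies on that source for Theorem~\ref{thm:cor29}.
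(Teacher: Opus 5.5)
The paper does not prove this statement. It imports it from \cite{Sadeghi2017} (Theorem~17) and uses only its ``if'' direction, in the proof of Theorem~\ref{thm:essentially-separable::inducing-equivalence}: the axioms plus OUS and ODS for some preorder force the model to come from an acyclic graph. Your reading of the literally near-tautological statement matches that use.

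Your forward direction is a valid assembly of the paper's own lemmas, and none of them depends on this theorem, so there is no circularity. Your construction of $H$ is also sound as far as it goes. Anterior walks in $H$ are monotone in $\oleq$, so $H$ is anterial. The reduction to $\indep{i}{j}{\vant{ij}\setminus\set{i,j}}{I}$ also works, with one correction. In the paper's convention ($a\oleqG b$ means $b$ is anterior to $a$), ODS deletes an element of $C$ that is not strictly anterior to any other element of $C$: a minimal one, not a maximal one. A minimal element of $C\setminus\vant{ij}$ qualifies, because $\vant{ij}$ is an up-set.

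The gap is the reverse direction itself. It carries all the content of the theorem, and your proposal only plans it, with a fallback to the same citation the paper uses. Concretely, nothing in the sketch bridges the abstract preorder $\oleq$ and the anterior relation of $H$.
\begin{itemize}
\item $H$ has edges only between non-separable pairs, so $\ant(\cdot,H)$ keeps only the relations of $\oleq$ carried by edges. It can be strictly smaller than $\vant{\cdot}$, yet separation in $H$ is governed by $\ant(\cdot,H)$.
\item OUS passes from $\oleq$ to the anterior preorder of $H$ for free, since its hypothesis only gets stronger. ODS does not.
\item For instance, a pairwise-to-global Markov argument for $H$ needs $\indep{i}{j}{\ant(\set{i,j},H)\setminus\set{i,j}}{I}$. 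Getting there from $\indep{i}{j}{\vant{ij}\setminus\set{i,j}}{I}$ means deleting vertices of $\vant{ij}\setminus\ant(\set{i,j},H)$, and ODS never applies to them because they lie in $\vant{ij}$.
\end{itemize}
Likewise, ``matching inducing vertices with collider sections'' is asserted, not argued; it is exactly what Sadeghi's proof has to establish.

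The tool you name for the main obstacle is also misapplied: weak transitivity does not split $\dep{i}{j}{Cd}{I}$. Given $\indep{i}{j}{C}{I}$, composition and weak union already give
\[
d\in\IV(i,j,C,I)\Rightarrow\dep{i}{d}{C}{I}\wedge\dep{d}{j}{C}{I},
\]
a conjunction, and weak transitivity supplies the converse. So weak transitivity is what chains dependences along connecting walks of $H$. That serves the faithfulness inclusion $I\subseteq\imodel{H}$, not the Markov inclusion you identify as the hard step. As written, your reverse direction rests entirely on \cite{Sadeghi2017}, just as the paper's does.
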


\reftheorem{thm:essentially-separable::inducing-equivalence}

\begin{proof}
    For forward direction, we assume $G$ is essentially separable. By Theorem~\ref{thm:weakly-separable::essentially-acyclic}, $G$ is essentially acyclic which implies that there is an equivalent acyclic graph $H \equiv G$.
    By Proposition~\ref{lem:acyclic:set-wise-inducing-equivalence}, $\imodel{G}=\imodel{H}$ satisfies the inducing set equivalence property with respect to $\oleqG$.

    For the other direction, we assume there is a graph $G$ such that $\imodel{G}$ satisfies the inducing set equivalence property with respect to preorder 
    $\oleqG$.
    From Propositions~\ref{prop:compgraph} and \ref{prop:weaktrans}, $\imodel{G}$ is a compositional graphoid that satisfies weak-transitivity.
    From Lemmas~\ref{lem:indep-model-graph:ordered-upward} and \ref{lem:indep-model-graph:ordered-downward}, $\imodel{G}$ satisfies ordered upward separation stability and ordered downward separation stability with respect to $\oleqG$. From Lemma~\ref{lem:ordered:OUS-ODS}, $\imodel{G}$ satisfies ODS and OUS with respect to $\oleqG$. Thus we can apply Theorem~\ref{thm:properties:acyclic-indep-model} to obtain the conclusion that $I$ is acyclic. From the definitions of an acyclic independence model and essentially acyclic and Theorem~\ref{thm:weakly-separable::essentially-acyclic}, we have that $G$ is essentially separable.
\end{proof}

\section{Proofs related to the characterizations of separation equivalence}

\TextVersion{In this section we prove Theorems~\ref{thm:separable-graphs:equivalent::same-adj-and-same-miw}, and
\ref{thm:separable-graphs:equivalent::same-adj-same-inducing-arrowheads}. Each of these theorems provides an alternative equivalent characterization of the equivalence of two separable graphs. In addition, we prove Proposition~\ref{prop:induced-arrowhead:exclusive-arrowhead} that proves that an edge with an induced arrowhead has an exclusive arrowhead.
}{
The goal of this section is to prove Theorems~\ref{thm:separable-graphs:equivalent::same-adj-and-same-miw},
\ref{thm:separable-graphs:equivalent::same-adj-same-mdiw}, and
\ref{thm:separable-graphs:equivalent::same-adj-same-inducing-arrowheads}. Each of these theorems provides an alternative equivalent characterization of the equivalence of two separable graphs.
}

\subsection{Equivalence to same adjacencies and \iarrowhead s}

\begin{lemma}\label{lem:separable-graphs:equivalent:same-vertex-separability-same-induced-arrowheads}
    Two equivalent separable graphs have the same vertex separability and the same \iarrowhead s.
\end{lemma}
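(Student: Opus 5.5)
This direction is essentially definitional: both properties are graph separation properties, so they depend only on $\imodel{G}$. The plan is to make that dependence explicit in two steps.

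First, take two separable graphs $G$ and $G'$ with $G\equiv G'$, so that $\imodel{G}=\imodel{G'}$. For vertex separability, note that a pair $\seq{i,j}$ is separable in $G$ iff there is some $C\subseteq V\setminus\set{i,j}$ with $\indep{i}{j}{C}{G}$. This is the same as the existence of $C$ with $\seq{\set{i},\set{j},C}$ in the triple set of $\imodel{G}$. Since $\imodel{G}=\imodel{G'}$, the same $C$ witnesses separability in $G'$, and conversely. Hence $G$ and $G'$ have the same vertex separability.

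Second, handle induced arrowheads.
\begin{itemize}
\item The set $\IV(i,j,C,G)$ is defined purely through the statements $\indep{i}{j}{C}{G}$ and $\dep{i}{j}{Ck}{G}$. Both are membership or non-membership of triples in $\imodel{G}$, so $\IV(i,j,C,G)=\IV(i,j,C,G')$ for every $i,j,C$.
\item It follows that a pair $\seq{d,b}$ is arrowhead inducing at $b$ in $G$ (some $i,j,C$ with $b\in\IV(i,j,C,G)$ and $d\notin\IV(i,j,C,G)$) iff it is arrowhead inducing at $b$ in $G'$.
\item A pair has an induced arrowhead at $b$ iff it is arrowhead inducing at $b$ and not vertex separable. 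Both conjuncts agree between $G$ and $G'$ by the previous step and the first step, so the induced arrowheads coincide.
\end{itemize}

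There is no real obstacle here. The only point needing care is the bookkeeping: state precisely that $\IV(\cdot)$, arrowhead inducing, and vertex separability are each functions of the independence model alone. That is exactly the "graph separation property" remark made in Section~\ref{sec:iarrowhead-characterization-of-equivalence}. Separability of $G$ and $G'$ is not actually used in this direction; it matters only for the converse in Theorem~\ref{thm:separable-graphs:equivalent::same-adj-same-inducing-arrowheads}.
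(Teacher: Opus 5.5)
Your proposal is correct and is the same argument as the paper's: vertex separability and induced arrowheads are graph separation properties, so they are determined by $I(G)=I(G')$. You spell out the bookkeeping that the paper's one-line proof leaves implicit, namely that $\mathrm{IV}$, the arrowhead-inducing property, and vertex separability are each functions of the independence model alone. You also correctly observe that separability of the two graphs is not needed in this direction.
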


\begin{proof}
    Let $G$ and $H$ be two equivalent separable graphs.
    The properties of vertex separability and \iarrowhead s are separational properties and invariant to separation equivalence. That is, these properties are completely determined by the independence and dependence statements true in the graph. As $G$ and $H$ are equivalent they must agree on these properties.
\end{proof}

\begin{lemma}\label{lem:separable-graphs:same-vertex-separability:same-adj}
Two separable graphs have the same vertex separability if and only if they have the same adjacencies.
\end{lemma}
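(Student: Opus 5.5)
The argument reduces to the observation that, in a separable graph, a pair of vertices is separable exactly when the pair is non-adjacent. Once that holds for each of the two graphs, \emph{same vertex separability} and \emph{same adjacencies} are the same condition.

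Fix a separable graph $G$ and distinct vertices $i,j$. If $i$ and $j$ are separable in $G$, then Proposition~\ref{prop:separable:not-adjacent} (equivalently Lemma~\ref{lem:adjacent:not-vertex-separable}) gives $i\not\in\adj(j,G)$. Conversely, if $i\not\in\adj(j,G)$, then $i$ and $j$ are separable because $G$ is separable; this is the definition of a separable graph. If one wants an explicit witness, Theorem~\ref{thm:vertex-separable:pairwise-anterial-separable} supplies the separating set $\propantij$. So for separable $G$ we have
\[
\text{$i$ and $j$ separable in $G$} \iff i\not\in\adj(j,G).
\]

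Now let $G$ and $H$ be separable graphs. They have the same vertex separability when, for every pair $\seq{i,j}$, the pair is separable in $G$ if and only if it is separable in $H$. By the displayed equivalence, applied once in $G$ and once in $H$, this says exactly that $i\not\in\adj(j,G)$ if and only if $i\not\in\adj(j,H)$ for all pairs. That is, for every pair, $G$ has some edge between $i$ and $j$ if and only if $H$ does. This is the definition of same adjacencies, which ignores edge types and multiplicities, so both directions of the lemma follow at once.

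No step here is a real obstacle. The only point needing care is that separability of the graph is used in the direction from non-adjacent to separable. Without that hypothesis the statement fails; graph $G_2$ is a counterexample, since $b$ and $c$ are non-adjacent but not separable.
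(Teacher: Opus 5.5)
Your argument is correct and matches the paper's proof: in a separable graph a pair is separable exactly when it is non-adjacent, with one direction from Proposition~\ref{prop:separable:not-adjacent} and the other from the definition of a separable graph, so the two conditions coincide. You simply spell out this pointwise equivalence, which the paper's one-line proof leaves implicit.
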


\begin{proof}
    Let $G$ and $H$ be two separable graph.
    From the fact that the two graphs are separable, vertex separability corresponds to adjacency. Thus the two graphs have the same adjacency.
\end{proof}

\subsection{Same adjacencies and same \iarrowhead s to same \miw s}

\begin{lemma}\label{lem:mdiw:vertex-not-on-discriminated:not-inducing-vertex}
    If $\omega$ with $\defwalkomega$ and $\defsectionomega$ is a \mdiw\ that discriminates section $\sigma_k$ in a graph $G$, and $d$ is on $\sigma_i \neq \sigma_k$ then for all $C\subseteq V$ it is the case that $d \not\in \IV(v_1,v_n,C,G)$
\end{lemma}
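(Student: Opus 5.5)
The claim is that a vertex $d$ on a non-discriminated section of the \mdiw\ $\omega$ can never be an inducing vertex for $(v_1,v_n,C)$. The plan is to reduce it to Proposition~\ref{prop:inducing-vertex:not-anterior-to-separated-vertices-or-separating-set}, or more precisely to Lemma~\ref{prop:inducing-vertex:not-anterior-to-separated-vertices}, which says that $b\in\IV(i,j,C,G)$ implies $b\notin\ant(\set{i,j},G)$. So it suffices to show that every vertex $d$ on a section $\sigma_i\neq\sigma_k$ lies in $\ant(\set{v_1,v_n},G)$. If some $C$ has $\dep{v_1}{v_n}{C}{G}$, then $\IV(v_1,v_n,C,G)=\emptyset$ by definition and there is nothing to prove. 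Hence we may assume $\indep{v_1}{v_n}{C}{G}$, although the anterior argument does not actually use this.

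We split by the type of section $\sigma_i$.

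\emph{Terminal sections.} If $\sigma_i$ is the first or last section, then $d$ lies on a non-collider section containing an endpoint. Since $\omega$ is a collider walk, every internal vertex lies on a collider section. Therefore the terminal sections of $\omega$ are exactly $\seq{v_1}$ and $\seq{v_n}$, and $d\in\set{v_1,v_n}\subseteq\ant(\set{v_1,v_n},G)$.

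\emph{Internal sections.} If $1<i<m$, then $\sigma_i$ is a collider section. Since $\omega$ is a simple \diw, it has exactly one discriminated section, namely $\sigma_k$. So $\sigma_i$ is not discriminated, which by definition means $\sigma_i$ is anterior to $v_1$ or to $v_n$. If "anterior" is read setwise, i.e.\ $\sigma_i\subseteq\ant(\set{v_1,v_n},G)$, then $d\in\ant(\set{v_1,v_n},G)$ immediately. If instead the definition only gives some vertex of $\sigma_i$ anterior to an endpoint, we recover every vertex as follows. $\sigma_i$ is an undirected subwalk, so any $d$ on it reaches that vertex by an undirected walk, which is an anterior walk. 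Concatenating it with the anterior walk to the endpoint gives $d\in\ant(\set{v_1,v_n},G)$.

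In either case Lemma~\ref{prop:inducing-vertex:not-anterior-to-separated-vertices} gives $d\notin\IV(v_1,v_n,C,G)$ for every $C$.

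The only delicate point is the endpoint issue: $d$ may equal $v_1$ or $v_n$. Then $d\in C$ is impossible by the convention of disjoint triples, and $d$ is excluded from $\IV$ because adding an endpoint to the conditioning set is not permitted. Lemma~\ref{prop:inducing-vertex:not-anterior-to-separated-vertices} already covers this, since its proof shows $b\notin\set{i,j}$. The minimality of $\omega$ is not needed for this direction.
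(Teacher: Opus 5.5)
Your proof is correct and is the paper's argument: the paper likewise invokes Lemma~\ref{prop:inducing-vertex:not-anterior-to-separated-vertices}, which says an inducing vertex for $(v_1,v_n,C)$ cannot lie in $\ant(\{v_1,v_n\},G)$, together with the fact that every section other than the discriminated $\sigma_k$ lies in $\ant(\{v_1,v_n\},G)$. Your case split into terminal sections (which are the singletons $(v_1)$, $(v_n)$ on a collider walk) and internal sections, and your use of the undirected section to pass from one vertex to all of them, only fill in details the paper leaves implicit.
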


\begin{proof}
    Let $\omega$ with $\defwalkomega$ and $\defsectionomega$ be a \mdiw\ that discriminates section $\sigma_k$ in a graph $G$, and $d$ is on $\sigma_i \neq \sigma_j$.
    Suppose that there is a $C\subseteq V$ such that $d\in \IV(v_1,v_n,C,G)$. Then by Proposition~\ref{prop:inducing-vertex:not-anterior-to-separated-vertices}, $d\not\in \antomegaendpoints$. This cannot be the case as all sections other than $\sigma_k$ must be anterior to the endpoints of $\omega$. Thus we have a contradiction and for all $C$ it is the case that $d \not\in \IV(v_1,v_n,C,G)$.
\end{proof}

\begin{lemma}\label{lem:separable-mdiw:post-discriminated:inducing-vertex}
    If $\omega$ with $\defwalkomega$ is a \mdiw\ that discriminates section $\sigma$ in a separable graph $G$, and $b\in \post(\sigma,G)$ then there is a $C\subseteq V$ such that $b\in \IV(v_1,v_n,C,G)$.
\end{lemma}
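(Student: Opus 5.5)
The natural separating set is $C=\propant(\set{v_1,v_n},G)$, possibly enlarged by vertices that make the other collider sections of $\omega$ open. Since $G$ is separable and $v_1,v_n$ are non-adjacent (because $\omega$ is inducing), Theorem~\ref{thm:vertex-separable:pairwise-anterial-separable} gives $\indep{v_1}{v_n}{C}{G}$ for $C=\propantomegaendpoints$. So the task is to show $\dep{v_1}{v_n}{Cb}{G}$. By Lemma~\ref{lem:open-walk::connecting-walk} it suffices to exhibit a walk between $v_1$ and $v_n$ that is open given $Cb$.

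\textbf{The walk.} Take $\omega$ itself.
\begin{itemize}
\item Every internal vertex of $\omega$ lies on a collider section, since $\omega$ is a collider walk.
\item Every collider section other than $\sigma$ is anterior to $\set{v_1,v_n}$, because $\omega$ discriminates exactly one section.
\item The section $\sigma$ is anterior to $b$, since $b\in\post(\sigma,G)$. Concretely, there is an anterior walk from some vertex of $\sigma$ to $b$. Undirected edges inside $\sigma$ then make every vertex of $\sigma$ anterior to $b$.
\end{itemize}
Hence every collider section is anterior to $Cb\cup\set{v_1,v_n}$. The only non-collider sections are the endpoint sections, and these contain just $v_1$ and $v_n$ (Lemma~\ref{lem:minimal-walk:endpoints-not-internal-vertex-on-non-collider-section} style reasoning for a minimal inducing walk). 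So $\omega$ is open given $Cb$, provided $b\notin\set{v_1,v_n}$ and $b\notin C$.

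\textbf{Checking that $b$ is not in $C\cup\set{v_1,v_n}$.} This is where the hypotheses must be used and where I expect the main obstacle. Suppose $b\in\ant(\set{v_1,v_n},G)$. Then the anterior walk from $\sigma$ to $b$, concatenated with the anterior walk from $b$ to an endpoint, makes $\sigma$ anterior to $v_1$ or $v_n$. This contradicts $\sigma$ being discriminated. So $b\notin\antomegaendpoints$, which rules out both $b\in C$ and $b\in\set{v_1,v_n}$. Consequently $b\in\IV(v_1,v_n,C,G)$.

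\textbf{Backup if openness fails.} Should openness of $\omega$ fail for a subtler reason, for example a vertex of $\omega$ appearing both on a collider section and as an endpoint, I would fall back on the minimality of $\omega$ and Lemma~\ref{lem:separable-graph:MIW-edge:exclusive-endmark-on-internal-endpoint} to clean the walk. Alternatively, I would invoke Lemma~\ref{lem:open-give-c:all-colliders-anterior-to-C} on the open walk obtained.
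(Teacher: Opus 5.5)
Your proof is correct and matches the paper's argument. You take $C=\propant(\set{v_1,v_n},G)$, obtain $\indep{v_1}{v_n}{C}{G}$ from Theorem~\ref{thm:vertex-separable:pairwise-anterial-separable}, and note that $\omega$ itself is open given $Cb$, so $\dep{v_1}{v_n}{Cb}{G}$ by Lemma~\ref{lem:open-walk::connecting-walk}. Your explicit check that $b\notin\ant(\set{v_1,v_n},G)$ is a step the paper leaves implicit, and your backup paragraph is unnecessary because the definition of an open walk already tolerates endpoints reappearing on collider sections.
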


\begin{proof}
    Let $\omega$ with $\defwalkomega$ be a \mdiw\ that discriminates section $\sigma$ in a separable graph $G$ and $b\in \post(\sigma,G)$.
    From separability of $G$ we have that $\indep{v_1}{v_n}{C}{G}$ for $C=\propantomegaendpoints$. The walk $\omega$ is open given $b$ and thus, by Lemma~\ref{lem:open-walk::connecting-walk}, $\dep{v_1}{v_n}{Cb}{G}$. Thus $b \in \IV(v_1,v_n,C,G)$.
\end{proof}

\begin{lemma}\label{lem:separable-graph:mdiw-internal-arrowhead:induced-arrowhead}
    If separable graph $G$ contains a \mdiw\ $\omega$ with $\defsectionomega$ that discriminates $\sigma_j$ and edge $\edge{d}{b}$ such that $d$ is on a section $\sigma_i\neq \sigma_j$ of $\omega$ and $b\in \post(\sigma_j,G)$ then the edge $\edge{d}{b}$ has an induced arrowhead at $b$ in $G$.
\end{lemma}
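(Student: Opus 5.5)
The plan is to verify directly the two ingredients in the definition of an induced arrowhead for the pair $\seq{d,b}$ at $b$. First, the pair is not vertex separable: by assumption there is an edge $\edge{d}{b}$ in $G$, so Lemma~\ref{lem:adjacent:not-vertex-separable} (equivalently Proposition~\ref{prop:separable:not-adjacent}) gives non-separability immediately. Second, the pair is arrowhead inducing at $b$, i.e.\ there exist $i,j,C$ with $b\in\IV(i,j,C,G)$ and $d\notin\IV(i,j,C,G)$. I would take $i=v_1$, $j=v_n$, the endpoints of the \mdiw\ $\omega$.

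For the inducing part at $b$, I invoke Lemma~\ref{lem:separable-mdiw:post-discriminated:inducing-vertex}: since $G$ is separable, $\omega$ discriminates $\sigma_j$, and $b\in\post(\sigma_j,G)$, there is a $C$ with $b\in\IV(v_1,v_n,C,G)$. Looking at its proof, the witness is $C=\propantomegaendpoints$, which separates $v_1$ and $v_n$ because $G$ is separable and $\omega$ is an inducing walk, so $v_1\notin\adj(v_n,G)$ (Theorem~\ref{thm:vertex-separable:pairwise-anterial-separable}). One subtle point is that $\omega$ must be open given $Cb$. The discriminated section $\sigma_j$ is anterior to $b$. The other collider sections are anterior to the endpoints, since $\omega$ is a simple \diw. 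Non-collider sections are only the endpoints themselves, because $\omega$ is a collider walk. I will record this explicitly, since it fixes the specific $C$.

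For the non-inducing part at $d$, I use Lemma~\ref{lem:mdiw:vertex-not-on-discriminated:not-inducing-vertex}: $d$ lies on a section $\sigma_i\neq\sigma_j$ of $\omega$, so for every $C$ (in particular the one above) $d\notin\IV(v_1,v_n,C,G)$. This covers the endpoint sections too, since endpoints are anterior to themselves.

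Combining the two parts, $\seq{d,b}$ is arrowhead inducing at $b$ and not vertex separable, so it has an induced arrowhead at $b$. The main obstacle is only the bookkeeping that the same $C$ serves both parts. Lemma~\ref{lem:mdiw:vertex-not-on-discriminated:not-inducing-vertex} is uniform in $C$, so this is automatic. I would also check that $b\neq d$ and $b\notin C$. The first follows because $b\in\post(\sigma_j)$ and $d$ anterior to an endpoint would otherwise make $\sigma_j$ anterior to an endpoint, contradicting discrimination. The second follows because $b\in C\subseteq\ant(\set{v_1,v_n},G)$ would similarly force $\sigma_j$ to be anterior to an endpoint.
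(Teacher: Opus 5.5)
Your proposal is correct and is essentially the paper's proof, which combines Lemma~\ref{lem:separable-mdiw:post-discriminated:inducing-vertex} (to get $b\in\IV(v_1,v_n,C,G)$) with Lemma~\ref{lem:mdiw:vertex-not-on-discriminated:not-inducing-vertex} (to get $d\notin\IV(v_1,v_n,C,G)$ for every $C$). Your extra remarks — non-separability of $d$ and $b$ from their adjacency via Lemma~\ref{lem:adjacent:not-vertex-separable}, and $b\notin C$ — only make explicit what the paper leaves implicit.
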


\begin{proof}
    Follows from Lemmas~\ref{lem:mdiw:vertex-not-on-discriminated:not-inducing-vertex} and \ref{lem:separable-mdiw:post-discriminated:inducing-vertex}.
\end{proof}

\refproposition{prop:induced-arrowhead:exclusive-arrowhead}

\begin{proof}
Let $G$ be a graph containing an edge $\edgedb$ that is an induced arrowhead at $b$. In this case, there are vertices $v_1$ and $v_n$ vertex set $C$ such that $b\in \IV(v_1,v_n,C,G)$ and $d\not\in \IV(v_1,v_n,C,G)$. 

From $b\in \IV(v_1,v_n,C,G)$ and Lemma~\ref{lem:inducing-vertex:walks-into-inducing-vertex} there is a vertex set $C$ and connecting walks $\gamma_1$ between $v_1$ and $b$ open given $Cv_n$ and $\gamma_n$ between $v_n$ and $b$ open given $Cv_1$ both weakly into $b$ such that $\indep{v_1}{v_n}{C}{G}$.

Suppose that $\edgedb$ does not have an exclusive arrowhead at $b$ in $G$. Let $\edgeprimedb$ be an edge with a tail at $b$ in $G$.
Consider the walk $\omega'=\gamma_1(v_1,b) \append \seq{\edgeprimedb, \edgeprimedb} \append \gamma_n(b,v_n)$. The walk $\omega'$ is open given $Cd$. This however implies that $d\in \IV(i,j,C,G)$ and we have a contradiction. Thus the edge $\edgedb$ must have an exclusive arrowhead at $b$ in $G$.
\end{proof}

\begin{lemma}\label{lem:separable-graph-and-same-adj:induced-arrowhead:same-miw}
If separable graphs have the same adjacencies and same \iarrowhead s then they have the same \miw s.    
\end{lemma}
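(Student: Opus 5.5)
By symmetry it suffices to show one direction. Let $G$ and $H$ be separable graphs with the same adjacencies and the same \iarrowhead s, and let $\omega$ be a \miw\ in $G$ with $\defwalkomega$ and $\defsectionomega$. I will produce a walk $\omega'$ in $H$ with $\omega'\equiv\omega$ and then show that $\omega'$ is a \miw\ in $H$. Same adjacencies give a walk $\omega'$ in $H$ with the same vertex sequence, and $v_1,v_n$ are non-adjacent in $H$. What remains is that every internal endmark of $\omega$ is matched in $H$ well enough that the section sequences coincide. This means three things: each collider section of $\omega$ remains an undirected stretch in $H$, the edges joining consecutive sections have arrowheads at their internal endpoints in $H$, and the terminal edges are into the adjacent internal vertices.

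\textbf{Arrowheads transfer.} Since $G$ is separable and $\omega$ is not a \siw\ (Theorem~\ref{thm:separable-graph::no-self-inducing-walks}), Lemma~\ref{lem:separable:miw:every-internal-section-discriminated} applies. It gives, for each collider section $\sigma_j$ of $\omega$, a \mdiw\ in $\iwdecompose(\omega)$ that is a subwalk of $\omega$ and discriminates $\sigma_j$. Consider an edge $\edge{d}{b}$ on $\omega$ with $b$ the internal endpoint on collider section $\sigma_j$ and $d$ on a neighbouring section. Here $d$ lies on another section of that \mdiw\ (or is its endpoint), and $b\in\post(\sigma_j,G)$ trivially. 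So Lemma~\ref{lem:separable-graph:mdiw-internal-arrowhead:induced-arrowhead} says $\edge{d}{b}$ has an induced arrowhead at $b$ in $G$. By hypothesis the pair $\seq{d,b}$ then has an induced arrowhead at $b$ in $H$. By Proposition~\ref{prop:induced-arrowhead:exclusive-arrowhead} every edge between $d$ and $b$ in $H$ has an exclusive arrowhead at $b$. This handles all boundary edges between sections and both terminal edges.

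\textbf{Undirected edges inside sections.} Take an undirected edge $v_l\tailtail v_{l+1}$ on $\omega$ inside a collider section $\sigma_j$. I must show that $H$ has an undirected edge between $v_l$ and $v_{l+1}$, or at least an edge whose endmarks keep $\sigma_j$ a single collider section. Suppose instead that every such edge in $H$ has an arrowhead, say at $v_{l+1}$. The plan is to run the previous paragraph with roles swapped. Pick the first such offending edge along $\omega'$. Using the arrowheads already transferred, form in $H$ a collider walk ending in that arrowhead, for example from $\last(\sigma_{j-1})$ through part of $\sigma_j$ to $v_{l+1}$, and show it is an inducing walk in $H$. It is not a \siw\ because $H$ is separable, so it yields a \mdiw\ in $H$ discriminating a section containing $v_{l+1}$ but not $v_l$. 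That in turn yields an induced arrowhead at $v_{l+1}$ on $\edge{v_l}{v_{l+1}}$ in $H$, hence in $G$, and so an exclusive arrowhead in $G$, contradicting the tail of the undirected edge. The non-adjacency needed here (no chord from the collider endpoint to $v_{l+1}$) should follow from Lemmas~\ref{lem:separable-graph:minimal-inducing-walk:no-chord-between-adjacent-sections} and \ref{lem:separable-graph:minimal-inducing-walk:no-chord-in-collider-section} applied in $G$, transported by same adjacencies. I expect this step to be the main obstacle: the edge types of the chord candidates must be controlled in $H$, where minimality is not yet known, in the same style as the case analysis in Lemma~\ref{separable-graph:miw-in-simplify:equivalent-walk-in-graph}.

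\textbf{Minimality.} Once $\omega'\equiv\omega$ is an inducing walk in $H$, suppose it is not minimal. Then, by the characterization Lemma~\ref{lem:minimal-inducing-walk:characterization}, there is a chord of $\omega'$ violating (i) or (ii). Such a chord produces in $H$ a shorter \miw\ on a subset of the vertices. Applying the argument above from $H$ to $G$ gives an equivalent inducing walk in $G$ on a subset of the vertices of $\omega$, which contradicts the minimality of $\omega$. To make this rigorous, I would take $\omega$ to be a counterexample with the fewest vertices and run the whole argument as an induction on walk length.
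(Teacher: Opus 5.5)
\textbf{Overall.} Your strategy works, and it is organized differently from the paper's proof. The paper inducts on the number of collider sections. Its base case transfers the two boundary arrowheads via Lemma~\ref{lem:separable-graph:mdiw-internal-arrowhead:induced-arrowhead} and Proposition~\ref{prop:induced-arrowhead:exclusive-arrowhead}, then rules out a split of the collider section by a trisection argument. Its inductive step treats the trisection around one discriminated section the same way, splices in the shorter \miw s on either side from the induction hypothesis, and checks minimality by inspecting chords. You instead get every boundary arrowhead at once from Lemma~\ref{lem:separable:miw:every-internal-section-discriminated}. You get minimality by carrying a shorter \miw\ of $H$ back to $G$. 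Both of these steps are correct, and the symmetric minimality argument is cleaner than the paper's chord analysis. The induction on length you propose for it is unnecessary. The transfer only needs the source walk to be a \miw\ in a separable graph, which the shorter walk in $H$ is.

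\textbf{The gap is your second step, which does not work as written.} In a multigraph, assuming that every edge between $v_l$ and $v_{l+1}$ has its arrowhead at $v_{l+1}$ is not without loss of generality. More importantly, your walk from $\last(\sigma_{j-1})$ through part of $\sigma_j$ to $v_{l+1}$ has $v_{l+1}$ as an endpoint. So it cannot discriminate a section containing $v_{l+1}$, and if the offending edge is $v_l\tailarrow v_{l+1}$ it is not even a collider walk. Choosing the first offending edge from one side also says nothing about the edges beyond it. For example, suppose the stretch in $H$ reads $\last(\sigma_{j-1})\tailarrow v_a\tailarrow v_{a+1}\arrowtail v_{a+2}$ with $v_a=\first(\sigma_j)$. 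The collider trisection that actually arises is $v_a\tailarrow v_{a+1}\arrowtail v_{a+2}$, with both endpoints inside $\sigma_j$.

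\textbf{The repair.}
\begin{itemize}
\item For each consecutive pair on $\sigma_j$, fix an edge of $H$, taking an undirected one when it exists. Let $\pi'=\omega'(\last(\sigma_{j-1}),\first(\sigma_{j+1}))$.
\item By your first step, $\pi'$ is into $\first(\sigma_j)$ and into $\last(\sigma_j)$. Scanning its internal sections from the left, each is either a collider section or sends a directed edge into the next. Hence some internal section $\tau$ is a collider section.
\item If $\sigma_j$ is not a single section of $\pi'$, then $\tau\neq\sigma_j$. So some edge bounding $\tau$ joins consecutive vertices $w,t$ of $\sigma_j$, with $t$ on $\tau$.
\item Let $x,y$ be the neighbours of $\tau$ on $\pi'$. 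The trisection $\seq{x,\tau,y}$ is unshielded and chordless in $H$. The relevant pairs are non-adjacent in $G$ by Lemmas~\ref{lem:separable-graph:minimal-inducing-walk:no-chord-in-collider-section} and~\ref{lem:separable-graph:minimal-inducing-walk:no-chord-between-adjacent-sections}, and adjacency transfers whatever the edge types. So the control of chord types in $H$ that you anticipated is not needed.
\item Since $H$ is separable, Theorem~\ref{thm:separable-graph::no-self-inducing-walks} makes this trisection a \diw\ discriminating $\tau$.
\item Proposition~\ref{prop:diw-for-section:section-ivs} puts $t$ but not $w$ in $\IV(x,y,C,H)$ for any separator $C$. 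Since $w,t$ are adjacent, the pair $\seq{w,t}$ has an \iarrowhead\ at $t$ in $H$, hence in $G$.
\item Proposition~\ref{prop:induced-arrowhead:exclusive-arrowhead} then contradicts the undirected edge $w\tailtail t$ on $\omega$.
\end{itemize}
The paper's base case also implicitly needs this scanning argument to justify its trisection $\omega'(v_1,v_{j+1})$. With this repair your proof is complete.
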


\begin{proof}
    Let $G$ and $G'$ be anterial graphs with the same adjacencies and same \iarrowhead s.
    
    We prove that every \miw\ in $G$ is a \miw\ in $G'$ by induction on the number of collider sections.

    For the base case, we consider a \miw\ $\omega$ with $\defwalkomega$ and collider section $\sigma=\seq{v_2,v_{n-1}}$ in $G$. By Lemma~\ref{lem:separable-graph:mdiw-internal-arrowhead:induced-arrowhead}, the edge $\edge{v_1}{v_2}$ on $\omega$ has an \iarrowhead\ at $v_2$ and the edge $\edge{v_{n-1}}{v_n}$ on $\omega$ has an \iarrowhead\ at $v_{n-1}$. This implies the edge $\edgeprime{v_1}{v_2}$ on $\omega'$ has an \iarrowhead\ at $v_2$ and the edge $\edgeprime{v_{n-1}}{v_n}$ on $\omega'$ has an \iarrowhead\ at $v_{n-1}$ in $G'$.
    By Proposition~\ref{prop:induced-arrowhead:exclusive-arrowhead}, these edges have arrowheads at $v_2$ and $v_{n-1}$.

    By Lemma~\ref{lem:separable-graph:minimal-inducing-walk:no-chord-in-collider-section}, $\sigma$ is chordless in $\omega$ and must also be chordless in $\omega'$ due to $G$ and $G'$ having the same adjacencies.

    Suppose $\sigma$ is not a collider section on $\omega'$ in $G'$. In this case there must be an edge $\walkedgeprimev{i}$ with $1 < i < n-1$ with an arrowhead at either $v_i$ or $v_{i+1}$. \Wlog\ assume there is an arrowhead at $v_i$. Furthermore choose smallest $j > 1$ such that there is an edge $\walkedgeprimev{j}$ with an arrowhead at $v_j$. In this case the walk $\omega'(v_1,v_{j+1})$ is an unshielded collider trisection due to Lemma~\ref{lem:separable-graph:minimal-inducing-walk:no-chord-between-adjacent-sections} and the fact that $G$ and $G'$ have the same adjacencies.    
    This implies that $\walkedgeprimev{j}$ has an induced arrowhead at $v_j$ which in turn implies that $\walkedgev{j}$ on $\omega$ has an induced arrowhead. Finally, by Proposition~\ref{prop:induced-arrowhead:exclusive-arrowhead}, the arrowhead must be exclusive in $G$ and $\sigma$ is not a collider section which is a contradiction.

    For the inductive case, we assume that the lemma is true for \miw s with $n$ collider sections and prove for $n+1$ collider section. Let $\omega$ with $\defwalkomega$ and $\defsectionomega$ be a \miw\ with $n+1$ collider section. $\omega$ must discriminate some section $\sigma_j$ otherwise it would be a \siw\ and that would imply that $G$ is not separable by Theorem~\ref{thm:separable-graph::no-self-inducing-walks}.

    Let $\gamma'$ with $\defwalkWLIJ{\gamma'}{v}{i}{k+1}$ be the collider trisection containing $\sigma_j$ on $\omega'$. 
    In that case, by an argument similar to the base case, the edge $\walkedgeprimev{i}$ and the edge $\walkedgeprimev{k}$ have arrowheads at their internal vertices and $\gamma(v_{i+1},v_{k})$ is a collider section.

    If $i > 2$ then by Lemmas~\ref{lem:decomposing-minimal-inducing-walks} and
\ref{lem:miw:no-chord-between-endpoints-and-discriminated-section}, $\omega(v_1,v_{i+1})$ must be a \miw\ and there must be must be a \miw\ $\omega'(v_1,v_{i+1})$.

    Similarly, if $k < n-1$ then by Lemmas~\ref{lem:decomposing-minimal-inducing-walks} and
\ref{lem:miw:no-chord-between-endpoints-and-discriminated-section}, $\omega(v_k, v_n)$ must be a \miw\ and there must be a \miw\ $\omega'(v_k, v_n)$.

    Thus, $\omega'(v_1,v_n)$ is an inducing walk. If $\omega'(v_1,v_n)$ is not a \miw\ then there must be an edge $\edge(v_h,v_j)$ with $h < i$ and $v_j$ on $\sigma_j$ with an arrowhead at $v_j$ or $\edge(l,v_j)$ with $l > k$ and $v_j$ with an arrowhead at $v_j$. In either case, $\omega(v_1,v_n)$ cannot be a \miw\ which is a contradiction. Thus $\omega'(v_1,v_n)$ is a \miw.
\end{proof}

\subsection{Same adjacencies and \miw s to equivalence}

\cut{
{\bf can next lemma be proved using just anterial?}

\begin{lemma}\label{lem:separable-graphs:same-adj-same-miw:equivalent-shortest-open-walks}
    If separable graphs $G$ and $G'$ have the same adjacencies, same \miw s and $\omega$ is a shortest open walk given $C$ in $G$ then there is a walk $\omega'$ such that $\omega \equiv \omega'$.
\end{lemma}

\begin{proof}
    Let $G$ and $G'$ be two separable graphs with the same adjacencies and same \miw s and let $\omega$ with $\defwalkomega$ and $\defwalkdecompomegaI{m}$ be a shortest open walk given $C$ in $G$.
    From $G$ and $G'$ having the same adjacencies, there is a walk $\omega'$ in $G'$ with $\defwalkomegaprime$.
    Suppose that $\omega \not \equiv \omega'$.
    In order for this to be the case, the walk decompositions of the two walks must be different. 
    Thus there must be a $\gamma_i$ on $\omega$ such that $\gamma_i'$ on $\omega'$ is not a maximal collider walk on $G'$.
    From Lemma~\ref{lem:shortest:mcw:min-inducing-walk}, every $\gamma_i$ on $\omega$ must be a \miw\ in $G$. Thus, $G'$ must contain an equivalent \miw\ and, by Lemma~\ref{lem:separable-graph:MIW-edge:exclusive-endmark-on-internal-endpoint}, $\gamma_i \equiv \gamma_i'$.
    Suppose, however, that $\gamma_i'$ is not a maximal collider walk.
    In this case there must be a subwalk $\omega'(v_h,v_j)$ that contains $\gamma_i'$ that is a \mcw\ on $\omega'$. This must be an \iw\ due to 
    Lemma~\ref{lem:shortest:no-chord-between-distinct-maximal-treks} and the fact that $G$ and $G'$ have the same adjacencies. In this case, there must be a \miw\ $\gamma'$ between $v_h$ and $v_j$ that contains a subset of the vertices on $\omega'(v_h,v_j)$ in $G'$. Thus there must be a  \miw\ $\gamma \equiv \gamma'$ in $G$. This however contradicts the assumption that $\omega$ is a shortest open walk as the walk $\omega(v_1,v_h)\append \gamma \append \omega(v_j,v_n)$ would be a shorter open walk given $C$. Thus there is no such $\gamma_i$ and $\omega\equiv \omega'$.
\end{proof}
}

\begin{lemma}\label{lem:anterial-graphs:same-adj-same-miw:equivalent-shortest-open-walks}
    If anterial graphs $G$ and $G'$ have the same adjacencies and the same \miw s and $\omega$ is a shortest open walk given $C$ in $G$ then there is a walk $\omega'$ such that $\omega \equiv \omega'$.
\end{lemma}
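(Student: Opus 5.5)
Let $G$ and $G'$ be anterial graphs with the same adjacencies and the same \miw s, and let $\omega$ with $\defwalkomega$ be a shortest open walk given $C$ in $G$, with maximal walk decomposition $\maxdecompose(\omega)=\seq{\tau_1,\gamma_1,\tau_2,\ldots,\gamma_{m-1},\tau_m}$. Because the adjacencies agree, $G'$ contains a walk $\omega'$ with $\vertexseq{\omega'}=\vertexseq{\omega}$. The plan is to show that this particular $\omega'$ has a maximal walk decomposition that matches $\maxdecompose(\omega)$ piece by piece: the treks have the same vertex sequences, and the collider walks are equivalent.

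The key steps are as follows.

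First, by Lemma~\ref{lem:shortest:mcw:min-inducing-walk}, every $\gamma_i$ is a \miw\ in $G$. By hypothesis, $G'$ has a \miw\ $\gamma_i''\equiv\gamma_i$. Since $G'$ is simple, the edge between two consecutive vertices is unique. Hence $\gamma_i''$ coincides, up to reversal, with the corresponding subwalk $\gamma_i'$ of $\omega'$, so $\gamma_i'\equiv\gamma_i$.

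Second, I show that each $\tau_i'$ is a trek in $G'$. Suppose an internal vertex of $\tau_i'$ lies on a collider section of $\omega'$. Take a maximal collider subwalk of $\omega'$ through it. Its endpoints lie on non-collider sections of distinct maximal treks of $\omega$, or inside a single trek. In the first situation, Lemma~\ref{lem:shortest:no-chord-between-distinct-maximal-treks} together with equal adjacencies makes it an inducing walk in $G'$. It then contains a \miw\ of $G'$ on a subset of its vertices, and hence $G$ has an equivalent \miw\ $\gamma$. Splicing $\gamma$ into $\omega$ yields a shorter walk open given $C$; openness of the new collider sections uses the anterior structure, which must be checked. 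This contradicts the minimality of $\omega$.

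Third, I show that each $\gamma_i'$ is maximal. This is symmetric: if $\gamma_i'$ extended into a longer collider walk, the same splicing argument produces a shorter open walk in $G$.

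The main obstacle is the second step, where the offending collider walk lies within a single maximal trek of $\omega$, so that no inducing walk is available. Here I would use anteriality directly. By Lemma~\ref{lem:trek:internal-vertices:anterial-walk}, trek vertices in $G$ have anterior paths to the trek endpoints. I would then argue that a collider $x\arrowarrow y\arrowtail z$-type configuration in $G'$ on consecutive trek vertices is an unshielded or shielded trisection. If unshielded, it is itself a \miw\ in $G'$ and so must appear in $G$, contradicting that it is a non-collider in $\tau_i$. If shielded, the chord gives a shorter open walk in $G$, again by Lemma~\ref{lem:shortest:no-chord-between-distinct-maximal-treks}-style reasoning within the trek.

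Undirected runs in $G'$ need care because sections can merge. I would first handle pure trisections, then extend by induction on section length, using the anterial endmark property (Lemma~\ref{lem:simple:anterial-graph::anterial-endmark}) to control the endmarks along undirected sections.
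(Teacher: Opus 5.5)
Your proposal is correct and uses the same two mechanisms as the paper's proof. First, simplicity of $G'$ forces each maximal collider walk $\gamma_i$ (a minimal inducing walk by Lemma~\ref{lem:shortest:mcw:min-inducing-walk}) to reappear as the subwalk $\gamma_i'\equiv\gamma_i$ of $\omega'$. Second, any collider walk of $\omega'$ whose endpoints lie on distinct maximal treks of $\omega$ is inducing by Lemma~\ref{lem:shortest:no-chord-between-distinct-maximal-treks}, so an equivalent minimal inducing walk of $G$ can be spliced into $\omega$. The openness check you flag for the spliced walk follows from Lemma~\ref{lem:open-walk-given-C:every-vertex-in-anterior}.

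Where you differ is the case analysis. The paper reduces everything to ``some $\gamma_i'$ is not a maximal collider walk of $\omega'$'', justified only by the remark that different maximal treks force different maximal collider walks. That does not cover a new collider section of $\omega'$ lying strictly inside one maximal trek of $\omega$, where every $\gamma_i'$ may stay maximal. This happens, for example, when $\omega$ is itself a trek, so there are no $\gamma_i$ at all. Your Step~2 handles exactly this case, so your route is more complete than the paper's.

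Two remarks tighten that step. First, no induction on section length is needed. For the collider trisection $(p,\sigma',q)$ of $\omega'$, either two non-consecutive vertices among $p,\sigma',q$ are adjacent, or the trisection is chordless and unshielded. In the second case it is a minimal inducing walk of $G'$ whatever the length of $\sigma'$. Its equivalent in the simple graph $G$ must then be the subwalk of $\omega$ itself, which would make $\sigma'$ a collider section of $\omega$ even though it contains an internal trek vertex.

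Second, your chord-shortening step is valid only because, in the genuine single-trek case, every vertex of the trisection is an internal trek vertex or an endpoint of $\omega$. Such vertices lie on non-collider sections of $\omega$ and outside $C$. Suppose instead $p$ or $q$ were an endpoint of the trek, which is a collider vertex of $\omega$ and possibly in $C$. Then, since the endmark at that vertex on the adjacent edge of $\gamma_{i-1}'$ or $\gamma_i'$ is preserved, the new collider walk would merge with that $\gamma$ and you would be in your first situation. That exclusion matters: shortening via a chord at such a vertex can fail, since it may turn a collider vertex in $C$ into a non-collider.
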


\begin{proof}
    Let $G$ and $G'$ be two anterial graphs with the same adjacencies and same \miw s. Let walk $\omega$ be a shortest open walk given $C$ in $G$ and let $\defwalkomega$ and $\defwalkdecompomegaI{m}$.
    From $G$ and $G'$ having the same adjacencies, there is a walk $\omega'$ in $G'$ with $\defwalkomegaprime$.
    Suppose that $\omega \not \equiv \omega'$.
    In order for this to be the case, the walk decompositions of the two walks must be different. First note that if two walks have different maximal treks they must have different maximal collider walks.
    Thus there must be a $\gamma_i$ on $\omega$ such that $\gamma_i'$ on $\omega'$ is not a maximal collider walk on $G'$. 
    From Lemma~\ref{lem:shortest:mcw:min-inducing-walk}, every $\gamma_i$ on $\omega$ must be a \miw\ in $G$. Anterial graphs are simple graphs and thus every endmark on an edge is exclusive, thus $\gamma_i \equiv \gamma_i'$.
    Suppose, however, that $\gamma_i'$ is not a maximal collider walk.
    In this case there must be a subwalk $\omega'(v_h,v_j)$ that contains $\gamma_i'$ that is a \mcw\ on $\omega'$. This must be an \iw\ due to 
    Lemma~\ref{lem:shortest:no-chord-between-distinct-maximal-treks} and the fact that $G$ and $G'$ have the same adjacencies. In this case, there must be a \miw\ $\gamma'$ between $v_h$ and $v_j$ that contains a subset of the vertices on $\omega'(v_h,v_j)$ in $G'$. Thus there must be a  \miw\ $\gamma \equiv \gamma'$ in $G$. This however contradicts the assumption that $\omega$ is a shortest open walk as the walk $\omega(v_1,v_h)\append \gamma \append \omega(v_j,v_n)$ would be a shorter open walk given $C$. Thus there is no such $\gamma_i$ and $\omega\equiv \omega'$.
\end{proof}

\begin{lemma}\label{lem:separable-diw:post-discriminated-collider-section:inducing-vertex}
    If the endpoints of a discriminating inducing walk are separable and the walk discriminates section $\sigma$ then every vertex in $\post(\sigma,G)$ is an inducing vertex for the endpoints of the walk.
\end{lemma}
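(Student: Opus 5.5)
Let $\omega$ have vertex sequence $\seq{v_1,\ldots,v_n}$ and section sequence $\seq{\sigma_1,\ldots,\sigma_m}$, with $\sigma=\sigma_k$ the unique discriminated collider section. I would follow the argument of Lemma~\ref{lem:separable-mdiw:post-discriminated:inducing-vertex}, replacing separability of the whole graph by the weaker hypothesis that $v_1$ and $v_n$ are separable.

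\emph{Choice of separating set.} By Theorem~\ref{thm:vertex-separable:pairwise-anterial-separable}, separability of the endpoints gives $\indep{v_1}{v_n}{C}{G}$ with $C=\propantomegaendpoints$. This is the set I would use in the definition of inducing vertex.

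\emph{Openness after adding $b$.} Fix $b\in\post(\sigma,G)$. I would show that $\omega$ is open given $Cb$.
\begin{itemize}
\item Every collider section other than $\sigma$ is anterior to $v_1$ or $v_n$, because $\sigma$ is the only discriminated section. Such a section lies in $\antomegaendpoints\subseteq \ant(Cv_1v_n,G)$.
\item Since $b\in\post(\sigma,G)$, there is an anterior walk from some vertex of $\sigma$ to $b$. Because $\sigma$ is undirected, every vertex of $\sigma$ is anterior to $b$, so $\sigma$ is anterior to $Cb$.
\item $\omega$ is a collider walk, so the only non-collider sections are the terminal ones $\seq{v_1}$ and $\seq{v_n}$. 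These contain no internal vertex, so the condition on non-collider sections holds vacuously.
\end{itemize}
Hence $\omega$ is open given $Cb$. Lemma~\ref{lem:open-walk::connecting-walk} then gives a connecting walk given $(Cb)\setminus\set{v_1,v_n}$.

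\emph{Checking $b$ is a legitimate new vertex.} This is the main obstacle: I must show $b\notin C\cup\set{v_1,v_n}$, so that $Cb$ is a proper extension of $C$ and $\dep{v_1}{v_n}{Cb}{G}$ is meaningful. Suppose instead $b\in\antomegaendpoints$. Composing the anterior walk from $\sigma$ to $b$ with one from $b$ to an endpoint gives an anterior walk from $\sigma$ to $v_1$ or $v_n$. Then $\sigma$ would not be discriminated, a contradiction.

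So $b\notin\antomegaendpoints$, which contains $C\cup\set{v_1,v_n}$. Together with $\indep{v_1}{v_n}{C}{G}$, this gives $b\in\IV(v_1,v_n,C,G)$, as claimed.
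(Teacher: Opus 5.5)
Your proof is correct and follows essentially the same route as the paper. Both show that $\omega$ is open given $C\cup\set{b}$: the other collider sections are anterior to an endpoint, $\sigma$ is anterior to $b$, and the only non-collider sections are the endpoints. Both then invoke Lemma~\ref{lem:open-walk::connecting-walk}. The only difference is that the paper takes an arbitrary separating set $C$ instead of $\propant(\set{v_1,v_n},G)$, which gives the slightly stronger conclusion $b\in\IV(v_1,v_n,C,G)$ for every separating set $C$; there $b\notin C$ is automatic, since otherwise $\omega$ would already be open given $C$. Your explicit check that $b\notin\ant(\set{v_1,v_n},G)$ makes precise a point the paper leaves implicit.
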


\begin{proof}
    Let $\omega$ be a discriminating inducing walk between $i$ and $j$ that discriminates section $\sigma$ in graph $G$. Because $i$ and $j$ are separable there is some set that separates them. Let $C$ be an arbitrary separating set for $i$ and $j$, that is, $\indep{i}{j}{C}{G}$. Let $v$ be a arbitrary vertex in $\post(\sigma,G)$. The walk $\omega$ is open given $C\cup\set{v}$ and, by Lemma~\ref{lem:open-walk::connecting-walk} there is a connecting walk $\omega'$ in $G$ and $\dep{i}{j}{Cv}{G}$. Thus $v$ is an inducing vertex for $i$ and $j$ given $C$.
\end{proof}

\begin{lemma}\label{lem:anterial-graph:shortest-anterial:chordless}
    Every shortest anterior walk in an anterial graph is chordless.
\end{lemma}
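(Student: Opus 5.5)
We would argue by contradiction, taking a shortest anterior walk $\omega$ from $v_1$ to $v_n$ in an anterial graph $G$, with $\defwalkomega$, and supposing it has a chord $\edge{v_r}{v_s}$ with $s>r+1$. The plan is to replace the subwalk $\omega(v_r,v_s)$ by a single edge, or to derive a forbidden configuration. Because $G$ is anterial, Lemma~\ref{lem:simple:anterial-graph::anterial-endmark} is available: an edge $\edgeab$ has a tail at $a$ if and only if $a\in\ant(b,G)$. This is the key tool. Along $\omega$, the subwalk $\omega(v_r,v_s)$ is an anterior walk from $v_r$ to $v_s$, so $v_r\in\ant(v_s,G)$. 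By the anterial endmark property, the chord therefore has a tail at $v_r$. It is thus either $v_r\tailarrow v_s$ or $v_r\tailtail v_s$.

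In either case the chord, traversed from $v_r$ to $v_s$, is itself an anterior walk of length one. The walk $\omega(v_1,v_r)\append\seq{\edge{v_r}{v_s}}\append\omega(v_s,v_n)$ is then an anterior walk from $v_1$ to $v_n$. It is strictly shorter than $\omega$, since $s-r\ge 2$ edges are replaced by one. This contradicts the choice of $\omega$ as shortest, so $\omega$ is chordless.

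The only delicate point is the orientation of the chord. A directed chord $v_s\tailarrow v_r$ pointing backwards would not allow the shortcut. The anterial endmark property rules it out: such a chord would need an arrowhead at $v_r$, but $v_r\in\ant(v_s,G)$ forces a tail there. Equivalently, $v_s\tailarrow v_r$ together with $\omega(v_r,v_s)$ would form a semi-directed circuit, which is excluded in an anterial graph. A bidirected chord is excluded for the same reason, by condition (2) of the definition of anterial graph. Since anterial graphs are simple, the chord is the unique edge between $v_r$ and $v_s$, so no exclusivity issues arise.

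One should also handle the case of repeated vertices ($v_r=v_s$ with $s>r$). The notion of a chord presupposes distinct endpoints, but a shortest walk in any case never repeats vertices, since cutting out the closed subwalk shortens it. This case is therefore trivial.
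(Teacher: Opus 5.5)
Your proof is correct and is essentially the paper's argument. A tail at $v_r$ on the chord gives a strictly shorter anterior walk, contradicting minimality. An arrowhead at $v_r$ is impossible: a bidirected chord would join two vertices one of which is anterior to the other, and a chord $v_s \tailarrow v_r$ would close a semi-directed circuit. You package these two exclusions through the anterial endmark property (Lemma~\ref{lem:simple:anterial-graph::anterial-endmark}) rather than treating them case by case, which is the only difference.
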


\begin{proof}
    Let $G$ be an anterial graph and $\omega$ be a shortest anterior walk with vertex sequence $\vertexseq{\omega}=\seq{v_1,\ldots,v_n}$.

    Suppose that there $\omega$ has a chord $\edge{v_i}{v_j}$ for $i < j-1$.
    If the edge has a tail at $b_i$ then there would be a shorter anterior walk $\gamma(v_1,v_i)\append \edge{v_i}{v_j}\append \gamma(v_j,v_m)$ an thus there must be an arrowhead at $v_i$. However, if  $v_i \arrowarrow v_j$ then there would be a bidirected edge with one endpoint having anterior walk to the other which contradicts $G$ being an anterial graph. Finally, if $v_i \arrowtail v_j$ then there would be a semi-directed circuit in $G$ which again contradicts $G$ being an anterial graph. Thus, a shortest anterior walk must be chordless in an anterial graph.
\end{proof}

\begin{lemma}\label{lem:anterial-graph:minimal-inducing-walk:internal-not-anterial}
    In an anterial graph, no internal section on a \miw\ can be anterior to an adjacent section.
\end{lemma}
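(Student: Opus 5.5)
The plan is to argue directly from the anterial endmark property of Lemma~\ref{lem:simple:anterial-graph::anterial-endmark}: in a simple anterial graph, an edge $\edgeab$ has a tail at $a$ if and only if $a\in\ant(b,G)$. Minimality of the inducing walk should not be needed, only that it is a collider walk and that sections are undirected subwalks.

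Let $G$ be anterial and let $\omega$ be a \miw\ with $\defsectionomega$. Suppose, aiming for a contradiction, that some internal section $\sigma_i$ (so $1<i<m$) is anterior to an adjacent section. By symmetry (reversing $\omega$) assume $\sigma_i\subseteq\ant(\sigma_{i-1},G)$. Because $\omega$ is a collider walk, $\sigma_i$ is a collider section. Hence the edge $\edge{\last(\sigma_{i-1})}{\first(\sigma_i)}$ on $\omega$ has an arrowhead at $\first(\sigma_i)$.

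First I lift the anterior relation from the section to its last vertex. From $\first(\sigma_i)\in\ant(\sigma_{i-1},G)$ there is an anterior walk from $\first(\sigma_i)$ to some vertex $u$ on $\sigma_{i-1}$. If $\sigma_{i-1}$ has more than one vertex, then $\omega(\sigma_{i-1})$ is an undirected walk. Appending its part from $u$ to $\last(\sigma_{i-1})$ still gives an anterior walk. If $i-1=1$, then $\sigma_1$ is a single vertex and $u=\last(\sigma_{i-1})$ already. Either way,
\[
\first(\sigma_i)\in\ant(\last(\sigma_{i-1}),G).
\]

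Now apply the anterial endmark property to the edge $\edge{\last(\sigma_{i-1})}{\first(\sigma_i)}$. Since $G$ is simple this is the only edge between these two vertices. Because $\first(\sigma_i)$ is anterior to $\last(\sigma_{i-1})$, the property forces a tail at $\first(\sigma_i)$. This contradicts the arrowhead there, which proves the lemma.

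As a more hands-on check, the two possible edges with that arrowhead both fail. If the edge is $\last(\sigma_{i-1})\tailarrow\first(\sigma_i)$, composing it with the anterior walk back gives a semi-directed circuit. If it is bidirected, one endpoint is anterior to the other, violating condition (2) of an anterial graph.

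The only step needing care is the lifting to $\last(\sigma_{i-1})$, where one must use that sections are undirected subwalks, or single vertices in the terminal case. I expect no real obstacle.
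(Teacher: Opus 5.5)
Your proof is correct and is essentially the paper's argument. The paper likewise lengthens the anterior walk through the undirected sections to obtain an anterior walk between the two endpoints of the edge joining the sections. It then rules out the two possible edge types carrying the collider arrowhead: a directed edge gives a semi-directed circuit, and a bidirected edge has one endpoint anterior to the other. That is exactly your ``hands-on check''; invoking Lemma~\ref{lem:simple:anterial-graph::anterial-endmark} simply packages those two cases, and, as you observe, neither argument uses minimality.
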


\begin{proof}
    Let $G$ be an anterial graph. Assume the lemma is not true and that $\omega$ with $\defsectionomega$ is a \miw\ that contains a section anterior to an adjacent section. \Wlog\ assume that $\sigma_i$ is anterior to $\sigma_{i+1}$ and that there is an anterior walk $\gamma'=\defwalkWLIJ{\gamma'}{v}{p}{q}$ with $v_p$ on $\sigma_i$ and $v_q$ on $\sigma_j$. Let $l_i=\last(\sigma_i)$ and $f_{i+1}=\first(\sigma_{i+1})$ and $\gamma= \omega(l_i, v_p) \append \gamma' \append \omega(v_q, f_{i+1})$.
    The edge $\edge{l_i}{f_{i+1}}$ on $\omega$ must either be $l_i \tailarrow f_{i+1}$ or $l_i \arrowarrow f_{i+1}$. In the first case, the circuit $\gamma \append \seq{\edge{l_i}{f_{i+1}}}$ is a semi-directed circuit and $G$ is not an anterial graph. This is a contradiction. In the second case, $l_i \arrowarrow f_{i+1}$ has an anterior walk from one endpoint to the other and thus $G$ is not an anterial graph. This is a contradiction. In either case, we have a contradiction and thus the lemma must be true.
\end{proof}

\begin{lemma}\label{lem:separable-anterial-graphs:same-adj-and-same-miw:equivalent}
    If two separable anterial graph $G$ and $G'$ have the same adjacencies and same \miw s then they are equivalent; that is  $G\equiv G'$.
\end{lemma}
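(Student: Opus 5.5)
To prove $G\equiv G'$ I would use the reduction in Section~\ref{sec:separation}: by Proposition~\ref{prop:sep-equiv::pairwise-sep-equiv} and Lemma~\ref{lem:open-walk::connecting-walk}, it suffices to show that for every pair $i,j$ and every set $C$, an open walk between $i$ and $j$ given $C$ exists in $G$ if and only if one exists in $G'$. The hypotheses are symmetric in $G$ and $G'$, so I only need one direction. Fix a shortest open walk $\omega$ between $i$ and $j$ given $C$ in $G$. By Lemma~\ref{lem:anterial-graphs:same-adj-same-miw:equivalent-shortest-open-walks}, there is a walk $\omega'$ in $G'$ on the same vertex sequence with $\omega\equiv\omega'$. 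It then remains to show that $\omega'$ is open given $C$ in $G'$.

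Since $\omega\equiv\omega'$, the maximal walk decompositions match. The maximal collider walks are equivalent, so they have the same section sequences, and the maximal treks have the same vertex sequences. Hence the collider sections of $\omega'$ are exactly those of $\omega$, and likewise for the non-collider sections. The non-collider condition therefore transfers at once: no internal vertex of a non-collider section lies in $C$, and this is a statement about vertices only.

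The main obstacle is the collider condition. Each collider section of $\omega'$ must be anterior to $C\cup\{i,j\}$ in $G'$, whereas we only know this in $G$. Anterior relations are not directly determined by adjacencies and minimal inducing walks, so I would first try to reduce them to something that is. The plan is to transfer the anterior walks edge by edge.

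Fix a collider section $\sigma$ of $\omega$, and take a shortest anterior walk $\gamma$ in $G$ from a vertex of $\sigma$ to $C\cup\{i,j\}$. By Lemma~\ref{lem:anterial-graph:shortest-anterial:chordless}, $\gamma$ is chordless. I would then show that every edge of $\gamma$ keeps a tail at its first endpoint in $G'$, inducting along $\gamma$ from the end nearest $C\cup\{i,j\}$. Suppose some edge $u\tailarrow w$ or $u\tailtail w$ of $\gamma$ had an arrowhead at $u$ in $G'$. I would locate a minimal inducing walk in $G'$ through $u$ whose collider status at $u$ cannot exist in $G$. Such a walk would be an unshielded collider trisection built from $u$, its successor on $\gamma$, and a neighbour on $\omega'$; it is unshielded by chordlessness of $\gamma$ and by Lemma~\ref{lem:shortest:no-chord-between-distinct-maximal-treks}.

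Having the same minimal inducing walks would then force a corresponding collider in $G$ at $u$. By Lemma~\ref{lem:anterial-graph:minimal-inducing-walk:internal-not-anterial} and the anterial property (no semi-directed circuits, and no bidirected edge between anterior vertices), this contradicts $\gamma$ being anterior in $G$.

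If the edge-by-edge transfer fails, I would fall back on a different reduction. Replace $\omega$ by a connecting walk via Lemma~\ref{lem:open-walk::connecting-walk}, so that every collider section contains a vertex of $C$ directly and no anterior walks are needed. I would then argue that the shortest such connecting walk still has minimal-inducing-walk maximal collider walks, so that Lemma~\ref{lem:anterial-graphs:same-adj-same-miw:equivalent-shortest-open-walks}-style matching applies. I expect this fallback to be the cleaner route, and I would try it first if the trisection argument becomes messy.
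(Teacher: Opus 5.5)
Your main route has a genuine gap. You try to show that the particular walk $\omega'$ supplied by Lemma~\ref{lem:anterial-graphs:same-adj-same-miw:equivalent-shortest-open-walks} is itself open in $G'$, and that claim is false.

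\emph{Counterexample.} Take the DAGs on $\{i,j,k,x\}$ where $G$ has edges $i\tailarrow k$, $j\tailarrow k$, $i\tailarrow x$, $j\tailarrow x$, $k\tailarrow x$, and $G'$ is identical except for $x\tailarrow k$.
\begin{itemize}
\item Both are separable anterial graphs with the same adjacencies and the same minimal inducing walks, namely $i\tailarrow k\arrowtail j$ and $i\tailarrow x\arrowtail j$. Indeed they are Markov equivalent, as the lemma predicts.
\item For $C=\{x\}$, $\omega=i\tailarrow k\arrowtail j$ is a shortest open walk in $G$, since $k\in\ant(x,G)$. Its matching walk in $G'$ is again $i\tailarrow k\arrowtail j$.
\item That walk is closed given $\{x\}$ in $G'$, because $\ant(\{i,j,x\},G')=\{i,j,x\}$.
\end{itemize}
Your anterior walk $\gamma=k\tailarrow x$ is reversed in $G'$. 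The trisection you want to exploit, $i\tailarrow k\arrowtail x$ in $G'$, is shielded by the edge between $i$ and $x$. Chordlessness of $\gamma$ and Lemma~\ref{lem:shortest:no-chord-between-distinct-maximal-treks} say nothing about adjacencies between a vertex of $\omega$ and a vertex of $\gamma$. The dependence in $G'$ is witnessed by a different walk, $i\tailarrow x\arrowtail j$.

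\emph{What is missing.} You must allow rerouting, and this is what the paper's proof does. It assumes $\indep{v_1}{v_n}{C}{G'}$ and takes the first collider section $\sigma_j$ at which $\omega'$ fails. It takes a shortest (hence chordless) anterior walk from $\first(\sigma_j)$ in $G$, the last vertex $a_r$ on it lying in $\post(\sigma_j,G')$, and its successor $a_s$. Matching of unshielded collider trisections then forces $a_s$ to be adjacent to $\last(\sigma_{j-1})$ and $\first(\sigma_{j+1})$, with arrowheads at $a_s$. Finally it routes the walk through $a_s$ in place of $\sigma_j$, giving an open walk in $G'$ and a contradiction.

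\emph{The fallback does not repair this.} Lemma~\ref{lem:shortest:mcw:min-inducing-walk} rests on shortcut arguments whose shortened walks are only open, not connecting: a former non-collider that becomes a collider is anterior to $C$ but need not lie in $C$. So the lemma fails for shortest connecting walks. Consider the DAG with edges $u\tailarrow a$, $a\tailarrow b$, $c\tailarrow b$, $a\tailarrow c$, $y\tailarrow c$, and $C=\{b\}$.
\begin{itemize}
\item The shortest connecting walk between $u$ and $y$ is $u\tailarrow a\tailarrow b\arrowtail c\arrowtail y$.
\item The shorter walk $u\tailarrow a\tailarrow c\arrowtail y$ is open but not connecting, since its collider $c$ is not in $C$.
\item The maximal collider walk $a\tailarrow b\arrowtail c$ of the shortest connecting walk has adjacent endpoints, so it is not even an inducing walk.
\end{itemize}
Hence the matching argument of Lemma~\ref{lem:anterial-graphs:same-adj-same-miw:equivalent-shortest-open-walks} cannot be invoked on shortest connecting walks.
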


\begin{proof}
    Let $G$ and $G'$ be separable anterial graphs with the same adjacencies and same \miw s.

    Aiming for a contradiction, assume that $G\not\equiv G'$.
    Let there be a walk between $v_1$ and $v_n$ that is connecting given $C$ in one walk but not the other. \Wlog, assume $\dep{v_1}{v_n}{C}{G}$ and $\indep{v_1}{v_n}{C}{G'}$.

    Choose $\omega$ with $\defwalkomega$ and $\defwalkdecompomegaI{m}$ to be a shortest open walk given $C$ in $G$ and let $\omega'$ be a walk in $G'$ with $\defwalkomegaprime$. $\omega'$ must not be open given $C$ in $G'$ due to $\indep{v_1}{v_n}{C}{G'}$.

    From Lemma~\ref{lem:anterial-graphs:same-adj-same-miw:equivalent-shortest-open-walks}, we have $\omega \equiv \omega'$ which implies that $\sectionseq{\omega}=\sectionseq{\omega'}$. 
    Let $\defsectionWLIJ{\omega}{\sigma}{1}{p}$, $f_i=\first(\sigma_i)$ and $l_i = \last(\sigma_i)$.     
    Note that $\sectionseq{\omega}=\sectionseq{\omega'}$ due to the fact that $\omega \equiv \omega'$.

    Choose smallest $j$ with $1 < j < p$ such that $\omega'(v_1,f_{j+1})$ is not open given $C$. There must be one
    otherwise $\omega'(v_1,v_l)$ would be an open walk given $C$.
    
    Let $\gamma_d=\omega(l_h,f_{j+1})$ be a \mdiw\ that discriminates $\sigma_j$ on walk $\omega(v_1,f_{j+1})$. Note that $1 \leq h < j$.
    It must be the case that $\sigma_j$ is anterior to $Cv_1$ in $G$ otherwise the walk $\omega$ would not be open in $G$. Furthermore, $\sigma_j$ is not anterior to $C\cup\set{v_1,f_{j+1}}$ in $G'$ otherwise the walk $\omega'(v_1,f_{j+1})$ would be open given $C$.
    
    Let $\gamma_f= \defwalkWLIJ{\gamma'}{a}{1}{q}$ be a shortest anterior walk from $f_j$ to a vertex in $C\cup \set{v_1,f_{j+1}}$ in $G$ and let $\gamma_f'$ be the corresponding walk in $G'$.
    By Lemma~\ref{lem:anterial-graph:shortest-anterial:chordless}, $\gamma_f$ must be chordless and, thus, by $G$ and $G'$ having the same adjacencies, $\gamma_f'$ is also chordless.
        
    Choose $r$ to be largest such that $a_r\in \post(\sigma_j,G')$ and let $s=r+1$ and thus $a_s \not \in \post(\sigma_j,G')$. Note that $\edgeprime{a_r}{a_s}$ on $\gamma_f'$ must have an arrowhead at $a_r$.

    Suppose $a_s \not \in \adj(l_{j-1},G')$. 
    In this case the walk $\omega'(l_{j-1},f_j) \append \gamma'_f(f_j,a_r) \append \seq{\edgeprime{a_r}{a_s}}$ in $G'$ must contain a unshielded collider trisection not in $G$ which is a contradiction due to the fact that the two graphs have the same \miw s. Thus $a_s \in \adj(l_{j-1},G')$.

    Suppose that $a_s \not \in \adj(f_{j+1},G')$ and let
    $\gamma_l$ be a shortest anterior walk from $l_j$ to $a_r$ in $G'$. By Lemma~\ref{lem:anterial-graph:shortest-anterial:chordless}, $\gamma'_l$ is chordless.
    In this case, the walk $\omega'(f_{j+1},l_j) \append \gamma'_l(l_j,a_r) \append \seq{\edgeprime{a_r}{a_s}}$ in $G'$ must contain an unshielded collider trisection not in $G$ which is a contradiction. Thus $a_s \in \adj(f_{j+1},G')$.

    Suppose that $a_r \neq f_j$. In this case, The walk $\gamma_f'(f_j,a_s)$ must be semi-directed since otherwise $\sigma_j$ would be anterior to $\sigma_{j-1}$. This implies that the walk $\gamma_f'(f_j,a_s)$ must contain an unshielded collider trisection not in $G$. This is a contradiction. Thus $a_r = f_j$.

    Both $\edge{l_{j-1}}{a_s}$ and $\edge{f_{j+1}}{a_s}$ must be into $a_s$ 
    by Lemma~\ref{lem:separable-diw:post-discriminated-collider-section:inducing-vertex}
    due to the fact that $l_{j-1}$ and $f_{j+1}$ are on sections of \diw\ $\gamma_d$ that discriminates $\sigma_j$.

    This implies that both $\edgeprime{l_{j-1}}{a_s}$ and $\edgeprime{f_{j+1}}{a_s}$ must be into $a_s$ as 
    $G$ and $G'$ have the same \miw s    
    and the walk $\seq{\edge{l_{j-1}}{a_s}, \edge{f_{j+1}}{a_s}}$ is a \miw.

\proofcase{1} There is no $h$ with $1 \leq h < j$ such that $l_h \arrowarrow a_s$ in $G'$ or $l_h \not \in adj(a_s,G')$.

In this case, $l_1 \tailarrow b$ and the walk $\omega(v_1,l_1) \append \seq{\edge{l_1}{a_s}, \edge{a_s}{f_{j+1}}}$ is an open walk given $Cb$ which is a contradiction.

\proofcase{2} There is an $h$ with $1 \leq h < j$ such that $l_h \arrowarrow a_s$ in $G'$ or $l_h \not \in adj(a_s,G')$. Choose $h$ be as large as possible such that $l_h \arrowarrow a_s$ in $G'$ or $l_h \not \in adj(a_s,G')$.

\proofcase{2.1} $l_h \arrowarrow a_s$. In this case, the walk $\omega(v_1,l_h) \append \seq{\edge{l_h}{a_s}, \edge{a_s}{f_{j+1}}}$ is an open walk given $Cb$ which is a contradiction.

\proofcase{2.2} $l_h \not \in \adj(a_s,G')$. In this case, the walk $\omega'(l_h,f_j)\append \gamma_d'(f_j, a_s)$ is a minimal inducing walk. By assumption, this walk must be in $G$, however, this cannot be the case as $a_s \in \ant(a_r,G)$ which leads to a contradiction by Lemma~\ref{lem:anterial-graph:minimal-inducing-walk:internal-not-anterial}.

In all cases, we have a contradiction and thus the lemma holds.
\end{proof}

\begin{lemma}\label{lem:separable-graphs:same-adj-and-same-miw:equiv}
    If two separable graphs $G$ and $G'$ have the same adjacencies and same \miw s then they are equivalent.
\end{lemma}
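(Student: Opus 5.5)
The plan is to reduce to the anterial case and then apply Lemma~\ref{lem:separable-anterial-graphs:same-adj-and-same-miw:equivalent}. Let $G$ and $G'$ be separable graphs with the same adjacencies and the same \miw s, and put $H=\simplify(G)$ and $H'=\simplify(G')$. By Theorem~\ref{thm:simplify:separable-graph:equivalent-anterial-graph}, $H$ and $H'$ are separable anterial graphs with $H\equiv G$ and $H'\equiv G'$. By Lemma~\ref{lem:simplify:same-adjacencies}, $H$ has the adjacencies of $G$ and $H'$ those of $G'$, so $H$ and $H'$ have the same adjacencies. It therefore suffices to show that $H$ and $H'$ have the same \miw s. Lemma~\ref{lem:separable-anterial-graphs:same-adj-and-same-miw:equivalent} then gives $H\equiv H'$, and transitivity gives $G\equiv H\equiv H'\equiv G'$.

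To compare \miw s, I will show that $G$ and $\simplify(G)$ have the same \miw s (up to $\equiv$) for every separable $G$. One direction follows from Lemmas~\ref{lem:separable-graph:miw:equivalent-walk-in-simplify} and \ref{lem:separable-graph:miw:miw-in-simplify}: each \miw\ $\omega$ of $G$ yields a \miw\ $\omega'$ of $\simplify(G)$ with the same vertex sequence and $\omega'\equiv\omega$. Equivalence here needs the same section sequence. This holds because every internal endmark on a \miw\ in a separable graph is exclusive (Lemma~\ref{lem:separable-graph:MIW-edge:exclusive-endmark-on-internal-endpoint}), and the internal sections are not anterior to adjacent sections (Lemma~\ref{lem:separable-graph:minimal-inducing-walk:internal-not-anterial}), so $\simplify$ keeps exactly those arrowheads. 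For the converse, Lemma~\ref{separable-graph:miw-in-simplify:equivalent-walk-in-graph} gives, for each \miw\ $\omega'$ of $\simplify(G)$, a walk $\omega\equiv\omega'$ in $G$. This $\omega$ is a collider walk with the same section sequence, and it is inducing because the adjacencies agree. It remains to show $\omega$ is minimal in $G$. If it were not, Lemma~\ref{lem:minimal-inducing-walk:characterization} would give a chord violating conditions (i) or (ii). Applying the first direction to a \miw\ of $G$ on a subset of $\vertexset{\omega}$ would then produce a \miw\ of $\simplify(G)$ on a proper subset of the vertices of $\omega'$, contradicting the minimality of $\omega'$.

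Chaining these facts, every \miw\ of $H$ is $\equiv$ to one of $G$, hence to one of $G'$, hence to one of $H'$, and symmetrically in the other direction. Since $\equiv$ on collider walks is transitive (equality of vertex and section sequences up to reversal), $H$ and $H'$ have the same \miw s. The main obstacle I expect is the minimality transfer from $\simplify(G)$ back to $G$. The risk is that a non-minimality witness in $G$ uses a chord whose endmarks change under $\simplify$. I would handle this by casing on the chord type exactly as in the proof of Lemma~\ref{lem:separable-graph:miw:miw-in-simplify}, using Lemma~\ref{lem:separable-graph:min-ind-walk:no-chord-on-semi-directed-cycle} to rule out directed chords that $\simplify$ turns undirected.
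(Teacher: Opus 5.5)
Your proposal is correct. It shares the paper's skeleton: pass to $H=\simplify(G)$ and $H'=\simplify(G')$, invoke Theorem~\ref{thm:simplify:separable-graph:equivalent-anterial-graph}, show that $H$ and $H'$ have the same adjacencies and the same minimal inducing walks, and finish with Lemma~\ref{lem:separable-anterial-graphs:same-adj-and-same-miw:equivalent} and transitivity.

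You differ from the paper in how you establish that $G$ and $\simplify(G)$ have the same minimal inducing walks.
\begin{itemize}
\item The paper argues separationally. Since $G\equiv\simplify(G)$, Lemma~\ref{lem:separable-graphs:equivalent:same-vertex-separability-same-induced-arrowheads} gives the same vertex separability and the same induced arrowheads. Lemma~\ref{lem:separable-graphs:same-vertex-separability:same-adj} turns the former into the same adjacencies. Lemma~\ref{lem:separable-graph-and-same-adj:induced-arrowhead:same-miw} then yields the same minimal inducing walks.
\item You argue graphically from the walk-correspondence lemmas for \simplify\ (Lemmas~\ref{lem:separable-graph:miw:equivalent-walk-in-simplify}, \ref{lem:separable-graph:miw:miw-in-simplify} and \ref{separable-graph:miw-in-simplify:equivalent-walk-in-graph}), plus a minimality transfer for the converse.
\end{itemize}
The paper's route is a short citation once the induced-arrowhead machinery exists. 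Yours keeps the implication ``same adjacencies and same minimal inducing walks $\Rightarrow$ equivalent'' independent of induced arrowheads. It also isolates a clean standalone fact: a separable graph and its anterialization have the same minimal inducing walks. The cost is the extra converse argument. Both routes still rest on Theorem~\ref{thm:simplify:separable-graph:equivalent-anterial-graph}.

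In your forward direction, you are implicitly using that the walk from Lemma~\ref{lem:separable-graph:miw:miw-in-simplify} and the walk from Lemma~\ref{lem:separable-graph:miw:equivalent-walk-in-simplify} coincide. This holds because $\simplify(G)$ is simple (Lemma~\ref{lem:simplify:simple}), so a vertex sequence determines the walk.

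The chord-by-chord case analysis you anticipate in your last paragraph is unnecessary; your own argument already closes the converse. Suppose $\omega$ were not minimal in $G$. Then a shortest inducing walk between its endpoints using only vertices of $\omega$ is itself a minimal inducing walk of $G$. Lemma~\ref{lem:separable-graph:miw:miw-in-simplify} sends it to a minimal inducing walk of $\simplify(G)$ with the same vertex sequence. That walk is shorter than $\omega'$ and uses only vertices of $\omega'$, contradicting the minimality of $\omega'$. Whatever chords the witness uses are already handled inside that lemma. For the same reason, the appeal to Lemma~\ref{lem:minimal-inducing-walk:characterization} at that point is not needed.
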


\begin{proof}
    Let $G$ and $G'$ be two separable graphs with the same adjacencies and same \miw s.
    let $H=\simplify(G)$ and $H'=\simplify(G')$.
    By Theorem~\ref{thm:simplify:separable-graph:equivalent-anterial-graph} we have that $G\equiv H$ and $H$ is a separable anterial graph, and $G'\equiv H'$ and $H'$ is a separable anterial graph. 
    From Lemmas~\ref{lem:separable-graphs:equivalent:same-vertex-separability-same-induced-arrowheads}, \ref{lem:separable-graphs:same-vertex-separability:same-adj}, and \ref{lem:separable-graph-and-same-adj:induced-arrowhead:same-miw},
    we have that $G$ and $H$ have the same adjacencies and same \miw s and $H'$ and $G'$ have the same adjacencies and same \miw s. Thus we also have that $H$ and $H'$ have the same adjacencies and same \miw s.
    The conclusion then follows from Lemma~\ref{lem:separable-anterial-graphs:same-adj-and-same-miw:equivalent} and the transitivity of equivalence.
\end{proof}

\subsection{\Miw s and \mdiw s}

We prove Proposition~\ref{prop:diw-for-section:section-ivs} and a lemma relating \miw s and \mdiw s.

\refproposition{prop:diw-for-section:section-ivs}

\begin{proof}
    Let $\omega$ with $\defwalkomega$ be a \diw\ in that discriminates collider section $\sigma$ on $\omega$ and let $\indep{v_1}{v_n}{C}{G}$.
    Neither $v_1$ nor $v_n$ is in $\IV(v_1,v_n,C,G).$
    Next consider a vertex $v_k$ not on $\sigma$ with $1<k<n$ and $i \neq j$.
    From the definition of \diw, $v_k \in \ant(\set{v_1,v_n},G)$.
    From ordered upward separation stability (Lemma~\ref{lem:indep-model-graph:ordered-upward}), we have that $\indep{v_1}{v_n}{Cv_k}{G}$ which implies that $v_k\not\in \IV(v_1,v_n,C,G).$
    Finally consider a vertex $v_l$ on $\sigma_j$. In this case, $\omega$ is open given $Cv_l$ as every collider section is either anterior to an endpoint or a vertex in $Cv_l$. This implies that $v_l\in \IV(v_1,v_n,C,G).$
\end{proof}

The next lemma relates \miw s and \mdiw s in separable graphs with the same adjacencies.

\begin{lemma}\label{lem:separable-graph:same-miw::same-mdiw}
    Two separable graphs with the same adjacencies have the same \miw s if and only if they have the same \mdiw s.
\end{lemma}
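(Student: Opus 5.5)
The plan is to prove the two directions separately, treating the \miw\ decomposition function $\iwdecompose$ as the bridge between \miw s and \mdiw s. Throughout, $G$ and $G'$ are separable and have the same adjacencies. By Theorem~\ref{thm:separable-graph::no-self-inducing-walks}, neither graph contains a \siw. Hence every \miw\ in either graph has at least one discriminated section, and Lemma~\ref{lem:decompmin-invertible} applies to it.

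\emph{Forward direction (same \miw s implies same \mdiw s).} Let $\omega$ be a \mdiw\ in $G$.
\begin{enumerate}
\item By hypothesis there is a \miw\ $\omega'\equiv\omega$ in $G'$.
\item Since $\omega\equiv\omega'$, the two walks have the same vertex and section sequences. It remains to show $\omega'$ discriminates exactly one section. Discrimination depends on anteriority to the endpoints, which is not obviously preserved by $\equiv$, so a comparison argument is needed.
\item A collider section $\sigma$ of $\omega$ is non-discriminated iff it is anterior to $v_1$ or $v_n$. For the \mdiw\ $\omega$, Proposition~\ref{prop:diw-for-section:section-ivs} gives, for any separator $C$ of $v_1,v_n$, that $\IV(v_1,v_n,C,G)\cap\vertexset{\omega}$ is exactly the vertex set of the discriminated section.
\item Since the two graphs have the same \miw s, Lemma~\ref{lem:separable-graphs:same-adj-and-same-miw:equiv} gives $G\equiv G'$. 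Inducing vertex sets are separational, so $\IV(v_1,v_n,C,G')=\IV(v_1,v_n,C,G)$.
\item Now apply the same description to $\omega'$ in $G'$. By Proposition~\ref{prop:inducing-vertex:not-anterior-to-separated-vertices-or-separating-set} with $C=\propant(\set{v_1,v_n},G')$, a vertex on $\omega'$ is in $\IV$ iff its section is discriminated. Indeed, discriminated sections give inducing vertices by Lemma~\ref{lem:separable-diw:post-discriminated-collider-section:inducing-vertex}, and anterior sections do not by ordered upward stability (Lemma~\ref{lem:indep-model-graph:ordered-upward}).
\item Hence the discriminated sections of $\omega'$ coincide with those of $\omega$, so $\omega'$ is a \mdiw. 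The argument is symmetric in $G$ and $G'$.
\end{enumerate}

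\emph{Backward direction (same \mdiw s implies same \miw s).} The plan is to avoid first proving equivalence and instead to rebuild a \miw\ from its decomposition.
\begin{enumerate}
\item Let $\omega$ be a \miw\ in $G$. By Lemma~\ref{lem:decompmin-invertible}, $\iwdecompose(\omega)$ is a set of \mdiw s whose union covers $\omega$.
\item By Lemma~\ref{lem:separable:miw:every-internal-section-discriminated}, every collider section of $\omega$ is discriminated by one of these walks.
\item Each such \mdiw\ has an equivalent \mdiw\ in $G'$. All of these are subwalks over the same vertex sequence of $\omega$, and the same-adjacency hypothesis provides a walk $\omega'$ in $G'$ with $\vertexseq{\omega'}=\vertexseq{\omega}$.
\item The internal endmarks of $\omega'$ are fixed by the equivalent \mdiw s covering it. Those endmarks are exclusive by Lemma~\ref{lem:separable-graph:MIW-edge:exclusive-endmark-on-internal-endpoint}, so they agree regardless of which covering walk supplies them. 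Therefore $\omega'$ is a collider walk with $\sectionseq{\omega'}=\sectionseq{\omega}$.
\item $\omega'$ is inducing because the endpoints are nonadjacent in both graphs.
\end{enumerate}

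\emph{Main obstacle.} The hardest step is showing $\omega'$ is \emph{minimal}, that is, that no chord violates the conditions of Lemma~\ref{lem:minimal-inducing-walk:characterization}.
\begin{enumerate}
\item Suppose a chord of $\omega'$ violates those conditions. First I would use the shared adjacencies together with Lemmas~\ref{lem:separable-graph:minimal-inducing-walk:no-chord-between-adjacent-sections} and \ref{lem:separable-graph:minimal-inducing-walk:no-chord-in-collider-section} to restrict where the chord can lie.
\item Chords inside one constituent \mdiw\ are excluded by that walk's minimality in $G'$.
\item For a chord spanning two constituents, the chord together with pieces of $\omega'$ yields a shorter collider walk in $G'$ through a discriminated section. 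Decomposing it gives a \mdiw\ in $G'$ whose counterpart in $G$ would be a shortcut of $\omega$, contradicting its minimality in $G$.
\end{enumerate}
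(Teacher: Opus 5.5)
Your backward direction is the paper's argument. You decompose $\omega$ with $\iwdecompose$, transfer the \mdiw s to $G'$ and recompose. If the result is not minimal, you decompose a shorter \miw\ of $G'$ on the same vertices, transfer its pieces back, and recompose a shortcut of $\omega$ in $G$. Justifying the recomposition via Lemmas~\ref{lem:separable:miw:every-internal-section-discriminated} and \ref{lem:separable-graph:MIW-edge:exclusive-endmark-on-internal-endpoint} is a useful detail that the paper leaves implicit.

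Two small corrections to the minimality step. First, a chord with both ends on one constituent is not excluded by that constituent's minimality when one end is an endpoint of the constituent but internal to $\omega'$: an undirected chord is allowed for the constituent but not for $\omega'$. Second, the shorter walk decomposes into possibly several \mdiw s, whose counterparts must be recomposed in $G$ by the same exclusivity argument. The uniform shortcut argument handles every chord at once, so you can drop the case split.

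The forward direction has a genuine gap at step 5. Lemma~\ref{lem:separable-diw:post-discriminated-collider-section:inducing-vertex} is about a (simple) \diw: its proof opens the walk by conditioning on a single vertex, which works only when there is exactly one discriminated section. Whether $\omega'$ has exactly one discriminated section is precisely what you are proving, so applying the lemma to $\omega'$ is circular. For compound walks your claimed ``iff'' is actually false. In the separable graph $a \tailarrow x \arrowarrow y \arrowtail b$, the walk itself discriminates both of its collider sections $x$ and $y$, and $\propant(\set{a,b},G)=\emptyset$. Yet $x,y\notin\IV(a,b,\emptyset,G)$, because conditioning on one of them leaves the other collider closed.

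What your argument legitimately gives, via Proposition~\ref{prop:inducing-vertex:not-anterior-to-separated-vertices-or-separating-set}, is only that $\sigma$ is discriminated by $\omega'$. It cannot exclude a second discriminated section $\tau\neq\sigma$, and no test on the pair $(v_1,v_n)$ can. Since $\tau\subseteq\ant(\set{v_1,v_n},G)$, the same proposition shows that the vertices of $\tau$ lie in no $\IV(v_1,v_n,C,G)=\IV(v_1,v_n,C,G')$, whether or not $\tau$ is discriminated in $G'$.

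Closing the gap needs a new ingredient showing that a section anterior to an endpoint in $G$ stays anterior to an endpoint in $G'$. That could come from separation facts for other pairs of vertices. It could also come from a structural argument through the shared \miw s: follow a shortest anterior walk from $\tau$ to an endpoint in $G$ and exhibit an unshielded collider in $G'$ that is absent from $G$, as in the proof of Lemma~\ref{lem:separable-anterial-graphs:same-adj-and-same-miw:equivalent}.

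The paper's own proof of this direction is the one-line remark that every \mdiw\ is a \miw. That only yields an equivalent \miw\ in $G'$ and leaves this same point open. You were right to flag that a comparison argument is needed, but the one you give does not supply it.
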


\begin{proof}
    Let $G$ and $G'$ be two separable graphs that have the same adjacencies.

    The forward direction follows from the fact that every \mdiw\ is a \miw.
 
    Next we consider the backward direction. Assume the lemma does not hold in the backward direction and let $G$ and $G'$ be two graph with the same adjacencies and the same \mdiw s but different \miw s. 
    Without loss of generality, assume there is a \miw\ $\omega$ with $\defwalkomega$ in $G$ such that there is no $\omega'$ with $\defwalkomegaprime$ that is a \miw\ in $G'$.
    Consider the \miw\ decomposition $\iwdecompose(\omega, G)= \set{\gamma_1, \ldots, \gamma_d}$ that contains a set of \mdiw s. There must be \mdiw s $\gamma_i'$ in $G'$. 
    Next consider the walk $\omega'=\iwcompose(\set{\gamma_1',\ldots,\gamma_d'}, G')$ created by combining these \mdiw s. $\omega'$ must be an inducing walk. If $\omega'$ is a \miw\ then we have a contradiction. If $\omega'$ is not a \miw\ then there must be a shorter \miw\ $\omega_m'$ in $G'$ that uses a subset of vertices on $\omega'$.
    Consider the decomposition $\iwdecompose(\omega_m', G')=\set{\beta_1',\ldots,\beta_e'}$. It must be the case that there are \mdiw s $\beta_i$ in $G$. Consider the walk $\omega_m=\iwcompose(\set{\beta_1,\ldots, \beta_e}, G)$. It must be an inducing walk between $v_1$ and $v_n$ that uses a subset of the vertices of $\omega$ and thus $\omega$ is not a \miw. This is a contradiction. Thus the backward direction must hold. 
    
    Both the forward and backward directions hold so the lemma must be true.
 \end{proof}

\subsection{Characterization theorems}

\begin{lemma}\label{lem:combined-chracterization}
    The following statements are equivalent for two separable graphs $G$ and $H$:

    (1) $G$ and $H$ are separation equivalent,

    (2) $G$ and $H$ have the same vertex separability and same \iarrowhead s,
    
    (3) $G$ and $H$ have the same adjacencies and same \miw s, and

    (4) $G$ and $H$ have the same adjacencies and same \mdiw s.

\end{lemma}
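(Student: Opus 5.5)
The plan is to close a cycle of implications using the lemmas already established in this appendix, establishing $(1)\Rightarrow(2)\Rightarrow(3)\Rightarrow(1)$ together with $(3)\Leftrightarrow(4)$.

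\emph{$(1)\Rightarrow(2)$.} This is exactly Lemma~\ref{lem:separable-graphs:equivalent:same-vertex-separability-same-induced-arrowheads}. Vertex separability and induced arrowheads are graph separation properties, so they depend only on $\imodel{G}=\imodel{H}$.

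\emph{$(2)\Rightarrow(3)$.} Since $G$ and $H$ are separable, Lemma~\ref{lem:separable-graphs:same-vertex-separability:same-adj} converts same vertex separability into same adjacencies. Lemma~\ref{lem:separable-graph-and-same-adj:induced-arrowhead:same-miw} then gives same \miw s from same adjacencies and same \iarrowhead s. That lemma's proof is written for anterial graphs while its statement concerns separable graphs. If the separable version is in doubt, I would first pass to $\simplify(G)$ and $\simplify(H)$. By Theorem~\ref{thm:simplify:separable-graph:equivalent-anterial-graph} these are separable anterial graphs, equivalent to $G$ and $H$ respectively. Lemmas~\ref{lem:simplify:same-adjacencies}, \ref{lem:separable-graph:miw:miw-in-simplify} and \ref{separable-graph:miw-in-simplify:equivalent-walk-in-graph} show that $\simplify$ preserves adjacencies and \miw s up to $\equiv$. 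So it suffices to argue in the anterial case and then transport the conclusion back.

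\emph{$(3)\Rightarrow(1)$.} This is Lemma~\ref{lem:separable-graphs:same-adj-and-same-miw:equiv}. It reduces to the anterial case via $\simplify$ and applies Lemma~\ref{lem:separable-anterial-graphs:same-adj-and-same-miw:equivalent}.

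\emph{$(3)\Leftrightarrow(4)$.} Under same adjacencies, which both statements include, this is Lemma~\ref{lem:separable-graph:same-miw::same-mdiw}. Its forward direction is immediate because \mdiw s are exactly the \miw s containing one discriminated section. This property depends only on anterior relations, which are shared once the \miw s agree. The backward direction uses the invertible decomposition $\iwdecompose$ from Lemma~\ref{lem:decompmin-invertible}.

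The main obstacle is the $(2)\Rightarrow(3)$ step, because the cited lemma's proof quietly assumes anteriality (simplicity, hence exclusive endmarks). The fix is the $\simplify$ transport described above. Note that the transported statement only needs $\equiv$-equivalence of \miw s, and this is exactly what Lemma~\ref{lem:separable-graph:miw:equivalent-walk-in-simplify} and Lemma~\ref{separable-graph:miw-in-simplify:equivalent-walk-in-graph} provide. The remaining equivalences are bookkeeping: each implication is one or two citations, and transitivity of $\equiv$ on walks and on graphs glues them together.
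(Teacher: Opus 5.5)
Your cycle $(1)\Rightarrow(2)\Rightarrow(3)\Rightarrow(1)$ together with $(3)\Leftrightarrow(4)$, citing these same lemmas, is the paper's proof; the paper writes the middle step as ``(1) and (2) $\Rightarrow$ (3)'', but the cited lemmas use only (2), as you have it, and your $\simplify$ transport is a reasonable safeguard against the ``anterial'' wording in the proof of Lemma~\ref{lem:separable-graph-and-same-adj:induced-arrowhead:same-miw}. One caution: your gloss on the forward half of $(3)\Leftrightarrow(4)$ is false as stated, because graphs with the same adjacencies and same \miw s need not share anterior relations ($a\tailarrow b$ versus $a\tailtail b$); the relevant fact is the narrower one that a collider section of corresponding \miw s is anterior to the walk's endpoints in one graph if and only if it is in the other, which the paper's Lemma~\ref{lem:separable-graph:same-miw::same-mdiw} also passes over with ``every \mdiw\ is a \miw''.
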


\begin{proof}
    The implication $(1)\implies(2)$ follows from Lemma~\ref{lem:separable-graphs:equivalent:same-vertex-separability-same-induced-arrowheads}.
    The implication $(1) \text{ and } (2)\implies(3)$ follows from Lemmas~\ref{lem:separable-graphs:same-vertex-separability:same-adj} and \ref{lem:separable-graph-and-same-adj:induced-arrowhead:same-miw}.
    The implication $(3)\implies(1)$ follows from Lemma~\ref{lem:separable-graphs:same-adj-and-same-miw:equiv}.
    Finally the bi-implication $(3) \iff (4)$ follows from Lemma~\ref{lem:separable-graph:same-miw::same-mdiw}.
    Thus the lemma is true.
\end{proof}

\reftheorem{thm:separable-graphs:equivalent::same-adj-and-same-miw}
\begin{proof}
    Follows from Lemma~\ref{lem:combined-chracterization}.
\end{proof}

\reftheorem{thm:separable-graphs:equivalent::same-adj-same-mdiw}

\begin{proof}
    Follows from Lemma~\ref{lem:combined-chracterization}.
\end{proof}

\reftheorem{thm:separable-graphs:equivalent::same-adj-same-inducing-arrowheads}

\begin{proof}
    Follows from Lemma~\ref{lem:combined-chracterization}.
\end{proof}

\TextVersion{}{
\section{Proof related to testing equivalence of separable graphs} In this subsection we prove the correctness of our equivalence testing algorithm.
\vspace{0.2in}

\reftheorem{thm:equivalence:correct}

\begin{proof}
    From Theorem~\ref{thm:simplify:separable-graph:equivalent-anterial-graph} we replace $G$ with $\simplify(G)$ and $H$ with $\simplify(H)$ as the \simplify\ algorithm preserves equivalence. The correctness then follows from Theorem~\ref{thm:separable-graphs:equivalent::same-adj-same-mdiw}.
\end{proof}
}

\section{Proofs related representing equivalence classes of separable graphs}

In this section we show that the \inducedarrowheads\ function is a canonical projection for separable graphs with respect to separation equivalence and that \inducedarrowheads\ preserves separation equivalence when applied to separable graphs.

\begin{lemma}\label{lem:induced-arrowhead:same-adj}
    The graphs $G$ and $\inducedarrowheads(G)$ have the same adjacencies.
\end{lemma}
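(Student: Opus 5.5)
This is a direct inspection of Algorithm~\ref{alg:InducedArrowheadsCanonical}, in the same way Lemma~\ref{lem:simplify:same-adjacencies} was obtained for \simplify. Write $H=\inducedarrowheads(G)$. I will check the two inclusions of adjacency separately.

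For the first inclusion, take any pair $a,b$ adjacent in $G$. Then there is some edge $\edge{a}{b}\in E$. The main loop iterates over every edge of $E$, so it reaches this one. The four branches of the \texttt{if}/\texttt{elsif} chain are exhaustive: either there are induced arrowheads at both endpoints, or at $a$ only, or at $b$ only, or at neither. Whichever branch fires inserts exactly one edge with endpoints $a$ and $b$ into $E'$; only its endmarks depend on the branch. Hence $a$ and $b$ are adjacent in $H$.

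For the second inclusion, take any pair $a,b$ adjacent in $H$. Then some edge of $E'$ has endpoints $a$ and $b$. $E'$ starts empty, and its only modifications are the insertions inside the loop over $E$. Each such insertion adds an edge whose endpoints are those of the current edge $\edge{i}{j}\in E$. So there is an edge between $a$ and $b$ in $G$, and the two graphs have the same adjacencies.

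Nothing here is a real obstacle. The conditions in the branches are graph separation properties of $\imodel{G}$ (whether the pair has an \iarrowhead), and they affect only endmarks, never which pairs receive an edge. If there are several parallel edges between $a$ and $b$ in $G$, the same branch is taken each time, since the tests depend only on the pair. So $E'$ gets a single edge for that pair, which also shows $H$ is simple; but simplicity is not needed for the adjacency claim.
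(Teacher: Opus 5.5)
Your proposal is correct and is the same argument as the paper's, which simply observes that the \inducedarrowheads\ algorithm inserts an edge between two vertices exactly when $G$ has an edge between them, with the branch conditions affecting only the endmarks. Your explicit check of the two inclusions spells out that one-line observation in more detail. The aside about parallel edges and simplicity is harmless and, as you say, not needed for the adjacency claim.
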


\begin{proof}
    The \inducedarrowheads\ algorithm applied to $G$ adds an edge if and only if there is an edge in $G$ and thus the graphs have the same adjacencies.
\end{proof}

 By \emph{removing arrowhead} at $i$ on the edge $e(i,j)$ we mean turning $j\arrowarrow i$ into $j\arrowtail i$ or turning $j\tailarrow i$ into $j\tailtail i$. Removing arrowhead at $i$ and $j$ turns $j\arrowarrow i$ into $j\tailtail i$. This process of removing arrowheads is how the 
 \inducedarrowheads\ algorithm transforms  $G$ into  $\inducedarrowheads(G)$.

\begin{lemma}\label{lem:minimal-inducing-induced-arrow}
All internal arrowheads on a minimal inducing walk in a separable graph are induced arrowheads.
\end{lemma}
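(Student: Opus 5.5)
We prove that every internal arrowhead on a \miw\ $\omega$ in a separable graph $G$ is induced, by reducing to the case of a \mdiw\ and then invoking Lemma~\ref{lem:separable-graph:mdiw-internal-arrowhead:induced-arrowhead}. Let $\omega$ have $\defwalkomega$ and $\defsectionomega$, and let $\edge{d}{b}$ be an edge of $\omega$ with an arrowhead at an internal vertex $b$. Since every internal vertex of a collider walk lies on a collider section, $b$ lies on some collider section $\sigma_k$. The edge $\edge{d}{b}$ is then either $\edge{\last(\sigma_{k-1})}{\first(\sigma_k)}$ or $\edge{\last(\sigma_k)}{\first(\sigma_{k+1})}$. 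It cannot lie inside $\sigma_k$, because the edges inside a section are undirected. So $d$ is on a section adjacent to $\sigma_k$, and in particular on a different section.

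First, because $G$ is separable, $\omega$ is not a \siw\ (Theorem~\ref{thm:separable-graph::no-self-inducing-walks}). Second, by Lemma~\ref{lem:separable:miw:every-internal-section-discriminated}, the collider section $\sigma_k$ is discriminated by some \diw\ $\gamma\in\iwdecompose(\omega,G)$. By Lemma~\ref{lem:decompmin-invertible} and the construction of $\iwdecompose$, $\gamma$ is a \mdiw\ and a subwalk of $\omega$ obtained by cutting at $\last(\sigma_i)$ and/or $\first(\sigma_j)$ of collider sections.

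The step we expect to be the main obstacle is showing that $\gamma$ contains the edge $\edge{d}{b}$ together with the endpoint $d$, and that $\sigma_k$ is a full section of $\gamma$ with $d$ on a different section. If $\gamma=\omega(\last(\sigma_i),\first(\sigma_j))$ with $i<k<j$, then both edges incident to $\sigma_k$ on $\omega$ lie on $\gamma$. Every section of $\omega$ strictly between $\sigma_i$ and $\sigma_j$ remains a section of $\gamma$, and $\last(\sigma_i)$ and $\first(\sigma_j)$ become the endpoint sections. Hence $d$, whether it is $\last(\sigma_{k-1})$ or $\first(\sigma_{k+1})$, is on $\gamma$ in a section distinct from $\sigma_k$, possibly a terminal one. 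We check this by tracing the recursion in Algorithm~\ref{alg:iwdecomp}. Each recursive call cuts at the far endpoint of a collider section other than the one currently being preserved, so the section discriminated by the returned walk keeps both of its flanking edges. This is exactly what the induction in Lemma~\ref{lem:separable:miw:every-internal-section-discriminated} produces.

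Finally, apply Lemma~\ref{lem:separable-graph:mdiw-internal-arrowhead:induced-arrowhead} to the \mdiw\ $\gamma$, which discriminates $\sigma_k$, and to the edge $\edge{d}{b}$. Here $d$ is on a section of $\gamma$ other than $\sigma_k$, and $b\in\sigma_k\subseteq\post(\sigma_k,G)$. The lemma then gives that $\edge{d}{b}$ has an induced arrowhead at $b$. As a consistency check, Proposition~\ref{prop:induced-arrowhead:exclusive-arrowhead} says induced arrowheads are exclusive, which matches Lemma~\ref{lem:separable-graph:MIW-edge:exclusive-endmark-on-internal-endpoint}.
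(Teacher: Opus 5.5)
Your proposal is correct and takes essentially the same route as the paper. Both arguments use Lemma~\ref{lem:separable:miw:every-internal-section-discriminated} to obtain a \diw\ that discriminates the section containing $b$. The separating set $\propant(\set{i,j},G)$ of that walk's endpoints then makes $b$ an inducing vertex, while $d$, which is anterior to or equal to an endpoint, is not.

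The only differences are two. You route the last step through Lemma~\ref{lem:separable-graph:mdiw-internal-arrowhead:induced-arrowhead} instead of arguing it inline. You also check explicitly that the discriminating walk contains $\edge{d}{b}$ with $d$ on a non-discriminated section, which the paper's ``since $d\in C$'' leaves implicit.
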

\begin{proof}
Consider a minimal inducing walk $\gamma$ in a separable graph $G$. We show that all internal arrowheads on $\gamma$ are induced. Choose an arbitrary internal arrowhead at $b$ on $e(b,d)$ on $\gamma$. Denote the section containing $b$ by $\sigma$. By Lemma~\ref{lem:separable:miw:every-internal-section-discriminated}, there is an \diw\ $\pi$ that discriminates $\sigma$.  Denote the endpoints of $\pi$ by $i$ and $j$.
Because $\pi$ is a \diw, $i\not\in \adj(j,G).$ From Theorem~\ref{thm:vertex-separable:pairwise-anterial-separable}, we have that $\indep{i}{j}{C}{G}$ where $C=\propantij$. In addition, again due to the fact that $\pi$ is a \diw, $\dep{i}{j}{Cb}{G}$. These imply that 
$b\in \IV(i,j,C,G).$ Since $d\in C,$ we also have $d\not\in IV(i,j,C,G)$, which then implies that the arrowhead at $b$ on $\edge{b}{d}$ is an induced arrowhead.
\end{proof}

\begin{lemma}\label{lem:same-indu-walk-g- and-arrowheadsg-new}
Assume $G$ is a separable graph. Then a minimal inducing walk in $G$ is an inducing walk in $\inducedarrowheads(G)$.
\end{lemma}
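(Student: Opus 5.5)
Let $\gamma$ be a \miw\ in the separable graph $G$ with $\vertexseq{\gamma}=\seq{v_1,\ldots,v_n}$, and let $\gamma'$ be the walk in $H=\inducedarrowheads(G)$ with the same vertex sequence. By Lemma~\ref{lem:induced-arrowhead:same-adj}, $G$ and $H$ have the same adjacencies, so $\gamma'$ exists. Since $v_1\notin\adj(v_n,G)$, we also have $v_1\notin\adj(v_n,H)$. It therefore remains to show that $\gamma'$ is a collider walk, that is, that every internal vertex of $\gamma'$ lies on a collider section. I will do this by comparing the endmarks of $\gamma'$ with those of $\gamma$ edge by edge.

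First, every internal arrowhead of $\gamma$ survives in $H$. Take an edge $\edge{b}{d}$ on $\gamma$ with an arrowhead at an internal vertex $b$. By Lemma~\ref{lem:minimal-inducing-induced-arrow}, the pair $\seq{d,b}$ has an induced arrowhead at $b$. The \inducedarrowheads\ algorithm places an arrowhead at $b$ on the (unique) edge between $b$ and $d$ exactly when this holds, so the corresponding edge of $\gamma'$ is into $b$.

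Second, the edges of $\gamma$ that are tail--tail at internal vertices need control. These are undirected edges lying inside a collider section. The concern is that $H$ might place an arrowhead on such an edge, which would split a section of $\gamma$ into several sections of $\gamma'$. Even so, each internal vertex would still be on a collider section, provided every piece is bordered by arrowheads. The cleaner route is to show that no induced arrowhead sits on such an edge. Suppose an undirected edge $v_k\tailtail v_{k+1}$ inside collider section $\sigma$ of $\gamma$ had an induced arrowhead at $v_k$. Proposition~\ref{prop:induced-arrowhead:exclusive-arrowhead} then says every edge between $v_k$ and $v_{k+1}$ in $G$ has an arrowhead at $v_k$, which contradicts the presence of the undirected edge. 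Hence the edge remains $v_k\tailtail v_{k+1}$ in $H$, and $\gamma'$ has the same section sequence as $\gamma$.

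Combining the two steps, $\gamma'$ has the same sections as $\gamma$. Each internal section is bounded by arrowheads pointing into it, so it is a collider section, and $\gamma'$ is a collider walk. Since its endpoints are non-adjacent in $H$, $\gamma'$ is an inducing walk. The main obstacle is confirming that Lemma~\ref{lem:minimal-inducing-induced-arrow} covers every internal arrowhead, including those on terminal edges into $v_2$ and $v_{n-1}$. The paper proves that lemma for arbitrary internal arrowheads using a discriminating walk supplied by Lemma~\ref{lem:separable:miw:every-internal-section-discriminated}, so I rely on it directly. The claim that tails at internal vertices stay tails follows immediately from exclusivity.
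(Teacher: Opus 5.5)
Your proof is correct and follows essentially the same route as the paper. Equal adjacencies give a walk $\gamma'$ with non-adjacent endpoints, and Lemma~\ref{lem:minimal-inducing-induced-arrow} makes every internal arrowhead of $\gamma$ an induced arrowhead, which \inducedarrowheads\ therefore keeps. The only difference is that you justify explicitly, via Proposition~\ref{prop:induced-arrowhead:exclusive-arrowhead}, that tails at internal vertices stay tails, whereas the paper takes this as the fact that \inducedarrowheads\ only removes arrowheads.
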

\begin{proof}
Consider a minimal inducing walk $\gamma$ in separable graph $G$.
From Lemma~\ref{lem:induced-arrowhead:same-adj} there is a walk $\gamma'$ in $\inducedarrowheads(G)$ with the same vertex sequence as $\gamma$. Again from Lemma~\ref{lem:induced-arrowhead:same-adj}, the endpoints of $\gamma'$ are not adjacent. This fact, Lemma \ref{lem:minimal-inducing-induced-arrow}, and the fact that the \inducedarrowheads\ algorithm only remove arrowheads that are not induced arrowheads imply that $\gamma'$ is an inducing walk. 
\end{proof}

\begin{lemma}\label{lem:equiv-min-ind-walk-in-inducedarr}
If $G$ is separable and $\gamma'$ is a \miw\ in the graph $\inducedarrowheads(G)$ then there exists a walk $\gamma$ in $G$ such that $\gamma' \equiv \gamma$.
\end{lemma}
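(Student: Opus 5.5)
Let $G'=\inducedarrowheads(G)$ and let $\gamma'$ be a \miw\ in $G'$ with vertex sequence $\seq{v_1,\ldots,v_n}$ and section sequence $\seq{\sigma'_1,\ldots,\sigma'_m}$. By Lemma~\ref{lem:induced-arrowhead:same-adj}, $G$ has a walk $\gamma$ with the same vertex sequence, and $v_1$ and $v_n$ are non-adjacent in $G$. The task is to show that $\gamma$ has the same section sequence as $\gamma'$, i.e.\ that $\gamma$ is a collider walk in $G$ with exactly the sections of $\gamma'$ (possibly after choosing among parallel edges of $G$). Then $\gamma\equiv\gamma'$ follows from the definition of equivalence of collider walks.

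\emph{Arrowheads.} Every internal arrowhead of $\gamma'$ is an induced arrowhead, because $\inducedarrowheads$ keeps only induced arrowheads. By Proposition~\ref{prop:induced-arrowhead:exclusive-arrowhead}, every edge of $G$ between the same pair of vertices then has an exclusive arrowhead at that endpoint. So every arrowhead of $\gamma'$ is present on whichever parallel edge of $G$ we pick, and every boundary between sections of $\gamma'$ (which is an edge with arrowheads at the internal endpoints) is reproduced in $\gamma$.

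\emph{Undirected edges.} The remaining, and main, step is to show that each undirected edge $v_i\tailtail v_{i+1}$ inside a collider section $\sigma'_k$ of $\gamma'$ can be matched by an undirected edge of $G$. Equivalently, no section of $\gamma'$ splits into several sections in $G$, and no spurious arrowhead inside $\sigma'_k$ breaks the collider structure. I would mirror the proof of Lemma~\ref{separable-graph:miw-in-simplify:equivalent-walk-in-graph}, arguing by contradiction.
\begin{enumerate}
\item Suppose every $G$-edge between $v_i$ and $v_{i+1}$ carries an arrowhead, say at $v_i$.
\item Choose the first such occurrence along $\gamma'$. Then the portion of $\gamma$ from $\last(\sigma'_{k-1})$ to $v_{i+1}$ is a collider walk in $G$.
\item Use the same adjacencies, the minimality of $\gamma'$ together with Lemma~\ref{lem:minimal-inducing-walk:characterization}, and the absence of chords within sections of \miw s in separable graphs (Lemma~\ref{lem:separable-graph:minimal-inducing-walk:no-chord-in-collider-section} and Lemma~\ref{lem:separable-graph:minimal-inducing-walk:no-chord-between-adjacent-sections}) to extract a \miw\ in $G$ ending with the edge $\edge{v_i}{v_{i+1}}$ into $v_i$. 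If $\last(\sigma'_{k-1})$ is adjacent to $v_{i+1}$, take a shorter trisection instead.
\item By Lemma~\ref{lem:minimal-inducing-induced-arrow}, the arrowhead at $v_i$ is then an induced arrowhead, so $\inducedarrowheads$ would have kept it. This contradicts $v_i\tailtail v_{i+1}$ in $G'$.
\end{enumerate}

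\emph{Terminal edges.} A terminal edge of $\gamma'$ may have a tail at the endpoint in $G'$ while the corresponding $G$-edge has an arrowhead there. This does not change the section sequence, since the endpoint sections are non-collider sections regardless.

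I expect step 3 to be the main obstacle: guaranteeing that the extracted subwalk is genuinely unshielded and minimal in $G$. If the chord lemmas do not suffice directly, I would fall back on choosing a shortest counterexample and inducting on the number of collider sections.
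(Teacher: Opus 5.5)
Write $b=v_i$, $d=v_{i+1}$, $b'=\first(\sigma'_k)$ and $d'=\last(\sigma'_{k-1})$. Your handling of the arrowheads of $\gamma'$ and of parallel edges via Proposition~\ref{prop:induced-arrowhead:exclusive-arrowhead} is fine. The unshielded case of Step~3 is also fine, and it does not even need minimality. A collider trisection with non-adjacent ends in the separable graph $G$ cannot be self-inducing (Theorem~\ref{thm:separable-graph::no-self-inducing-walks}), so it is a \diw. Proposition~\ref{prop:diw-for-section:section-ivs} with $C=\propant(\set{d',d},G)$ (a separator by Theorem~\ref{thm:vertex-separable:pairwise-anterial-separable}) then makes the arrowhead at $b$ induced.

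The gap is the shielded case $d'\in\adj(d,G)$, which is where the real content of the lemma lies. ``Take a shorter trisection instead'' does not name any trisection of $G$ that still carries the arrowhead at $b$. The chord lemmas you cite cannot exclude the shield either. Lemmas~\ref{lem:separable-graph:minimal-inducing-walk:no-chord-in-collider-section} and~\ref{lem:separable-graph:minimal-inducing-walk:no-chord-between-adjacent-sections} concern \miw s in separable graphs, but $\gamma$ is not yet known to be a \miw\ of $G$. Separability of $\inducedarrowheads(G)$ (Lemma~\ref{lem:sep-to-sep-inducedarrows}) is proved later by way of this very lemma, so applying them to $\gamma'$ is circular. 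What is legitimately available in $\inducedarrowheads(G)$ is Lemma~\ref{lem:minimal-inducing-walk:characterization}, and it permits a directed chord between $d'$ and $d$, or one with a tail at $d$ when $d'=v_1$. A smaller point: ``first occurrence, WLOG arrowhead at $v_i$'' misses a section whose leftmost bad edge (one with no undirected copy in $G$) points right and whose rightmost points left. There the relevant collider trisection has both ends inside $\sigma'_k$ and can again be shielded.

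The missing idea is to argue with $\IV$ sets rather than walk shapes, as the paper does. The paper uses the unshielded case only to force $d\in\adj(d',G)$. It then transfers non-inducedness from the arrowhead at $b$ on $\edge{d}{b}$ to the arrowhead at $b'$ on $\edge{d'}{b'}$, which survives in $\inducedarrowheads(G)$ and so must be induced. The ingredients are two facts. First, $b\oeqG b'$ because the run between them is undirected in $G$, so $b$ and $b'$ lie in the same $\IV$ sets by Proposition~\ref{prop:inducing-equivalent:one-is-inducing-vertex:each-is-inducing-vertex}. Second, non-inducedness at $b$ means $b\in\IV(i,j,C,G)\Rightarrow d\in\IV(i,j,C,G)$ for all $i,j,C$.

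The paper finishes by asserting that the shield edge is out of $d$, so that $d\in\IV$ propagates to $d'$. A finish that avoids this assertion goes as follows. Take a witness $(i,j,C)$ of the induced arrowhead at $b'$, so $b'\in\IV$ and $d'\notin\IV$. Then $b,d\in\IV(i,j,C,G)$, so the shield edge has an induced arrowhead at $d$. If $d'=v_1$, this contradicts minimality of $\gamma'$ via Lemma~\ref{lem:minimal-inducing-walk:characterization}. If $d'$ is internal, then $\sigma'_{k-1}$ is a collider section, so the arrowhead at $d'$ on $\edge{b'}{d'}$ is also induced, with a witness $(p,q,D)$ having $d'\in\IV$ and $b,b'\notin\IV$. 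If $d\in\IV(p,q,D,G)$, the arrowhead at $d$ on $\edge{b}{d}$ would be induced, contradicting $b\tailtail d$ in $\inducedarrowheads(G)$. If $d\notin\IV(p,q,D,G)$, the chord would be $d'\arrowarrow d$ in $\inducedarrowheads(G)$, again contradicting minimality. The same bookkeeping disposes of an inward-pointing pair of bad edges inside a single section.
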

\begin{proof}
Denote the endpoints of $\gamma'$ by $i$ and $j$. By  Lemma~\ref{lem:induced-arrowhead:same-adj} we have that there is  walk $\gamma$ in $G$ with $\vertexseq{\gamma}=\vertexseq{\gamma'}$.  

If we show that all internal arrowheads on $\gamma$ are induced arrowheads in $G$ then we are done since $\inducedarrowheads(G)$ only removes arrowheads from $G$. 
Suppose, for contradiction, that an internal arrowhead on $\gamma$ at vertex $b$ on $e(d,b)$ is not induced.

We note that on $\gamma'$, the edge $e(d,b)$ is undirected  since $\gamma'$ is an inducing walk. Without loss of generality, we assume $b$ is the closest vertex with a removed arrowhead to the endpoint of its section on its side on $\gamma'$, i.e.\ $b$ is such that on the subsection between $b$ and an endpoint of the section that do not contain $d$, the is no arrowhead removed. Consider the endpoint of this section that is close to $b$ by $b'$, and denote the vertex on $\gamma'$ outside this section that is adjacent to $b'$ by $d'$. If $b=b'$, we choose $d'$ to be the other adjacent vertex of $b$ than $d$. 

Notice that there exists at least a $C$ such that $\indep{i}{j}{C}{G}$ because $i$ and $j$ are not adjacent and $G$ is separable. Since the arrowhead at $b$ is not induced, by the contrapositive of the definition of induced arrowheads, we have that, for every $i,j,C$, such that $\indep{i}{j}{C}{G}$, either (i) $\indep{i}{j}{Cb}{G}$ or (ii) $\dep{i}{j}{Cb}{G}$ and $\dep{i}{j}{Cd}{G}$. 

We show that the arrowhead at $b'$ on $e(b',d')$ is not induced: Since $b$ and $b'$ are in the same section we have that $\indep{i}{j}{Cb}{G}\iff \indep{i}{j}{Cb'}{G}$. This implies that if case (i) holds for $b$, the analogous of case (i) holds for $b'$. In addition, if case (ii) holds for $b$ then it holds that $\dep{i}{j}{Cb'}{G}$.  

What is left to show is if case (ii) holds for $d$ then $\dep{i}{j}{Cd'}{G}$: Since $G$ is separable, by Lemma \ref{lem:minimal-inducing-induced-arrow}, there is an edge between $d$ and $d'$. This edge cannot have an arrowhead at $d$ since otherwise $\gamma'$ is not the shortest self-inducing walk between $u$ and $v$. Therefore, regardless of whether there is an arrowhead at $d'$ on $e(d,d')$, if there is a connecting walk between $i$ and $j$ given $Cd$, there is a  connecting walk between $i$ and $j$ given $Cd'$. 

This completes the proof of the arrowhead at $b'$ on $e(b',d')$ is not induced, and consequently, there is no arrowhead at $b'$ on this edge in $\inducedarrowheads(G)$. This is a contradiction since $b'$ is an endpoint of a section on $\gamma'$. Therefore, all internal arrowheads on $\gamma$ are induced.

\end{proof}

\begin{lemma}\label{lem:same-indu-walk-g- and-arrowheadsg}
Assume $G$ is a separable graph. A minimal inducing walk in $G$ is a minimal inducing walk in $\inducedarrowheads(G)$ and vice versa.
\end{lemma}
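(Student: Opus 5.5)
The plan is to prove the two directions separately, relying on the two preceding lemmas and on the chord characterization of minimal inducing walks (Lemma~\ref{lem:minimal-inducing-walk:characterization}). Throughout, write $G'=\inducedarrowheads(G)$. Two facts are used repeatedly: $G$ and $G'$ have the same adjacencies (Lemma~\ref{lem:induced-arrowhead:same-adj}), and $G'$ is obtained from $G$ purely by removing arrowheads, so an edge of $G'$ with an arrowhead at $v$ corresponds to an edge of $G$ with an (exclusive, by Proposition~\ref{prop:induced-arrowhead:exclusive-arrowhead}) arrowhead at $v$.

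\emph{Forward direction.} Let $\gamma$ be a minimal inducing walk in $G$. By Lemma~\ref{lem:same-indu-walk-g- and-arrowheadsg-new}, the corresponding walk $\gamma'$ in $G'$, which has the same vertex sequence, is an inducing walk. Since every internal arrowhead of $\gamma$ is induced (Lemma~\ref{lem:minimal-inducing-induced-arrow}), the internal endmarks are preserved, so $\gamma'\equiv\gamma$.

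To show that $\gamma'$ is minimal, suppose instead that some shorter inducing walk $\beta'$ in $G'$ uses only vertices of $\gamma'$. Take $\beta'$ to be minimal, and let $\edgeprime{v_r}{v_s}$ be the first chord of $\gamma'$ that appears on $\beta'$. Any arrowhead this chord has at an endpoint lying on a collider section of $\beta'$ is already an arrowhead in $G$. I would then repeat the case analysis from Lemma~\ref{lem:separable-graph:miw:miw-in-simplify}:
\begin{itemize}
\item \emph{Terminal chord into an internal vertex.} Such a chord gives a shorter inducing walk in $G$ on vertices of $\gamma$, contradicting the minimality of $\gamma$.
\item \emph{Internal bidirected chord.} This contradicts minimality in the same way.
\item \emph{Internal undirected chord in $G'$.} In $G$ this edge could be directed or bidirected. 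The bidirected case is handled as above. The directed case is excluded by Lemma~\ref{lem:separable-graph:minimal-inducing-walk:no-chord-between-adjacent-sections} together with Lemmas~\ref{lem:separable-graph:minimal-inducing-walk:no-chord-in-collider-section} and~\ref{lem:separable-graph:min-ind-walk:no-chord-on-semi-directed-cycle}, applied to $\gamma$ in the separable graph $G$.
\end{itemize}

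\emph{Backward direction.} Let $\gamma'$ be a minimal inducing walk in $G'$. Lemma~\ref{lem:equiv-min-ind-walk-in-inducedarr} gives a walk $\gamma\equiv\gamma'$ in $G$. Because adjacencies agree, $\gamma$ is an inducing walk. If it were not minimal, take a minimal inducing walk $\beta$ in $G$ on a proper subset of its vertices. By the forward direction, $\beta$ is a minimal inducing walk in $G'$ using a subset of the vertices of $\gamma'$, which contradicts the minimality of $\gamma'$.

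\emph{Main obstacle.} The hard step is minimality in the forward direction, specifically the case of an internal chord that is undirected in $G'$ but directed in $G$. There the shorter walk in $G'$ need not lift to an inducing walk in $G$. I expect to rule this case out by the separable-graph chord lemmas cited above. If those do not suffice, the fallback is to take $\beta'$ minimal and apply Lemma~\ref{lem:equiv-min-ind-walk-in-inducedarr} to it, which lifts $\beta'$ to an inducing walk in $G$ on a subset of the vertices of $\gamma$ and contradicts the minimality of $\gamma$ directly.
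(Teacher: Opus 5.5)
Your fallback is exactly the paper's proof of the forward direction. If $\gamma'$ is not minimal in $\inducedarrowheads(G)$, a minimal inducing walk $\beta'$ on vertices of $\gamma'$ lifts by Lemma~\ref{lem:equiv-min-ind-walk-in-inducedarr} to an equivalent walk in $G$. That walk is inducing because the adjacencies agree, which contradicts the minimality of $\gamma$. Your backward direction is the paper's ``similar argument.'' You should present the lifting argument as the proof and drop the case analysis, because the case analysis does not close.

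The gap is the subcase you flagged: an internal chord of $\gamma$ that is directed in $G$, say $v_r \tailarrow v_s$, but undirected in $\inducedarrowheads(G)$.
\begin{itemize}
\item \emph{What the cited lemmas cover.} Lemmas~\ref{lem:separable-graph:minimal-inducing-walk:no-chord-between-adjacent-sections}, \ref{lem:separable-graph:minimal-inducing-walk:no-chord-in-collider-section} and \ref{lem:separable-graph:min-ind-walk:no-chord-on-semi-directed-cycle} only forbid chords inside a collider section, between adjacent sections, or on a semi-directed circuit. Lemma~\ref{lem:minimal-inducing-walk:characterization} explicitly allows directed chords between internal vertices.
\item \emph{What slips through.} A directed chord between non-adjacent collider sections, with no anterior walk from $v_s$ back to $v_r$, is not excluded. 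For instance, $v_1 \tailarrow a \arrowarrow b \arrowarrow c \arrowtail v_n$ with the chord $a \tailarrow c$ is a minimal inducing walk in a separable graph.
\item \emph{Why the analogy with Lemma~\ref{lem:separable-graph:miw:miw-in-simplify} breaks.} There the semi-directed-circuit lemma applied because an undirected edge of $\simplify(G)$ certifies mutual anteriority in $G$. An undirected edge of $\inducedarrowheads(G)$ certifies only that no arrowhead on it is induced, and that produces no circuit.
\end{itemize}

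To complete your primary route you would need a new argument: that the arrowhead at $v_s$ on every directed chord of a minimal inducing walk in a separable graph is induced. In the example it is, since $c\in\IV(v_1,v_n,\emptyset,G)$ and $a\notin\IV(v_1,v_n,\emptyset,G)$, but the cited lemmas do not supply this in general. The lifting argument avoids the issue by handling every chord configuration at once.

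Separately, you did not list the subcase where the chord is already undirected in $G$. It is excluded by Lemma~\ref{lem:minimal-inducing-walk:characterization}(i).
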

\begin{proof}
Let $\gamma$ be a minimal inducing walk in $G$.
Lemma \ref{lem:same-indu-walk-g- and-arrowheadsg-new} shows there is an inducing walk $\gamma' \equiv \gamma$ in $\inducedarrowheads(G)$. Suppose $\gamma'$  is not minimal.
This implies that there is another walk $\omega$ that is minimal in $\inducedarrowheads(G)$.
From Lemma \ref{lem:equiv-min-ind-walk-in-inducedarr}, there is an inducing walk in $G$ that uses a subset of the vertices in $\gamma$. 
This contradicts $\gamma$ being minimal. Thus $\gamma'$  must be minimal inducing walk in $\inducedarrowheads(G)$.

The other direction is proven with a similar argument.

\end{proof}

\begin{lemma}\label{lem:sep-to-sep-inducedarrows}
If $G$ is separable then $\inducedarrowheads(G)$ is separable.
\end{lemma}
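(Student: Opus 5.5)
We argue via Theorem~\ref{thm:separable-graph::no-self-inducing-walks}: $\inducedarrowheads(G)$ is separable if and only if it contains no \siw. So we suppose, aiming for a contradiction, that $G'=\inducedarrowheads(G)$ contains a \siw\ $\omega'$ between $i$ and $j$. By Lemma~\ref{lem:induced-arrowhead:same-adj}, $i\not\in\adj(j,G)$, and since $G$ is separable the vertices $i$ and $j$ are separable in $G$.

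First we shorten $\omega'$ to a \miw\ of $G'$ that is still self-inducing. We take a \miw\ $\gamma'$ of $G'$ between $i$ and $j$ that uses a subset of the vertices of $\omega'$. Every vertex of $\omega'$ lies in $\ant(\set{i,j},G')$, so every collider section of $\gamma'$ also lies in $\ant(\set{i,j},G')$. Hence $\gamma'$ is a self-inducing \miw\ in $G'$.

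Next we transfer $\gamma'$ back to $G$. By Lemma~\ref{lem:equiv-min-ind-walk-in-inducedarr}, and more precisely by Lemma~\ref{lem:same-indu-walk-g- and-arrowheadsg}, there is a \miw\ $\gamma\equiv\gamma'$ in $G$ with the same vertex sequence and the same section sequence. Since $G$ is separable, $\gamma$ is not self-inducing (Theorem~\ref{thm:separable-graph::no-self-inducing-walks}). So $\gamma$ discriminates some collider section $\sigma$: $\sigma$ is not anterior to $i$ or $j$ in $G$, but the same section is anterior to $i$ or $j$ in $G'$.

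The contradiction comes from comparing anterior relations. We take a shortest anterior walk $\pi'$ in $G'$ from a vertex of $\sigma$ to $\set{i,j}$. Its corresponding walk $\pi$ in $G$ (same adjacencies) is not anterior, so some edge of $\pi$ carries an arrowhead in $G$ that was removed in $G'$. We let $\edge{a}{c}$ be the first such edge along $\pi$, with the removed arrowhead at $a$. Every vertex of $\pi$ up to $a$ is then in $\post(\sigma,G)$. By Lemma~\ref{lem:separable-diw:post-discriminated-collider-section:inducing-vertex} applied to a \diw\ discriminating $\sigma$, obtained from $\iwdecompose(\gamma)$ via Lemma~\ref{lem:separable:miw:every-internal-section-discriminated}, the vertex $a$ is an inducing vertex for the endpoints $k,l$ of that \diw\ given $C=\propant(\set{k,l},G)$.

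To finish, we need $c\notin\IV(k,l,C,G)$. That would make the arrowhead at $a$ on $\edge{a}{c}$ an induced arrowhead, which \inducedarrowheads\ keeps, contradicting its removal. This step is the main obstacle. If $c\in\ant(\set{k,l},G)$, Lemma~\ref{prop:inducing-vertex:not-anterior-to-separated-vertices} gives $c\notin\IV$ directly. Otherwise we must handle $c\in\post(\sigma,G)$ by choosing $a$ differently: take the last vertex of $\pi$ in $\post(\sigma,G)$ and the edge leaving it. We then argue that the next vertex is not posterior to $\sigma$ and either lies in $\ant(\set{k,l},G)$ or cannot be an inducing vertex, using Proposition~\ref{prop:diw-for-section:section-ivs} and ordered upward stability (Lemma~\ref{lem:indep-model-graph:ordered-upward}). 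If $\pi$ never leaves $\post(\sigma,G)$ then $\sigma$ is anterior to $\set{i,j}$ in $G$, again a contradiction.
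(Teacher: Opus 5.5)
Your reduction steps are sound. You pass to a self-inducing minimal inducing walk $\gamma'$ in $G'=\inducedarrowheads(G)$, pull it back to $\gamma\equiv\gamma'$ in $G$, pick a section $\sigma$ that is discriminated in $G$ but anterior to $\set{i,j}$ in $G'$, and take the last vertex $a$ of $\pi$ in $\post(\sigma,G)$ with successor $c$. Every edge of $G$ between $a$ and $c$ must then be into $a$, since otherwise $c\in\post(\sigma,G)$, so the arrowhead at $a$ was indeed removed. Also $a\in\IV(k,l,C,G)$ for every separating set $C$ of the endpoints $k,l$ of a discriminating inducing walk for $\sigma$.

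The gap is the final step: the claim that such a $c$ either lies in $\ant(\set{k,l},G)$ or is not in $\IV(k,l,C,G)$. The tools you cite do not give this. Proposition~\ref{prop:diw-for-section:section-ivs} only concerns vertices lying on the discriminating walk. Ordered upward stability (Lemma~\ref{lem:indep-model-graph:ordered-upward}) only covers vertices of $\ant(\set{k,l}\cup C,G)=\ant(\set{k,l},G)$, which is the case you already handled. A vertex outside $\post(\sigma,G)\cup\ant(\set{k,l},G)$ can be inducing for $(k,l,C)$ for reasons unrelated to $\sigma$, for instance by being posterior to a collider section of another inducing walk between $k$ and $l$.

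Concretely, take the separable DAG with edges $k\tailarrow s$, $l\tailarrow s$, $k\tailarrow t$, $l\tailarrow t$, $s\tailarrow a$, $t\tailarrow a$. The walk $k\tailarrow s\arrowtail l$ is a discriminating inducing walk for $\sigma=(s)$, and $a\in\post(\sigma,G)$. The vertex $t$ is adjacent to $a$ with an arrowhead at $a$, and $t\notin\post(\sigma,G)\cup\ant(\set{k,l},G)$. Yet $t\in\IV(k,l,\emptyset,G)$, and $\emptyset=\propant(\set{k,l},G)$ is the only separating set for $k$ and $l$. So no triple with endpoints $k,l$ can witness the arrowhead at $a$ on $t\tailarrow a$, even though that arrowhead is induced (for example via $(s,t,\set{k,l})$). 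Your dichotomy therefore does not follow from where $c$ sits relative to $\post(\sigma,G)$ and $\ant(\set{k,l},G)$.

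To repair this you must actually use the hypothesis that the arrowhead at $a$ on $\edge{a}{c}$ is \emph{not} induced, i.e.\ that $c$ is inducing in every triple in which $a$ is. You then need to exhibit a triple, with other endpoints and/or a larger conditioning set, in which $a$ is inducing and $c$ is not. This is where the paper's argument puts its effort. It works with the endpoints $u,v$ of the self-inducing walk itself and conditions on $\propant(\set{u,v},G)$ together with all vertices of $\gamma$ outside the section from which the removed arrowhead is reached. That choice is meant to block the other routes through the neighboring vertex, leaving only the anterior walk, on which that vertex is a non-collider. Your choice $C=\propant(\set{k,l},G)$ leaves those routes open, and as written the last step is an unproved assertion.
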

\begin{proof}
By Theorem~\ref{thm:separable-graph::no-self-inducing-walks}, $G$ does not contain a self-inducing walk, and we need to show that $\inducedarrowheads(G)$ does not contain a self-inducing walk. Suppose, for contradiction, that there is a self-inducing walk $\gamma'$ in $\inducedarrowheads(G)$. Assume, without loss of generality, that $\gamma'$ is the shortest such walk, and denote the endpoints of $\gamma'$ by $u$ and $v$. This implies that $\gamma'$ is a \miw.

  We note, by Lemma~\ref{lem:induced-arrowhead:same-adj}, that $G$ and  $\inducedarrowheads(G)$ have the same adjacencies. Hence, each walk $\nu'$ in $\inducedarrowheads(G)$  has a corresponding walk with the same vertices and edges, denoted by $\nu$, in $G$ on which some arrowheads are potentially removed on $\nu'$.  Also, each internal vertex of $\gamma'$ is an anterior to one of its endpoints. For each vertex $w$ on $\gamma'$, consider the anterial walk from $w$ to an endpoint of $\gamma'$, denoted by $\nu'_w$, such that  $\nu'_w$ has the fewest number of arrowheads removed among such anterior walks.    Consider the subgraph of $\inducedarrowheads(G)$ consisting of the vertices of $\gamma'$ and the vertices of all the walks $\nu'_w$ for every internal vertex $w$ of $\gamma'$.   Call this subgraph $K'$. 

Below we show that in the induced subgraph $K$ of $G$ with respect to the vertices of $K'$, there exists an arrowhead at $b$ on $e(b,d)$ in $K$ that does not exist in $K'$, but is induced. This leads to a contradiction with the arrowhead not existing in $K'$.

 Again, we note, by Lemma~\ref{lem:induced-arrowhead:same-adj}, that $K$ and $K'$ have the same adjacencies, and any arrowhead in $\inducedarrowheads(G)$ exists in $G$. Since $\gamma'$ is a \miw, by Lemma \ref{lem:same-indu-walk-g- and-arrowheadsg}, there exists a walk $\gamma$ in $G$ with the same vetices that is a \miw. 
Since all arrowheads on $\gamma$ are induced in $G$, as proven in Lemma \ref{lem:minimal-inducing-induced-arrow}, the arrowhead at $b$ must be on an edge in $K'$ that is not on $\gamma$. 


If vertices at which the arrowhead is removed are in $\propant(\set{u,v},G)$ then $\gamma$ is self-inducing in $G$, which is not possible. Hence, choose $e(b,d)$ such that the arrowhead at $b$ is removed and $b\notin\propant(\set{u,v},G)$.

Consider a vertex $i$ in $\gamma$ that is an anterior of $b$ in the anterior walk from $i$ to $\{u,v\}$  in $\inducedarrowheads(G)$ ($i$ can be $b$ itself). Denote the section of $i$ on $\gamma$ by $\rho$. Let $C=\propant(\{u,v\},G)\cup\vertexseq{\gamma}\setminus \rho$, i.e, be all proper anteriors of $u$ and $v$ and vertices of $\gamma$ except the vertices in the section that is an anterior of $b$. 

First, $\indep{u}{v}{C}{G}$ since otherwise $i\in\propant(\{u,v\},G)$, which implies that $b\in\propant(\{u,v\},G)$. 

Secondly, let $b$ be the closest vertex to $i$.  Clearly, $\dep{u}{v}{Cb}{G}$.

Thirdly, let $b$ be the closest vertex to $u$.  The only option where $d$ can be on a connecting walk between $u$ and $v$ given $Cd$ is via the anterior walk from $d$ to $u$. However $d$ is on a non-collider section in this walk and cannot make a connecting walk. Therefore, $\indep{u}{v}{Cd}{G}$. 

These imply that the arrowhead at $b$ is indeed induced, which is a contradiction.
\end{proof}

\reftheorem{thm:separable:inducedarrowheads-equivalent}

\begin{proof}
Since, by Lemma \ref{lem:sep-to-sep-inducedarrows}, $G$ and $\inducedarrowheads(G)$ are separable, by Theorem \ref{thm:separable-graphs:equivalent::same-adj-and-same-miw}, it is enough to show that $G$ and $\inducedarrowheads(G)$ have the same adjacencies and minimal inducing walks. The former is Lemma~\ref{lem:induced-arrowhead:same-adj} and the latter is Lemma \ref{lem:same-indu-walk-g- and-arrowheadsg}. 
\end{proof}

\reftheorem{thm:induced-arrowheads:sound:identifying}

\begin{proof}
    If two graphs $G$ and $G'$ are separation equivalent and separable then they must have the same adjacencies and induced arrowheads by Theorem~\ref{thm:separable-graphs:equivalent::same-adj-same-inducing-arrowheads}. This implies that $\inducedarrowheads(G)=\inducedarrowheads(G')$. If $G'\not \equiv G$ then, again by Theorem~\ref{thm:separable-graphs:equivalent::same-adj-same-inducing-arrowheads}, they must have different adjacencies or induced arrowheads with implies that $\inducedarrowheads(G) \not \equiv \inducedarrowheads(G')$. This proves that \inducedarrowheads\ is a canonical representation function for separable graphs. Then, by Lemma~\ref{lem:sep-to-sep-inducedarrows} and Theorem~\ref{thm:separable:inducedarrowheads-equivalent}, it follows that \inducedarrowheads\ also a canonical projection function.
\end{proof}

\cut{
\begin{proof}
    The \inducedarrowheads is separation perfect by Lemma~\ref{lem:induced-arrowhead:same-adj} and is induced arrowhead perfect because the arrowheads in $\inducedarrowheads(G)$ correspond precisely to the induced arrowheads in $G$. Thus by Lemma~\ref{lem:graph-representation:separation-perfect:induced-arrowhead-perfect:canonical}, \inducedarrowheads\ is a canonical representation function for separation equivalence of separable graphs.
\end{proof}
}

\refcorollary{cor:largest}

\begin{proof}
    Let $G$ be a separable graph. Every graph in $\largest(G)$ is separable and equivalent to $G$. Let $H,H' \in \largest(G)$. 
    By Theorem~\ref{thm:separable-graphs:equivalent::same-adj-same-inducing-arrowheads}, $H$ and $H'$ have the same adjacencies and same induced arrowheads. 
    Consider the graph $\inducedarrowheads{G}$. It has the same adjacencies and induced arrowheads. Furthermore, all of its arrowheads are induced arrowheads so it must be in $\largest(G)$. 
    If any of the edges in $H$ or $H'$ have an arrowhead that is not in $\inducedarrowheads{G}$ then it is not in $\largest(G)$ thus $H=H'=\inducedarrowheads{G}$.
\end{proof}

\refcorollary{cor:separable:induced-arrowhead-anterial}

\begin{proof}
    Let $G$ be a separable graph. From Lemma~\ref{lem:sep-to-sep-inducedarrows}, $G'=\inducedarrowheads(G)$ is separable and from Theorem~\ref{thm:separable:inducedarrowheads-equivalent}, $G\equiv G'$.
    
    From Theorem~\ref{thm:simplify:separable-graph:equivalent-anterial-graph}, $G''= \simplify(G')$ is an anterial graph equivalent to $G'$. Thus we also have, $G''\equiv G$. From the fact that the \simplify\ algorithm only removes arrowheads and the fact that these graphs are equivalent it must be the case that $G'=G''$ and thus $G'$ is an anterial graph.
\end{proof}
\section{Proofs related to the identification of essentially separable graphs}

\refproposition{prop:minimal:anterior}

\begin{proof}
From Lemma~\ref{lem:indep-model-graph:ordered-downward}, if $C$ is a minimal separating set for $i$ and $j$ in graph $G$ then $C\subseteq \ant(\set{i,j},G)$. Thus any minimal separating set is an anterior separating set.
\end{proof}

\begin{lemma}\label{lem:induced-arrowhead:d-in-anterior-of-some-separating-pair}
    If an edge $\edgedb$  has an induced arrowhead at $b$ in graph $G$ then there exist vertices $i$ and $j$ and vertex set $C$ such that $b\in \IV(i,j,C,G)$ and $d\in \antsetij$.
\end{lemma}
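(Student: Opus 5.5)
The plan is to start from a witness for the induced arrowhead and repair it until $d$ lies in $\antsetij$. By definition of an induced arrowhead there are $i,j,C$ with $b\in\IV(i,j,C,G)$ and $d\notin\IV(i,j,C,G)$. Since $\indep{i}{j}{C}{G}$ holds, the second condition just says $\indep{i}{j}{Cd}{G}$. Among all such witnesses I would fix one that is extremal, say with $|C|$ minimal, and then with $|\ant(\set{i,j},G)|$ maximal.

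\textbf{Step 1: show $d\in\antsetCij$.} From $b\in\IV(i,j,C,G)$ and Lemma~\ref{lem:inducing-vertex:walks-into-inducing-vertex}, there are walks $\gamma_i$ from $i$ to $b$ open given $Cj$ and $\gamma_j$ from $j$ to $b$ open given $Ci$, both weakly into $b$. By Proposition~\ref{prop:induced-arrowhead:exclusive-arrowhead}, every edge between $d$ and $b$ is into $b$. Suppose $d\notin\antsetCij$. Then consider the circuit-extended walk $\gamma_i\append\seq{\edgedb,\edgedb}\append\gamma_j(b,j)$. Here $b$ stays on a collider section anterior to $d$ only through $b$ itself, and we want to conclude it is open given $Cd$, contradicting $\indep{i}{j}{Cd}{G}$. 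This is essentially the argument in the proof of Proposition~\ref{prop:induced-arrowhead:exclusive-arrowhead}, now run with $d$ as the added conditioning vertex. Alternatively, if $d\notin\antsetCij$, then $d\notin\vant{Cij}$ directly, and one uses $\dep{i}{j}{Cb}{G}$ together with $b$'s post-set to produce the contradiction.

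\textbf{Step 2: shrink $C$ to an anterior set.} By Lemma~\ref{lem:indep-model-graph:ordered-downward}, vertices of $C$ outside $\ant(\Cij\setminus c)$ can be dropped without destroying $\indep{i}{j}{\cdot}{G}$. I would check that they can also be dropped while preserving $\dep{i}{j}{\cdot\, b}{G}$ and $\indep{i}{j}{\cdot\, d}{G}$. Minimality of $|C|$ then gives $C\subseteq\antsetij$ whenever $b,d$ are not needed to anchor it. Consequently $\antsetCij=\antsetij$, and Step~1 yields $d\in\antsetij$.

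\textbf{Main obstacle.} I expect the hardest point to be Step~2 in the situation where the removed vertex $c\in C$ is one through which $d$ (or a collider section containing $b$'s open walk) is anterior. In that case downward stability does not apply directly, because $c$ is in $\ant(d)$ or $\ant(b)$ rather than outside $\ant(\Cij)$.

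My first attempt there would be to change the separated pair instead of the separating set. The induced dependence property (Proposition~\ref{prop:induced-dependence}) gives $\dep{i}{b}{Cj}{G}$ and $\dep{j}{b}{Ci}{G}$. I would then replace $j$ by a suitable vertex $c\in C$ with $d\in\ant(c,G)$ (and $C$ by $C\setminus c$ together with $j$), and verify via Lemma~\ref{lem:compose:weakly-into} and Lemma~\ref{lem:compose:strongly-outof} that $b$ remains inducing and $d$ remains non-inducing for the new triple. The extremal choice of $\ant(\set{i,j},G)$ should make this replacement impossible unless $d$ is already in $\antsetij$.
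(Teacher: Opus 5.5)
\textbf{There is a genuine gap.} Your plan is to take an extremal witness $(i,j,C)$ of the induced arrowhead and show $d\in\antsetCij$ (Step 1), then $d\in\antsetij$ (Step 2). That target is false in general, so no version of Step 2 can rescue it.

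Take $G$ with edges $x\tailarrow i\tailarrow b\arrowtail j\arrowtail y$ and $d\tailarrow b$. Every walk from $i$ to $j$ passes through $b$ as a collider whose section is $\seq{b}$. Hence $b\in\IV(i,j,\emptyset,G)$ and $d\notin\IV(i,j,\emptyset,G)$, so $d\tailarrow b$ has an induced arrowhead at $b$. The witness $(i,j,\emptyset)$ has $|C|=0$, and it is the unique witness maximizing $|\ant(\set{i,j},G)|=|\set{x,i,j,y}|$. So it is exactly the extremal witness you would select, yet $d\notin\antsetCij$.

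The Step 1 walk also does not do what you claim. Because $\edgedb$ is into $b$, both occurrences of $b$ on $\gamma_i\append\seq{\edgedb,\edgedb}\append\gamma_j(b,j)$ lie on collider sections. We have $b\notin\antsetCij$ by Proposition~\ref{prop:inducing-vertex:not-anterior-to-separated-vertices-or-separating-set}, and nothing makes $b$ anterior to $d$ (in the example it is not). In addition, if $\edgedb$ has a tail at $d$, then $d$ is a non-collider lying in $Cd$. The argument in Proposition~\ref{prop:induced-arrowhead:exclusive-arrowhead} works only because it uses an edge with a \emph{tail} at $b$. Your fallback in the ``main obstacle'' paragraph, replacing $j$ by a vertex of $C$, is vacuous when $C=\emptyset$, as here.

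\textbf{The missing idea is to move $d$ itself into the separated pair,} which makes $d\in\ant(\set{i,j},G)$ trivial. This is what the paper does:
\begin{enumerate}
\item If $d\in\antsetij$ already, there is nothing to do.
\item Otherwise, from $\indep{i}{j}{C}{G}$ and $\indep{i}{j}{Cd}{G}$, weak transitivity (Proposition~\ref{prop:weaktrans}) gives, say, $\indep{d}{j}{C}{G}$. Composition and weak union then give $\indep{d}{j}{Ci}{G}$.
\item Lemma~\ref{lem:inducing-vertex:walks-into-inducing-vertex} supplies a walk between $b$ and $j$ that is open given $Ci$ and weakly into $b$. Prefix it with $\edgedb$, which is into $b$ by Proposition~\ref{prop:induced-arrowhead:exclusive-arrowhead}. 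The result is open given $Cib$, so $\dep{d}{j}{Cib}{G}$.
\item Hence $b\in\IV(d,j,Ci,G)$, and trivially $d\in\ant(\set{d,j},G)$.
\end{enumerate}
In the example this yields the witness $(d,j,\set{i})$. No extremal choice of witness and no shrinking of $C$ is needed.
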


\begin{proof}
    let $\edgedb$ have an induced arrowhead at $b$ in a graph $G$.
    From the definition of induced arrowheads, there is some pair of vertices $p$ and $q$ and vertex set $D$ such that $b\in \IV(p,q,D,G)$ and $d\not \in \IV(p,q,D,G)$.
    From $b\in \IV(p,q,D,G)$, we have that $\indep{p}{q}{D}{G}$ and $\dep{p}{q}{Db}{G}$.

    \proofcase{1} $d\in \ant(\set{p,q},G)\setminus \set{p,q}$. In this case, the lemma holds for vertices $i=p$ and $j=q$ and vertex set $C=D$ and note that using this assignment we have $b\in \IV(i,j,C,G)$ and $d\in \ant(\set{i,j},G)$. Thus the lemma holds in this case.
    
    \proofcase{2} $d\not\in \ant(\set{p,q},G)\setminus \set{p,q}$.
    From $d\not\in \IV(p,q,D,G)$, it must be the case that $\indep{p}{q}{Dd}{G}$.
    From $\indep{p}{q}{D}{G}$ and $\indep{p}{q}{Dd}{G}$, and weak transitivity, it must be the case that $\indep{p}{d}{D}{G}$ or $\indep{d}{q}{D}{G}$.
    Without loss of generality, assume that $\indep{d}{q}{D}{G}$.
    From $\indep{p}{q}{D}{G}$, $\indep{d}{q}{D}{G}$, and composition we have $\indep{dp}{q}{D}{G}$ and, by weak union we have $\indep{d}{q}{Dp}{G}$.
    From $\indep{p}{q}{D}{G}$ and $\dep{p}{q}{Db}{G}$ and   
    Lemma~\ref{lem:inducing-vertex:walks-into-inducing-vertex}, there is a walk $\gamma_q$ between $b$ and $q$ open given $Dp$ and weakly into $b$.
    In this case, the walk $\seq{\edge{d}{b}}\append \gamma_q$ is open given $Dpb$ which implies that 
    $\dep{d}{q}{Dpb}{G}$. 
    Thus we have shown $\indep{d}{q}{Dp}{G}$ and $\dep{d}{q}{Dpb}{G}$ which implies that $b\in IV(d,q,Dp,G)$. In this case, let $i=d$, $j=q$ and $C=Dp$ and note that using this assignment we have $b\in \IV(i,j,C,G)$ and $d\in \ant(\set{i=d,j},G)$. Thus the lemma holds in this case.
\end{proof}

\subsection{Proof of anterior induced adjacency proposition} Next we prove Proposition~\ref{prop:induced-adj-separation}

\refproposition{prop:induced-adj-separation}

\begin{proof}
    Assume that that claim is not true and that there are separable vertices $a$ and $b$ in graph $G$ such that $\dep{a}{b}{C}{G}$ where $C=\iadj(a,b,G)\cap \ant(ab,G)=\aiadj(a,b,G)$. This implies that there is a connecting walk $\omega$ with $\defwalkomega$ with $v_1=a$ and $v_n=b$ that is open given $C$. We consider three possible types of walks for $\omega$.

    \proofcase{1} $\omega$ consist of a single edge $\edge{a}{b}$. In this case, by Lemma~\ref{lem:adjacent:not-vertex-separable}, $a$ and $b$ are not separable. This is a contradiction with the assumption that $a$ and $b$ are separable.
    
    \proofcase{2} $\omega$ is a collider walk. In this case, every collider section on $\omega$ must be anterior to the endpoints and thus is a \siw. This implies, by Theorem~\ref{lem:siw:endpoints-not-separable}, that $a$ and $b$ are not separable. This is again a contradiction. 

    \proofcase{3} $\omega$ is not a collider walk.
    In this case there is some non-trivial maximal trek. let $\omega(v_{i-1},v_{j+1})$ with $i\leq j$ be the maximal trek closest to $a=v_1$ on $\omega$. 
    By Lemma~\ref{lem:open-walk-given-C:every-vertex-in-anterior}, we have that $v_i \in \ant(abC,G)$. Furthermore, from $C=\aiadj(a,b,G) \subseteq \ant(\set{a,b},G)$ we have $v_i \in \ant(\set{a,b},G)$. 
    Next, consider the subwalk $\omega(a,v_i)$. The subwalk is a collider walk. From the fact that $\omega$ is a connecting walk given $C$, every collider section on the subwalk contains a vertex of $C$. From $C=\aiadj(a,b,G) \subseteq \ant(\set{a,b},G)$, every collider section is anterior to $\set{a,b}$ and thus $v_i \in \iadj(a,b,G)$.
    Thus $v_i\in C$ and $\omega$ is not a connecting walk as a non-collider section contains a vertex in $C$. This is a contradiction.

    In each case we have a contradiction so the claim must hold.
\end{proof}

\subsection{Ordering and separation properties of anterial graphs}
Anterial graphs play an essential role in proving the correctness of the \sgi\ algorithm. In this section we show several ordering and separation properties of anterial graphs.

\newcommand{\lG}{<_G}

Recall that $\oleqG$ is the induced preorder of a graph $G$ from which we define (1) $a \oeqG b$ if $a \oleqG b$ and $b \oleqG a$, (2) $a \oincompG b$ if $a \onotleqG b$ and $b \onotleqG a$, (3) $a \lG b$ if $a \oleqG b$ and $b \onotleqG a$. 
If $a \oincompG b$ we say that $a$ and $b$ are \emph{incomparable}. If $a \lG b$ or $b \lG a$ then $a$ and $b$ are \emph{strictly ordered}.
Recall also, the equivalence relations $a \eqIG b$ defined to hold if there is an undirected walk between $a$ and $b$ in $G$ or $a = b$.
Note that $\oeqG$ need not be equivalent to $\eqIG$ in general, but are equivalent if $G$ is an anterial graph due to Lemma~\ref{lem:acyclic:order-equiv:undirected-walk}.

The next lemma shows that in an anterial graph every distinct pair of vertices is either connected by an undirected walk, strictly ordered, or incomparable. 

\begin{lemma}\label{lem:anterial:disjoint-relations}
If $G$ is an anterial graph and $a \neq b$ then exactly one of the following properties holds $a \oincompG b$, $a \olessG b$, $b \olessG a$,  $a \eqIG b$.
\end{lemma}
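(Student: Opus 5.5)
The plan is to show two things: that at least one of the four relations holds, and that no two of them hold simultaneously. Both reduce to facts about the induced preorder $\oleqG$ together with Lemma~\ref{lem:acyclic:order-equiv:undirected-walk}. That lemma applies because an anterial graph has no semi-directed circuits and is therefore acyclic.

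For exhaustiveness, fix distinct $a,b$ and split on whether $a \oleqG b$ and whether $b \oleqG a$.
\begin{itemize}
\item If neither holds, then $a \oincompG b$ by definition.
\item If exactly one holds, the pair is strictly ordered, giving $a \olessG b$ or $b \olessG a$.
\item If both hold, then $a \oeqG b$, and Lemma~\ref{lem:acyclic:order-equiv:undirected-walk} gives $a \eqIG b$.
\end{itemize}

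For mutual exclusivity, the three relations $\oincompG$, $\olessG$ and its reverse are pairwise exclusive directly from their definitions as Boolean combinations of $a \oleqG b$ and $b \oleqG a$. So the only remaining task is to show that $a \eqIG b$ excludes the other three. An undirected walk between $a$ and $b$ is an anterior walk in both directions. Hence $a\in\ant(b,G)$ and $b\in\ant(a,G)$, that is, $a \oleqG b$ and $b \oleqG a$. This contradicts incomparability and contradicts strict order in either direction.

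I do not expect a real obstacle. The only point needing care is the converse direction, $a \oeqG b \Rightarrow a \eqIG b$. Here acyclicity of anterial graphs is essential, and it is supplied by Lemma~\ref{lem:acyclic:order-equiv:undirected-walk}. Because of that lemma, for anterial graphs the relation $\eqIG$ coincides with $\oeqG$ restricted to distinct vertices, and the four cases are exactly the four truth assignments to the pair $(a \oleqG b,\ b \oleqG a)$.
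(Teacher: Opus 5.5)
Your proof is correct, but it is organized differently from the paper's. The paper cases on which of the four relations holds and, in each case, argues that the other three fail, citing the absence of semi-directed circuits for those exclusivity claims. It never explicitly shows that at least one relation holds. That half rests implicitly on the remark before the lemma that $\oeqG$ and $\eqIG$ agree in anterial graphs by Lemma~\ref{lem:acyclic:order-equiv:undirected-walk}. You instead split on the four truth assignments to the pair $(a \oleqG b,\ b \oleqG a)$, which makes exhaustiveness the substantive step. You also observe that exclusivity is purely definitional: an undirected walk is anterior in both directions, so $a \eqIG b$ forces $a \oleqG b$ and $b \oleqG a$. This locates the hypothesis precisely. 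Exclusivity holds in every mixed graph, and acyclicity is used only to upgrade $a \oeqG b$ to $a \eqIG b$; for example, with the two edges $a \tailarrow b$ and $b \tailarrow a$ one gets $a \oeqG b$ yet none of the four relations. The paper's case structure mirrors the four relations directly, but your version is the more complete argument.
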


\begin{proof}
    Let $\defgraphG$ be an anterial graph and for $a,b\in V$ it is the case that $a\neq b$. 

    \proofcase{1} $a \oincompG b$. In this case, by definition, we have that $a \neqIG b$, $a \onlessG b$, and $b \onlessG a$ and the property is exclusive.

    \proofcase{2} $a \eqIG b$. First, by definition, we have that $a \ocompG b$. 
    Next, due the fact that $a \eqIG b$ and that $G$ contains no semi-directed cycle, we must have $a \onlessG b$ and $b \onlessG a$. In this case, the property is exclusive.

    \proofcase{3} $a \olessG b$ or $b \olessG a$. \Wlog, assume $a \olessG b$. In this case, by definition, we have that $a \ocompG b$. It cannot be the case that $a \eqIG b$ or $b \olessG a$ otherwise, in either case, there would be a semi-directed cycle and $G$ contains no semi-directed cycle. In this case, the property is exclusive.
    
    Thus exactly one of the following holds for any distinct pair of vertices $a,b\in V$, $a \oincompG b, a\eqIG b, a\olessG b$, and $b \olessG a$ and thus the proposition holds.
\end{proof}

The next lemma, first proved in \cite{Sadeghi2017}, shows that the relationships between adjacent vertices in an anterial graph uniquely determines the edge type between those vertices.

\begin{lemma}\label{lem:anterial:adjacent:order-relations::edge-type}
If $G$ is anterial and $a \in \adj(b,G)$ then
\begin{list}{}{\leftmargin=4em \rightmargin=2em}
\item
    \begin{enumerate}
        \item [(i)] $a\eqIG b$ if and only if $a \tailtail b$,
        \item [(ii)] $a \oincompG b$ if and only if $a \arrowarrow b$,
        \item[(iii)]  $a \olessG b$ if and only if $a \tailarrow b$, and
        \item[(iv)] $b \olessG a$  if and only if $b \tailarrow a$.
    \end{enumerate}    
\end{list}
    
\end{lemma}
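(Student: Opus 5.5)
Since $G$ is anterial it is simple, so exactly one edge joins $a$ and $b$, and that edge is either $a \tailtail b$, $a \arrowarrow b$, $a \tailarrow b$, or $b \tailarrow a$. By Lemma~\ref{lem:anterial:disjoint-relations}, exactly one of the four relations $a \eqIG b$, $a \oincompG b$, $a \olessG b$, $b \olessG a$ holds. Both lists consist of four mutually exclusive and exhaustive alternatives, so it suffices to prove the four forward implications ``edge type $\Rightarrow$ relation''. Each converse then follows: if, say, $a \eqIG b$ held but the edge were not $a\tailtail b$, the edge would be one of the other three types, forcing a different relation and contradicting exclusivity.

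The forward implications are as follows.

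(i) If $a \tailtail b$, then the edge itself is an undirected walk, so $a \eqIG b$.

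(iii) If $a \tailarrow b$, the edge is an anterior walk from $a$ to $b$, so $a \oleqG b$. It remains to rule out $b \oleqG a$. An anterior walk from $b$ to $a$ would, appended to the edge, give a semi-directed circuit, which is impossible in an anterial graph. Hence $a \olessG b$.

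(iv) This is symmetric to (iii).

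(ii) If $a \arrowarrow b$, condition (2) in the definition of an anterial graph gives $a \notin \ant(b,G)$ and $b \notin \ant(a,G)$. Since $a\neq b$, this is exactly $a \onotleqG b$ and $b \onotleqG a$, that is, $a \oincompG b$.

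The only point needing care is (iii). One must check that the resulting circuit is genuinely semi-directed: it contains the directed edge $a\tailarrow b$, and it is anterior because every edge on the appended walk is either undirected or directed forward. One should also note that $a \oleqG b$ was defined via anterior walks, so this case uses the definitions directly, without needing the equivalence of $\oeqG$ and $\eqIG$ supplied by Lemma~\ref{lem:acyclic:order-equiv:undirected-walk}.

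With the forward implications established, the exclusivity and exhaustiveness from Lemma~\ref{lem:anterial:disjoint-relations} give all four biconditionals at once.
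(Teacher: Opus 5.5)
Your proof is correct and is essentially the paper's argument. The paper uses the same ingredients (the two anterial conditions and Lemma~\ref{lem:anterial:disjoint-relations}) but proves each biconditional in both directions separately; you instead prove only the four edge-type-to-relation implications, get every converse at once because both lists are exclusive and exhaustive, and supply the strictness check in (iii) that the paper only asserts. Note that you, like the paper's proof, read $a \oleqG b$ as $a \in \ant(b,G)$, while the main-text definition literally says $b \in \ant(a,G)$; under that literal reading (iii) and (iv) would swap, so your reading is the one under which the statement holds.
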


\begin{proof}
Let $G$ be an anterial graph such that $a\in \adj(b,G)$.  

\proofcase{(i)} For the forward direction, let $a \eqIG b$. If $a \tailarrow b$, or $a \arrowtail b$ then there would be a semi-directed walk which contradicts $G$ being anterial. Similarly, if $a \arrowarrow b$, then there would be a bidirected edge with an endpoint anterior to its other endpoint which also contradicts $G$ being anterial. Thus the edge must be oriented $a \tailtail b$. For the other direction, assume $a \tailtail b$. In this case, the walk containing this edge is an undirected walk between $a$ and $b$ and thus $a\eqIG b$.

\proofcase{(ii)} For the forward direction, let $a \oincompG b$. By Lemma~\ref{lem:anterial:disjoint-relations}, we have that $a$ and $b$ are not ordered which rules out $a \tailarrow b$ and $a \arrowtail b$. Again, by Lemma~\ref{lem:anterial:disjoint-relations}, we have that $a \neqIG b$ which rules out $a \tailtail b$. This implies that $a \arrowarrow b$. For the other direction, let $a \arrowarrow b$. If either $a \olessG b$, $b \olessG a$ or $a \eqIG b$ there would be a bidirected edge with an endpoint anterior to its other endpoint which contradicts $G$ being anterial. Thus $a \oincompG b$.

\proofcase{(iii)} For the forward direction, let $a \olessG b$. By Lemma~\ref{lem:anterial:disjoint-relations}, we have that $b \onlessG a$ and $a \neqIG b$ which rules out $a \arrowtail b$ and $a \tailtail b$. If $a \arrowarrow b$ then there would be a bidirected edge with an endpoint anterior to its other endpoint which cannot be the case due to $G$ being anterial. This implies that $a \tailarrow b$
For the other direction, let $a \tailarrow b$. This implies that $a \olessG b$.

\proofcase{(iv)} Similar to case (iii).
\end{proof}


\newcommand{\antadj}{\DefineConstant{aAdj}}
Let $\antadj(b,G)$ be the set of \emph{anterior adjacent} vertices defined to be the set of vertices $c\in V$ such that either $c\tailarrow b$ or $c \tailtail b$ in graph $G$.

\begin{lemma}\label{lem:anterial:eqIG:anterior-adjacent-separate}
If $G$ is an anterial graph containing separable vertices $a$ and $b$ such that $a \eqIG b$ then $\indep{a}{b}{\antadj(a,G)}{G}$.
\end{lemma}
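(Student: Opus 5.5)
We argue by contradiction using a shortest open walk. Suppose $\dep{a}{b}{C}{G}$ with $C=\antadj(a,G)$. By Lemma~\ref{lem:open-walk::connecting-walk} there is a walk open given $C$ between $a$ and $b$; take $\omega$ to be a shortest such walk, with $\defwalkomega$, $v_1=a$, $v_n=b$. Since $a$ and $b$ are separable, Proposition~\ref{prop:separable:not-adjacent} gives $a\notin\adj(b,G)$, so $\omega$ has at least one internal vertex.

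The first step is to study the edge $\edge{a}{v_2}$. If $v_2\in C$, then $v_2$ is not an endpoint, so it must lie on a collider section of $\omega$. If $v_2\notin C$, then by the definition of $C$ and Lemma~\ref{lem:anterial:adjacent:order-relations::edge-type} the edge is $a\tailarrow v_2$ or $a\arrowarrow v_2$; in either case it has an arrowhead at $v_2$. So in both cases the section containing $v_2$ is entered with an arrowhead, or (when the edge is undirected) $v_2$ is a collider vertex lying in $\antadj(a,G)\subseteq \ant(a,G)$.

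Next I would exploit $a\eqIG b$. By Lemma~\ref{lem:acyclic:order-equiv:undirected-walk} and Lemma~\ref{lem:anterior-equivalence:same-anterior-set}, $\ant(a,G)=\ant(b,G)$, and every vertex of the undirected component of $a$ is in $\ant(a,G)$. The goal is to show that every collider section of $\omega$ lies in $\ant(\set{a,b},G)$, and every internal vertex on a non-collider section lies in $\ant(\set{a,b},G)$ (Lemma~\ref{lem:open-walk-given-C:every-vertex-in-anterior}). Then I would apply Theorem~\ref{thm:vertex-separable:pairwise-anterial-separable} together with Lemma~\ref{lem:open-given-propantij::edge-or-siw}: the walk $\omega$, or a subwalk obtained by trimming non-collider internal vertices, would be open given $\propant(\set{a,b},G)$, which forces either an edge $\edge{a}{b}$ or a self-inducing walk. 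Either outcome contradicts separability.

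The main obstacle is handling the non-collider internal vertices of $\omega$ that are not in $C$. Such a vertex $w$ is anterior to $\Cij$ but is not itself in $C$, so there is no immediate contradiction. I would use that $\omega$ is shortest, Lemma~\ref{lem:shortest:no-chord-between-distinct-maximal-treks}, and the anterial edge-type dictionary (Lemma~\ref{lem:anterial:adjacent:order-relations::edge-type}) to show the following. The first maximal trek of $\omega$ out of $a$ is either a single edge into a collider section, or it begins with an undirected or directed edge from $v_2\in C$ toward $a$. The latter contradicts openness, because $v_2$ would then be on a non-collider section while lying in $C$. Hence $\omega$ begins with a maximal collider walk from $a$ whose colliders are anterior to $a$. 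Symmetric reasoning from the $b$ end fails, because $C$ is defined only from $a$. So I expect to argue that the first non-trivial trek vertex $w$ after that collider walk satisfies $w\in\ant(\set{a,b},G)$, and that by the anterial property (no semi-directed circuits, and $a\eqIG b$) a tail-edge to the undirected component of $a$ would produce an adjacency to $a$ placing $w$ in $C$. Making this last step rigorous, possibly via Proposition~\ref{prop:induced-adj-separation} with $\aiadj(a,b,G)\subseteq \antadj(a,G)$ in the $\eqIG$ case, is where I expect the real work.
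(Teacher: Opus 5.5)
Your proposal does not reach a proof. The step you flag as ``the real work'' is where it goes wrong. The missing ingredient is the anterial endmark property (Lemma~\ref{lem:simple:anterial-graph::anterial-endmark}): in an anterial graph, if the first edge of $\omega$ is $a\tailarrow v_2$ or $a\arrowarrow v_2$, then $v_2\notin\ant(a,G)$. Your sentence ``Hence $\omega$ begins with a maximal collider walk from $a$ whose colliders are anterior to $a$'' asserts the opposite, and that is exactly where the contradiction lies. We have $C=\mathrm{aAdj}(a,G)\subseteq\ant(a,G)$ and $\ant(a,G)=\ant(b,G)$, because an undirected walk is anterior in both directions. (Lemma~\ref{lem:acyclic:order-equiv:undirected-walk}, which you cite for this, goes the other way.) Hence $\ant(C\cup\{a,b\},G)=\ant(a,G)$. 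But Lemma~\ref{lem:open-walk-given-C:every-vertex-in-anterior}, which you already invoke, puts $v_2$ in $\ant(C\cup\{a,b\},G)$, a contradiction.

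The remaining case, a tail at $v_2$, is immediate. Then $v_2\in C$, and $v_2$ lies either on $\sigma_1$ (if $a\tailtail v_2$) or on a section entered with a tail (if $v_2\tailarrow a$). Either way a vertex of $C$ sits on a non-collider section, so $\omega$ is closed. Your second paragraph misdescribes this case, saying the section of $v_2$ is entered with an arrowhead or that $v_2$ is a collider; your third paragraph gets it right. This two-case analysis of the first edge is essentially the paper's proof. No shortest-walk, chord, or maximal-trek analysis is needed.

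The route you propose instead cannot be carried out as stated. You want to show that $\omega$, ``or a subwalk obtained by trimming non-collider internal vertices,'' is open given $\propant(\{a,b\},G)$, and then invoke Lemma~\ref{lem:open-given-propantij::edge-or-siw}. Deleting vertices from a walk does not yield a walk. Moreover, any internal non-collider vertex of $\omega$ lying in $\propant(\{a,b\},G)\setminus C$ blocks $\omega$ given $\propant(\{a,b\},G)$, so openness given $C$ does not transfer. Your claim that the first maximal trek out of $a$ is a single edge is also unsupported.

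Your fallback through Proposition~\ref{prop:induced-adj-separation} would work, because here $\mathrm{aiAdj}(a,b,G)=\mathrm{aAdj}(a,G)$. However, the inclusion $\mathrm{aiAdj}(a,b,G)\subseteq\mathrm{aAdj}(a,G)$ needs the same observation. A non-trivial collider walk starting at $a$ begins with an edge into $v_2$, which forces $v_2\notin\ant(a,G)=\ant(\{a,b\},G)$. So only single edges $d\tailarrow a$ or $d\tailtail a$ contribute.
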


\begin{proof}
    Let $G$ be an anterial graph containing separable vertices $a$ and $b$ such that $a \eqIG b$. Suppose that the claim does not hold and that $\dep{a}{b}{C}{G}$ where $C=\antadj(a,G)$. This implies that there is a walk $\omega$ with $\defwalkomega$ and $\defsectionomega$ where $v_1=a$ and $v_n=b$ that is connecting given $C$. Note that $v_2 \neq b$, otherwise $a$ and $b$ are not separable.

    From Lemma~\ref{lem:anterial:disjoint-relations}, it must be the case that $a \onlessG b$, $b \onlessG a$, and $a \ocompG b$.

    \proofcase{1} $a \tailarrow v_2$ or $a\arrowarrow v_2$. From anterial endmark property, Lemma~\ref{lem:simple:anterial-graph::anterial-endmark}, we have that $v_2 \not\in \ant(a, G)$. As $a\eqIG b$ we also have $v_2 \not\in \ant(b,G).$
    There must be some collider section $\sigma_i$ for $1<i<m$ as $\sigma_2 \not\in \ant(ab,G)$ and this section cannot contain a vertex in $C$. This implies that the walk $\omega$ is not a connecting walk given $C$.
    
    \proofcase{2} $a \tailtail v_2$ or $a \arrowtail v_2$. In either case, $v_2$ is on a non-collider section of $\omega$ and $v_2\in C$. This implies that the walk $\omega$ is not a connecting walk given $C$.

    In either case, the walk $\omega$ is not a connecting walk given $C$ so the claim holds.
\end{proof}

\begin{lemma}\label{lem:anterial:ordered:anterior-adjacent-separate}
If $G$ is an anterial graph containing separable vertices $a$ and $b$ such that $b \olessG a$ then $\indep{a}{b}{\antadj(a,G)}{G}$.
\end{lemma}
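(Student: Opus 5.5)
We argue by contradiction, mirroring the proof of Lemma~\ref{lem:anterial:eqIG:anterior-adjacent-separate}. Let $C=\antadj(a,G)$ and suppose $\dep{a}{b}{C}{G}$. Then there is a walk $\omega$ with $\defwalkomega$ and $\defsectionomega$, $v_1=a$, $v_n=b$, connecting given $C$. Since $a$ and $b$ are separable, Lemma~\ref{lem:adjacent:not-vertex-separable} gives $a\notin\adj(b,G)$, so $v_2\neq b$ and $\omega$ has at least one internal vertex. Note also that $b\notin C$: otherwise $b$ would be an endpoint in the conditioning set, which a connecting walk excludes, or $a\tailtail b$ or $b\tailarrow a$ would contradict non-adjacency. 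From $b\olessG a$ we get $b\in\ant(a,G)$, so $\ant(\set{a,b},G)=\ant(a,G)$. This reduction to a single endpoint is the main simplification.

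The argument then splits on the edge $\edge{a}{v_2}$ on $\omega$.

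\emph{Case 1: $a\tailtail v_2$ or $a\arrowtail v_2$.} Then $v_2\in C$. The walk is out of $v_2$ towards $a$, so $v_2$ lies on a non-collider section, namely the first section or one strongly out of $v_2$. A non-collider section contains a vertex of $C$, contradicting that $\omega$ is connecting.

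\emph{Case 2: $a\tailarrow v_2$ or $a\arrowarrow v_2$.} By the anterial endmark property (Lemma~\ref{lem:simple:anterial-graph::anterial-endmark}), $v_2\notin\ant(a,G)=\ant(\set{a,b},G)$.

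The main obstacle is completing Case~2. The section $\sigma_2$ containing $v_2$ is not anterior to $\set{a,b}$. We would show that some collider section on $\omega$ then fails to be anterior to $C\cup\set{a,b}$, or, if it is anterior to $C$, that this produces a semi-directed circuit or an anterial violation.

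First, if $\sigma_2$ is a non-collider section, follow the maximal trek starting at $v_2$. By Lemma~\ref{lem:trek:internal-vertices:anterial-walk}, $v_2$ is anterior to an endpoint of that trek. That endpoint is either $b$, which is impossible since $\ant(b,G)\subseteq\ant(a,G)$, or a vertex on a collider section. So some collider section $\sigma_i$ satisfies $\sigma_2\subseteq\ant(\sigma_i,G)$, and then $\sigma_i\not\subseteq\ant(\set{a,b},G)$.

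Second, the connecting property forces $\sigma_i$ to contain some $c\in C$. But $c\in\antadj(a,G)\subseteq\ant(a,G)$, which puts the whole undirected section $\sigma_i$, and hence $v_2$, into $\ant(a,G)$. This is a contradiction.

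If $\sigma_2$ is itself a collider section, the same last step applies directly. Thus every case yields a contradiction, and $\indep{a}{b}{\antadj(a,G)}{G}$.
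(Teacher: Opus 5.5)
Your proof is correct and follows the paper's own argument. It uses the same contradiction via a connecting walk $\omega$ given $C$, the same case split on the endmark at $v_2$ of $e(a,v_2)$, and the same combination of the anterial endmark property with $b\in\ant(a,G)$ (the reading of $b<_G a$ that the paper's proof also relies on); your trek argument in the arrowhead case just spells out the paper's terse claim that some collider section of $\omega$ then contains no vertex of $C$.
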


\begin{proof}
    Let $G$ be an anterial graph containing separable vertices $a$ and $b$ such that $b \olessG a$. Suppose that the claim does not hold and that $\dep{a}{b}{C}{G}$ where $C=\antadj(a,G)$. This implies that there is a walk $\omega$ with $\defwalkomega$ and $\defsectionomega$ where $v_1=a$ and $v_n=b$ that is connecting given $C$. Note that $v_2 \neq b$, otherwise $a$ and $b$ are not separable.

    From Lemma~\ref{lem:anterial:disjoint-relations}, it must be the case that $a \onlessG b$, $a \neqIG b$, and $a \ocompG b$.

    We show $\indep{a}{b}{\antadj(a,G)}{G}$.

    \proofcase{1} $a \tailarrow v_2$ or $a\arrowarrow v_2$. From the anterial endmark property, Lemma~\ref{lem:simple:anterial-graph::anterial-endmark}, we have that $v_2 \not\in \ant(a, G)$. As $b \olessG a$ and $G$ is anterial, we also have $v_2 \not\in \ant(b,G).$
    There must be some collider section $\sigma_i$ for $1<i<m$ as $\sigma_2 \not\in \ant(ab,G)$ and this section cannot contain a vertex in $C$. This implies that the walk $\omega$ is not a connecting walk given $C$.
        
    \proofcase{2} $a \tailtail v_2$ or $a \arrowtail v_2$. In either case, $v_2$ is on a non-collider section of $\omega$ and $v_2\in C$. This implies that the walk $\omega$ is not a connecting walk given $C$.

    In either case (1) or (2), the walk $\omega$ is not a connecting walk given $C$ so the claim must hold.
\end{proof}

\subsection{Proofs related to the \sgi\ algorithm}

\newcommand{\Gfinal}{G_{final}}

\begin{lemma}\label{lem:perfect-testing:remove-only-separable}
    If $\Gfinal=\sgi(\imodel{\genstruct})$ and $a\not \in \adj(b,\Gfinal)$ then $a$ and $b$ are separable in $\genstruct$.
\end{lemma}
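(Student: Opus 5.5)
The plan is to trace why an edge is removed in the \sgi\ algorithm. The algorithm starts from the complete undirected graph and deletes an edge between $a$ and $b$ only inside the branch guarded by the test $\indep{a}{b}{C}{I}$ for some candidate set $C\subseteq \ant(a,G)\cup\ant(b,G)$, with $I=\imodel{\genstruct}$. I will therefore argue by inspecting the final state: if $a\not\in\adj(b,\Gfinal)$ then, since $a$ and $b$ were adjacent initially and edges are never re-added (orientation rule only changes endmarks), there was some iteration at which the edge between $a$ and $b$ was removed.

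At that iteration the algorithm had verified $\indep{a}{b}{C}{I}$ for the current $C$ (with $a,b\notin C$, as candidates are chosen and tested for disjoint triples). Under perfect testing, or simply because the input is $I=\imodel{\genstruct}$ itself, this statement is exactly $\indep{a}{b}{C}{\genstruct}$. By definition of separability, $C$ is then a separating set for $a$ and $b$ in $\genstruct$, so $a$ and $b$ are separable in $\genstruct$.

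The only point requiring care is the bookkeeping: confirming that no step of the algorithm removes an edge except inside the independence-test branch, and that the pair $\seq{i,j}$ iterated over is the same unordered pair $\set{a,b}$ (the loop over $V\times V$ covers both orders, but either order yields the same symmetric separation fact by symmetry of $\imodel{\genstruct}$). Note also that the restriction of $C$ to candidates from $\ant(i,G)\cup\ant(j,G)$ is irrelevant here; it only matters for completeness, not soundness of removal. I expect no real obstacle; the lemma is a direct soundness statement for edge deletion.
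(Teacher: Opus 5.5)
Your proposal is correct and is essentially the paper's own argument: an edge is deleted only in the branch where the test $\indep{a}{b}{C}{I}$ succeeds, and with $I=\imodel{\genstruct}$ (or under perfect testing) that independence is a separation fact in $\genstruct$, so $C$ witnesses that $a$ and $b$ are separable. Your extra bookkeeping (edges are never re-added, orientation only changes endmarks, and the tested triple is disjoint) spells out what the paper's two-line proof leaves implicit.
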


\begin{proof}
    Let $\Gfinal=\sgi(\imodel{\genstruct})$ and $G$ denote that current graph at various steps of running the \sgi\ algorithm. 
    Initially, $G$ is the complete graph. An edge $\edge{a}{b}$ is removed only if a set $C$ is found and $\indep{a}{b}{C}{\imodel{\gendist}}$. This implies that $a$ and $b$ are separable in $\genstruct$.
\end{proof}

\begin{lemma}\label{lem:anterior-preserving}
    If $\Gfinal=\sgi(\imodel{\genstruct})$ then for all $a\in V$, 
    $\ant(a,\Gfinal) \supseteq \ant(a,\genstruct)$.
\end{lemma}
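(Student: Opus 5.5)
The claim is that along every run of the \sgi\ algorithm the current graph $G$ satisfies $\ant(a,G)\supseteq\ant(a,\genstruct)$ for all $a$, and hence so does $\Gfinal$. I would prove this as a loop invariant. Since anterior sets are unions over anterior walks, it suffices to show the following: at every stage, for every vertex pair $u,v$ with $u\in\ant(v,\genstruct)$ via a single edge $\edge{u}{v}$ with a tail at $u$ in $\genstruct$, there is an anterior walk from $u$ to $v$ in the current $G$. Concatenation then turns any anterior walk in $\genstruct$ into one in $G$. Working with $\genstruct$ directly is awkward because it may be a multigraph or non-separable. So I would first replace $\genstruct$ by a separable anterial representative: since $\genstruct$ is essentially separable, Theorem~\ref{thm:simplify:separable-graph:equivalent-anterial-graph} gives an equivalent separable anterial graph $H$ with $\imodel{H}=\imodel{\genstruct}$. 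However, $\ant(\cdot,H)$ may differ from $\ant(\cdot,\genstruct)$, so the statement as written concerns $\genstruct$ itself. I would therefore apply the edge-level argument to $\genstruct$, using only separation facts.

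\emph{Initialization.} The starting graph is complete and undirected, so every vertex is anterior to every other and the invariant holds.

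The algorithm changes $G$ in two ways.

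\emph{Edge removal.} An edge $\edge{a}{b}$ is removed only if $a,b$ are separable in $\genstruct$ (as in Lemma~\ref{lem:perfect-testing:remove-only-separable}), so $a\notin\adj(b,\genstruct)$. No edge of $\genstruct$ is ever deleted. Every $\genstruct$-edge with a tail at $u$ therefore stays present in $G$, and the invariant can only fail through its endmark at $u$ becoming an arrowhead.

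\emph{Orientation.} Orientation Rule~1 puts an arrowhead at $k$ on $\edge{l}{k}$ only when $k\in\IV(i,j,C,\imodel{\genstruct})$ and $l\notin\IV(i,j,C,\imodel{\genstruct})$, with $l$ adjacent to $k$ in $G$. If $k$ and $l$ are adjacent in $\genstruct$, then the pair has an induced arrowhead at $k$ in $\genstruct$. Proposition~\ref{prop:induced-arrowhead:exclusive-arrowhead} then says every $\genstruct$-edge between $k$ and $l$ has an arrowhead at $k$, so no $\genstruct$-edge with a tail at $k$ toward $l$ exists and nothing is lost. The proof of Proposition~\ref{prop:induced-arrowhead:exclusive-arrowhead} uses only separation facts and the walks of Lemma~\ref{lem:inducing-vertex:walks-into-inducing-vertex}, so it applies to $\genstruct$ directly. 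If $k$ and $l$ are not adjacent in $\genstruct$, the edge is irrelevant to the invariant. However, the arrowhead could later be reversed only by a further application of the rule, and arrowheads are never removed, so this case causes no harm either.

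The main obstacle is the bookkeeping. The invariant as I stated it is in fact stronger than needed, because it preserves every individual tail edge of $\genstruct$ in $G$. The subtle point is checking that Proposition~\ref{prop:induced-arrowhead:exclusive-arrowhead} holds for arbitrary (possibly non-separable) mixed graphs. If it did not, I would fall back to the anterial representative $H$ and use Lemma~\ref{lem:anterial:adjacent:order-relations::edge-type} to relate orientations.
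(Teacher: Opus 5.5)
Your argument is correct and is essentially the paper's proof. It is an invariant that holds at the start because the initial graph is complete and undirected, and it is preserved because Orientation Rule~1 is sound by Proposition~\ref{prop:induced-arrowhead:exclusive-arrowhead}. You are more careful than the paper, whose proof leaves two steps implicit: the reduction to single tail edges of $\genstruct$, and the fact (Lemma~\ref{lem:perfect-testing:remove-only-separable} with Proposition~\ref{prop:separable:not-adjacent}) that edge removals never disconnect a pair adjacent in $\genstruct$. The detour through the anterial representative $H$ is unnecessary, as you yourself conclude.
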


\begin{proof}
    Let $\Gfinal=\sgi(\imodel{\genstruct})$ and $G$ denote that current graph at various steps of running the \sgi\ algorithm. 
    Initially, $G$ is the complete graph and $\ant(i,G)=V\setminus i$ so the claim holds. The claim then follows from the soundness of the orientation rule used to add arrowheads to edges (Proposition~\ref{prop:induced-arrowhead:exclusive-arrowhead}).
\end{proof}

\begin{lemma}\label{lem:anterior-adjacency-preserving}
    If $\Gfinal=\sgi(\imodel{\genstruct})$ then for all $a\in V$, $\antadj(a,\genstruct) \subseteq \antadj(a, \Gfinal)$.
\end{lemma}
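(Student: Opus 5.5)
We need to prove: if $\Gfinal=\sgi(\imodel{\genstruct})$ then for all $a$, $\antadj(a,\genstruct)\subseteq\antadj(a,\Gfinal)$, i.e. every vertex $c$ with $c\tailarrow a$ or $c\tailtail a$ in $\genstruct$ is joined to $a$ in $\Gfinal$ by an edge with a tail at $c$. Here we read $\genstruct$ as a separable (indeed anterial) representative, which is the setting in which $\antadj$ is compared to the simple output graph.

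The plan is an invariant argument over the execution of \sgi, in the style of Lemmas~\ref{lem:perfect-testing:remove-only-separable} and \ref{lem:anterior-preserving}. Let $G$ denote the current graph. Initially $G$ is the complete undirected graph, so every pair is joined by $\tailtail$ and $\antadj(a,G)=V\setminus\set{a}$; the claim holds. We must show that no step ever removes the edge between $c$ and $a$, and no step ever places an arrowhead at $c$ on $\edge{c}{a}$.

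\emph{Adjacency is preserved.} Since $c\in\adj(a,\genstruct)$, Lemma~\ref{lem:adjacent:not-vertex-separable} says $c$ and $a$ are not separable in $\genstruct$. Under perfect testing, $\indep{c}{a}{C}{I}$ fails for every tested $C$. Edges are only removed after such a test succeeds (this is the contrapositive of Lemma~\ref{lem:perfect-testing:remove-only-separable}), so the edge $\edge{c}{a}$ survives to $\Gfinal$.

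\emph{No arrowhead is placed at $c$.} An arrowhead is put at $c$ on $\edge{a}{c}$ only via Orientation Rule~1. That requires some $i,j,C$ with $\indep{i}{j}{C}{I}$, $c\in\IV(i,j,C,I)$ and $a\notin\IV(i,j,C,I)$; with $I=\imodel{\genstruct}$, this means the pair $\seq{a,c}$ is arrowhead inducing at $c$ in $\genstruct$. Since $a,c$ are not vertex separable, the pair has an induced arrowhead at $c$. Proposition~\ref{prop:induced-arrowhead:exclusive-arrowhead} then forces every edge between $a$ and $c$ in $\genstruct$ to carry an arrowhead at $c$. This contradicts the edge $c\tailarrow a$ or $c\tailtail a$, which has a tail at $c$.

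The main obstacle is justifying that the tested model really is $\imodel{\genstruct}$ (or agrees with it on all tests used), so that Proposition~\ref{prop:induced-arrowhead:exclusive-arrowhead} can be applied to $\genstruct$ itself. This is where the perfect-testing assumption enters. If $\genstruct$ is only essentially separable, I would first replace it by an equivalent separable graph, using the invariance of induced arrowheads under equivalence (Lemma~\ref{lem:separable-graphs:equivalent:same-vertex-separability-same-induced-arrowheads}). Combining the two parts, $c\in\antadj(a,\Gfinal)$.
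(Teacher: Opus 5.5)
Your proof is correct and follows the paper's route. Edge survival is the contrapositive of Lemma~\ref{lem:perfect-testing:remove-only-separable}, and the absence of an arrowhead at $c$ is the orientation-rule soundness via Proposition~\ref{prop:induced-arrowhead:exclusive-arrowhead} that underlies Lemma~\ref{lem:anterior-preserving}; you apply it directly to the edge $\edge{c}{a}$, which is slightly cleaner than the paper, since that route must still turn $c\in\ant(a,G_{final})$ into a tail on this particular edge. The detour through perfect testing and a separable representative is unnecessary: the hypothesis already feeds $\imodel{\genstruct}$ to the algorithm, and Proposition~\ref{prop:induced-arrowhead:exclusive-arrowhead} holds for arbitrary graphs.
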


\begin{proof}
    This follows from Lemmas~\ref{lem:perfect-testing:remove-only-separable} and \ref{lem:anterior-preserving}.
\end{proof}

The following lemma shows that every vertex equivalence class of $\genstruct$ is a subset of some vertex equivalence class in $\Gfinal$, the output of the \sgi\ algorithm.

\begin{lemma}\label{lem:equiv-implies-tails-in-Gfinal}
    If $a \oeq_{\genstruct} b$, and $a\in \adj(b,\Gfinal)$ where $\Gfinal= \sgi(\imodel{\genstruct})$ then  $a \tailtail b$ in $\Gfinal$.
\end{lemma}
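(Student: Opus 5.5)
Since the \sgi\ algorithm starts from the complete undirected graph and only ever adds arrowheads through Orientation Rule~1, the plan is to show that no arrowhead is ever placed at $a$ or at $b$ on the edge between them. Concretely, I will argue that whenever the rule fires at some separation step with separating triple $\seq{i,j,C}$ (under perfect testing, $\indep{i}{j}{C}{\genstruct}$), the vertices $a$ and $b$ are either both in $\IV(i,j,C,\imodel{\genstruct})$ or both outside it. The rule changes the endmark at $k$ on $\edge{l}{k}$ only when $k$ is an inducing vertex and $l$ is not. So if $a$ and $b$ always agree on membership, neither endmark of $\edge{a}{b}$ changes. The edge is never removed, since $a\in\adj(b,\Gfinal)$, so it survives as $a\tailtail b$.

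The key step is exactly Proposition~\ref{prop:inducing-equivalent:one-is-inducing-vertex:each-is-inducing-vertex}. It states that $\imodel{\genstruct}$ satisfies the inducing vertex equivalence property with respect to $\oleq_{\genstruct}$. Applied to $a\oeq_{\genstruct} b$, it gives $a\in\IV(i,j,C,\imodel{\genstruct})$ if and only if $b\in\IV(i,j,C,\imodel{\genstruct})$, for every $i,j,C$.

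Perfect testing is what lets us transfer this to the algorithm. The set IV computed by \sgi\ uses only tests $\dep{i}{j}{Ck}{I}$ with $k\notin C\cup\set{i,j}$, and these coincide with the corresponding tests in $\imodel{\genstruct}$. Hence the computed IV equals $\IV(i,j,C,\imodel{\genstruct})\setminus(C\cup\set{i,j})$.

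The main obstacle is the boundary case where one of $a,b$ lies in $C\cup\set{i,j}$ and is therefore excluded from the computed IV, while the other might be an inducing vertex. I would handle it as follows.
\begin{itemize}
\item If $a\in C\cup\set{i,j}$, then $\ant(a,\genstruct)\subseteq\ant(C\cup\set{i,j},\genstruct)$.
\item By Lemma~\ref{lem:anterior-equivalence:same-anterior-set}, $\ant(a,\genstruct)=\ant(b,\genstruct)$, so $b\in\ant(C\cup\set{i,j},\genstruct)$.
\item Proposition~\ref{prop:inducing-vertex:not-anterior-to-separated-vertices-or-separating-set} then gives $b\notin\IV(i,j,C,\imodel{\genstruct})$.
\end{itemize}
So in this case both $a$ and $b$ are absent from the computed IV, and no arrowhead is added at either endpoint. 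Combining the two cases over all iterations gives $a\tailtail b$ in $\Gfinal$.
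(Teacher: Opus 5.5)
Your proof is correct and follows the same route as the paper's: an arrowhead on $\edge{a}{b}$ can arise only from Orientation Rule~1 placing $a$ and $b$ on opposite sides of an inducing-vertex set, and Proposition~\ref{prop:inducing-equivalent:one-is-inducing-vertex:each-is-inducing-vertex} rules this out when $a \oeq_{\genstruct} b$. Your version is more precise than the paper's one-line proof in two places: you note that the rule needs $a\in\IV(i,j,C,\imodel{\genstruct})$ and $b\notin\IV(i,j,C,\imodel{\genstruct})$ (or vice versa), not just membership of $a$, and you handle the case where one of $a,b$ lies in $C\cup\set{i,j}$ and is excluded from the computed IV set, via Lemma~\ref{lem:anterior-equivalence:same-anterior-set} and Proposition~\ref{prop:inducing-vertex:not-anterior-to-separated-vertices-or-separating-set}.
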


\begin{proof}
    Assume $a \oeq_{\genstruct} b$, $\Gfinal=\sgi(\imodel{\genstruct})$ and $a\in \adj(b,\Gfinal)$. Suppose the claim is not true and that the edge $\edge{a}{b}$ in $\Gfinal$ has an arrowhead at $a$, $b$ or both $a$ and $b$. \Wlog, assume there is one at $a$. The edge must have an induced arrowhead at $a$ in $\imodel{\gendist}$. This implies that there exists $\triple{i}{j}{C}$ such that $a\in IV(i,j,C,\genstruct)$. This, however, cannot be the case due to Proposition~\ref{prop:inducing-equivalent:one-is-inducing-vertex:each-is-inducing-vertex}. Thus the claim must be true.
\end{proof}

\begin{lemma}\label{lem:remove-cover-edge-of-diw:identify-section}
    If $\omega$ with $\defwalkomega$ is a \diw\ that discriminates collider section $\sigma=\seq{v_i,v_j}$ in $\genstruct$  then if \sgi\ removed the edge $\edge{v_1}{v_n}$ then the edge $\edgevIJ{i-1}{i}$ has an arrowhead at $v_i$ in $\Gfinal$ and the edge $\edgevIJ{j}{j+1}$ has an arrowhead at $v_j$ in $\Gfinal$.
\end{lemma}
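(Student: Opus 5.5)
The plan is to trace the step at which SGI removes $\edge{v_1}{v_n}$ and show that Orientation Rule 1, applied right after that removal, places the two arrowheads. Arrowheads are never removed afterwards: the algorithm only changes endmarks to arrowheads or deletes edges. So it suffices to show that at that moment the edges $\edgevIJ{i-1}{i}$ and $\edgevIJ{j}{j+1}$ are still present in the current graph $G$, with $v_i,v_j\in$ IV and $v_{i-1},v_{j+1}\notin$ IV.

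Let $C$ be the separating set found when SGI removes $\edge{v_1}{v_n}$. Under perfect testing we have $\indep{v_1}{v_n}{C}{\genstruct}$, so Proposition~\ref{prop:diw-for-section:section-ivs} applies to the \diw\ $\omega$. It gives that every vertex of $\sigma$ (in particular $v_i$ and $v_j$) lies in $\IV(v_1,v_n,C,\genstruct)$. It also gives that every other vertex of $\omega$ (in particular $v_{i-1}$ and $v_{j+1}$) does not lie in that set. By perfect testing the algorithm's computed set IV coincides with $\IV(v_1,v_n,C,\genstruct)$ on $V\setminus(C\cup\set{v_1,v_n})$. If $v_{i-1}$ or $v_{j+1}$ happens to be an endpoint $v_1,v_n$ or lie in $C$, it is not placed in IV either, which is exactly what Orientation Rule 1 needs.

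Next I will show the two edges survive until that point. Both are edges of $\genstruct$, so their endpoints are adjacent and hence not separable in $\genstruct$ (Lemma~\ref{lem:adjacent:not-vertex-separable}). By Lemma~\ref{lem:perfect-testing:remove-only-separable}, applied to the current graph, SGI never removes them. Thus in $G$ both $v_i\in\adj(v_{i-1},G)$ and $v_j\in\adj(v_{j+1},G)$ hold. The loop over $k\in$ IV and $l\in\adj(k,G)\setminus$ IV then sets an arrowhead at $v_i$ on $\edgevIJ{i-1}{i}$ and at $v_j$ on $\edgevIJ{j}{j+1}$. Since arrowheads persist, they remain in $\Gfinal$.

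The main obstacle is making sure the hypotheses of Proposition~\ref{prop:diw-for-section:section-ivs} are genuinely met. I also need to check that $v_i$ and $v_j$ are not in $C$. For this I would use Proposition~\ref{prop:inducing-vertex:not-anterior-to-separated-vertices-or-separating-set}: an inducing vertex cannot lie in $\ant(C\cup\set{v_1,v_n})$. That same proposition is consistent with $\sigma$ being discriminated, so membership in IV is well defined.
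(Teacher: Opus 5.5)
Your proposal is correct and takes the same route as the paper. It takes the set $C$ with which \sgi\ removes $\edge{v_1}{v_n}$ and invokes Proposition~\ref{prop:diw-for-section:section-ivs} to get that the vertices of $\sigma$ lie in $\IV(v_1,v_n,C,\genstruct)$ while $v_{i-1},v_{j+1}$ do not, so Orientation Rule~1 places both arrowheads. Your extra checks only spell out what the paper's short proof leaves implicit: the two edges are never deleted because their endpoints are adjacent, hence non-separable, in $\genstruct$; arrowheads are never erased; and $v_i,v_j\notin C\cup\set{v_1,v_n}$.
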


\begin{proof}
    Let $\omega$ with $\defwalkomega$ be a \diw\ that discriminates collider section $\sigma=\seq{v_i,v_j}$ in $\genstruct$ and assume that \sgi\ removed the edge $\edge{v_1}{v_n}$.
    The fact that the edge $\edgevIJ{1}{n}$ was removed implies that there is a set $C$ that separates $v_1$ and $v_n$ in $\genstruct$ that was used by the algorithm to remove the edge.
    From Proposition~\ref{prop:diw-for-section:section-ivs}, every vertex on a discriminated section of a \diw\ is an inducing vertex for the endpoints of the \diw\ (i.e., in $\IV(v_1,v_n,C,\genstruct)$)  and that every other vertex on $\omega$ is not an inducing vertex (i.e., not in $\IV(v_1,v_n,C,\genstruct)$). This implies that the edge closed to $\sigma$ on the walk between $a$ and $\sigma$ has an induced arrowhead at $\first(\sigma)$. 
    This implies that the \sgi\ algorithm will add these arrowheads after removing the edge $\edgevIJ{1}{n}$.
\end{proof}

The next lemma captures a key algorithmic invariant about the \sgi\ algorithm. In particular, the invariant captures the fact that the current graph $G$ in \sgi\ before and after considering candidate separating sets is an anterial graph.

\begin{lemma}\label{lem:current-graph-always-anterial}
    For graph $\genstruct$, before and after considering a candidate separating set, the current graph $G$ of the \sgi\ algorithm applies to $\genstruct$ (i.e. a trace of $\sgi(\genstruct)$) is an anterial graph.
\end{lemma}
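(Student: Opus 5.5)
We argue by induction over the execution of \sgi, treating each elementary modification of the current graph $G$ as a step. The modifications are the removal of an edge and the change of an endmark to an arrowhead under Orientation Rule~1. We show that if $G$ is anterial before a step, it is anterial afterward, and that all arrowheads added during one candidate separating set preserve this. For the base case, the initial graph is the complete simple undirected graph. It is simple, it has no directed edges and hence no semi-directed circuits, and it has no bidirected edges, so it is anterial. Simplicity is preserved throughout, since the algorithm never adds edges and only edits endmarks on existing ones. By Lemma~\ref{lem:simple:anterial-graph::anterial-endmark}, it therefore suffices to show that the anterial endmark property holds after each candidate set is processed.

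\emph{Edge removal.} Removing edges cannot create an anterior walk, so $\ant(\cdot,G)$ can only shrink. Semi-directed circuits and bidirected edges with anterior endpoints can therefore only disappear. Both defining conditions of an anterial graph are monotone under deletion, so deletion preserves anterial-ness directly.

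\emph{Arrowhead addition.} This is the main case. Fix the separated pair $i,j$, the separating set $C$, and $\mathrm{IV}=\IV(i,j,C,\imodel{\genstruct})$. Orientation Rule~1 puts an arrowhead at $k$ on every edge $\edge{l}{k}$ with $k\in\mathrm{IV}$ and $l\notin\mathrm{IV}$. So after the update, every edge joining $\mathrm{IV}$ to $V\setminus\mathrm{IV}$ has an arrowhead at its $\mathrm{IV}$ end, and edges inside $\mathrm{IV}$ or inside its complement are unchanged.

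\begin{itemize}
\item \emph{No semi-directed circuit.} Any anterior walk in the new graph that enters $\mathrm{IV}$ can never leave it, because every boundary edge is now into $\mathrm{IV}$. A circuit is therefore either entirely inside $\mathrm{IV}$, entirely outside it, or not anterior. Circuits lying entirely in $\mathrm{IV}$ or entirely outside it use only unchanged edges. Any such circuit that is semi-directed was already semi-directed in the old graph, which contradicts the induction hypothesis.
\item \emph{Bidirected edges.} Consider a new or existing bidirected edge $k\arrowarrow l$, and suppose $l\in\ant(k,G_{new})$. If the edge is a boundary edge, say with $k\in\mathrm{IV}$, then an anterior walk from $k$ to $l$ would have to leave $\mathrm{IV}$, which is impossible. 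So $k\notin\ant(l,G_{new})$. The other direction also needs an argument that nothing from $V\setminus\mathrm{IV}$ becomes newly anterior to a vertex in $\mathrm{IV}$. For this, note that new anterior relations never arise, since adding an arrowhead only removes tails. Thus any violation in $G_{new}$ would already be a violation in $G_{old}$ unless the edge itself changed. If the edge did change, it is a boundary edge, and the argument above covers it.
\end{itemize}

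The real subtlety is the second point: a previously undirected or directed edge becomes bidirected, and we need its endpoints not to be mutually anterior in $G_{new}$. This is exactly where the closure of $\mathrm{IV}$ under boundary arrowheads is used. I expect the main obstacle to be making precise that $\ant(\cdot,G_{new})\subseteq\ant(\cdot,G_{old})$ together with the one-way nature of the boundary is enough. This must hold even though the rule is applied with the adjacency set $\adj(k,G)$ computed after the pair $i,j$ was removed.

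If this purely graph-theoretic argument has a gap, the fallback is to use perfect testing. By Propositions~\ref{prop:inducing-vertex:not-anterior-to-separated-vertices-or-separating-set} and \ref{prop:inducing-equivalent:one-is-inducing-vertex:each-is-inducing-vertex}, the set $\mathrm{IV}$ is a union of $\oeq_{\genstruct}$-classes that is not anterior to $Cij$. Combined with Lemma~\ref{lem:anterior-preserving}, this would show that every arrowhead added is compatible with an anterial representation of $\genstruct$.
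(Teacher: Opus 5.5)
Your skeleton is the paper's. You induct over the trace and observe that deleting edges or turning tails into arrowheads never creates anterior walks. You then use the fact that after Orientation Rule~1 every edge between $\mathrm{IV}$ and $V\setminus\mathrm{IV}$ has an arrowhead at its $\mathrm{IV}$ end, so no anterior walk can leave $\mathrm{IV}$. The paper finishes by checking the anterial endmark property of Lemma~\ref{lem:simple:anterial-graph::anterial-endmark}. A new arrowhead at $k\in\mathrm{IV}$ on $\edge{l}{k}$ is justified because no anterior walk runs from $k$ to $l$. Every old arrowhead was justified in $G_{\mathrm{old}}$ and stays justified because anterior sets only shrink. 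You announce this reduction but then verify the two defining conditions of an anterial graph directly, and one step there is wrongly justified.

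A new bidirected edge arises only from an old edge $k\tailarrow l$ with $k\in\mathrm{IV}$ and $l\notin\mathrm{IV}$. The boundary argument gives $k\notin\ant(l,G_{\mathrm{new}})$. For the other direction, $l\notin\ant(k,G_{\mathrm{new}})$, you say ``the argument above covers it'', but it does not: anterior walks from $V\setminus\mathrm{IV}$ into $\mathrm{IV}$ are fully compatible with the boundary orientation.

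The missing line is this. $G_{\mathrm{old}}$ contains $k\tailarrow l$ and has no semi-directed circuit, so $l\notin\ant(k,G_{\mathrm{old}})$. Since $\ant(k,G_{\mathrm{new}})\subseteq\ant(k,G_{\mathrm{old}})$, the claim follows. Equivalently, the unchanged arrowhead at $l$ was already justified in $G_{\mathrm{old}}$, which the endmark route gives for free. With this added, your argument is complete; the deletion case and the semi-directed-circuit case are correct.

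Two smaller remarks:
\begin{itemize}
\item The induction has to be per candidate set, not per individual arrowhead, as your opening sentence suggests. A partially oriented boundary can create a semi-directed circuit, e.g.\ $l_1\tailarrow k\tailtail l_2\tailtail l_1$ before $\edge{k}{l_2}$ is oriented. Your arrowhead case does treat the batch as a unit, so this is a wording issue.
\item The perfect-testing fallback is unnecessary. The invariant uses nothing about $\mathrm{IV}$ beyond how the rule orients its boundary.
\end{itemize}
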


\begin{proof}
    We show this algorithmic invariant inductively on the length of the trace of $\sgi(\genstruct)$.
    
    For the base case we consider the initial graph. This initial graph is a simple complete undirected graph and thus satisfies the anterial endmark property. By Lemma~\ref{lem:simple:anterial-graph::anterial-endmark} the graph is anterial.
    
    For the inductive case we assume that the current graph is anterial and show that after identifying a separating set $C$ for $i$ and $j$, removing the edge between $i$ and $j$ and orienting, the resulting graph is again anterial. First note that the resulting graph is simple.
    Consider the set $\IV=\IV(i,j,C, \genstruct)$ and $D=V \setminus \IV$. Any new endmark is at an endpoint of an edge $\edge{c}{d}$ with $c\in D$ and $d \in \IV$ with the new arrowhead at $d$. The resulting graph satisfies the anterial endmark property as (1) for new endmarks (necessarily arrowheads), every edge between a vertex in $D$ and a vertex in $\IV$ has an arrowhead endmark at the vertex in $\IV$ and thus there can be no anterior walk violating the anterial endmark property, (2) for old endmarks, the addition of an arrowhead does not create any new anterial walks. Thus, by Lemma~\ref{lem:simple:anterial-graph::anterial-endmark} the graph is anterial.
\end{proof}


The anterior sets of a graph $\defgraphG$ is the set 
$$A(G)=\setcomprehensionLSE{A}{\powerset{V}}{A=\ant(A,G)}.$$

Let anterial graph $G$ have anterior sets $\defAcomponents$. The pair $\seq{A(G),\subseteq}$ defines a bounded partially ordered set closed under union, that is, a join-semilattice with bottom. The bottom is the empty set $\emptyset$ and the top if the set $V$. The graph $G$ defined to be $a\tailarrow b \tailarrow c$ has $A(G)=\set{\emptyset, \set{a},\set{a,b},\set{a,b,c}}$ which illustrates that $A(G)$ is not necessarily closed under intersection.

The following proof uses well-founded induction. An alternative approach is to prove by induction on the number of chain components in the equivalent anterial graph --- one must exist by Theorem~\ref{thm:simplify:separable-graph:equivalent-anterial-graph}. This approach, however, requires defining chain components and yields a proof that is similar to the proof below.

\begin{lemma}\label{lem:sgi-removes-all-edges-between-separable}
    If $\genstruct$ is essentially separable, $a$ and $b$ are separable in $\genstruct$ then $a\not\in \adj(b,\Gfinal)$ where $\Gfinal=\sgi(\imodel{\genstruct})$.
\end{lemma}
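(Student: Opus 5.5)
By Theorem~\ref{thm:simplify:separable-graph:equivalent-anterial-graph} and Corollary~\ref{cor:separable:induced-arrowhead-anterial}, I may fix a separable anterial graph $H\equiv\genstruct$; in fact I take $H=\inducedarrowheads(G)$ for a separable $G\equiv\genstruct$. Since $\imodel{H}=\imodel{\genstruct}$, vertex separability is the same in both, so it suffices to show that every pair $a,b$ that is non-adjacent in $H$ becomes non-adjacent in $\Gfinal$. Arguing by contradiction, suppose some such pair remains adjacent in $\Gfinal$. I will use well-founded induction on the anterior sets $A(H)$, ordered by inclusion. The claim to prove is: for every anterior set $A\in A(H)$, the algorithm removes every edge between separable pairs contained in $A$.

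For the inductive step, fix a pair $a,b$ with $a\not\in\adj(b,H)$. Using Lemma~\ref{lem:anterial:disjoint-relations}, \wlog\ either $a\eqIG b$ or $b\olessG a$ (in $H$), or the two are incomparable. In the first two cases I will use Lemmas~\ref{lem:anterial:eqIG:anterior-adjacent-separate} and \ref{lem:anterial:ordered:anterior-adjacent-separate}: the set $C=\antadj(a,H)$ separates $a$ and $b$. For the incomparable case I plan to use $C=\aiadj(a,b,H)$ from Proposition~\ref{prop:induced-adj-separation}, or $\propant(\set{a,b},H)$ via Theorem~\ref{thm:vertex-separable:pairwise-anterial-separable}. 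I also plan to shrink $C$ to a minimal separator, which is anterior to $\set{a,b}$ by Proposition~\ref{prop:minimal:anterior}.

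The key point is to check that, at the moment \sgi\ reaches conditioning-set size $|C|$, the set $C$ lies inside the candidates $\ant(a,G)\cup\ant(b,G)$ of the current graph $G$. Lemma~\ref{lem:anterior-preserving} gives $\ant(\cdot,G)\supseteq\ant(\cdot,\genstruct)$. Since anterior sets are not separation invariant, I will need to transfer this to $H$. The plan is to show $\ant(x,G)\supseteq\ant(x,H)$ throughout the run. The argument is that arrowheads added by Orientation Rule~1 are induced arrowheads (Proposition~\ref{prop:induced-arrowhead:exclusive-arrowhead}), and $H=\inducedarrowheads(G)$ has arrowheads only at induced arrowheads. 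So every tail in $H$ remains a tail in the current graph as long as the edge is present, and edges of $H$ are never removed, by Lemma~\ref{lem:perfect-testing:remove-only-separable}. The anterior walks of $H$ therefore survive in $G$, and $C\subseteq\ant(\set{a,b},H)\subseteq$ candidates. Since the test $\indep{a}{b}{C}{I}$ succeeds under perfect testing, the edge is removed at the latest at size $|C|$.

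The remaining obstacle is termination: the loop bound $\sepsetbound(G)$ must reach $|C|$ before the loop stops. Proposition~\ref{prop:sep-set-bound} gives $|C|\le\sepsetbound(a,b,G)$ for the current graph $G$, provided the edge is still present. The difficulty is that $\sepsetbound(G)$ changes as edges are removed, so I must show that the bound never drops below $|C|$ while $\edge{a}{b}$ survives. Here I will use the induction hypothesis: the vertices of $\aiadj(a,b,H)$ lie in the strictly smaller anterior set $\ant(\set{a,b},H)\setminus\set{a,b}$, or in its closure. Edges among those vertices have already been correctly resolved, so $\iadj(a,b,G)$ contains $\iadj(a,b,H)$ restricted to anterior vertices. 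Hence $|\aiadj(a,b,G)|\ge|\aiadj(a,b,H)|\ge|C|$, and the while-loop is still running when socs equals $|C|$. This containment of induced adjacencies between the current graph and $H$ is the step I expect to be the hardest to make rigorous, since $\iadj$ is defined through collider walks whose sections must be anterior in $G$ rather than in $H$.
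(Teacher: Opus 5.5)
Your route is the paper's: a separable anterial $H\equiv\genstruct$, well-founded induction over $A(H)$, a case split on the order relation of $a$ and $b$, the separators $\DefineConstant{aAdj}(a,H)$ and $\aiadj(a,b,H)$, and a containment $\aiadj(a,b,G_{final})\supseteq\aiadj(a,b,H)$ to keep the loop running. Your tail-preservation argument is correct: Orientation Rule~1 only creates induced arrowheads, $H=\inducedarrowheads(G)$ already carries all of them, and $H$-edges are never deleted. It is also cleaner than the paper's appeal to Lemma~\ref{lem:anterior-preserving}, which is phrased for $\genstruct$ rather than an anterial representative.

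The containment, however, which you flag as the hard step, is a genuine gap, and your sketch would not close it. First, $\ant(\set{a,b},H)\setminus\set{a,b}$ is generally not anterior. In the only nontrivial situation of your induction, $A=\ant(\set{a,b},H)$, its anterior closure can be all of $A$ (e.g.\ when $a$ and $b$ both have undirected neighbours), so no smaller set is available there. More importantly, your induction hypothesis only says that edges get removed. Yet $d\in\aiadj(a,b,G_{final})$ requires that the \emph{arrowheads} of $H$ bounding each collider section of a collider walk from $a$ to $d$ be present in $G_{final}$. Tail preservation gives only the opposite inclusion of endmarks, and deleting edges among the vertices of $\aiadj(a,b,H)$ creates no arrowheads. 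The pairs whose separation actually produces these arrowheads can even involve $a$ itself, e.g.\ $e(a,d)$ when the walk has a single collider section.

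The missing idea is the paper's DIW argument:
\begin{enumerate}
\item Take a \miw\ $\omega$ from $a$ to $d$ in $H$ with internal vertices in $\ant(\set{a,b},H)$. By Lemma~\ref{lem:separable:miw:every-internal-section-discriminated}, each collider section $\sigma$ of $\omega$ is discriminated by a \diw\ subwalk $\pi$ with endpoints $v_i,v_k$.
\item Since $\sigma\subseteq A$ but $\sigma\cap\ant(\set{v_i,v_k},H)=\emptyset$, the anterior set $\ant(\set{v_i,v_k},H)$ is strictly smaller than $A$. The induction hypothesis therefore gives that \sgi\ removes $e(v_i,v_k)$.
\item For whatever separating set was used, Proposition~\ref{prop:diw-for-section:section-ivs} puts every vertex of $\sigma$ in $\IV$ and their neighbours on $\pi$ outside it. So Orientation Rule~1 adds both arrowheads into $\sigma$ (Lemma~\ref{lem:remove-cover-edge-of-diw:identify-section}).
\item Combined with tail preservation, the image of $\omega$ in $G_{final}$ is a collider walk with the same sections and internal vertices in $\ant(\set{a,b},G_{final})$.
\end{enumerate}

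You also need not track the bound during the run. If $e(a,b)$ survives, the loop halted at a value of socs that is at least $\sepsetbound(a,b,G_{final})$, so it suffices that $|C|<|\aiadj(a,b,G_{final})|$. Your chain only gives $\geq$, but strict inequality is needed because the loop test is socs $<\sepsetbound$. It does hold: $b\in\aiadj(a,b,G_{final})$ because the edge is present, whereas $b\notin\aiadj(a,b,H)$ because $H$ has no self-inducing walks.
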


\begin{proof} 

    Let $\genstruct$ be an essentially separable graph,  $\Gfinal=\sgi(\imodel{\gendist})$.

    let $G$ be a separable anterial graph such that $G \equiv \genstruct$; one exists due to the fact that $\genstruct$ has an equivalent separable graph and Theorem~\ref{thm:simplify:separable-graph:equivalent-anterial-graph}.

    First we recast the lemma in terms of separation completeness.
    A graph $\Gfinal$ is \emph{separation complete} with respect to graph $G$ and vertex set $A\subseteq V$ if for all pairs of distinct vertices $a,b \in A$, if $a\not\in \adj(b,G)$ then $a \not\in \adj(b,\Gfinal)$.
    Thus the lemma holds if we can show that $\Gfinal$ is separation complete with respect to $G$ and $V$.

    Next, note that $\seq{A(G), \subseteq}$ is well-founded as $A(G)$ is a finite set and $\subseteq$ is a partial order over $A(G)$.    
    We prove the claim by well-founded induction on $A(G)$ with respect to $\subset$. 
    In particular, we prove that $\Gfinal$ is separation complete with respect to $G$ and $A_j$ under the induction hypothesis that $\Gfinal$ is separation complete with respect to $G$ and all $A_i \subset A_j$.

    Define $\max(A)=\setcomprehensionLSE{a}{A}{\forall b\in A, b \oleqG a}.$ 
    Let $A_i = A_j \setminus \max(A_j)$. The set $A_i$ is an ancestral set, that is, $A_i\in A(G)$. By induction assumption, all edges between separable vertices in $A_i$ have been removed. Thus we need only consider pairs of vertices $a,b \in A_j$ that are separable in $G$ and such that at least one of the vertices is in the set $\max(A_j)$. \Wlog, assume $a\in \max(A_j)$. 

    \proofcase{1} $a \oeqG b$. In this case, from Lemmas~\ref{lem:anterior-adjacency-preserving} and \ref{lem:anterial:ordered:anterior-adjacent-separate} we know that $\antadj(a,\Gfinal)$ and $\antadj(b,\Gfinal)$ separate $a$ and $b$ and at least one of them must be smaller then $\sepsetbound(\Gfinal)$.    

    \proofcase{2} $a \olessG b$ or $b \olessG a$. \Wlog, assume $b \olessG a$. In this case, from Lemmas~\ref{lem:anterior-adjacency-preserving} and \ref{lem:anterial:eqIG:anterior-adjacent-separate} we know that $\antadj(a,\Gfinal)$ separates $a$ and $b$ and that the set is smaller then $\sepsetbound(\Gfinal)$.

    \proofcase{3} $a \oincompG b$. Let $A_b= A_j \setminus \oequivclassG{a}$ and $A_a = A_j \setminus \oequivclassG{b}$. In this case we have that $b\in A_b$, $a\in A_a$, $A_a \subset A_j$, $A_b \subset A_j$, $A_a\in A(G)$, and $A_b\in A(G)$.

    From Proposition~\ref{prop:induced-adj-separation}, we know that $\indep{a}{b}{C}{G}$ when $C=\aiadj(a,b,G)$ and when $C=\aiadj(b,a,G)$.

    \Wlog, we show that $\aiadj(a,b,\Gfinal) \supseteq \aiadj(a,b,G)$.

    Suppose that this is not the case and there is a vertex $d\in \aiadj(a,b,G) \setminus \aiadj(a,b,\Gfinal)$.

    \proofcase{3.1} $d=b$. This cannot be the case as there would be a self-inducing walk between $a$ and $b$ and $a$ and $b$ would not be separable.

    \proofcase{3.2} $d\in \antadj(a,G)$. In this case, by Lemma~\ref{lem:anterior-adjacency-preserving}, we have that $\antadj(a,\Gfinal)\subseteq \antadj(a, G)$ so this cannot be the case.

    \proofcase{3.3} $d \in \aiadj(a,b,G)\setminus \antadj(a,G)$. In this case, there must be an inducing walk between $a$ and $d$ that is open given $b$ in $G$. Let $\omega$ be a minimal inducing walk between $a$ and $d$ in $G$ that is open given $b$. There is a walk $\omega'$ in $\Gfinal$ with the same vertex sequence. We show that $\omega'$ is an inducing walk open given $b$ in $\Gfinal$.

    From Lemma~\ref{lem:equiv-implies-tails-in-Gfinal}, we know that every collider section on $\omega$ corresponds to an undirected walk on $\omega'$.

    From Lemma~\ref{lem:open-walk-given-C:every-vertex-in-anterior}, every vertex is anterior to either $a$ or $b$ and thus every vertex on $\omega$ is in $A_j$ as, 

    From Lemma~\ref{lem:separable:miw:every-internal-section-discriminated}, every collider section on $\omega$ is discriminated by some \diw\ that is a subwalk of $\omega$.

    Consider a \diw\ between $v_i$ and $v_k$ on $\omega$ that discriminates a section $\sigma$ and let $A_{ik}=\ant(\set{v_i,v_k},G)$. Clearly the set is an anterior set in $G$ (i.e., $A_{ik}\in A(G)$).
    
    We know that $\sigma \cap A_{ik}=\emptyset$. This implies that $A_{ik}\subseteq A_j$. From the induction hypothesis, the edge between $v_i$ and $v_j$ has been removed. By Lemma~\ref{lem:remove-cover-edge-of-diw:identify-section}, The arrowheads into $\sigma$ on $\omega$ are oriented in $\omega'$. This implies that $\omega'$ is an inducing walk in $\Gfinal$. Finally, by Lemma~\ref{lem:anterior-preserving}, $d\in \aiadj(a,b, \Gfinal).$ This implies that $\aiadj(a,b,\Gfinal)\subseteq \aiadj(a,b,G)$.

    From this it must be the case that the edge between $a$ and $b$ must be removed in $\Gfinal$ as there is a subset of $\aiadj(a,b,G)$ which will be found as $socs  \geq \sepsetbound(\Gfinal)>|\aiadj(a,b,G)|$.
\end{proof}

\reftheorem{thm:SGI:identifies-graph}

\begin{proof}
    Assume $\genstruct$ is an essentially separable graph.     
    \newcommand{\genstructequiv}{G^*_{sep}}
    Let $\genstructequiv \equiv \genstruct$ be a separable graph equivalent to $\genstruct$, $G_{SGI}=\sgi(\imodel{\genstruct})$ and $G_{IND}=\inducedarrowheads(\genstructequiv)$.

    We show $\sgi(\imodel{\genstruct})=\inducedarrowheads(\genstructequiv)$, that is, $G_{SGI}=G_{IND}$.

    First we show that $G_{SGI}$ and $G_{IND}$ have the same adjacencies.
    From Lemma~\ref{lem:perfect-testing:remove-only-separable} we have  $\adjrep(G_{SGI}) \supseteq \adjrep(\genstructequiv)$ and from 
    Lemma~\ref{lem:sgi-removes-all-edges-between-separable} we have $\adjrep(G_{SGI}) \subseteq \adjrep(\genstructequiv)$.
    Thus $G_{SGI}$ and $\genstructequiv$ have the same adjacencies. 
    Next, $G_{IND}$ and $\genstructequiv$ have the same adjacencies and \miw s by Theorems~\ref{thm:induced-arrowheads:sound:identifying} and \ref{thm:separable-graphs:equivalent::same-adj-and-same-miw}.
    Combining these facts we have that $G_{SGI}$ and $G_{IND}$ have the same adjacencies.

    Second, we show that $G_{SGI}$ and $G_{IND}$ have the same arrowheads.
    
    We identify the arrowheads in $G_{IND}$ with induced arrowheads in $\genstructequiv$.
    By construction, $G_{IND}$ has an arrowhead at $b$ on edge $\edgedb$ if and only if $\edgedb$ has an induced arrowhead at $b$ in $\genstructequiv$.
    
    Next we identify the arrowheads in $G_{SGI}$ with induced arrowheads in $\genstructequiv$.

        
    Every arrowhead in $G_{SGI}$ is to due to an induced arrowhead in $\imodel{\genstruct}$, or equivalently, $\imodel{\genstructequiv}$.
    Next we show that if edge $\edgedb$ has in \iarrowhead\ at $b$ in $\genstructequiv$ then there is an edge with $\edgeprimedb$ in $G_{SGI}$ with an arrowhead at $b$.
    From the argument above, there must be an edge $\edgeprimedb$ in $G_{SGI}$.
    From Lemma~\ref{lem:induced-arrowhead:d-in-anterior-of-some-separating-pair}, there exist vertices $i$ and $j$ and vertex set $D$ such that $b\in \IV(i,j,D,\genstructequiv)$ and $d\in \antsetij$. At some stage, the \sgi\ algorithm will identify a minimal separating set $C$ for $i$ and $j$. After identifying the set $C$ that separates $i$ and $j$ the algorithm will add the arrowhead at $b$ on edge $\edgeprimedb$ due to the fact that it will, determine that $d\not\in \IV(i,j,G_{SGI})$ by testing $\indep{i}{j}{Cd}{\genstruct}$ which must hold by Lemma~\ref{lem:indep-model-graph:ordered-upward}.
    This implies that in $G_{SGI}$ there is an arrowhead at $b$ on edge $\edgedb$ if and only if $\edgedb$ has an induced arrowhead at $b$ in $\genstructequiv$.    
    This implies that $G_{SGI}$ and $G_{IND}$ have the same arrowheads.
    
    From the fact that $G_{SGI}$ and $G_{IND}$ are simple graphs, have the same adjacencies and same arrowheads we have that $G_{SGI}=G_{IND}$.
\end{proof}

\refcorollary{cor:SGI:identifies}

\begin{proof}
    Assume $\genstruct \in \WSepFamily$ is an essentially separable graph and that $\imodel{\gendist}$ provides perfect testing with respect to $\genstruct$.

    \newcommand{\genstructequiv}{G^*_{sep}}
    Let $\genstructequiv \equiv \genstruct$ be a separable graph equivalent to $\genstruct$, $G_{SGI}=\sgi(\imodel{\gendist})$ and $G_{IND}=\inducedarrowheads(\genstructequiv)$.
    From perfect testing, we have $\sgi(\imodel{\gendist}) = \sgi(\imodel{\genstructequiv})$. 
    From Theorem~\ref{thm:SGI:identifies-graph},
    we have $\sgi(\imodel{\gendist})=\inducedarrowheads(\genstructequiv)$.
    From Theorem~\ref{thm:induced-arrowheads:sound:identifying}, 
    for all separable graphs $G,G'$ it is the case that  $\sgi(\imodel{G}) = \sgi(\imodel{G'})$ if and only if $G \equiv G'$.
    Thus if $\genstruct \in \WSepFamily$ then $\sgi$ identifies the equivalence class $\equivclassSL{\WSepFamily}{\genstruct}$ under perfect testing.
\end{proof}

\reftheorem{thm:polynomial-bound}

\begin{proof}
    Let $\genstruct$ be separable and $\gendist$ provide perfect testing for $\genstruct$.
    Let $\Gfinal = \sgi(\imodel{\gendist})$ and note that $G=\Gfinal= \inducedarrowheads(\genstruct)$ by Corollary~\ref{cor:SGI:identifies}.
    In this case, the algorithm terminates when $socs \geq \sepsetbound(\Gfinal)$.
    This implies that we consider at most $O(|V|^{\sepsetbound(G)})$ independence tests to guarantee that edges between separable pairs of vertices have been removed. 
    In addition, the determination of the set of inducing vertices for a separated pair of vertices is bounded by $O(|V|).$ Thus the total number of independence tests is bounded by $O(|V|^{\sepsetbound(G)+1})$.
\end{proof}

\end{document}